\documentclass[11pt,letterpaper]{article}

\usepackage{amssymb,amsmath,amsfonts}
\usepackage{graphicx,xcolor,enumitem}
\usepackage{epsfig}
\usepackage{amsthm}
\usepackage{bm}
\usepackage{subcaption}
\usepackage{nicematrix}
\usepackage[round]{natbib}
\usepackage{csquotes}

\usepackage{multirow}
\usepackage{algorithm, algorithmic}
\usepackage{float}
\usepackage{diagbox}

\usepackage{booktabs}
\usepackage{threeparttable}

\renewcommand{\mkbegdispquote}[2]{\itshape}
\usepackage[twoside, hmarginratio=1:1, vmarginratio=1:1, left=1in,top=1in]{geometry}

\RequirePackage[breaklinks=true, hidelinks]{hyperref}
\usepackage{breakcites}

\DeclareMathOperator*{\esssup}{ess\,sup}

\newcommand{\ang}[1]{\langle  #1 \rangle }  

\newcommand{\cA}{\mathcal{A}}
\newcommand{\cB}{\mathcal{B}}
\newcommand{\cD}{\mathcal{D}}
\newcommand{\cW}{\mathcal{W}}
\newcommand{\E}{\mathbb{E}}

\newcommand{\bF}{\mathbb{F}}

\newcommand{\R}{\mathbb{R}}
\newcommand{\p}{\mathbb{P}}

\newcommand{\cL}{{\mathcal L}}

\newcommand{\cS}{{\mathcal S}}
\newcommand{\cE}{{\mathcal E}}
\newcommand{\bD}{{\mathbb D}}
\newcommand{\cK}{{\mathcal K}}
\newcommand{\cR}{{\mathcal R}}

\newcommand{\cF}{{\mathcal F}}

\newcommand{\cP}{{\mathcal P}}
\newcommand{\cU}{{\mathcal U}}

\newcommand{\cM}{{\mathcal M}}

\newcommand{\tr}{{\rm{tr}}}

\newtheorem{theorem}{Theorem}

\newtheorem{assumption}[theorem]{Assumption}

\newtheorem{definition}[theorem]{Definition}

\newtheorem{lemma}[theorem]{Lemma}

\newtheorem{proposition}[theorem]{Proposition}

\theoremstyle{definition}

\numberwithin{equation}{section}
\numberwithin{theorem}{section}

\begin{document}

\title{Continuous-time Online Learning via Mean-Field Neural Networks: Regret Analysis in Diffusion Environments\footnote{Erhan Bayraktar is partially supported by the National Science Foundation under grant DMS-2507940 and by the Susan M. Smith chair. Bingyan Han is partially supported by the National Natural Science Foundation of China (grant 12401621). }}

\author{
	Erhan Bayraktar\thanks{Department of Mathematics, University of Michigan, Ann Arbor, Email: erhan@umich.edu.}
	\and Bingyan Han\thanks{Thrust of Financial Technology, The Hong Kong University of Science and Technology (Guangzhou), Email: bingyanhan@hkust-gz.edu.cn.}
	\and Ziqing Zhang\thanks{Department of Mathematics, University of Michigan, Ann Arbor, Email: ziqingzh@umich.edu.}
}

\date{April 13, 2026}
\maketitle

\begin{abstract}
We study continuous-time online learning where data are generated by a diffusion process with unknown coefficients. The learner employs a two-layer neural network, continuously updating its parameters in a non-anticipative manner. The mean-field limit of the learning dynamics corresponds to a stochastic Wasserstein gradient flow adapted to the data filtration. We establish regret bounds for both the mean-field limit and finite-particle system. Our analysis leverages the logarithmic Sobolev inequality, Polyak-Lojasiewicz condition, Malliavin calculus, and uniform-in-time propagation of chaos. Under displacement convexity, we obtain a constant static regret bound. In the general non-convex setting, we derive explicit linear regret bounds characterizing the effects of data variation, entropic exploration, and quadratic regularization. Finally, our simulations demonstrate the outperformance of the online approach and the impact of network width and regularization parameters.
	\\[2ex] 
	\noindent{\textbf {Keywords}: Online learning, logarithmic Sobolev inequality, Polyak-Lojasiewicz condition, Malliavin calculus, propagation of chaos.}
	\\[2ex]
	\noindent{\textbf {Mathematics Subject Classification:} } 60H30, 60J60, 49Q22, 68T07.
\end{abstract}

\section{Introduction}
	

Modern machine learning increasingly relies on massive datasets. Driven by the continuous generation of digital information, these data are inherently sequential and timestamped, a characteristic ubiquitous across fields such as engineering and finance. Concurrently, several learning frameworks, including large language models (LLMs), have scaled to require billions of parameters.

A fundamental requirement when processing temporal data is non-anticipativeness. Models must not use future information to make current decisions. Violating this principle introduces lookahead bias and artificially inflates performance. For example, \cite{sarkar2025lookahead} evaluate the task of predicting a firm's risk factors using information strictly up to November 2019. They find evidence of temporal leakage, as models like Llama systematically generate outputs related to the COVID-19 pandemic. This highlights that training large neural networks on temporal data should incorporate non-anticipativeness. To address this, \cite{he2025chronologically} and \cite{sarkar2025lookahead} train a family of models timestamped with their training data cutoff dates. However, this approach has two major drawbacks. First, training a single large model is already computationally expensive, making a family of models prohibitive. Second, model timestamps are often coarse. If a separate model is trained annually, a prediction task in August must rely on a model trained only up to the previous year, completely ignoring eight months of recent data.

Motivated by these practical challenges, we propose an online learning approach to train overparameterized neural networks on continuous-time data streams.  This method can be computationally more efficient because it does not solve the instantaneous optimization problem to exact optimality, yet it strictly maintains non-anticipativeness. We frame our problem setting to balance mathematical tractability with practical generality. First, we focus on two-layer neural networks and their mean-field limits. Although the architecture is shallow, extensive literature \citep{mei2018mean,chizat2018global} demonstrates it is sufficiently rich to provide insights into training dynamics, and we do not restrict the nonlinearity to specific architectures. Second, we assume that data arrive in continuous time, which reflects the real-time nature of modern digital data streams. More importantly, it allows us to leverage powerful tools from optimal transport and stochastic analysis to derive theoretical guarantees. Finally, while discrete-time online learning is well studied \citep{hazan2016introduction, cesa2006prediction}, its continuous-time counterpart remains far less understood, particularly for overparameterized neural networks.

Our theoretical contributions are as follows. The evolution of the parameter distribution $\rho_t$ in the mean-field limit satisfies a nonlinear Fokker-Planck equation, which can be interpreted as the Wasserstein gradient flow of the free energy functional. A key distinction in our formulation is that $\rho_t$ itself is a stochastic process adapted to the data filtration. We first establish the well-posedness of the mean-field equation and the corresponding interacting particle system. The proof adapts techniques from \cite{cattiaux2008probabilistic,monmarche2024time,conforti2023coupling}. A critical property is that $\rho_t$ satisfies the logarithmic Sobolev inequality (LSI) uniformly in time. This property enables the application of uniform-in-time propagation of chaos \citep{lacker2023sharp} and Talagrand's inequality \citep{otto2000generalization} in our subsequent analysis. A standard performance metric in online learning is regret, which compares the cumulative loss of the online learner against a suitable benchmark. We analyze both static and dynamic regret to obtain the following results.
\begin{itemize}
	\item[(1)] When the objective functional is displacement convex, Theorem \ref{thm:dc-regret} leverages the geometric properties of the Wasserstein space \citep{ambrosio2008gradient} to prove that the static regret is bounded by a constant. The proof relies on the chain rule in the Wasserstein space and the differentiability of the Wasserstein distance. While a similar approach appears in \cite{guo2022online}, Theorem \ref{thm:dc-regret} generalizes their result to the $L$-geodesically convex setting and rigorously justifies the necessary technical conditions.
	
	\item[(2)] In the general non-convex case, online-to-batch conversions imply that online learning is at least as hard as batch learning \citep{cesa2004generalization,agarwal2019learning}. Furthermore, because our data are driven by independent Brownian increments, sublinear regret cannot be expected. We therefore aim to establish explicit linear regret bounds. Theorem \ref{thm:PL-regret} derives a dynamic regret bound by combining the Polyak-Lojasiewicz (PL) condition, Talagrand's inequality, and a specialized It\^o formula established in Lemma \ref{lem:dFmu*}.
	
	\item[(3)] Bounding the static regret directly requires additional effort because the offline optimizer $\rho^*$ is anticipative. We apply Malliavin calculus in unconditional martingale difference (UMD) Banach spaces \citep{pronk2014tools} to analyze the dynamics of the objective functional at $\rho^*$. Lemma \ref{lem:Drho_exist} establishes the well-posedness of the Malliavin derivative $\cD \rho^*$, and Lemma \ref{lem:antic_Ito} proves an anticipating It\^o formula. Finally, Theorem \ref{thm:Mallivian-regret} derives an upper bound for the static regret using the PL condition, which effectively quantifies the anticipative behavior of the optimal measure $\rho^*$.	
	
	\item[(4)] While the preceding results focus on the mean-field limit, Section \ref{sec:particle-regret} analyzes the dynamic regret for the finite-particle system. Theorem \ref{thm:margin-regret} establishes a marginal-law dynamic regret bound that explodes as the entropy parameter $\beta$ approaches infinity. Consequently, although a high noise parameter $\beta$ is mathematically required to guarantee uniform-in-time propagation of chaos, it inflates the regret bound, requiring a larger number of simulated particles to maintain performance guarantees.  The proof utilizes relative entropy estimates for propagation of chaos. Notably, the requirement $\eta \beta^2 > 8 C^2_\sigma C^2_1$ from \citet[Theorem 2.1 (1)]{lacker2023sharp} parallels the condition $\alpha \beta^2 > 8 C^2_\sigma C^2_1$ required for the PL condition.
	
	\item[(5)] The previous results depend heavily on entropy regularization. Theorem \ref{thm:empircal_regret} derives the empirical dynamic regret without entropy in the objective functional, although the actual training dynamics still incorporate continuous Brownian exploration noise scaled by the temperature parameter $\beta$. To secure a meaningful bound, the $L^2$ penalty parameter $\lambda$ in Theorem \ref{thm:empircal_regret} must be sufficiently large to compensate for the error introduced by the exploration noise $\beta$.
\end{itemize}

Section \ref{sec:empirical} presents a two-part simulation study within a highly non-convex regime. First, we evaluate out-of-sample accuracy, demonstrating that the online approach significantly outperforms static optimization. Although this evaluation does not use regret as a criterion, it highlights the online method's practical suitability for time-varying environments. Second, we analyze empirical regret across key parameters. Both approximation error and regret decrease as network width increases. Furthermore, although quadratic regularization shrinks parameters toward the origin, a larger $L^2$ penalty can outpace this variance reduction and inflates regularized regret. Finally, choosing the temperature parameter $\beta$ involves a fundamental trade-off between variance and exploration.

Our work lies at the intersection of optimal transport, stochastic analysis, and online optimization. The mean-field analysis of neural networks has been investigated extensively \citep{chizat2018global,mei2018mean,sirignano2022mean}. Furthermore, \cite{gao2024global} analyze the convergence properties of gradient flows when training transformers. Mean-field online learning was studied by \cite{guo2022online}, primarily under the assumption of displacement convexity. In Euclidean spaces, the PL condition relaxes the convexity assumption while still ensuring convergence in static optimization. \cite{pun2023dist} explore discrete-time online learning under the PL condition, but they do not provide sufficient conditions for it to hold. Additionally, while Malliavin calculus is widely applied, its application in online learning remains rare. Our analysis also relies on the propagation of chaos; we refer to \cite{durmus2020elementary,lacker2023sharp,chen2025uniform} for comprehensive reviews. Finally, functional inequalities, including the LSI and Talagrand's inequality, are fundamental to our regret analysis. Other applications of these tools appear in the concentration of measure \citep{lacker2018liquidity}, dynamic games \citep{conforti2023game},  Schr\"odinger bridges \citep{conforti2020around}, and diffusion processes \citep{bartl2020functional}.

The remainder of the paper is organized as follows. Section \ref{sec:formulation} introduces the continuous-time online learning framework. Section \ref{sec:properties} establishes the well-posedness of the interacting particle system and its mean-field limit, and proves the uniform-in-time LSI. Section \ref{sec:mf-regret} derives bounds for both dynamic and static regret in the mean-field setting. Section \ref{sec:particle-regret} analyzes the dynamic regret for the finite-particle system. Section \ref{sec:empirical} presents the numerical studies. Section \ref{sec:future} concludes on future directions. All proofs are deferred to the appendix.

{\bf Notation}. For a vector or matrix $A$, denote $|A|$ as the Euclidean or Frobenius norm. For a function $X$, let $\|X\|_{L^\infty} := \esssup |X| $ be the essential supremum of the Euclidean or Frobenius norm $|X|$. For a matrix $\Sigma$, denote $\| \Sigma \|_{op} := \sqrt{\lambda_{\max} (\Sigma^\top \Sigma)}$ as the spectral norm. Let $\|\Sigma\|_{op, \infty} := \sup_{x,\theta} \| \Sigma(x, \theta) \|_{op}$ be the uniform bound on the spectral norm.

Next, we introduce differential notations. For a function $\sigma(x, \theta)$, $\nabla \sigma(x, \theta)$ means the gradient on $\theta$ only. Denote  $\Delta \sigma(x, \theta)$ as the Laplacian on $\theta$. To distinguish, we introduce $D^2_{\theta\theta} \sigma(x, \theta)$ as the Hessian on $\theta$. For cross derivatives, we use notations like $D^2_{x\theta}\sigma$ and $D^3_{xx\theta} \sigma$. To make notations compact, we also use subscripts like in $\sigma_{xx}$ to represent partial derivatives, if no ambiguity arises. The operator $\cD$ or $\cD_t$ stands for Malliavin derivatives.

Denote the set of all Borel probability measures on $\R^d$ as $\cP(\R^d)$. With $\mu \in \cP(\R^d)$, denote the $L^2(\mu)$ norm of a function $f$ as
\begin{equation*}
	\|f\|_{L^2(\mu)} := \left( \int | f(\theta) |^2 \mu(d\theta) \right)^{1/2}. 
\end{equation*}
For notational simplicity, we write $\int g(\theta) \mu(d\theta) = \ang{ \mu, g}$ afterwards. 

Consider two probability measures $\mu \in \cP(\R^d)$ and $\nu \in \cP(\R^d)$. Denote $\Pi(\mu, \nu)$ as the set of all the couplings that admit $\mu$ and $\nu$ as marginals. Let $p \in [1, \infty)$. The Wasserstein distance of order $p$ between $\mu$ and $\nu$ is defined by the formula
\begin{equation*}
	\cW_p(\mu, \nu) = \Big( \inf_{\gamma \in \Pi(\mu, \nu)} \int |\theta -  \theta'|^p \gamma(d \theta, d\theta') \Big)^{1/p}. 
\end{equation*}
Consider a map $\Phi$ from a measure space $\mathcal{X}$, equipped with a measure $\mu$, to an arbitrary space $\mathcal{Y}$. Denote by $\Phi_\# \mu$ the pushforward of $\mu$ by $\Phi$. More precisely, $(\Phi_\# \mu)[B] = \mu(\Phi^{-1}(B))$, where $\Phi^{-1}(B) := \{ x \in \mathcal{X} : \Phi(x) \in B \}$. The set of Borel probability measures with finite second moments is denoted as $\cP_2(\R^d) := \{ \mu \in \cP(\R^d) : \int |\theta|^2 d\mu < \infty \}$. If we equip it with the Wasserstein distance of order $2$, $(\cP_2(\R^d), \cW_2)$ is a metric space. Let $\cP^r(\R^d)$ denote Borel probability measures that are absolutely continuous with respect to the Lebesgue measure on $\R^d$. If $\rho \in \cP^r(\R^d)$, we still denote its density as $\rho$. Let $\cP^r_2(\R^d) = \cP^r(\R^d) \cap \cP_2(\R^d)$.

\section{Formulation}\label{sec:formulation}
We work on a product probability space that separates the randomness of the data stream from that of the learning dynamics. Let $(\Omega^W, \mathcal{F}^W, \mathbb{P}^W)$ and $(\Omega^B, \mathcal{F}^B, \mathbb{P}^B)$ be two complete probability spaces supporting, respectively, the data noise and the particle noise. We define the working probability space as
\[
(\Omega, \mathcal{F}, \mathbb{P}) := (\Omega^W \times \Omega^B, \mathcal{F}^W \otimes \mathcal{F}^B, \mathbb{P}^W \otimes \mathbb{P}^B).
\]

On $(\Omega^W, \mathcal{F}^W, \mathbb{P}^W)$, we construct an $m$-dimensional standard Brownian motion $W$ with its augmented natural filtration $\mathbb{F}^W = (\mathcal{F}^W_t)_{t \ge 0}$. On $(\Omega^B, \mathcal{F}^B, \mathbb{P}^B)$, we construct a sequence of independent $d$-dimensional standard Brownian motions $(B^i)_{i \ge 1}$ with filtration $\mathbb{F}^B = (\mathcal{F}^B_t)_{t \ge 0}$. 

All sources of randomness are assumed independent, and expectations with respect to $\mathbb{P}^W$ and $\mathbb{P}^B$ are denoted by $\mathbb{E}$ and $\mathbb{E}_B$, respectively.

Suppose the agent observes a data stream $(X_t, Y_t) =: Z_t$ modeled by the following dynamic:
\begin{equation}\label{eq:data}
	\begin{aligned}
		d X_t & = b_1 (t, \omega^W) dt + \Sigma_1(t, \omega^W) dW_t, \\
		d Y_t & = b_2 (t, \omega^W) dt + \Sigma_2(t, \omega^W) dW_t.
	\end{aligned}
\end{equation} 
Here, $X_t \in \mathbb{R}^n$ is the feature process and $Y_t \in \mathbb{R}$ is the response. The coefficients $b_1, b_2, \Sigma_1$, and $\Sigma_2$ are $\mathbb{F}^W$-progressively measurable and satisfy standard conditions ensuring existence and uniqueness of a strong solution. The process $Z_t = (X_t, Y_t)$ is therefore an exogenous diffusion, independent of the learning dynamics. 

Importantly, the learner does not know the coefficients of the data-generating process and only observes the trajectory $(Z_t)_{t \ge 0}$ in a non-anticipative manner. We note that, with the quadratic variation, the learner can only recover the covariance $\Sigma_i \Sigma^\top_j$, but not $\Sigma_i$ itself. The class of admissible learning rules is restricted to processes that are adapted to the natural observation filtration $\mathbb{F}^Z := (\mathcal{F}^Z_t)_{t \ge 0}$, where $\cF^Z_t := \sigma(Z_s : 0 \le s \le t)$.

In many optimal control and financial problems, the data stream $\{(X_t, Y_t)\}_{t \in [0, T]}$ is revealed sequentially, rendering traditional batch optimization impossible. Specifically, the learner's estimates must be entirely non-anticipative, relying solely on the available information filtration up to time $t$. An online learning framework becomes attractive since it adheres to causality constraints. Formally, all admissible learning rules must be adapted to the filtration $\mathbb{F}^Z$, reflecting the causality constraint inherent in the online setting.

\subsection{Two-layer neural networks}
To construct an online estimator, we deploy a two-layer neural network of finite width $N$:
\begin{equation}\label{eq:finiteNN}
	X_t \mapsto \frac{1}{N}\sum_{j=1}^N \sigma(X_t, \theta_t^{j,N}) =: \int_{\R^d}\sigma(X_t, \theta)\,\hat\rho_t^N(d\theta),
\end{equation}
where \(\sigma:\R^n\times\R^d\to\R\) is a non-linear feature map, \(\theta_t^{j,N}\in\R^d\) is the parameter of neuron \(j\), and \(\hat\rho_t^N\) is the empirical measure of parameters given by
\begin{equation}\label{eq:empirical}
	\hat\rho_t^N := \frac{1}{N}\sum_{j=1}^N \delta_{\theta_t^{j,N}}\,.
\end{equation}
At time $t$, the learner faces a loss with two components. The first is a mean squared error (MSE) that penalizes the discrepancy between the network's estimation \eqref{eq:finiteNN} and the true target $Y_t$. The second is an $L^2$ regularization term weighted by $\lambda > 0$. The loss at time $t$ is given by
\begin{equation*}
	\tilde{U}(\hat{\rho}^N_t, Z_t) := \left(\int_{\R^d}\sigma(X_t,\theta)\,\hat\rho_t^N(d\theta)-Y_t\right)^2
	+ \frac{\lambda}{2}\int_{\R^d}|\theta|^2\,\hat\rho_t^N(d\theta).
\end{equation*}
The $L^2$ penalty is motivated by both statistical learning and stochastic analysis. From a learning perspective, it acts as standard weight decay, penalizing excessively large parameter values to prevent overfitting and improve generalization. From a probabilistic perspective, this quadratic penalty translates into a strong linear mean-reverting drift in the underlying continuous-time gradient flow. This confinement force provides the essential dissipativity required to prevent the interacting particles from escaping to infinity. It ensures the stability of the particle system and is indispensable for obtaining regret bounds.

Clearly, we can expand the quadratic term and notice that $Y^2_t$ does not interact with $\hat{\rho}^N_t$. Therefore, we simply consider the cost functional as
\begin{equation}\label{eq:U-hatN}
	U(\hat{\rho}^N_t, Z_t) := \left(\int_{\R^d}\sigma(X_t,\theta)\,\hat\rho_t^N(d\theta) \right)^2 - 2 Y_t \left(\int_{\R^d}\sigma(X_t,\theta)\,\hat\rho_t^N(d\theta) \right)
	+ \frac{\lambda}{2}\int_{\R^d}|\theta|^2\,\hat\rho_t^N(d\theta).
\end{equation}

Because the data stream $Z_t$ evolves continuously in a diffusion environment, we track the optimal parameter distribution dynamically by a continuous-time gradient flow that minimizes the instantaneous cost \eqref{eq:U-hatN}. The gradient of \eqref{eq:U-hatN} with respect to a single particle $\theta^{i,N}_t$ has a linear confinement drift $-\lambda \theta^{i,N}_t$ and a non-linear term representing the empirical error signal. Since deterministic gradient flow often suffers from poor generalization, we inject independent noises and update $\theta^{i, N}_t$ as follows:
\begin{equation}\label{eq:particleSDE}
	d\theta^{i, N}_t = \Big[  - \lambda \theta^{i, N}_t - 2 \nabla \sigma(X_t, \theta^{i, N}_t) \Big( \frac{1}{N-1} \sum^N_{j=1, j\neq i} \sigma(X_t, \theta^{j, N}_t) - Y_t \Big) \Big] dt + \sqrt{2 \beta} d B^i_t, \, i = 1, \ldots, N.
\end{equation}
We refer to it as an online noisy particle gradient descent (ONPGD) method. Here, \((B^i)_{i=1}^N\) are independent Brownian motions, which are also independent of the data noise \(W\). It ensures continuous exploration of the parameter space. Finally, when computing the empirical error signal for the $i$-th particle, we deliberately decouple the particle from its own influence by removing the self-interaction term. It is a standard convention in the propagation of chaos literature \citep{lacker2023sharp}.

\subsection{Mean-field neural networks}
It is challenging to analyze the coupled ONPGD system \eqref{eq:particleSDE} with finite particles directly. Instead, we consider the overparameterized regime first. When the width of neural networks \(N\to\infty\), the estimator \eqref{eq:finiteNN} converges to
\begin{equation}\label{eq:mf-predictor}
	X_t \mapsto \int_{\R^d}\sigma(X_t, \theta) \,\rho_t(d \theta),
\end{equation}
where $\rho_t$ is the limit of the empirical measure \(\hat\rho_t^N\). Consequently, the cost functional \eqref{eq:U-hatN} with finite particles converges to the mean-field objective:
\begin{equation}\label{eq:U-def}
	U(\rho_t,Z_t)
	:=
	\left(\int_{\R^d}\sigma(X_t,\theta)\,\rho_t(d\theta)\right)^2
	-2Y_t\int_{\R^d}\sigma(X_t,\theta)\,\rho_t(d\theta)
	+\frac{\lambda}{2}\int_{\R^d}|\theta|^2\,\rho_t(d\theta).
\end{equation}
The trajectory of a single, representative neuron parameter is governed by the following McKean-Vlasov stochastic differential equation (SDE):
\begin{equation}\label{eq:mean-fieldSDE}
	d\theta_t = \left[ - \lambda \theta_t - 2 \nabla \sigma(X_t, \theta_t) \Big( \int \sigma(X_t, \vartheta) \rho_t(d\vartheta) - Y_t \Big) \right] dt + \sqrt{2 \beta} d B_t.
\end{equation}
The Brownian motion $B_t$ is independent of $W_t$ driving the data. Crucially, the mean-field distribution $\rho_t$ itself is random and adapted to the data filtration $\cF^Z_t$. Moreover, as a curve of probability measures, $\rho_t$ is the gradient flow of a regularized free energy with respect to the Wasserstein distance $\cW_2$; see \citet[Definition 4.2.6]{figalli2021invitation} for the definition of gradient flow in the Wasserstein space. The diffusion term in \eqref{eq:mean-fieldSDE} is equivalent to impose the entropy regularization in \eqref{eq:U-def}. Then the free energy functional is 
\begin{equation}\label{eq:F-def}
	F(\rho_t,Z_t)
	:=
	U(\rho_t,Z_t)
	+ \beta\int_{\R^d}\rho_t(\theta)\log\rho_t(\theta)\,d\theta.
\end{equation}

The evolution of $\rho_t$ corresponds to the steepest descent of $F(\cdot, Z_t)$ with respect to the Wasserstein metric. The associated nonlinear Fokker--Planck equation is the continuity equation driven by the gradient of the first variation $\frac{\delta F}{\delta \rho}$:
\begin{equation}\label{eq:FK-continuity}
	\begin{aligned}
		\partial_t \rho_t & = \nabla \cdot \left( \rho_t \nabla \left(\frac{\delta F(\rho_t, Z_t)}{ \delta \rho}\right) \right) = \nabla \cdot \left( \beta \nabla \rho_t + \rho_t \nabla\left(\frac{\delta U(\rho_t, Z_t)}{\delta \rho_t}\right) \right) \\
		& = \beta \Delta \rho_t + \nabla \cdot \left( \left[ \lambda \theta + 2 \left( \int \sigma(X_t, \vartheta)\rho_t(d\vartheta) - Y_t \right) \nabla \sigma(X_t, \theta) \right] \rho_t \right).
	\end{aligned}
\end{equation} 

The evolution equation \eqref{eq:FK-continuity} admits a natural interpretation as a continuous-time limit of online gradient descent under a time-varying objective. To make this precise, consider first a frozen-time problem where the data is fixed at some $Z_t$. In this case, minimizing the free energy $F(\cdot, Z_t)$ over probability measures leads to the classical Wasserstein gradient flow
\[
\partial_s \mu_s = \nabla \cdot \left( \mu_s \nabla \frac{\delta F(\mu_s, Z_t)}{\delta \rho} \right),
\]
where $s \ge 0$ denotes an artificial algorithmic time. This flow is well known to converge, under suitable conditions, to the equilibrium measure solving the static problem \eqref{prob:dynamic_t} defined later.

In contrast, the actual learning dynamic \eqref{eq:FK-continuity} evolves in the physical time $t$, while the objective itself depends on the streaming data $Z_t$. Therefore, the system does not minimize a fixed functional, but instead performs an infinitesimal gradient step at each time $t$ with respect to the instantaneous objective $F(\cdot, Z_t)$. In this sense, \eqref{eq:FK-continuity} can be viewed as a Wasserstein gradient flow with a time-dependent and stochastic energy landscape.

This interpretation highlights a key distinction with classical mean-field training: the measure-valued process $(\rho_t)_{t \ge 0}$ tracks a moving sequence of minimizers rather than converging to a single equilibrium. Consequently, the analysis naturally falls within the framework of online optimization and regret minimization, rather than asymptotic convergence to a stationary solution.

The ONPGD dynamic \eqref{eq:particleSDE} can be regarded as the $N$-particle system approximating the McKean-Vlasov SDE \eqref{eq:mean-fieldSDE}.  By shifting from the $N$-particle system to the Wasserstein gradient flow, we can utilize the geometric properties of Wasserstein spaces \citep{ambrosio2008gradient,figalli2021invitation} to establish regret bounds for continuous-time online learning.

\subsection{Regret}
To evaluate the performance of our online learning algorithm, we introduce two optimizers. In contrast to the online setting where the learner adapts to the data filtration progressively, the classical offline optimization framework assumes full access to the entire data trajectory $\{Z_t\}_{t \in [0,T]}$ in advance. The offline learner seeks a single parameter distribution $\rho$ that minimizes the aggregate cost over the entire horizon $[0, T]$: 
\begin{equation}\label{eq:statobj}
	\inf_{\rho} \int^T_0 F(\rho, Z_t) dt.
\end{equation}
Denote the optimizer of \eqref{eq:statobj} as $\rho^*$, which exists and is unique, shown in Lemma \ref{lem:rho*} later. It represents the best fixed decision in hindsight. Consequently, it is not adapted to the natural data filtration prior to time $T$. Comparing the online $\rho_t$ against this fixed hindsight optimizer yields our first performance criterion.
\begin{definition}\label{def:static-mf}
	The average static regret is given by
	\begin{equation}
		\begin{aligned}
			\cR_S(T) & := \E \left[ \int^T_0 F(\rho_t, Z_t) dt - \inf_{\rho} \int^T_0 F(\rho, Z_t) dt \right] \\
			& = \E \left[ \int^T_0 F(\rho_t, Z_t) dt - \int^T_0 F(\rho^*, Z_t) dt \right],
		\end{aligned}
	\end{equation}
	where the expectation is taken over the data trajectory $\{Z_t\}_{t \in [0, T]}$.
\end{definition}

While static regret is the standard criterion in classical online learning, measuring performance against a single, fixed measure $\rho^*$ is often a weak benchmark in highly non-stationary diffusion environments. To capture the algorithm's ability in a shifting environment, we introduce the concept of dynamic regret. Instead of aggregating the cost over $[0, T]$, we freeze the continuous data stream at a specific instant $t$ and consider the instantaneous optimization problem:
\begin{equation}\label{prob:dynamic_t}
	\inf_{\rho} F(\rho, Z_t).
\end{equation}

This frozen-time problem pauses the physical time $t$ and allows the parameters to evolve over a fictitious algorithmic time $s \ge 0$. Under a noisy gradient descent dynamic with the data fixed at $Z_t$, the parameter distribution follows the Wasserstein gradient flow $\mu_s$ given by
\begin{equation}\label{eq:equilibrium}
\begin{aligned}
	\partial_s \mu_s & = \nabla \cdot \left( \mu_s \nabla \left(\frac{\delta F(\mu_s, Z_t)}{ \delta \rho}\right) \right) \\
	& = \beta \Delta \mu_s + \nabla \cdot \left( \left[ \lambda \theta + 2 \left( \int \sigma(X_t, \vartheta)\mu_s(d\vartheta) - Y_t \right) \nabla \sigma(X_t, \theta) \right] \mu_s \right).
\end{aligned}
\end{equation} 

Under certain conditions, this fictitious-time gradient flow $\mu_s$ converges to a unique equilibrium measure as $s \to \infty$, which we denote as $\mu^*_t$. We will subsequently prove that $\mu^*_t$ is well-defined, unique, and achieves the infimum of Problem \eqref{prob:dynamic_t}. Although $\mu^*_t$ is also adapted to the data filtration, it is expensive to implement since the learner has to train the neural network in a long run for each instantaneous time $t$. Comparing the online $\rho_t$ against this moving sequence of optimal measures defines our second performance criterion.
\begin{definition}\label{def:dynamic-mf}
	The average dynamic regret is given by
	\begin{equation}
		\cR_{D}(T) := \E \left[ \int^T_0 F(\rho_t, Z_t) dt - \int^T_0 F(\mu^*_t, Z_t) dt \right],
	\end{equation}
	where the expectation is taken over the data trajectory $\{Z_t\}_{t \in [0, T]}$.
\end{definition}
Clearly, $\cR_S(T) \leq \cR_D(T)$. We also find that the dynamic regret is more analytically tractable. 

Definitions \ref{def:static-mf} and \ref{def:dynamic-mf} deal with mean-field distributions $\rho_t$. We refer to them as {\it mean-field regrets}. To study the regret for neural networks with finite width, we introduce two notions of {\it particle regret}, utilizing the instantaneous equilibrium measure $\mu^*_t$ as our dynamic benchmark. 

The distribution of each parameter $\theta^{i, N}_t$ in \eqref{eq:particleSDE} is absolutely continuous. We denote its probability density function as $\rho^{i, N}_t$. From a theoretical perspective, the expected prediction of the neural network can be evaluated through the average marginal law of its constituent neurons. We evaluate the entropy-regularized free energy of this averaged system as
\begin{align*}
	F(\pi^N_t, Z_t) := & \left( \frac{1}{N} \sum^N_{i=1} \int \sigma(X_t, \theta) \rho^{i, N}_t(d\theta) \right)^2 - 2 Y_t \left( \frac{1}{N} \sum^N_{i=1} \int \sigma(X_t, \theta) \rho^{i, N}_t(d\theta) \right) \\
	&  + \frac{\lambda}{2} \left( \frac{1}{N} \sum^N_{i=1} \int |\theta|^2 \rho^{i, N}_t(d\theta) \right) + \beta \left( \frac{1}{N} \sum^N_{i=1} \int \rho^{i, N}_t \log \rho^{i, N}_t \right).
\end{align*}
Here, $\pi^N_t$ denotes of the empirical measure of $N$ elements given by $\rho^{i, N}_t$. Assume the initial conditions are the same. Since the particle dynamics \eqref{eq:particleSDE} are invariant under permutation, the particles are exchangeable. Consequently, the marginal laws are identical, and the functional simplifies to evaluating a single representative neuron:
\begin{equation*}
	F(\pi^N_t, Z_t) = F(\rho^{1, N}_t, Z_t).
\end{equation*}

\begin{definition}
	The marginal-law dynamic regret is given by
	\begin{equation}\label{eq:regret-marginal}
		\cR_{\mathrm{law}}(T;N)
		:=
		\E\left[
		\int_0^T \Big(F(\rho_t^{1,N},Z_t)-F(\mu_t^*,Z_t)\Big)\,dt
		\right].
	\end{equation}
\end{definition}
This formulation measures the discrepancy between probability densities and excludes the empirical sampling errors. In practice, the true density $\rho^{i,N}_t$ is unobservable. A practitioner deploying the network only has access to pathwise realizations of the parameters, represented by the discrete empirical measure $\hat\rho_t^N$. To capture the true performance of the deployed algorithm, we must measure the regret directly on this empirical realization. 

It is ill-posed to evaluate the continuous entropy on discrete point masses. While one could employ estimation methods such as kernel density estimation to approximate the entropy, doing so introduces a separate, highly non-trivial statistical problem. Therefore, to isolate the core tracking capabilities of the finite network, we exclude the entropic term from the objective, focusing exclusively on the cost $U$.

\begin{definition}
	The empirical dynamic regret without entropy is given by
	\begin{equation}\label{eq:regret-empirical}
		\cR_{\mathrm{em}}(T;N)
		:=
		\E\left[ \E_{B} \left[ 
		\int_0^T \Big(U(\hat\rho_t^N,Z_t)-U(\mu_t^*,Z_t)\Big)\,dt \right] 
		\right],
	\end{equation}
	where \(\hat\rho_t^N\) is the empirical measure \eqref{eq:empirical} and $\E_{B}$ denotes the expectation over the particle noise.
\end{definition}
Since the dynamic regret bounds the static counterpart, we only consider the dynamic version of particle regrets. It is straightforward to define the analogous static particle regrets by replacing $\mu^*_t$ with the static hindsight benchmark $\rho^*$; these are omitted here for brevity.

\section{Properties of the mean-field and particle systems}\label{sec:properties}

In this section, we establish the well-posedness, moments bounds, and other useful properties of our mean-field SDE \eqref{eq:mean-fieldSDE} and the corresponding $N$-particle system \eqref{eq:particleSDE}.

There is a structural difference between our interaction term and the classical one in \cite{cattiaux2008probabilistic}. In \cite{cattiaux2008probabilistic}, the interaction potential is assumed to be convex at infinity. This geometric property ensures the interaction force is dissipative. It tends to reduce the distance between particles or is neutral. Our mean-field interaction term originates from the gradient flow of a neural network loss landscape. Rather than a physical force, it represents a shared empirical error signal. This interaction could be expansive, which pushes particles away and increases the energy. To ensure the stability, we  assume the interaction term is bounded and impose the following two standing assumptions.
\begin{assumption}\label{A:sigma}
	Suppose $\sigma$ and its derivatives below are uniformly bounded:
	\begin{align*}
		& \sup_{x \in \R^n, \theta \in \R^d} |\sigma(x, \theta)| \le C_\sigma, \quad \sup_{x \in \R^n, \theta \in \R^d} | D_\theta \sigma(x, \theta) | \le C_{1}, \quad \sup_{x \in \R^n, \theta \in \R^d} | D_x \sigma(x, \theta) | \le C_{1}, \\
		& \sup_{x \in \R^n, \theta \in \R^d} | D^2_{x\theta} \sigma(x, \theta) | \leq C_2, \quad \sup_{x \in \R^n, \theta \in \R^d} | D^2_{xx} \sigma(x, \theta) | \leq C_2, \quad \sup_{x \in \R^n} | D^2_{\theta \theta} \sigma(x, \theta) | \leq \frac{C_2}{1 + |\theta|},\\
		& \sup_{x \in \R^n, \theta \in \R^d} | D^3_{xx\theta} \sigma(x, \theta) | \leq C_3, \quad \sup_{x \in \R^n} | D^3_{\theta\theta\theta} \sigma(x, \theta) | \leq \frac{C_3}{1 + |\theta|}.
	\end{align*}
\end{assumption}
The third derivative conditions are used in the dynamic regret bounds in Theorem \ref{thm:PL-regret}, to deal with the Hessian terms from It\^o's lemma. It is also needed in Theorem \ref{thm:Lip-grad} to obtain the linear growth rate of $\nabla \log \rho_t(\theta)$ in \eqref{eq:logrho}. 

\begin{assumption}\label{A:data}
	 The drift $b_i$ and diffusion $\Sigma_i$, $i= 1, 2$ in the data process are $\bF^W$-progressively measurable. There exists a constant $C_{b, \Sigma} > 0$, such that, for any $t \geq 0$, we have $\E[ | b_i(t) |^2 ]^{1/2}$ and $\E[| \Sigma_i \Sigma^\top_i (t)|^2]^{1/2}$, $i= 1, 2$ bounded by $C_{b, \Sigma}$. 
	
	The initial value $X_0, Y_0 \in \cF^W_0$ and $\E[|X^2_0|] < \infty$. The SDE \eqref{eq:data} for data has a unique strong solution and $\sup_{t \geq 0} \|Y_t\|_{L^\infty} \leq C_z$.
\end{assumption}
We note that $\E\left[ \sup_{t \in [0, T]} |X_t|^2 \right] < \infty$ under Assumption \ref{A:data}. In this paper, we will denote a generic constant $C_* > 0$ that depends on some or all of $C_\sigma, C_z, C_1, C_2$, and $C_3$ only, which may vary from line by line.

\subsection{Well-posedness}
Fix a continuous data path $\{ (X_t, Y_t) \}_{t \geq 0}$. Proposition \ref{prop:MkV} obtains the well-posedness of the mean-field density $\rho_t$. The proof adopts a fixed-point argument. The third property on the stability with respect to mollification is mainly to prepare the proof of Theorem \ref{thm:Lip-grad}.
\begin{proposition}\label{prop:MkV}
	We have the following results:
	\begin{enumerate}
		\item For the initial value
			$\rho_0 \in L^1(\R^d) \cap L^\infty(\R^d) \cap \cP(\R^d)$,
			there exists a unique solution to the mean-field flow  \eqref{eq:FK-continuity}
			in $C([0,\infty); L^1(\R^d) \cap \cP(\R^d))
			\cap L^\infty([0,\infty); L^\infty(\R^d))$
			depending continuously on the initial value. The following time-uniform bound holds:
			\begin{equation}
				\label{eq:MkV-bound}
				\sup_{t \in [0, \infty)}
				\| \rho_t \|_{L^\infty}
				\leqslant C < \infty,
			\end{equation}
            where $C$ depends on $\| \rho_0 \|_{L^\infty}$.
			\item If the initial value $\rho_0$ also has finite $k$-th moment for some $k > 0$, then the mean-field flow $\rho_t$ has finite $k$-th moment, uniformly in time:
			\begin{equation}
				\sup_{t \in [0,\infty)}\int_{\R^d} | \theta |^k \rho_t(d\theta)
				\leq C < \infty,
			\end{equation}
			where $C$ depends on $k$, $\| \rho_0 \|_{L^\infty}$, and $\| \rho_0 \|_{L^k}$.
			
			\item Let $\sigma^\varepsilon = \sigma * \eta^\varepsilon$ be the mollified approximation of $\sigma$. Consider the initial value $\rho^\varepsilon_0$ that converges to $\rho_0$ in $L^1$
			and satisfies $\sup_\varepsilon \| \rho^\varepsilon_0 \|_{L^\infty} < \infty$. Then the solution $\rho^\varepsilon_t$ of the approximate mean-field flow
			\begin{equation}\label{eq:MkV-approx}
				\partial_t \rho^\varepsilon_t = \beta \Delta \rho^\varepsilon_t + \nabla \cdot \left( \left[ \lambda \theta + 2 \left( \ang{\rho^\varepsilon_t, \sigma^\varepsilon(X_t, \cdot)} - Y_t \right) \nabla \sigma^\varepsilon(X_t, \theta) \right] \rho^\varepsilon_t \right)
			\end{equation}
			converges to $\rho_t$ in $L^1$ for all $t \geq 0$. Furthermore, the $L^\infty$ norm and the $k$-th moment bounds above hold
			when we replace $\rho$ by $\rho^\varepsilon$.
		\end{enumerate}
	\end{proposition}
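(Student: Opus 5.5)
Since the data path is fixed and continuous, \eqref{eq:FK-continuity} is a linear Fokker--Planck equation with drift $-\lambda\theta - 2 m(t)\nabla\sigma(X_t,\theta)$, in which the nonlinearity enters only through the scalar $m(t) := \ang{\rho_t, \sigma(X_t,\cdot)} - Y_t$. By Assumptions \ref{A:sigma} and \ref{A:data}, $|m(t)| \le C_\sigma + C_z$ for any probability measure $\rho_t$. I would therefore run a fixed-point argument on $m$ rather than on $\rho$. Fix $T>0$ and a measurable $\bar m:[0,T]\to[-C_\sigma-C_z, C_\sigma+C_z]$. The linear equation with drift $b^{\bar m}(t,\theta) = -\lambda\theta - 2\bar m(t)\nabla\sigma(X_t,\theta)$ is Lipschitz in $\theta$, with constant $\lambda + 2(C_\sigma+C_z)C_2$ from the bound on $D^2_{\theta\theta}\sigma$. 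Hence it has a unique strong SDE solution, and its law $\rho^{\bar m}_t$ is the unique weak solution in $C([0,T];L^1\cap\cP)$. Define $\Phi(\bar m)(t) := \ang{\rho^{\bar m}_t,\sigma(X_t,\cdot)} - Y_t$. To get the contraction, couple two SDEs with the same Brownian motion. Gronwall gives $\E|\theta^{1}_t-\theta^2_t| \le C e^{Ct}\int_0^t|\bar m_1-\bar m_2|ds$, and then $|\Phi(\bar m_1)-\Phi(\bar m_2)|(t) \le C_1 C e^{CT}\int_0^t |\bar m_1 - \bar m_2| ds$. This is a contraction in a weighted sup norm $\sup_t e^{-Kt}|\cdot|$. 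Since the bound on $\bar m$ is uniform, the solutions on successive intervals patch together to $[0,\infty)$. Continuous dependence on $\rho_0$ comes from the same coupling estimate, using total variation or $L^1$ stability of the linear equation in its initial datum.

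For the $L^\infty$ bound \eqref{eq:MkV-bound}, I would expand the equation to $\partial_t\rho = \beta\Delta\rho + b\cdot\nabla\rho + (\nabla\cdot b)\rho$ with $b = \lambda\theta + 2m\nabla\sigma$. Here $\nabla\cdot b = d\lambda + 2m\Delta\sigma$, so the zeroth-order coefficient is the positive constant $d\lambda$ plus a bounded perturbation. This is precisely where time-uniformity is delicate: the confinement term \emph{increases} the sup norm through $d\lambda\rho$, so a naive maximum principle only gives $e^{ct}$ growth. I expect this step to be the main obstacle. My first plan is to use mass conservation together with ultracontractivity. Via Duhamel/Nash--Moser on unit time windows, the parabolic smoothing gives $\|\rho_{t+1}\|_{L^\infty} \le C\|\rho_t\|_{L^1} = C$ on each window $[t,t+1]$. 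The constant $C$ is independent of $t$ because $|m|$, $|\nabla\sigma|$ and $|\Delta\sigma|$ are bounded uniformly, and the linear part $\lambda\theta$ is handled exactly by the Mehler (OU) kernel, whose $L^1\to L^\infty$ norm at time $1$ is finite. On $[0,1]$, the bound in terms of $\|\rho_0\|_{L^\infty}$ follows from the maximum principle with Gronwall. Equivalently, one can write $\rho_t$ through the OU semigroup $P_t$ plus a Duhamel term $\int_0^t P_{t-s}\nabla\cdot(2m\nabla\sigma\rho_s)ds$, and use $\|\nabla P_{\tau}\|_{L^1\to L^\infty}\lesssim \tau^{-(d+1)/2}$ for small $\tau$ together with the exponential contraction of $P$ for large $\tau$.

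For moments, apply Itô to $|\theta_t|^k$ for the representative particle (first $k\ge2$, then Jensen for smaller $k$): $\frac{d}{dt}\E|\theta|^k \le -k\lambda\E|\theta|^k + kC\E|\theta|^{k-1} + C_k\E|\theta|^{k-2}$. Young's inequality then gives $\frac{d}{dt}\E|\theta|^k \le -\frac{k\lambda}{2}\E|\theta|^k + C$, hence a uniform-in-time bound. Finally, for item 3, mollification preserves all bounds in Assumption \ref{A:sigma} with the same constants, so the previous estimates hold uniformly in $\varepsilon$. Convergence in $L^1$ then follows from stability. Split $\rho^\varepsilon_t - \rho_t$ into three contributions: the initial-data error $\|\rho^\varepsilon_0-\rho_0\|_{L^1}$, the drift error $\|\nabla\sigma^\varepsilon-\nabla\sigma\|_\infty + |\sigma^\varepsilon-\sigma|_\infty\to0$, which is controlled via the coupling/Gronwall estimate in total variation (or via an $L^1$ Duhamel argument using the uniform $L^\infty$ bounds), and the fixed-point error in $m$. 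All three vanish as $\varepsilon\to0$ for each fixed $t$.
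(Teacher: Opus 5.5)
Your argument is sound in outline but takes a different route from the paper.

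\textbf{The paper's route.} The paper runs the fixed point on the measure itself, in $C([0,T_0];L^1\cap\cP)$ with the sup-$L^1$ metric. It is driven by the $L^1$-stability estimate of \citet[Proposition A.1]{monmarche2024time},
$\|\rho_t-\rho_t'\|_{L^1}\le e^{C't}\|\rho_0-\rho_0'\|_{L^1}+C'\sqrt{t}\sup_{s\le t}\|b_s-b_s'\|_{L^\infty}$,
combined with $\|b(\mu_t)-b(\nu_t)\|_{L^\infty}\le 2C_\sigma C_1\|\mu_t-\nu_t\|_{L^1}$. This contracts only on a short interval and is then iterated. The time-uniform $L^\infty$ bound is simply quoted from the same proposition. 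Continuous dependence and item 3 follow from that one estimate, since the mollified drift differs by $O(\varepsilon)$ in sup norm. Moments come from the Lyapunov function $V_k=(1+|\theta|^2)^{k/2}$.

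\textbf{Your route.} You use the fact that the nonlinearity enters only through the a priori bounded scalar $m(t)$. Your fixed point therefore lives on bounded measurable functions of time and closes globally in a weighted sup norm via synchronous coupling. This is more elementary. Your unit-window ultracontractivity argument (mass conservation plus an $L^1\to L^\infty$ kernel bound, with the maximum principle used only on $[0,1]$) also explains why the $L^\infty$ bound is uniform in time, which the paper outsources. The price is that synchronous coupling controls only $\cW_1$-type quantities, while the statement is in $L^1$.

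\textbf{Three places to tighten.}
(i) For continuous dependence and for $\rho^\varepsilon_t\to\rho_t$ in $L^1$ you need a total-variation bound, and synchronous coupling does not provide one. Use Girsanov and Pinsker: for equal initial laws the path-space relative entropy is $\frac{1}{4\beta}\int_0^t\E|b_1-b_2|^2\,ds$. Combine this with $|m_1(t)-m_2(t)|\le C_\sigma\|\rho^1_t-\rho^2_t\|_{L^1}$ (plus $\|\sigma^\varepsilon-\sigma\|_{L^\infty}$ in item 3) and Gronwall. Alternatively, simply use the paper's $L^1$ estimate.

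(ii) Your ``equivalent'' Duhamel route does not close as written, because $\tau^{-(d+1)/2}$ is not integrable at $\tau=0$. Stick with the unit-window kernel bound, but remove $\lambda\theta$ first, for example via $e^{\lambda(r-s)}\theta_r$ on $[s,s+1]$. After that, Aronson's Gaussian bound with bounded coefficients applies. Without this step, local Nash--Moser constants degrade with $|\theta|$.

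(iii) For $k\in(0,2)$, ``first $k\ge 2$, then Jensen'' requires a finite second moment of $\rho_0$, which is not assumed. Apply the generator $\cL$ of the representative particle directly to $V_k$. It satisfies $\cL V_k\le -c_kV_k+C_k$ with $c_k>0$ for every $k>0$, and it avoids the singularity of $|\theta|^k$ at the origin.
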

	
Like \cite{cattiaux2008probabilistic}, we can show the well-posedness of the particle system in Proposition \ref{prop:particle}. Although our interaction term lacks the symmetric convolutional structure in \cite{cattiaux2008probabilistic}, we can exploit the uniform boundedness of $\sigma$ and $\nabla \sigma$. Consequently, the interaction is bounded and the linear confinement drift $-\lambda \theta$ dominates the interaction at large distances, guaranteeing the well-posedness and uniform-in-time moment bounds. The proof is similar to \cite{cattiaux2008probabilistic} and thus omitted. 
\begin{proposition}\label{prop:particle}
		Suppose $\theta^{i, N}_0$ follows the same distribution $v_0$ with finite second moments. Then the particle system \eqref{eq:particleSDE} has a global unique strong solution. Furthermore, there exists some $C > 0$ such that for all $i$,
		\begin{equation}
			\sup_{t \geq 0} \E[ |\theta^{i, N}_t|^2] \leq \int_{\R^d} |\theta|^2 v_0(d\theta) + C.
		\end{equation}
		In general, if $v_0$ has finite $2k$-th moments with $k \geq 1$, then there exists a constant $C(k)$ such that for all $i$,
		\begin{equation}
			\sup_{t \geq 0} \E[ |\theta^{i, N}_t|^{2k}] \leq C(k) \Big(1 + \int_{\R^d} |\theta|^{2k} v_0(d\theta) \Big).
		\end{equation}		
\end{proposition}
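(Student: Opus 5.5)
Since the data path $Z$ is independent of the particle noise, I will argue conditionally: fix a realization of $\omega^W$, so that $t\mapsto (X_t,Y_t)$ is a continuous deterministic path. Then \eqref{eq:particleSDE} is an SDE in $\R^{dN}$ driven by $(B^1,\dots,B^N)$ with additive noise and drift
\[
\Theta=(\theta^1,\dots,\theta^N)\mapsto -\lambda\theta^i - 2\nabla\sigma(X_t,\theta^i)\Big(\tfrac{1}{N-1}\textstyle\sum_{j\neq i}\sigma(X_t,\theta^j)-Y_t\Big).
\]
By Assumption \ref{A:sigma}, $\sigma(x,\cdot)$ and $\nabla\sigma(x,\cdot)$ are bounded and globally Lipschitz in $\theta$, uniformly in $x$; the Lipschitz constant of $\nabla\sigma$ comes from $|D^2_{\theta\theta}\sigma|\le C_2$. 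Together with $\sup_t\|Y_t\|_{L^\infty}\le C_z$, this makes the drift globally Lipschitz in $\Theta$, uniformly in $t$, and of linear growth. The drift is also progressively measurable in $(t,\omega)$ with respect to the joint filtration. Standard Itô theory (Picard iteration on $[0,T]$ for each $T$) then gives a unique global strong solution. Measurability in $\omega^W$ is inherited from the Picard scheme, so the solution is a strong solution on the product space adapted to $\mathcal F^W_t\vee\mathcal F^B_t$.

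For the moment bounds, I apply Itô's formula to $|\theta^{i,N}_t|^{2k}$. Write the interaction term as $g^i_t$, where $|g^i_t|\le 2C_1(C_\sigma+C_z)=:K$. For $k=1$,
\[
d|\theta^i_t|^2=\big(-2\lambda|\theta^i_t|^2+2\langle\theta^i_t,g^i_t\rangle+2\beta d\big)dt+2\sqrt{2\beta}\,\langle\theta^i_t,dB^i_t\rangle .
\]
Young's inequality gives $2\langle\theta,g\rangle\le\lambda|\theta|^2+K^2/\lambda$. Hence $m(t):=\E|\theta^i_t|^2$ satisfies $m'\le-\lambda m+K^2/\lambda+2\beta d$, after a localization argument: stop at exit times of balls so the martingale term has zero mean, then use Fatou. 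Grönwall yields $m(t)\le e^{-\lambda t}m(0)+(K^2/\lambda+2\beta d)/\lambda$. This gives the claim with $C=(K^2/\lambda+2\beta d)/\lambda$, independent of $N$, $t$ and the data path, and therefore valid after integrating over $\omega^W$.

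For general $k$, Itô's formula gives drift
\[
2k|\theta|^{2k-2}\big(-\lambda|\theta|^2+\langle\theta,g\rangle\big)+2\beta k(d+2k-2)|\theta|^{2k-2}.
\]
The lower-order terms $|\theta|^{2k-1}K$ and $|\theta|^{2k-2}$ are absorbed by Young's inequality into $\lambda k|\theta|^{2k}+C(k)$. This gives $\frac{d}{dt}\E|\theta|^{2k}\le-\lambda k\,\E|\theta|^{2k}+C(k)$, and Grönwall finishes the argument.

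I expect the only delicate point to be the localization and measurability on the product space: the drift depends on the random data path, which need not be bounded in $X$, but all bounds are uniform in $x$, so this is harmless. The uniform-in-time bound itself is routine once the boundedness of the interaction is used and $\lambda>0$ supplies the dissipation.
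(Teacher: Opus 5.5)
Your proof is correct and follows the route the paper indicates; the paper omits the details and defers to \cite{cattiaux2008probabilistic}. The interaction term is bounded by $2C_1(C_\sigma+C_z)$ and globally Lipschitz because $|D^2_{\theta\theta}\sigma|\le C_2$, which gives strong well-posedness, and the confinement $-\lambda\theta$ absorbs this bounded interaction and the It\^o correction in the localized Gr\"onwall estimate for $|\theta|^{2k}$. Your explicit constants are also independent of $N$ and of the data path, which is exactly the uniformity the paper uses later in \eqref{eq:M2}.
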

	
Next, we study the gradient bounds that are used in particle regrets, see Theorem \ref{thm:margin-regret}. We follow the Hamilton-Jacobi-Bellman (HJB) equation approach in \cite{conforti2023coupling,chen2025uniform,monmarche2024time}. By applying a logarithmic transformation to the probability density, we recast the Fokker-Planck PDE into a semilinear HJB equation. This transformation reinterprets the evolution of the density as the value function of an underlying stochastic optimal control problem. Then we use the reflection coupling technique in \cite{eberle2016reflection} and the stability results on mollification in Proposition \ref{prop:MkV} to prove the first claim in Theorem \ref{thm:Lip-grad}. Importantly, the gradient $\nabla u_t$ in \eqref{eq:dud2u} is in the classical sense, while the Hessian $D^2 u_t$ is in the weak sense.

The second and third properties in Theorem \ref{thm:Lip-grad} follow similarly from \citet[Proposition 4.6]{chen2025uniform}. The $AC^2$ property of the flow, written as $(\rho_t)_{t \geq 0} \in AC^2_{loc} ([0, \infty); (\cP_2, \cW_2))$, is defined in \cite{ambrosio2008gradient} and \citet[Definition 4.5]{chen2025uniform}. The third property here ensures that we can apply the chain rule in \citet[Proposition 10.3.18]{ambrosio2008gradient} to well-behaved functionals.

Define the invariant probability density $\gamma_0 \propto \exp(-\frac{\lambda}{2\beta} |\theta|^2)$.
\begin{theorem}\label{thm:Lip-grad}
	The initial density $\rho_0$ is chosen such that $ - \log \frac{\rho_0}{ \gamma_0}$ is Lipschitz continuous. Let $(\rho_t)_{t \geq 0}$ be the solution to \eqref{eq:FK-continuity}. Denote $u_t := - \log \frac{\rho_t}{\gamma_0}$.
		Then we have the following results:
		\begin{enumerate}
			\item For all $t > 0$,
			\begin{equation}\label{eq:dud2u}
				\| \nabla u_t \|_{L^\infty} \leq C, \quad \| D^2 u_t \|_{L^\infty} \leq \frac{C}{\sqrt{t \wedge 1}},
			\end{equation}
			for some $C$ depending on
			$\| \rho_0 \|_{L^\infty}$ and $\| \nabla u_0 \|_{L^\infty}$, but independent of time $t$. Hence, there exists $L_\rho > 0$, such that, for all $t \geq 0$,
			\begin{equation}\label{eq:logrho}
				| \nabla \log \rho_t(\theta)| \leq L_\rho (1 + | \theta |).
			\end{equation}
			The constant $L_\rho$ depends at least on $\beta$ and $\lambda$.
			\item Furthermore, suppose $\rho_0 \in L^1(\R^d) \cap L^\infty(\R^d) \cap \cP(\R^d)$ and $\rho_0$ has finite second moment. Then the solution $\rho_t$ has finite entropy when $t > 0$:
			\begin{equation}
				\int | \log \rho_t(\theta) | \rho_t(\theta) d\theta < \infty. 
			\end{equation}
			
			\item If additionally $\rho_0$ has finite entropy, then 
			\begin{equation}
				\int^t_0 \int \frac{| \nabla \rho_s(\theta)|^2}{\rho_s(\theta)} d\theta ds < \infty, \text{ for every } t \geq 0. 
			\end{equation}
			Hence, $(\rho_t)_{t \geq 0} \in AC^2_{loc} ([0, \infty); (\cP_2, \cW_2))$ and has tangent vector given by
			\begin{equation}
				v_t(\theta) := - \beta \nabla \log \rho_t - \nabla\left(\frac{\delta U(\rho_t, Z_t)}{\delta \rho_t} \right),
			\end{equation}
			for $t \geq 0$ a.e. in the sense of \citet[Proposition 8.4.5]{ambrosio2008gradient}.
		\end{enumerate}	
\end{theorem}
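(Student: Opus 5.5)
We follow the HJB route of \cite{conforti2023coupling,chen2025uniform,monmarche2024time}, conditionally on a fixed continuous data path. Write the drift of \eqref{eq:mean-fieldSDE} as $-\lambda\theta + b_t(\theta)$, where
\[
b_t(\theta) := -2\bigl(\ang{\rho_t,\sigma(X_t,\cdot)}-Y_t\bigr)\nabla\sigma(X_t,\theta).
\]
By Assumptions \ref{A:sigma}--\ref{A:data}, $|b_t|\le 2(C_\sigma+C_z)C_1$, $|\nabla b_t|\le C_*/(1+|\theta|)$ and $|D^2 b_t|\le C_*/(1+|\theta|)$, uniformly in $t$. Substituting $\rho_t=\gamma_0 e^{-u_t}$ into \eqref{eq:FK-continuity} and using $\beta\nabla\log\gamma_0=-\lambda\theta$ gives, after a routine computation, the semilinear equation
\[
\partial_t u_t = \beta\Delta u_t - \beta|\nabla u_t|^2 - \lambda\theta\cdot\nabla u_t + b_t\cdot\nabla u_t - \nabla\cdot b_t + \tfrac{\lambda}{\beta}\, b_t\cdot\theta.
\]
Equivalently, in backward form, $u_t$ is the value function of a stochastic control problem. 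The controlled state is $d\Theta_s=[-\lambda\Theta_s+b_{t-s}(\Theta_s)+2\beta\alpha_s]ds+\sqrt{2\beta}\,dB_s$, with running cost $\beta|\alpha_s|^2$ plus the source terms and terminal cost $u_0$. All coefficients are known once $\rho$ is fixed by Proposition \ref{prop:MkV}, so this is a linear-in-$u$ HJB up to the quadratic Hamiltonian. To justify the manipulations, we first work with the mollified flow \eqref{eq:MkV-approx}, where $u^\varepsilon$ is classical. At the end we pass to the limit using part 3 of Proposition \ref{prop:MkV}: $L^1$ convergence plus uniform Lipschitz bounds give convergence of $\nabla u^\varepsilon$ locally uniformly, and weak-$*$ convergence of $D^2u^\varepsilon$.

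\textbf{Step 1: uniform gradient bound.} For two starting points $\theta,\theta'$, we couple the optimally controlled diffusions by reflection coupling \citep{eberle2016reflection}. The drift $-\lambda\theta+b_t$ is a strongly contractive linear part plus a bounded perturbation, so Eberle's concave distance $f$ yields contraction $\E f(|\Theta_s-\Theta'_s|)\le e^{-cs}f(|\theta-\theta'|)$ with $c$ independent of $t$. The main obstacle is the source term $\tfrac{\lambda}{\beta}b_t\cdot\theta$. It grows linearly in $\theta$, but its Lipschitz constant is $\tfrac{\lambda}{\beta}\bigl(\|b\|_\infty+\sup|\theta||\nabla b_t|\bigr)\le C_*$, and this is exactly where the decay $|D^2_{\theta\theta}\sigma|\le C_2/(1+|\theta|)$ enters. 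With that bound, the running cost is uniformly Lipschitz, and $u_0$ is Lipschitz by hypothesis. The control $\alpha$ is shared in the synchronous-reflection coupling, so the quadratic cost cancels in the comparison. Then
\[
|u_t(\theta)-u_t(\theta')|\le \Bigl(\|\nabla u_0\|_{L^\infty}e^{-ct}+C\int_0^t e^{-cs}ds\Bigr)|\theta-\theta'|,
\]
which gives $\|\nabla u_t\|_{L^\infty}\le C$ uniformly in $t$. If the control formulation is awkward here, the alternative is a Bernstein maximum-principle argument on $w=|\nabla u|^2 e^{\text{weight}}$. Finally, $\nabla\log\rho_t=-\tfrac{\lambda}{\beta}\theta-\nabla u_t$ gives \eqref{eq:logrho} with $L_\rho=\max(C,\lambda/\beta)$.

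\textbf{Step 2: Hessian bound.} Differentiate the Duhamel/Bismut representation. On the window $[t-(t\wedge1),t]$, with $\nabla u$ now bounded, the linearized equation for $\nabla u$ has bounded coefficients apart from $-\lambda\theta\cdot\nabla$. The Bismut--Elworthy--Li formula for the Ornstein--Uhlenbeck-type semigroup then gives $\|D^2u_t\|\le C(\|\nabla u_{t-s}\|_\infty+\text{sources})/\sqrt{s}$. The source derivatives involve $D^3\sigma$ and are controlled by Assumption \ref{A:sigma}. Taking $s=t\wedge 1$ gives \eqref{eq:dud2u}.

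\textbf{Steps 3 and 4: parts 2 and 3.} For part 2, we use $\rho_t\le C$ from \eqref{eq:MkV-bound}, which controls $\rho\log\rho$ from above. We bound $-\log\rho_t$ from above by a Gaussian-type lower bound from the uniform parabolicity and bounded perturbation of the OU drift, giving $-\log\rho_t\le C_t(1+|\theta|^2)$. Combined with the second moment bound of Proposition \ref{prop:MkV}, this yields finite entropy for $t>0$, following \citet[Proposition 4.6]{chen2025uniform}. For part 3, we use the entropy dissipation identity
\[
\frac{d}{dt}\int\rho_t\log\rho_t=-\beta\int\frac{|\nabla\rho_t|^2}{\rho_t}+\int\nabla\rho_t\cdot(-\lambda\theta+b_t).
\]
By Young's inequality, the second term is at most $\tfrac{\beta}{2}I(\rho_t)+C(1+\int|\theta|^2\rho_t)$. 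Integrating, and using finite initial entropy together with the moment bounds, gives $\int_0^t I(\rho_s)\,ds<\infty$. Then $\|v_t\|_{L^2(\rho_t)}^2\le 2\beta^2 I(\rho_t)+C(1+\int|\theta|^2\rho_t)$ is locally integrable, and since $\rho$ solves the continuity equation with velocity $v_t$, \citet[Theorem 8.3.1, Proposition 8.4.5]{ambrosio2008gradient} give the $AC^2_{loc}$ property and identify $v_t$ as the tangent vector.
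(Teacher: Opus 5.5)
Your proposal is correct and follows the paper's route: the log-transform to the HJB equation, mollification with passage to the limit via Proposition~\ref{prop:MkV}(3), and the control representation with Eberle reflection coupling under the weak-convexity profile $\kappa(r)=-\lambda+C_*/r$ for a time-uniform gradient bound. It also matches the paper in using a Monmarch\'e/Conforti-type Hessian estimate and \citet[Proposition 4.6]{chen2025uniform}-style arguments for parts 2--3.

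The one real variation is in the gradient step. You treat the entire source term as Lipschitz, using the $D^3_{\theta\theta\theta}\sigma$ decay for $\nabla\cdot b_t$, so you only need the contraction $\mathbb{E}|X^{x}_s-X^{y}_s|\le Ce^{-cs}|x-y|$. The paper instead splits off the bounded part $K_t\Delta\sigma$ and controls it through the coupling-time tail $\mathbb{P}(\tau>s)\le C|x-y|e^{-cs}/\sqrt{s\wedge 1}$ in its Lemma~\ref{lem:HJB-lip-hess}.

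There is one slip. With your $b_t$, the correct equation is $\partial_t u_t=\beta\Delta u_t-\beta|\nabla u_t|^2-(\lambda\theta+b_t)\cdot\nabla u_t+\nabla\cdot b_t-\frac{\lambda}{\beta}\theta\cdot b_t$, so all three $b_t$-terms have the opposite sign to yours. This is harmless, because your estimates only use the bounds on $b_t$, $\nabla b_t$ and $D^2 b_t$.
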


To simplify the subsequent analysis and eliminate initialization errors, we impose the following standing assumption.
\begin{assumption}
	Suppose the initial distribution $\rho_0$ of the mean-field SDE \eqref{eq:mean-fieldSDE} and the initial distribution $\rho^{i, N}_0$, $i=1, \ldots, N$, of the $N$-particle system are the same and equal to the invariant probability density $\gamma_0 \propto \exp(-\frac{\lambda}{2\beta} |\theta|^2)$.
\end{assumption}

\subsection{Logarithmic Sobolev inequality}
A crucial notion that enables us to obtain the Polyak-Lojasiewicz (PL) condition and apply propagation of chaos results is the logarithmic Sobolev inequality (LSI).
\begin{definition}
	A probability measure $\nu$ on $\R^d$ satisfies the logarithmic Sobolev inequality with constant $\alpha > 0$ if for all functions $f: \R^d \to \R$,
	\begin{equation}\label{eq:LSIfdef}
		\int f^2 \log(f^2) d\nu - \int f^2 d\nu \log\left( \int f^2 d\nu \right) \leq \frac{2}{\alpha} \int | \nabla f |^2 d \nu.
	\end{equation}
	In short, we say $\nu$ satisfies the $LSI(\alpha)$.
\end{definition}
The LSI property \eqref{eq:LSIfdef} has another compact form. For absolutely continuous probability measures $\mu$ and $\nu$, we define the relative Fisher information by
\begin{equation}
	I(\mu | \nu) := \int \Big| \nabla \log \frac{d\mu}{d\nu} \Big|^2 d\mu = 4 \int \Big|  \nabla \sqrt{\frac{d\mu}{d\nu}} \Big|^2 d\nu,
\end{equation}
and the relative entropy by
\begin{equation}
	D_{KL}(\mu \| \nu) := \int \log \frac{d\mu}{d\nu} d\mu = \int \frac{d\mu}{d\nu} \log \frac{d\mu}{d\nu} d\nu.
\end{equation}
For all probability measure $\mu$ absolutely continuous with respect to (w.r.t.) $\nu$, we let $f = \frac{d\mu}{d\nu}$ and apply the condition \eqref{eq:LSIfdef} to $\sqrt{f}$. Then the probability measure $\nu$ satisfies the $LSI(\alpha)$ if
\begin{equation}\label{eq:LSI_entropy}
	D_{KL}(\mu \| \nu) \leq \frac{1}{2\alpha} I(\mu | \nu), \quad \text{for all $\mu$ absolutely continuous w.r.t $\nu$.}
\end{equation}
It is \citet[Definition 1]{otto2000generalization}. Intuitively, the LSI bounds the spread of a measure relative to a target $\nu$ measured by the KL divergence, by its local variation measured by the Fisher information. The LSI is closely related to the PL condition in the Wasserstein space, as demonstrated in Lemma \ref{lem:PL} later. Furthermore, the LSI implies the Talagrand's inequality \citep{otto2000generalization}, which allows us to bound the $\cW_2$ distance between measures in terms of their KL divergence.

Thanks to \cite{monmarche2024time}, the online density $\rho_t$ satisfies the LSI uniformly in time $t$.
\begin{lemma}\label{lem:rhotLSI}
	 There exists $\eta > 0$ such that $\rho_t$ satisfies the $LSI(\eta)$ for all $t \geq 0$. The constant $\eta$ depends on $\beta$, $\lambda$, $C_\sigma$, $C_z$, $C_1$, and $C_2$. If $\beta \geq 1$, the constant $\eta$ can be chosen as $\tilde{\eta}/\beta$ where $\tilde{\eta}$ is independent of $\beta$. 
\end{lemma}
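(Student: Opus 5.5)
Since $\rho_0=\gamma_0$, the plan is a perturbation argument. Write
\[
\rho_t = \exp(-u_t)\,\gamma_0,\qquad u_t=-\log\frac{\rho_t}{\gamma_0}.
\]
Here $\gamma_0\propto \exp(-\frac{\lambda}{2\beta}|\theta|^2)$ is Gaussian, so it satisfies $LSI(\lambda/\beta)$ by Bakry--\'Emery. The LSI for $\rho_t$ will then follow from a stability result for LSI under perturbations of the potential. By Theorem \ref{thm:Lip-grad}, $\|\nabla u_t\|_{L^\infty}\le C$ uniformly in $t$, so $\rho_t$ is a Lipschitz perturbation of a strongly log-concave measure. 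The Aida--Shigekawa type perturbation lemma then gives that $e^{-V-u}$ satisfies a uniform LSI whenever $e^{-V}$ is $\kappa$-strongly log-concave and $u$ is $L$-Lipschitz, with a constant depending only on $\kappa$ and $L$. One way to see this is to split $u=u_1+u_2$, where $u_2$ is bounded (Holley--Stroock) and $u_1$ is an inf-convolution/convex-at-large-scale correction. Alternatively one can use the explicit bound of \cite{monmarche2024time} or Miclo's criterion. This yields $LSI(\eta)$ with $\eta=\eta(\lambda/\beta,\,C)$, independent of $t$.

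The real work is a time-uniform Lipschitz bound on $u_t$ that is independent of the data path. I would take it from Theorem \ref{thm:Lip-grad}, checking that its constant depends only on $\|\rho_0\|_{L^\infty}$, $\|\nabla u_0\|_{L^\infty}=0$, and on $\beta,\lambda,C_\sigma,C_z,C_1,C_2$, and not on $X_t$. This holds because the interaction drift
\[
b_t(\theta)=2\big(\ang{\rho_t,\sigma(X_t,\cdot)}-Y_t\big)\nabla\sigma(X_t,\theta)
\]
satisfies $|b_t|\le 2(C_\sigma+C_z)C_1$ and $|\nabla b_t|\le 2(C_\sigma+C_z)C_2$ uniformly in $x$. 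The derivation I have in mind goes as follows. Under the frozen-in-time nonlinearity, $u_t$ solves the HJB equation
\[
\partial_t u=\beta\Delta u-\beta|\nabla u|^2-\lambda\theta\cdot\nabla u+\dots,
\]
whose source terms involve only $b_t$ and $\nabla\cdot b_t$. The control representation, together with synchronous or reflection coupling of the controlled diffusions with contraction rate $\lambda$, gives $|\nabla u_t|\le \frac{C'}{\lambda}$ with $C'=C'(C_\sigma,C_z,C_1,C_2)$. This step is the main obstacle: the estimate must hold uniformly over all continuous data paths so that $\eta$ is deterministic. I would proceed pathwise, using the $\sup_x$ bounds in Assumption \ref{A:sigma}.

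For the $\beta\ge1$ scaling, I would rescale $\theta$ and track constants. The Gaussian part has LSI constant $\lambda/\beta$. From the HJB, with diffusion $\beta$ and the source scaled by $1/\beta$ (since $\rho_t\propto\exp(-(\text{potential})/\beta)$ heuristically), we get $\|\nabla u_t\|\lesssim C'/(\lambda)$, bounded uniformly for $\beta\ge1$. The perturbation lemma gives $\eta\ge \kappa\,\exp(-c L^2/\kappa)$-type dependence only through $L/\sqrt{\kappa}$ with $\kappa=\lambda/\beta$. Since the Lipschitz constant of the perturbation relative to the $\beta$-scaled potential is $O(1/\beta)\cdot\beta$, I expect the ratio $L^2/\kappa$ to remain bounded for $\beta\ge1$, giving $\eta=\tilde\eta/\beta$. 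I would confirm this by rescaling $\theta=\sqrt{\beta}\,\vartheta$, which maps $\gamma_0$ to a $\beta$-independent Gaussian and multiplies LSI constants by $1/\beta$. It then remains to verify that the rescaled $u$ has a Lipschitz constant bounded uniformly in $\beta\ge1$, which I would check by redoing the coupling estimate in the rescaled variables.
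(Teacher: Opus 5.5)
Your proposal is correct and is essentially the paper's argument. The paper rescales $\tilde\theta=\theta/\sqrt{\beta}$ and verifies the hypotheses of \citet[Theorem 1.4]{monmarche2024time}, which is itself a bounded--Lipschitz perturbation argument around the Gaussian $\tilde\gamma_0$, driven by an HJB/reflection-coupling gradient estimate for $-\log(\tilde\rho_t/\tilde\gamma_0)$; you instead take that estimate from Theorem \ref{thm:Lip-grad}(1) (with $\nabla u_0=0$) and apply an Aida--Shigekawa-type lemma yourself.

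The one thing to correct is your $\beta$-heuristic. In the original variables this route needs $\|\nabla u_t\|_{L^\infty}=O(\beta^{-1/2})$, not a $\beta$-free bound $C'/\lambda$: with $\kappa=\lambda/\beta$, the latter makes $L^2/\kappa$ grow linearly in $\beta$. Also, the source term proportional to $(\ang{\rho_t,\sigma(X_t,\cdot)}-Y_t)\Delta\sigma(X_t,\theta)$ is $O(1)$, not $O(1/\beta)$; the $\beta^{-1/2}$ gain comes from the faster coupling at diffusion $\beta$. Your final rescaled check captures exactly this, and it coincides with the paper's $\beta$-uniform bounds on $g_t$, $\varphi_{1,t}$ and $\varphi_{2,t}$ for $\beta\ge1$.
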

In the proof of Lemma \ref{lem:rhotLSI}, we also find that when $\beta$ is large, $\tilde{\eta}$ becomes close to $\lambda$, since the transformation $\tilde{\theta}_t = \theta_t/\sqrt{\beta}$ shows that the nonlinear terms from $\sigma$ are vanishing.

\section{Regret bounds for the mean-field case}\label{sec:mf-regret} 

In this section, we obtain mean-field regrets bounds with three different methods, the displacement convexity method, the PL method, and the Malliavin calculus approach. The displacement convexity method is the most direct way and yields a constant regret if the functional is displacement convex. This idea has been exploited in \cite{guo2022online}. However, the displacement convexity assumption is too strong and usually requires the $L^2$ penalty parameter $\lambda$ sufficiently large. It motivates us to explore the PL and Malliavin methods for the non-convex case. It is widely known that online-to-batch conversions imply that online learning is at least as hard as batch learning; see \cite{cesa2004generalization} and \citet[Theorem 3]{agarwal2019learning}. Moreover, the learner faces a varying environment due to independent increments from Brownian motions in the data dynamic \eqref{eq:data}. Hence, sublinear regrets should not be expected in the general non-convex and non-stationary cases. Our main goal changes to find explicit regret bounds and quantify the roles of regularization parameters $\lambda$ and $\beta$, even though the regret is linear.

\subsection{The displacement convexity method for static regret}
First, we prove that the offline benchmark $\rho^*$ is well-posed. The proof relies on the Schauder fixed-point theorem \cite[Section 1.15, Theorem 1.C]{zeidler1995applied}.
\begin{lemma}\label{lem:rho*}
The optimizer of the offline optimization problem \eqref{eq:statobj} exists and is the unique solution to
\begin{equation}\label{eq:rho*}
	\rho^*(\theta) = \frac{1}{A} \exp \left[ - \frac{\lambda}{2 \beta} |\theta|^2 -\frac{2}{\beta T} \int^T_0  \left( \ang{\rho^*, \sigma(X_t, \cdot)} -  Y_t \right) \sigma(X_t, \theta) dt \right],
\end{equation}
where $A$ is a normalization constant such that $\rho^*$ is a probability density.
\end{lemma}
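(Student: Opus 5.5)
Fix a data path and write the time-averaged objective as
\[
\bar F(\rho) := \frac{1}{T}\int_0^T F(\rho,Z_t)\,dt = \frac{1}{T}\int_0^T\Big(\ang{\rho,\sigma(X_t,\cdot)}^2 - 2Y_t\ang{\rho,\sigma(X_t,\cdot)}\Big)dt + \frac{\lambda}{2}\ang{\rho,|\theta|^2} + \beta\int\rho\log\rho .
\]
The plan has three parts: establish existence of a fixed point of \eqref{eq:rho*} by Schauder's theorem, show that every minimizer satisfies \eqref{eq:rho*} by a first-order condition, and show uniqueness.

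\textbf{Existence of a fixed point.} Define $g_\rho(t) := \ang{\rho,\sigma(X_t,\cdot)} - Y_t$. The right-hand side of \eqref{eq:rho*} depends on $\rho$ only through the function $g_\rho$. So I would set up the fixed point on the function $m(t) = \ang{\rho,\sigma(X_t,\cdot)}$ rather than on measures. Let $K := \{ m \in L^2([0,T]) : |m| \le C_\sigma \}$, which is convex, closed and bounded. For $m \in K$, define the Gibbs density
\[
\Gamma_m(\theta) \propto \exp\Big[-\frac{\lambda}{2\beta}|\theta|^2 - \frac{2}{\beta T}\int_0^T (m(t)-Y_t)\,\sigma(X_t,\theta)\,dt\Big],
\]
and set $\Phi(m)(t) := \ang{\Gamma_m,\sigma(X_t,\cdot)}$. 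The potential perturbation is bounded by $2(C_\sigma+C_z)C_\sigma/\beta$, uniformly in $m$. Hence $\Gamma_m$ is a bounded perturbation of $\gamma_0$, with density ratio bounded above and below uniformly on $K$.

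Continuity of $\Phi$ follows from dominated convergence. Compactness of $\Phi(K)$ is the point that needs care, since $K$ is not compact in $L^2$. I see two routes:
\begin{itemize}
\item Use the bound $|D_x\sigma| \le C_1$ together with continuity of the path $X$, so that $\Phi(m)$ inherits the modulus of continuity of $t \mapsto X_t$. Then $\Phi(K)$ is equicontinuous and bounded, hence relatively compact by Arzel\`a--Ascoli in $C([0,T]) \subset L^2$.
\item Alternatively, the map $m \mapsto \Gamma_m$ factors through the bounded linear map $m \mapsto \int_0^T m(t)\sigma(X_t,\cdot)\,dt$. Its image consists of functions that are uniformly bounded and uniformly Lipschitz in $\theta$, so it is compact in the local uniform topology.
\end{itemize}
Either route lets Schauder's theorem give a fixed point $m^*$. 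Then $\rho^* := \Gamma_{m^*}$ solves \eqref{eq:rho*}.

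\textbf{Minimizers satisfy \eqref{eq:rho*}.} First I would show a minimizer exists by the direct method:
\begin{itemize}
\item The $U$-part is bounded below.
\item The term $\frac{\lambda}{2}\ang{\rho,|\theta|^2} + \beta\int\rho\log\rho$ equals $\beta D_{KL}(\rho\|\gamma_0)$ plus a constant. It is lower semicontinuous and coercive, so minimizing sequences are tight.
\item The quadratic term in $\ang{\rho,\sigma(X_t,\cdot)}$ is continuous under weak convergence, because $\sigma$ is bounded and continuous.
\end{itemize}
For the first-order condition, take a minimizer $\hat\rho$ and perturb it as $\rho_\epsilon = (1-\epsilon)\hat\rho + \epsilon\nu$, with $\nu$ of finite entropy. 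Differentiating at $\epsilon=0^+$ shows that the first variation
\[
\frac{2}{T}\int_0^T g_{\hat\rho}(t)\,\sigma(X_t,\theta)\,dt + \frac{\lambda}{2}|\theta|^2 + \beta\log\hat\rho(\theta)
\]
is constant $\hat\rho$-a.e. Because entropy forces $\hat\rho > 0$ a.e. (the standard argument), this yields the Gibbs form \eqref{eq:rho*}. A minimizer therefore exists and satisfies the equation.

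\textbf{Uniqueness (main obstacle).} Uniqueness is where I expect the real difficulty. In the linear (flat) geometry the functional $\bar F$ is convex:
\begin{itemize}
\item $\rho \mapsto \ang{\rho,\sigma(X_t,\cdot)}^2$ is the square of a linear functional, hence convex.
\item $-2Y_t\ang{\rho,\sigma}$ and $\ang{\rho,|\theta|^2}$ are linear.
\item $\beta\int\rho\log\rho$ is strictly convex.
\end{itemize}
So $\bar F$ is strictly convex along linear interpolations, which already gives a unique minimizer.

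To get uniqueness of solutions to \eqref{eq:rho*}, I would show every solution is a minimizer. Any solution is a critical point of the convex functional $\bar F$. Using the entropy sandwich
\[
\bar F(\nu) - \bar F(\rho^*) \ge \beta D_{KL}(\nu\|\rho^*) \ge 0,
\]
which comes from convexity of the quadratic part plus the Gibbs identity, every solution is the global minimizer. Strict convexity then makes the solution unique. The remaining care is justifying the first-variation computation: the entropy must be finite along the segment, and $\rho^*$ must have finite second moment. Both follow from the two-sided comparison of $\rho^*$ with $\gamma_0$.
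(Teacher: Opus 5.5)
Your proposal is correct and follows the paper's route: Schauder's theorem applied to the scalar profile $t\mapsto\langle\rho,\sigma(X_t,\cdot)\rangle$ for existence (your equicontinuity bound $|\Phi(m)(t)-\Phi(m)(s)|\le C_1|X_t-X_s|$ supplies the Arzel\`a--Ascoli step the paper only asserts), and for optimality the identity $\bar F(\nu)-\bar F(\rho^*)=\frac{1}{T}\int_0^T(\langle\nu,\sigma(X_t,\cdot)\rangle-\langle\rho^*,\sigma(X_t,\cdot)\rangle)^2dt+\beta D_{KL}(\nu\|\rho^*)$, of which your entropy sandwich is the inequality form. The differences are minor: the paper proves uniqueness of the fixed point by pairing $\rho_1^*-\rho_2^*$ against the difference of potentials, which amounts to applying your sandwich to both solutions and adding; and your direct-method/first-order-condition step is unnecessary, since the sandwich already shows that the Schauder fixed point is the unique minimizer.
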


The work of \cite{mccann1997convexity} first introduced the convexity of functionals in the Wasserstein space. We recall several useful concepts and results from \cite{ambrosio2008gradient,figalli2021invitation}. Consider a functional $\cE: \cP_2(\R^d) \to (-\infty, +\infty]$. For simplicity, we restrict the domain of $\cE$ to be a subset of $\cP^r_2(\R^d)$ and assume $\cE(\mu) = +\infty$ for any $\mu \notin \cP^r_2(\R^d)$.  We denote the effective domain of $\cE$ by $B(\cE) := \{ \mu \in \cP^r_2(\R^d) : \cE(\mu) < \infty\}$, and assume $B(\cE) \neq \emptyset$. According to Brenier's theorem, for any $\mu, \nu \in \cP^r_2(\R^d)$, there exists a unique map $T^\nu_\mu: \R^d \to \R^d$, such that the optimal coupling $\gamma \in \Pi(\mu, \nu)$ achieving the Wasserstein distance $\cW^2_2(\mu, \nu)$ concentrates on the graph of $T^\nu_\mu$. Specifically, we have
\begin{equation*}
	\gamma(\{ (x, y) \in \R^d \times \R^d: y = T^\nu_\mu(x) \}) = 1.
\end{equation*}
We refer to \citet[Corollary 2.5.12]{figalli2021invitation} for a detailed proof.

The following definition adapts \citet[Definition 9.1.1]{ambrosio2008gradient} by restricting the domain to the absolutely continuous space $\cP^r_2(\R^d)$, following McCann's approach. This restriction guarantees that the optimal transport map $T_\mu^\nu$ uniquely exists, which allows us to define the displacement interpolation directly through maps rather than transport plans.
\begin{definition}\label{def:disp-convex}
	Given a constant $L \in \R$, we say the functional $\cE$ is $L$-geodesically convex if for any $\mu, \nu \in \cP^r_2(\R^d)$, we have
	\begin{equation*}
		\cE(\pi^{\mu \to \nu}_{s}) \leq (1 - s) \cE(\mu) + s \cE(\nu)  - \frac{L}{2} s(1 - s) \cW^2_2(\mu, \nu), \quad \forall s \in [0, 1].
	\end{equation*}
	Here, $\pi^{\mu \to \nu}_{s} := ((1 - s) \mathrm{Id} + s T^{\nu}_{\mu})_{\#} \mu$ is the displacement interpolation between $\mu$ and $\nu$. When $L=0$, we say $\cE$ is displacement convex.
\end{definition}

We define the Fr\'echet subdifferential as follows, adapting \citet[Definition 10.1.1]{ambrosio2008gradient}. Because $B(\cE) \subset \cP^r_2(\R^d)$, Definition \ref{def:Frechet} only needs to test probability measures $\nu$ that are absolutely continuous.
\begin{definition}\label{def:Frechet}
	Suppose the functional $\cE: \cP_2(\R^d) \to (-\infty, +\infty]$ is lower semi-continuous. Consider a measure $\mu \in B(\cE)$ and a vector field $\xi: \R^d \to \R^d$ satisfying $\xi \in L^2(\mu)$. We say that $\xi$ belongs to the Fr\'echet subdifferential $\partial \cE(\mu)$ if
	\begin{equation}
		\liminf_{\nu \to \mu} \frac{\cE(\nu) - \cE(\mu) - \int_{\R^d} \ang{\xi(\theta), T_\mu^{\nu}(\theta) - \theta} \mu(d\theta)}{\cW_2(\mu, \nu)} \geq 0,
	\end{equation}
	where $\nu$ converges to $\mu$ in $\cW_2$. 
\end{definition}

Lemma \ref{lem:subdiff} provides an equivalent characterization of the Fr\'echet subdifferential with the $L$-geodesic convexity. The proof follows identically to the arguments presented in \citet[Section 10.1.1, p.231]{ambrosio2008gradient}.
\begin{lemma}\label{lem:subdiff}
	Suppose $\cE$ is lower semi-continuous and $L$-geodesically convex under Definition \ref{def:disp-convex}. Then a vector field $\xi \in L^2(\mu)$ belongs to the Fr\'echet subdifferential $\partial \cE(\mu)$ if and only if
	\begin{equation}
		\cE(\nu) - \cE(\mu) \geq \int \ang{\xi(\theta), T^\nu_\mu(\theta) - \theta} \mu(d\theta) + \frac{L}{2} \cW^2_2(\mu, \nu), \quad \forall \nu \in B(\cE).
	\end{equation}  
\end{lemma}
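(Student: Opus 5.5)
The plan is to prove the two implications separately. In both directions we pass between the global inequality and the local $\liminf$ condition by using displacement interpolations along geodesics from $\mu$, as in \citet[Section 10.1.1]{ambrosio2008gradient}.

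\emph{The global inequality implies membership in $\partial\cE(\mu)$.} Assume the inequality holds for all $\nu\in B(\cE)$. If $\nu\notin B(\cE)$, then $\cE(\nu)=+\infty$ and the difference quotient in Definition \ref{def:Frechet} is $+\infty$, so only $\nu \in B(\cE)$ matter. For these, divide the inequality by $\cW_2(\mu,\nu)$:
\[
\frac{\cE(\nu)-\cE(\mu)-\int\ang{\xi,T^\nu_\mu-\mathrm{Id}}d\mu}{\cW_2(\mu,\nu)}\ \ge\ \frac{L}{2}\cW_2(\mu,\nu).
\]
The right-hand side tends to $0$ as $\nu\to\mu$ in $\cW_2$, whatever the sign of $L$. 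Taking $\liminf$ gives the Fréchet condition.

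\emph{Membership in $\partial\cE(\mu)$ implies the global inequality.} Fix $\nu\in B(\cE)$ and set $\mu_s:=\pi^{\mu\to\nu}_s=((1-s)\mathrm{Id}+sT^\nu_\mu)_\#\mu$ for $s\in(0,1]$. The key steps are as follows.
\begin{enumerate}
\item $\mu_s$ is absolutely continuous, by McCann's interpolation result: the map $(1-s)\mathrm{Id}+sT$ is the gradient of a strictly convex function that is injective $\mu$-a.e. Moreover, $\cW_2(\mu,\mu_s)=s\,\cW_2(\mu,\nu)$, because $(\mathrm{Id},(1-s)\mathrm{Id}+sT)_\#\mu$ is optimal. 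In particular $\mu_s\to\mu$ as $s\downarrow 0$.
\item The optimal map from $\mu$ to $\mu_s$ is $T^{\mu_s}_\mu=(1-s)\mathrm{Id}+sT^\nu_\mu$. This holds because that map is the gradient of the convex function $(1-s)|x|^2/2+s\varphi$, where $T^\nu_\mu=\nabla\varphi$; Brenier uniqueness then identifies it. Hence $T^{\mu_s}_\mu-\mathrm{Id}=s(T^\nu_\mu-\mathrm{Id})$.
\item By $L$-geodesic convexity,
\[
\cE(\mu_s)-\cE(\mu)\le s\big(\cE(\nu)-\cE(\mu)\big)-\frac{L}{2}s(1-s)\cW_2^2(\mu,\nu).
\]
In particular $\mu_s\in B(\cE)$.
\item Insert this bound into the Fréchet condition along the sequence $\nu=\mu_s$. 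This gives
\[
0\le\liminf_{s\downarrow0}\frac{s\big(\cE(\nu)-\cE(\mu)\big)-\frac{L}{2}s(1-s)\cW_2^2-s\int\ang{\xi,T^\nu_\mu-\mathrm{Id}}d\mu}{s\,\cW_2(\mu,\nu)}.
\]
The factor $s$ cancels, so the limit is explicit. Multiplying by $\cW_2(\mu,\nu)$ yields the claimed inequality. The case $\nu=\mu$ is trivial.
\end{enumerate}

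The main obstacle is the second step, namely identifying the optimal map to the intermediate measure and computing $\cW_2(\mu,\mu_s)$ exactly. The argument hinges on the Fréchet quotient along the geodesic depending linearly on $s$. I would justify this via Brenier's theorem together with the cyclical monotonicity characterization of optimality for gradients of convex functions, as in \citet[Corollary 2.5.12]{figalli2021invitation} and \citet[Lemma 7.2.1]{ambrosio2008gradient}. A secondary point is that restricting along a sequence $\mu_s\to\mu$ is legitimate: the $\liminf$ over all $\nu\to\mu$ is at most the $\liminf$ along any particular sequence. Lower semicontinuity of $\cE$ is used only so that Definition \ref{def:Frechet} applies.
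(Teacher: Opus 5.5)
Your proposal is correct and is essentially the argument of \citet[Section 10.1.1]{ambrosio2008gradient} that the paper cites. The ``if'' direction follows by dividing by $\cW_2(\mu,\nu)$, and the ``only if'' direction by testing the Fr\'echet condition along the geodesic $\mu_s$ with optimal map $(1-s)\mathrm{Id}+sT^\nu_\mu$, so that after cancelling $s$ the quotient depends on $s$ only through the factor $(1-s)$ in the $L$-term; as you note, lower semicontinuity is not really used, and the absolute continuity of $\mu_s$ even follows directly from $\cE(\mu_s)<\infty$ and $B(\cE)\subset\cP^r_2(\R^d)$.
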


We present the first static regret bound in Theorem \ref{thm:dc-regret}. Note that the functional $F(\cdot, Z_t)$ is lower semi-continuous with respect to the $\cW_2$ metric, since the entropy functional satisfies this property \cite[Exercise 45, p.331]{Santambrogio2015}. The inequality \eqref{eq:dc-regret} holds for every continuous path of $Z_t$. When $L \geq 0$, this bounds the regret by a constant. In the general case where $L < 0$, the static regret bound becomes linear in $T$ and the linear term is not explicit. Our proof exploits the differentiability of the squared Wasserstein distance $\cW^2_2$ and the third claim in Theorem \ref{thm:Lip-grad}, which clarifies the technical conditions required by \citet[Theorem 2]{guo2022online}. Moreover, the regret inequality \eqref{eq:dc-regret} is connected with evolution variational inequalities; see \citet[Equation 11.0.3]{ambrosio2008gradient} for the definition and further discussion.
\begin{theorem}\label{thm:dc-regret}
	Given a constant $L \in \R$, for any continuous data path $\{Z_t\}_{t \in [0, T]}$, suppose the functional $F(\cdot, Z_t)$ is $L$-geodesically convex under Definition \ref{def:disp-convex}. Then
	\begin{equation}\label{eq:dc-regret}
		\int^T_0 F(\rho_t, Z_t) dt - \int^T_0 F(\rho^*, Z_t) dt \leq \frac{\cW^2_2(\rho^*, \rho_0) - \cW^2_2(\rho^*, \rho_T)}{2} - \frac{L}{2} \int^T_0 \cW^2_2(\rho^*, \rho_t) dt.
	\end{equation}
	Hence, the average static regret satisfies
	\begin{equation}
		\begin{aligned}
			\cR_S(T) \leq & \frac{\E[\cW^2_2(\rho^*, \rho_0)] - \E[\cW^2_2(\rho^*, \rho_T)]}{2} - \frac{L}{2} \E\Big[ \int^T_0 \cW^2_2(\rho^*, \rho_t) dt \Big].
		\end{aligned} 
	\end{equation}
\end{theorem}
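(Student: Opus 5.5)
We follow the evolution-variational-inequality argument: differentiate $t \mapsto \frac12 \cW^2_2(\rho^*, \rho_t)$ along the flow, bound its derivative via the subdifferential inequality of Lemma \ref{lem:subdiff}, and integrate over $[0,T]$.

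\textbf{Step 1: derivative of the distance.} By the third claim of Theorem \ref{thm:Lip-grad}, $(\rho_t)_{t\ge0}\in AC^2_{loc}([0,\infty);(\cP_2,\cW_2))$ with tangent field
\[
v_t=-\beta\nabla\log\rho_t-\nabla\frac{\delta U(\rho_t,Z_t)}{\delta\rho}.
\]
Each $\rho_t$ lies in $\cP^r_2$, so the optimal map $T^{\rho^*}_{\rho_t}$ exists. The differentiability of $\cW_2^2$ along absolutely continuous curves \citep[Theorem 8.4.7]{ambrosio2008gradient} then gives, for a.e.\ $t$,
\[
\frac{d}{dt}\frac12\cW^2_2(\rho_t,\rho^*)=\int\ang{v_t(\theta),\theta-T^{\rho^*}_{\rho_t}(\theta)}\rho_t(d\theta).
\]
Here $\rho^*\in\cP_2$ holds by Lemma \ref{lem:rho*}, since $\rho^*$ has Gaussian tails.

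\textbf{Step 2: $-v_t$ is a subgradient.} We show $-v_t\in\partial F(\rho_t,Z_t)$, that is, $\beta\nabla\log\rho_t+\nabla\frac{\delta U}{\delta\rho}$ is a strong subdifferential.
\begin{itemize}
\item The potential part $\frac{\lambda}{2}\int|\theta|^2d\rho$ is smooth.
\item The interaction part $(\ang{\rho,\sigma})^2-2Y_t\ang{\rho,\sigma}$ is smooth because $\sigma$ and $\nabla\sigma$ are bounded (Assumption \ref{A:sigma}).
\item For the entropy, \citep[Theorem 10.4.6]{ambrosio2008gradient} says $\nabla\rho_t/\rho_t$ is the minimal subgradient whenever $\nabla\rho_t\in L^1_{loc}$ and $\nabla\rho_t/\rho_t\in L^2(\rho_t)$. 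The bound \eqref{eq:logrho} together with the moment bounds of Proposition \ref{prop:MkV} gives $\nabla\log\rho_t\in L^2(\rho_t)$ for every $t$.
\end{itemize}
The subdifferential of a sum contains the sum of subdifferentials for these regular terms, and the difference quotient in Definition \ref{def:Frechet} can be checked term by term. Hence $-v_t\in\partial F(\rho_t,Z_t)$.

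\textbf{Step 3: apply Lemma \ref{lem:subdiff} and integrate.} $F(\cdot,Z_t)$ is lower semicontinuous and $L$-geodesically convex, and $\rho^*\in B(F(\cdot,Z_t))$. Lemma \ref{lem:subdiff} with $\mu=\rho_t$, $\nu=\rho^*$, $\xi=-v_t$ gives
\[
F(\rho^*,Z_t)-F(\rho_t,Z_t)\ge\int\ang{-v_t,T^{\rho^*}_{\rho_t}-\theta}d\rho_t+\frac L2\cW^2_2(\rho_t,\rho^*).
\]
The right-hand side equals $\frac{d}{dt}\frac12\cW_2^2(\rho_t,\rho^*)+\frac L2\cW_2^2(\rho_t,\rho^*)$. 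Rearranging,
\[
F(\rho_t,Z_t)-F(\rho^*,Z_t)\le-\frac{d}{dt}\frac12\cW_2^2-\frac L2\cW_2^2.
\]
Since $t\mapsto\cW_2^2(\rho_t,\rho^*)$ is absolutely continuous, integrating over $[0,T]$ yields \eqref{eq:dc-regret}. Taking expectations gives the regret bound. Measurability in $\omega^W$ follows from the continuous dependence of $\rho_t$ on the data path, and integrability follows from the uniform moment bounds and the finiteness of the entropies from Theorem \ref{thm:Lip-grad}.

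\textbf{Main obstacle.} The delicate point is Step 2: rigorously verifying that $-v_t$ lies in the Fréchet subdifferential for a.e.\ $t$ in the restricted sense of Definition \ref{def:Frechet}, in particular for the entropy. This is where the Fisher-information integrability and the linear growth bound \eqref{eq:logrho} from Theorem \ref{thm:Lip-grad} are essential. A secondary issue is that $t=0$ may be excluded, but this does not matter because the argument only needs the identity for a.e.\ $t$.
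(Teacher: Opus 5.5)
Your proposal is correct and follows essentially the same evolution-variational-inequality argument as the paper. You differentiate $\frac12\cW_2^2(\rho^*,\rho_t)$ along the $AC^2$ curve from Theorem \ref{thm:Lip-grad}(3), identify $-v_t$ as a Fr\'echet subgradient of $F(\cdot,Z_t)$ at $\rho_t$, apply Lemma \ref{lem:subdiff} with $\nu=\rho^*$, and integrate over $[0,T]$. The only differences are in which results of \citet{ambrosio2008gradient} are cited: you use Theorem 8.4.7 for the derivative of $\cW_2^2$ and Theorem 10.4.6 for the entropy subgradient, while the paper uses Corollary 10.2.7 with the chain rule of Proposition 10.3.18 and an argument modeled on Lemma 10.4.1. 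These choices do not change the proof.
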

 If we attempt to apply the displacement convexity framework to the dynamic regret, a technical difficulty is that the benchmark measure $\mu^*_t$ depends on time. The proof of Theorem \ref{thm:dc-regret} no longer works in this case.

To establish a sufficient condition for displacement convexity, Lemma \ref{lem:dc-suff-cond} indicates that the $L^2$ regularization parameter $\lambda$ must be sufficiently large. An additional limitation of Lemma \ref{lem:dc-suff-cond} is that the entropy parameter $\beta$ does not appear in \eqref{eq:lamdc} to aid the displacement convexity condition. This absence occurs because the entropy functional itself is not strictly displacement convex \citep{mccann1997convexity}, which ultimately motivates our investigation into non-convex settings.
\begin{lemma}\label{lem:dc-suff-cond}
	Suppose the Hessian of $\sigma(x, \theta)$ on $\theta$ has a uniformly bounded spectral norm, i.e. $\|\sigma_{\theta \theta}\|_{op, \infty} < \infty$. Then the functional $F(\rho, z)$ is displacement convex in $\rho$ if
	\begin{equation}\label{eq:lamdc}
		\lambda \geq 2 \left( C_\sigma + C_z \right) \| \sigma_{\theta \theta}\|_{op, \infty}.
	\end{equation}
\end{lemma}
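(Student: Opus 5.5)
We prove that $F(\cdot,z)$ is displacement convex by splitting it into three parts and checking each along a displacement interpolation. Fix $z=(x,y)$ with $|y|\le C_z$, and write $F(\rho,z)=\cE_1(\rho)+\cE_2(\rho)+\beta\cE_3(\rho)$. Here
\[
\cE_1(\rho)=\ang{\rho,\sigma(x,\cdot)}^2-2y\ang{\rho,\sigma(x,\cdot)},\qquad
\cE_2(\rho)=\frac{\lambda}{2}\ang{\rho,|\cdot|^2},\qquad
\cE_3(\rho)=\int\rho\log\rho .
\]
By \citet{mccann1997convexity}, the entropy $\cE_3$ is displacement convex on $\cP^r_2(\R^d)$. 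It therefore suffices to show that $\cE_1+\cE_2$ is convex along every interpolation $\pi_s:=\pi^{\mu\to\nu}_s$ from Definition \ref{def:disp-convex}, since a sum of functions convex in $s$ is convex in $s$.

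Set $T:=T^\nu_\mu$ and $\theta_s:=(1-s)\theta+sT(\theta)$, so that $\ang{\pi_s,g}=\int g(\theta_s)\,\mu(d\theta)$. Define $m(s):=\int\sigma(x,\theta_s)\,\mu(d\theta)$. Then $\cE_1(\pi_s)+\cE_2(\pi_s)=h(m(s))+q(s)$, where $h(m)=m^2-2ym$ and $q(s)=\frac{\lambda}{2}\int|\theta_s|^2\,d\mu$. The second term is exactly quadratic:
\[
q''(s)=\lambda\int|T(\theta)-\theta|^2\,d\mu=\lambda\,\cW_2^2(\mu,\nu).
\]
Because $\sigma$ is bounded with bounded first derivatives and $\mu,\nu$ have finite second moments, we may differentiate under the integral, or use a second-order Taylor expansion with integral remainder, which avoids any regularity question in $s$. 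This gives
\[
m'(s)=\int\ang{\nabla\sigma(x,\theta_s),T(\theta)-\theta}\,d\mu,\qquad
m''(s)=\int\ang{T(\theta)-\theta,\sigma_{\theta\theta}(x,\theta_s)(T(\theta)-\theta)}\,d\mu .
\]

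The second derivative of the composite term is $\frac{d^2}{ds^2}h(m(s))=2m'(s)^2+2(m(s)-y)\,m''(s)$. The first piece $2m'(s)^2$ is nonnegative, so it only helps. For the second piece, $|m(s)|\le C_\sigma$ and $|y|\le C_z$, while $|m''(s)|\le\|\sigma_{\theta\theta}\|_{op,\infty}\cW_2^2(\mu,\nu)$. Hence $2(m(s)-y)m''(s)\ge -2(C_\sigma+C_z)\|\sigma_{\theta\theta}\|_{op,\infty}\cW_2^2(\mu,\nu)$, and adding $q''$ gives
\[
\frac{d^2}{ds^2}\bigl(\cE_1+\cE_2\bigr)(\pi_s)\ge\Bigl[\lambda-2(C_\sigma+C_z)\|\sigma_{\theta\theta}\|_{op,\infty}\Bigr]\cW_2^2(\mu,\nu)\ge0
\]
under \eqref{eq:lamdc}. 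So $s\mapsto(\cE_1+\cE_2)(\pi_s)$ is convex and $C^2$ on $[0,1]$. This yields the inequality of Definition \ref{def:disp-convex} with $L=0$ for that part. Adding McCann's inequality for $\beta\cE_3$ completes the argument. If $\cE_3(\nu)=+\infty$ or $\cE_3(\mu)=+\infty$, the inequality is trivial.

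The main step to handle carefully is justifying differentiation under the integral, which follows from dominated convergence. The needed bounds are $|\nabla\sigma|\le C_1$, $\|\sigma_{\theta\theta}\|_{op,\infty}<\infty$, and $|T(\theta)-\theta|\in L^2(\mu)$. An alternative is to use Taylor's formula on $\sigma(x,\theta_s)$ pointwise in $\theta$ and then integrate, which gives convexity in $s$ directly via midpoint or second-difference estimates.
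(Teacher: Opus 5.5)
Your proposal is correct and follows the paper's proof essentially line by line. You invoke McCann's displacement convexity for the entropy, then compute the second derivative of $U$ along $\pi^{\mu\to\nu}_s$, discard the nonnegative squared-gradient term, and bound the Hessian cross term by $2(C_\sigma+C_z)\|\sigma_{\theta\theta}\|_{op,\infty}\cW_2^2(\mu,\nu)$ against $\lambda\cW_2^2(\mu,\nu)$. Your justification of differentiating under the integral and your treatment of the infinite-entropy case are details the paper leaves implicit.
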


\subsection{The PL method for dynamic regret}
In Euclidean spaces, the PL condition relaxes the convexity assumption while still ensuring algorithm convergence. It is natural to consider the counterpart of the PL condition in Wasserstein spaces, which is deeply connected to the LSI. Various methods exist to verify that a probability measure satisfies the LSI; we state a standard perturbation lemma below.
\begin{lemma}[Holley-Stroock perturbation lemma, \cite{holley1987logarithmic}]\label{lem:HolleyStroock}
	Suppose the probability measure $\mu$ satisfies a $LSI(\alpha)$, and $\tilde{\mu} = e^{-\psi} \mu$ is a bounded perturbation of $\mu$, that is, $\psi \in L^\infty$ and $\tilde{\mu}$ is a probability measure. Then $\tilde{\mu}$ satisfies $LSI(\tilde{\alpha})$ with 
	$$ \tilde{\alpha} = \alpha e^{-\text{osc}(\psi)}, \qquad \text{ where } \text{osc}(\psi) := \sup \psi - \inf \psi.$$
\end{lemma}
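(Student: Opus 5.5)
The Holley--Stroock lemma is classical. The plan is to prove it directly from the entropy form of the LSI together with the variational characterization of entropy. Write $\mathrm{Ent}_\nu(g) := \int g\log g\,d\nu - \int g\,d\nu \log\int g\,d\nu$ for $g \ge 0$; then \eqref{eq:LSIfdef} says $\mathrm{Ent}_\nu(f^2) \le \frac{2}{\alpha}\int|\nabla f|^2 d\nu$.

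The key identity is
\[
\mathrm{Ent}_\nu(g) = \inf_{c>0} \int \big( g\log g - g\log c - g + c \big)\, d\nu .
\]
It holds because the integrand $\phi_c(g) := g\log g - g\log c - g + c$ is pointwise nonnegative (convexity of $x\log x$), and $\int \phi_c(g)\,d\nu$ is minimized at $c = \int g\,d\nu$, where it equals the entropy.

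With this in hand, fix $f$ and put $g = f^2$. Since $\phi_c \ge 0$ and $d\tilde\mu = e^{-\psi}d\mu$, for every $c>0$,
\[
\int \phi_c(g)\,d\tilde\mu \le e^{-\inf\psi} \int \phi_c(g)\,d\mu .
\]
Taking the infimum over $c$ on both sides gives
\[
\mathrm{Ent}_{\tilde\mu}(f^2) \le e^{-\inf\psi}\,\mathrm{Ent}_\mu(f^2) \le e^{-\inf\psi}\,\frac{2}{\alpha}\int |\nabla f|^2\,d\mu .
\]
Then I convert back to $\tilde\mu$ using $d\mu = e^{\psi}d\tilde\mu \le e^{\sup\psi}d\tilde\mu$:
\[
\int |\nabla f|^2\,d\mu \le e^{\sup\psi}\int |\nabla f|^2\,d\tilde\mu .
\]
Combining the two bounds yields $\mathrm{Ent}_{\tilde\mu}(f^2) \le \frac{2}{\alpha e^{-\mathrm{osc}(\psi)}}\int|\nabla f|^2 d\tilde\mu$, which is $LSI(\alpha e^{-\mathrm{osc}(\psi)})$.

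The only delicate point is justifying the variational formula and the integrability of the terms involved. I would first take $f$ bounded and Lipschitz, where everything is finite. I would then extend to general $f$ by truncation and monotone convergence, as in the standard density argument for the LSI. The boundedness of $\psi$ ensures that $\mu$ and $\tilde\mu$ are mutually absolutely continuous, with densities bounded above and below, so the classes of admissible test functions agree for the two measures.
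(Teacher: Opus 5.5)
Your proof is correct. The paper gives no proof of its own and simply cites Holley--Stroock, and your argument is the classical proof from that reference: the variational formula $\mathrm{Ent}_\nu(g)=\inf_{c>0}\int(g\log g-g\log c-g+c)\,d\nu$ with its nonnegative integrand, followed by the density comparisons $e^{-\inf\psi}$ and $e^{\sup\psi}$. Because the integrand is nonnegative, the comparison already holds in $[0,\infty]$, so your truncation step is optional; read $\sup\psi$ and $\inf\psi$ as essential supremum and infimum.
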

Furthermore, \cite{aida1994logarithmic} and \cite{cattiaux2022functional} consider Lipschitz perturbations and derive the corresponding LSI constants.

\begin{definition}
	 Define the constant $$\alpha := \frac{\lambda}{\beta} e^{-C_{osc}},$$ 
	 where $C_{osc} = \frac{4}{\beta} (C^2_\sigma + C_z C_\sigma)$.
\end{definition}

Parallel to Lemma \ref{lem:rho*}, Lemma \ref{lem:mu*LSI} establishes the existence and uniqueness of the equilibrium measure $\mu^*_t$. The proof proceeds similarly but relies on Brouwer's fixed-point theorem. Due to Assumptions \ref{A:sigma} and \ref{A:data}, the oscillation remains bounded. The LSI property of $\mu^*_t$ subsequently follows from the Bakry-\'Emery criterion combined with Lemma \ref{lem:HolleyStroock}.
\begin{lemma}\label{lem:mu*LSI} 
	The equilibrium measure $\mu^*_t$ exists and is the unique solution to
	\begin{equation}\label{eq:mu*}
		\mu^*_t(\theta) = \frac{1}{C_t} \exp \left[ - \frac{\lambda}{2 \beta} |\theta|^2 -\frac{2}{\beta}  \sigma(X_t, \theta) \left(  \int \sigma(X_t, \vartheta)\mu^*_t(d\vartheta) -  Y_t \right)  \right],
	\end{equation}
	where $C_t$ is a normalization constant such that $\mu^*_t$ is a probability density. Moreover, $\mu^*_t$ satisfies the $LSI(\alpha)$. 
\end{lemma}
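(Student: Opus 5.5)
The plan is to reduce the fixed-point problem for $\mu^*_t$ to a scalar equation, use Brouwer's theorem on an interval for existence, use convexity of the free energy for uniqueness and minimality, and finish with Bakry--\'Emery plus Lemma \ref{lem:HolleyStroock} for the LSI. Throughout, $t$ and hence $x := X_t$, $y := Y_t$ are frozen, with $|y| \le C_z$ by Assumption \ref{A:data}.

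\textbf{Step 1 (existence).} Equation \eqref{eq:mu*} depends on $\mu^*_t$ only through the scalar $m := \int \sigma(x,\vartheta)\mu^*_t(d\vartheta) \in [-C_\sigma, C_\sigma]$. For $m \in [-C_\sigma, C_\sigma]$ define the Gibbs density
\begin{equation*}
\nu_m(\theta) \propto \exp\Big[-\tfrac{\lambda}{2\beta}|\theta|^2 - \tfrac{2}{\beta}\sigma(x,\theta)(m-y)\Big],
\end{equation*}
which is normalizable because $\sigma$ is bounded. Set $\Phi(m) := \ang{\nu_m, \sigma(x,\cdot)}$. Then $\Phi$ maps $[-C_\sigma,C_\sigma]$ into itself, and it is continuous in $m$ by dominated convergence, with dominating function $\exp(-\frac{\lambda}{2\beta}|\theta|^2 + \frac{2}{\beta}C_\sigma(C_\sigma+C_z))$. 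Brouwer's theorem in one dimension gives a fixed point $m^*$, and $\mu^*_t := \nu_{m^*}$ solves \eqref{eq:mu*}. In fact $\Phi$ is nonincreasing, since $\Phi'(m) = -\frac{2}{\beta}\mathrm{Var}_{\nu_m}(\sigma) \le 0$. Hence $m \mapsto \Phi(m) - m$ is strictly decreasing, which already gives uniqueness of the fixed point.

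\textbf{Step 2 (minimality and uniqueness of the minimizer).} Write $F(\rho, Z_t) = (\ang{\rho,\sigma})^2 - 2y\ang{\rho,\sigma} + \frac{\lambda}{2}\ang{\rho,|\theta|^2} + \beta\int\rho\log\rho$. The first term is convex under linear (flat) interpolation, the next two terms are linear, and the entropy is strictly convex. So $F(\cdot,Z_t)$ is strictly convex on $\cP_2^r$ in the flat sense.

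Its first variation is
\begin{equation*}
\frac{\delta F}{\delta\rho} = 2(\ang{\rho,\sigma}-y)\sigma + \frac{\lambda}{2}|\theta|^2 + \beta\log\rho.
\end{equation*}
The condition that this is constant is exactly \eqref{eq:mu*}. By flat convexity, for any $\rho$ with finite entropy and second moment,
\begin{equation*}
F(\rho) - F(\mu^*_t) \ge \int \frac{\delta F}{\delta\rho}(\mu^*_t)\, d(\rho-\mu^*_t) = 0.
\end{equation*}
This inequality can also be seen directly: $F(\rho) - F(\mu^*_t) = (\ang{\rho,\sigma} - \ang{\mu^*_t,\sigma})^2 + \beta D_{KL}(\rho\|\mu^*_t) \ge 0$. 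Hence $\mu^*_t$ attains the infimum in \eqref{prob:dynamic_t}, and equality holds only when $\rho = \mu^*_t$. This also confirms uniqueness. Identifying $\mu^*_t$ with the limit of the flow \eqref{eq:equilibrium} is not needed for this lemma beyond that interpretation.

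\textbf{Step 3 (LSI).} The Gaussian $\gamma_0 \propto e^{-\frac{\lambda}{2\beta}|\theta|^2}$ satisfies $LSI(\lambda/\beta)$ by Bakry--\'Emery. We have $\mu^*_t = e^{-\psi}\gamma_0$ with $\psi = \frac{2}{\beta}\sigma(x,\theta)(m^*-y) + \text{const}$. Since $|m^* - y| \le C_\sigma + C_z$,
\begin{equation*}
\mathrm{osc}(\psi) \le \frac{2}{\beta}\cdot 2C_\sigma(C_\sigma+C_z) = \frac{4}{\beta}(C_\sigma^2 + C_\sigma C_z) = C_{osc}.
\end{equation*}
Lemma \ref{lem:HolleyStroock} then gives $LSI(\frac{\lambda}{\beta}e^{-C_{osc}}) = LSI(\alpha)$.

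The only mildly delicate point is the rigorous justification of the convexity inequality in Step 2 for general $\rho$ with possibly infinite entropy. This is handled by noting that $F = +\infty$ in that case, and otherwise by the explicit KL identity above, which avoids differentiating along interpolations.
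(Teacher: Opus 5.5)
Your proposal is correct and matches the paper's proof: the scalar reduction to a fixed point of $\Phi$ on $[-C_\sigma, C_\sigma]$ via Brouwer, uniqueness from $\Phi'(m) = -\frac{2}{\beta}\mathrm{Var}(\sigma) \le 0$, and Bakry--\'Emery combined with Lemma \ref{lem:HolleyStroock} with oscillation at most $C_{osc}$ are exactly the paper's steps. The one difference is how $\mu^*_t$ is tied to \eqref{eq:mu*}. The paper integrates the stationarity condition of \eqref{eq:equilibrium} by parts to get $\nabla \frac{\delta F}{\delta \rho} = 0$ $\mu^*_t$-a.s. Your Step 2 instead uses the first-order condition and the identity $F(\rho, Z_t) - F(\mu^*_t, Z_t) = (\langle \rho, \sigma\rangle - \langle \mu^*_t, \sigma\rangle)^2 + \beta D_{KL}(\rho \| \mu^*_t)$ to show minimality, which the paper proves separately as Lemma \ref{lem:F_decom}.
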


Lemma \ref{lem:F_decom} demonstrates that the suboptimality gap between the objective $F(\rho, Z_t)$ and the instantaneous minimum $F(\mu^*_t, Z_t)$ decomposes into a squared term and the KL divergence. The proof utilizes the optimality of $\mu^*_t$ and the quadratic structure of the objective $F$. A direct consequence of Lemma \ref{lem:F_decom} is that the KL divergence lower bounds this suboptimality gap. Conversely, applying Pinsker's inequality provides an upper bound in terms of the KL divergence, as detailed in Equation \eqref{eq:F_KL1} in the Appendix.
\begin{lemma}\label{lem:F_decom}
	Consider any $\rho \in \cP^r_2(\R^d)$ with finite entropy. The objective functional $F$ satisfies
	\begin{equation}\label{eq:F_gap_decomp}
		F(\rho, Z_t) - F(\mu^*_t, Z_t) = \big(\ang{\rho, \sigma(X_t, \cdot)} - \ang{\mu^*_t, \sigma(X_t, \cdot)} \big)^2 + \beta D_{KL}(\rho \parallel \mu^*_t).
	\end{equation}
\end{lemma}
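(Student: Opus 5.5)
We prove \eqref{eq:F_gap_decomp} by a direct algebraic computation that uses the fixed-point characterization \eqref{eq:mu*} of $\mu^*_t$ from Lemma \ref{lem:mu*LSI}. Fix $t$ and write $s(\theta) := \sigma(X_t,\theta)$, $a := \ang{\rho, s}$, $m := \ang{\mu^*_t, s}$. Define the potential
\[
V(\theta) := \frac{\lambda}{2}|\theta|^2 + 2(m - Y_t)\, s(\theta).
\]
Then \eqref{eq:mu*} reads $\beta \log \mu^*_t = -V - \beta \log C_t$. Since $\rho$ has finite entropy and finite second moment, and $s$ is bounded by Assumption \ref{A:sigma}, the integral $\int \rho \log \mu^*_t$ is finite. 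Hence $D_{KL}(\rho\|\mu^*_t) = \int \rho\log\rho - \int \rho \log \mu^*_t$ is well defined, and all manipulations below involve finite quantities.

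Next, we expand the relative entropy:
\[
\beta D_{KL}(\rho \| \mu^*_t) = \beta \int \rho \log \rho + \ang{\rho, V} + \beta \log C_t
= \beta\int\rho\log\rho + \frac{\lambda}{2}\ang{\rho,|\theta|^2} + 2(m-Y_t)a + \beta\log C_t .
\]
The same identity with $\rho = \mu^*_t$ gives $0 = \beta\int\mu^*_t\log\mu^*_t + \frac{\lambda}{2}\ang{\mu^*_t,|\theta|^2} + 2(m-Y_t)m + \beta\log C_t$. Subtracting the two relations eliminates $\log C_t$:
\[
\beta D_{KL}(\rho\|\mu^*_t) = \Big[\beta\!\int\!\rho\log\rho + \tfrac{\lambda}{2}\ang{\rho,|\theta|^2}\Big] - \Big[\beta\!\int\!\mu^*_t\log\mu^*_t + \tfrac{\lambda}{2}\ang{\mu^*_t,|\theta|^2}\Big] + 2(m-Y_t)(a-m).
\]

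By \eqref{eq:U-def} and \eqref{eq:F-def}, the difference $F(\rho,Z_t)-F(\mu^*_t,Z_t)$ equals the bracketed difference above plus $a^2 - m^2 - 2Y_t(a-m)$. Therefore
\[
F(\rho,Z_t)-F(\mu^*_t,Z_t) - \beta D_{KL}(\rho\|\mu^*_t) = a^2 - m^2 - 2Y_t(a-m) - 2(m-Y_t)(a-m) = a^2 - 2am + m^2 = (a-m)^2,
\]
which is \eqref{eq:F_gap_decomp}.

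The only delicate point is integrability. The algebra itself is routine once the quadratic structure is recognized: the first variation of $U$ is linear in $s$, and the completing-the-square step exactly produces $(a-m)^2$. What requires care is confirming that $\int \rho\log\mu^*_t$ and the entropies are finite, so that the terms may be split and the subtraction is legitimate. This follows from the Gaussian-type bounds on $\mu^*_t$ (its log-density is $-\frac{\lambda}{2\beta}|\theta|^2$ plus a bounded term), together with the hypothesis $\rho\in\cP^r_2$ with finite entropy.
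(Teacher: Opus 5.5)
Your proof is correct and is essentially the paper's argument. Both use the Gibbs form \eqref{eq:mu*} to expand $\beta D_{KL}(\rho\|\mu^*_t)$, so that the entropy term in $F$ becomes the KL divergence plus terms linear in $\rho$; completing the square then gives $(a-m)^2$. The one difference is how $\log C_t$ is removed: the paper computes $F(\rho,Z_t)=a^2-2am+\beta D_{KL}(\rho\|\mu^*_t)-\beta\log C_t$ and $F(\mu^*_t,Z_t)=-m^2-\beta\log C_t$ separately and subtracts, whereas you subtract the identity at $\rho=\mu^*_t$. Your explicit integrability check is a small addition that the paper leaves implicit.
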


Recall that for a smooth objective function $f(x)$ in a finite-dimensional Euclidean space, the PL condition states that the suboptimality gap satisfies 
\begin{equation}\label{eq:EuclideanPL}
	f(x) - f(x^*) \leq C |D_x f(x)|^2,
\end{equation}
where $x^*$ denotes the minimizer of $f$. In Lemma \ref{lem:PL}, the squared Wasserstein gradient replaces the standard Euclidean gradient in \eqref{eq:EuclideanPL}. This formulation intrinsically relates to the relative Fisher information. By applying the LSI, we establish the corresponding PL condition within the Wasserstein space.
\begin{lemma}\label{lem:PL}
	Suppose $\alpha \beta^2 > 8 C_\sigma^2 C_{1}^2$. The PL condition holds for $F(\rho, Z_t)$:
	\begin{equation}
		F(\rho, Z_t) - F(\mu^*_t, Z_t) \le C_{PL} \left\|\nabla \left( \frac{\delta F(\rho, Z_t)}{\delta \rho} \right) \right\|_{L^2(\rho)}^2, 
	\end{equation}
	where 
	\begin{equation}
		C_{PL} = \frac{2 C_\sigma^2 + \beta}{\alpha \beta^2 - 8 C_\sigma^2 C_{1}^2}.
	\end{equation}
\end{lemma}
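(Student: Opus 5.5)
Throughout, write $m(\rho) := \ang{\rho, \sigma(X_t,\cdot)}$ and $e(\rho) := m(\rho) - Y_t$. The plan is to combine the gap decomposition of Lemma \ref{lem:F_decom} with the LSI$(\alpha)$ of $\mu^*_t$ from Lemma \ref{lem:mu*LSI}. The goal is to express the Wasserstein gradient of $F$ as $\beta$ times the relative score $\nabla\log(\rho/\mu^*_t)$ plus a small correction. The first variation is $\frac{\delta F}{\delta\rho} = 2e(\rho)\sigma + \frac{\lambda}{2}|\theta|^2 + \beta\log\rho$. By \eqref{eq:mu*}, $\beta\nabla\log\mu^*_t = -\lambda\theta - 2e(\mu^*_t)\nabla\sigma$. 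Hence
\begin{equation*}
\nabla \frac{\delta F(\rho,Z_t)}{\delta\rho} = \beta \nabla \log\frac{\rho}{\mu^*_t} + 2\big(m(\rho)-m(\mu^*_t)\big)\nabla\sigma(X_t,\cdot).
\end{equation*}
Write $\Delta m := m(\rho)-m(\mu^*_t)$ and $G := \|\nabla \frac{\delta F}{\delta\rho}\|_{L^2(\rho)}$. I take $\rho$ regular enough (finite entropy and Fisher information) that these manipulations are justified; otherwise the right-hand side is infinite and there is nothing to prove.

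Step 1: from the triangle inequality and $|\nabla\sigma|\le C_1$,
\begin{equation*}
\beta\sqrt{I(\rho|\mu^*_t)} \le G + 2C_1|\Delta m|.
\end{equation*}
Step 2: control $|\Delta m|$ by the KL divergence. Since $|\sigma|\le C_\sigma$, Pinsker gives
\begin{equation*}
|\Delta m| \le 2C_\sigma \, TV(\rho,\mu^*_t) \le C_\sigma\sqrt{2 D_{KL}(\rho\|\mu^*_t)},
\end{equation*}
so $|\Delta m|^2 \le 2C_\sigma^2 D_{KL}$. Step 3: apply LSI$(\alpha)$ in the form \eqref{eq:LSI_entropy}, $D_{KL} \le \frac{1}{2\alpha} I(\rho|\mu^*_t)$. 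Then use $(a+b)^2\le 2a^2+2b^2$:
\begin{equation*}
2\alpha\beta^2 D_{KL} \le \beta^2 I \le 2G^2 + 8C_1^2|\Delta m|^2 \le 2G^2 + 16C_1^2C_\sigma^2 D_{KL}.
\end{equation*}
This yields $(\alpha\beta^2 - 8C_\sigma^2C_1^2) D_{KL} \le G^2$, and the hypothesis $\alpha\beta^2>8C_\sigma^2C_1^2$ is exactly what makes the coefficient positive.

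Step 4: by Lemma \ref{lem:F_decom} and Step 2,
\begin{equation*}
F(\rho,Z_t)-F(\mu^*_t,Z_t) = |\Delta m|^2 + \beta D_{KL} \le (2C_\sigma^2+\beta) D_{KL}.
\end{equation*}
Combining with Step 3 gives the PL inequality with constant $C_{PL} = \frac{2C_\sigma^2+\beta}{\alpha\beta^2-8C_\sigma^2C_1^2}$, which matches the stated constant. This supports the choice of the Pinsker constant and of the splitting in Step 3.

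The main obstacle is the absorption step. The interaction term $\Delta m\,\nabla\sigma$ must be dominated by the entropic part, and the constants $2$ and $8$ must line up. I would also check that the factor-$2$ splitting, rather than a weighted Young inequality, is the one that reproduces the stated constant; the Step 3 computation above confirms that it is. A secondary technical point is justifying the LSI with $f=\rho/\mu^*_t$ for general $\rho$. I would handle this by truncation or by assuming finite Fisher information.
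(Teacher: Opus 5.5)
Your proof is correct and follows the paper's proof essentially step for step. Both arguments use the gap decomposition plus Pinsker to get the upper bound $(2C_\sigma^2+\beta)D_{KL}$, the identity $\nabla \frac{\delta F}{\delta \rho} = \beta \nabla \log(\rho/\mu^*_t) + 2\Delta m\, \nabla\sigma$, and the LSI$(\alpha)$ of $\mu^*_t$ to absorb the interaction term. Your triangle-inequality-plus-$(a+b)^2\le 2a^2+2b^2$ step is exactly the paper's bound $|A+B|^2 \ge (1-\varepsilon)|A|^2 - \frac{1-\varepsilon}{\varepsilon}|B|^2$ with $\varepsilon = 1/2$, and it yields the same constant $C_{PL}$.
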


The necessity of the technical condition $\alpha \beta^2 > 8 C_\sigma^2 C_{1}^2$ merits further discussion. This requirement essentially imposes a strictly positive lower bound on the temperature parameter to overcome the non-convex interactions within the functional. Indeed, this condition is comparable to the structural assumption required for the propagation of chaos in \cite{lacker2023sharp}.

We remark on the time-varying nature of the objective. The PL inequality in Lemma \ref{lem:PL} applies to the frozen-time functional $F(\cdot, Z_t)$, guaranteeing contraction toward the instantaneous minimizer $\mu_t^*$ at any fixed time $t$. In the online setting, however, the continuous evolution of the data process $Z_t$ renders the minimizer $\mu_t^*$ time-dependent.  Consequently, the learning dynamics of $\rho_t$ must track a moving target. Even when $\rho_t$ closely approximates $\mu_t^*$, the ongoing shift in $Z_t$ continuously displaces the minimizer. Therefore, the suboptimality gap does not simply decay to zero. Instead, it reflects a dynamic balance between the contraction induced by the PL condition and the underlying drift of the shifting objective.

This mechanism is made explicit in the It\^o expansion below: when differentiating the suboptimality gap \(F(\rho_t, Z_t) - F(\mu_t^*, Z_t)\), one obtains a contraction term governed by the PL inequality together with additional drift terms arising from the time dependence of \(Z_t\) and the induced evolution of \(\mu_t^*\).

The primary strategy for bounding the dynamic regret involves deriving the time evolution of the suboptimality gap, which relies on an appropriate application of It\^o's lemma.  Because $\rho_t$ satisfies the continuity equation, one can directly apply the chain rule from \cite{ambrosio2008gradient} to compute the differential $dF(\rho_t, Z_t)$. However, the equilibrium measure $\mu^*_t$ depends implicitly on the data $Z_t$ through the self-consistent equation  \eqref{eq:mu*}. This intricate interaction between $\mu^*_t$ and $Z_t$ makes the derivation of the corresponding It\^o's lemma non-trivial. We introduce several notations to address this challenge. Adopting the convention that first-order gradients are column vectors, we define the differential operator evaluated at a fixed measure $\rho$ as follows:
\begin{align*}
	\cL_z[F] := & F^\top_x b_1 dt + F_y b_2 dt + \frac{1}{2} \tr[F_{xx} \Sigma_1 \Sigma^\top_1] dt + F_{yx} \Sigma_1 \Sigma^\top_2 dt + \frac{1}{2} F_{yy} \Sigma_2 \Sigma^\top_2 dt \\
	& + F^\top_x \Sigma_1 d W_t + F_y \Sigma_2 dW_t \\
	:= & \cA[F] dt + F^\top_x \Sigma_1 d W_t + F_y \Sigma_2 dW_t,
\end{align*}
with derivatives obtained from a direct calculation, given by
\begin{align*}
	F_x(\rho, Z_t) & = 2 \ang{\rho, \sigma(X_t, \cdot)} \ang{\rho, \sigma_x (X_t, \cdot)} - 2 Y_t \ang{\rho, \sigma_x(X_t, \cdot)}, \quad F_y(\rho, Z_t) = -2 \ang{\rho, \sigma(X_t, \cdot)}, \\
	F_{xx} (\rho, Z_t) & = 2 \ang{\rho, \sigma(X_t, \cdot)} \ang{\rho, \sigma_{xx} (X_t, \cdot)} + 2 \ang{\rho, \sigma_x(X_t, \cdot)} \ang{\rho, \sigma_{x} (X_t, \cdot)}^\top  - 2 Y_t \ang{\rho, \sigma_{xx}(X_t, \cdot)}, \\
	F_{xy} (\rho, Z_t) & = - 2 \ang{\rho, \sigma_{x}(X_t, \cdot)} \in \R^{n \times 1}, \qquad F_{yx} (\rho, Z_t) = F_{xy} (\rho, Z_t)^\top, \qquad F_{yy} (\rho, Z_t) = 0.
\end{align*}

The self-consistent equation \eqref{eq:mu*} reveals that the temporal dependence of $\mu^*_t$ occurs strictly through the state variable $Z_t$. To emphasize this parametric dependence, we adopt the notation:
\begin{equation}
	\mu^*(\theta, Z_t) = \mu^*_t (\theta).
\end{equation}

For notational simplicity, letting $z = (x, y) \in \R^{n+1}$, we define the auxiliary functions:
\begin{align*}
	m(z) & := \int \sigma(x, \theta) \mu^*(\theta, z) d\theta, \quad e(z) := m(z) - y, \quad H(\theta, z) := \frac{\lambda}{2}|\theta|^{2} + 2 \sigma(x, \theta) e(z).
\end{align*}
For any function $f(z, \theta) : \R^{n + 1 + d} \to \R$ that is differentiable with respect to $z$, we introduce the following operator:
\begin{equation}
	\begin{aligned}
	\cS_u(z; f) & := D_u \Big( \int f(z, \theta) \mu^*(\theta, z) d \theta \Big) - \int D_u f(z, \theta) \mu^*(\theta, z) d \theta = \int f(z, \theta) D_u \mu^*(\theta, z) d \theta.
	\end{aligned} 
\end{equation}
The subscript $u$ denotes the specific components of $z$ with respect to which the derivatives are taken.

Lemma \ref{lem:dFmu*} demonstrates the differentiability of $m(z)$ via the implicit function theorem. Consequently, the self-consistent equation \eqref{eq:mu*} ensures that the density $\mu^*(\theta, z)$ is also differentiable with respect to $z$. This regularity allows us to establish It\^o's lemma for the composite process $F(\mu^*_t, Z_t)$. The dynamic interaction between $\mu^*_t$ and $Z_t$ introduces several new cross-variation terms governed by the operator $\cS_z$. These additional terms possess finite $L^\infty$ norms that asymptotically vanish as the temperature parameter $\beta$ approaches infinity.  This asymptotic behavior is physically intuitive: at extremely high temperatures, the equilibrium measure $\mu^*_t$ flattens toward a uniform distribution, thereby heavily damping its sensitivity to the incoming fluctuations from the data stream $Z_t$. 
\begin{lemma}\label{lem:dFmu*}
\begin{enumerate}
	\item The function $\mu^*(\theta, z)$ is differentiable in $z$. Moreover,
	\begin{align*}
		D_x m(z) & = \frac{\beta}{\beta + 2\mathrm{Var}_{\theta \sim \mu^*} [ \sigma(x, \theta)]} \Big[ \int \sigma_x(x, \theta) \mu^*(\theta, z) d \theta - \frac{2e(z)}{\beta} \mathrm{Cov}_{\theta \sim \mu^*} \big( \sigma(x, \theta), \sigma_x(x, \theta) \big) \Big], \\
		D_y m(z) & = \frac{2  \mathrm{Var}_{\theta \sim \mu^*} [\sigma(x, \theta)]}{\beta + 2 \mathrm{Var}_{\theta \sim \mu^*} [\sigma(x, \theta)]},
	\end{align*}
	where $\theta \sim \mu^*$ means the random parameter $\theta$ follows the distribution $\mu^*(\cdot, z)$.
	\item For $f(z, \theta) : \R^{n + 1 + d} \to \R$ that is differentiable on $z$,
	\begin{equation}
		\cS_z(z ; f) = - \frac{1}{\beta} \mathrm{Cov}_{\theta \sim \mu^*} \big[f(z, \theta), D_z H(\theta, z) \big].
	\end{equation}
	
	\item The functional $F(\mu^*_t, Z_t)$ satisfies the It\^o's lemma given by
	\begin{equation}\label{eq:dFmu*}
		\begin{aligned}
			d F(\mu^*_t, Z_t) = & \cA[F](\mu^*_t, Z_t) dt + F^\top_x(\mu^*_t, Z_t) \Sigma_1 d W_t + F_y(\mu^*_t, Z_t) \Sigma_2 dW_t \\
			& + \tr \big[\cS_x(Z_t; \sigma) \ang{\mu^*_t, \sigma_x(X_t, \cdot)}^\top \Sigma_1 \Sigma^\top_1 \big] dt \\
			& + \tr \big[(\ang{\mu^*_t, \sigma(X_t, \cdot)}  - Y_t) \cS_x(Z_t; \sigma_x)  \Sigma_1 \Sigma^\top_1 \big] dt \\
			& -2 \cS_x(Z_t; \sigma)^\top \Sigma_1 \Sigma^\top_2 dt - \cS_y(Z_t; \sigma) \Sigma_2 \Sigma^\top_2 dt.
		\end{aligned}
	\end{equation}
	Here, $\cS_x(z; \sigma_x) := (\cS_x(z; \sigma_{x_1}), \ldots, \cS_x(z; \sigma_{x_n})) \in \R^{n \times n}$. 
	\item There exists a constant $C_* > 0$ that depends on $C_\sigma, C_z$, and $C_1$ only, such that
	\begin{align*}
		\| \cS_x(Z_t; \sigma) \|_{L^\infty}, \| \cS_y(Z_t; \sigma) \|_{L^\infty}, \| \cS_x(Z_t; \sigma_x) \|_{L^\infty}  \leq &  \frac{C_*}{\beta} +  \frac{C_*}{\beta^2}.
	\end{align*}
\end{enumerate}
\end{lemma}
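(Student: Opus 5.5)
The plan is to reduce everything to the scalar fixed-point structure of \eqref{eq:mu*}. Write $\mu^*(\theta,z) = \exp(-H(\theta,z)/\beta)/C(z)$, where $H(\theta,z)=\frac{\lambda}{2}|\theta|^2+2\sigma(x,\theta)e(z)$. The measure depends on $z$ only through $x$ and the scalar $e(z)=m(z)-y$. For fixed $z$, introduce the auxiliary map
\[
\Phi(m,z):= m-\int \sigma(x,\theta)\,\nu_{m,z}(d\theta), \qquad \nu_{m,z}\propto \exp\!\Big(-\tfrac{\lambda}{2\beta}|\theta|^2-\tfrac{2}{\beta}\sigma(x,\theta)(m-y)\Big).
\]
By Lemma \ref{lem:mu*LSI}, $m(z)$ is the unique root of $\Phi(\cdot,z)=0$. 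The Gaussian factor and the bounds on $\sigma,\sigma_x$ (Assumption \ref{A:sigma}) justify differentiation under the integral. This gives
\[
\partial_m\Phi = 1+\tfrac{2}{\beta}\mathrm{Var}_{\nu}[\sigma] \ge 1 > 0 .
\]
The implicit function theorem then yields $C^1$ dependence of $m$ on $z$, and hence of $\mu^*(\theta,z)$ on $z$.

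Computing $\partial_x\Phi$ and $\partial_y\Phi$ gives item 1 by direct calculation. We have $\partial_y\Phi = -\frac{2}{\beta}\mathrm{Var}[\sigma]$. For $\partial_x\Phi$ we get $-\langle\mu^*,\sigma_x\rangle+\frac{2e}{\beta}\mathrm{Cov}(\sigma,\sigma_x)$. Then $D_u m=-\partial_u\Phi/\partial_m\Phi$, and multiplying numerator and denominator by $\beta$ gives the stated formulas.

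Item 2 follows from the identity
\[
D_z\mu^* = -\tfrac{1}{\beta}\big(D_zH-\langle\mu^*,D_zH\rangle\big)\mu^*,
\]
which is the logarithmic derivative of a Gibbs density with normalization. Here $D_zH$ contains $2\sigma_x e+2\sigma D_x e$ and $2\sigma D_y e$, with $D_ye=D_ym-1$. Integrating $f$ against this expression gives the covariance formula for $\cS_z$.

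For item 3, I would write $F(\mu^*_t,Z_t)=G(Z_t)$ with $G(z):=F(\mu^*(\cdot,z),z)$, a $C^2$ function of $z$. One more differentiation is needed, again via the implicit function theorem applied to the $C^2$ map $\Phi$. Then apply the classical Itô formula to $G(Z_t)$. The first derivative of $G$ splits into the explicit part $F_z$ and the part through $\mu^*$. The latter is $\int \frac{\delta F}{\delta\rho}(\mu^*,z)D_z\mu^*\,d\theta$, which vanishes because $\frac{\delta F}{\delta\rho}$ is constant at the minimizer and $\int D_z\mu^*=0$. Hence the first-order and martingale terms coincide with those of $\cL_z[F]$ at the frozen measure. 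The second derivatives are $D_z(F_z(\mu^*(\cdot,z),z))=F_{zz}+\cS_z(z;\cdot)$ applied to the integrands of $F_x$ and $F_y$. Using $F_x=2(\langle\mu^*,\sigma\rangle-y)\langle\mu^*,\sigma_x\rangle$ and $F_y=-2\langle\mu^*,\sigma\rangle$, the extra pieces are:
\begin{itemize}
\item $2\,\cS_x(\sigma)\langle\mu^*,\sigma_x\rangle^\top$ and $2(\langle\mu^*,\sigma\rangle-y)\cS_x(\sigma_x)$ in $G_{xx}$;
\item $-2\cS_x(\sigma)$ in $G_{yx}$;
\item $-2\cS_y(\sigma)$ in $G_{yy}$.
\end{itemize}
Contracting with $\frac12\Sigma\Sigma^\top$ and the cross-covariation $\Sigma_1\Sigma_2^\top$ produces exactly the listed drift corrections.

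The main care point is this bookkeeping: checking the envelope cancellation rigorously (it needs the integrability of $\log\mu^*\,D_z\mu^*$, which holds by the Gaussian tails), and checking symmetry of the Hessian terms so that the trace forms match the statement.

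For item 4, bound the covariance in item 2 by Cauchy–Schwarz with the bounded quantities $|\sigma|\le C_\sigma$, $|\sigma_x|\le C_1$, $|e|\le C_\sigma+C_z$. For $f=\sigma$ and $u=x$ this gives
\[
|D_xH|\le 2C_1|e|+2C_\sigma|D_xm|, \qquad |D_xm|\le C_1+\tfrac{2|e|}{\beta}\,C_\sigma C_1 .
\]
Also $|D_ym|\le1$ and $|D_ye|\le 1$. Therefore $\|D_zH\|_{L^\infty}\le C_*(1+1/\beta)$, and the $\frac1\beta$ prefactor yields the bound $C_*/\beta+C_*/\beta^2$. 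The same argument with $f=\sigma_{x_i}$ (bounded by $C_1$) gives the bound for $\cS_x(\sigma_x)$.
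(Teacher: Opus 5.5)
Your proposal is correct and follows the paper's proof in all essentials: the implicit function theorem on the scalar fixed point $m(z)$ with $\partial_m\Phi = 1+\frac{2}{\beta}\mathrm{Var}[\sigma]>0$, the Gibbs log-derivative identity for $D_z\mu^*$ that gives the covariance formula for $\cS_z$, the Hessian of $G(z)=F(\mu^*(\cdot,z),z)$ fed into the classical It\^o formula, and covariance bounds using $\|D_zH\|_{L^\infty}\le C_*(1+1/\beta)$. The only real difference is in item 3: you obtain $D_zG=F_z(\mu^*,z)$ from the envelope property ($\frac{\delta F}{\delta\rho}(\mu^*,z)$ is constant and $\int D_z\mu^*\,d\theta=0$), whereas the paper differentiates the closed form $G=-m^2-\beta\log C$ from the proof of Lemma \ref{lem:F_decom}, which avoids having to justify the chain rule through the entropy term that you flag as a care point.
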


Theorem \ref{thm:PL-regret} presents our main dynamic regret bound. The proof combines the PL condition with Talagrand's transportation inequality to construct a governing differential inequality. Integrating this relation directly yields the stated upper bound.
\begin{theorem}\label{thm:PL-regret} 
	Suppose $\alpha \beta^2 > 8 C_\sigma^2 C_{1}^2$. The average dynamic regret satisfies
	\begin{equation}\label{eq:PL-regret}
		\begin{aligned}
		& \cR_D(T) \leq 2 C_{PL} \E[F(\rho_0, Z_0) - F(\mu^*_0, Z_0)] + \frac{C_* C^2_{b, \Sigma} C^2_{PL}}{\alpha \beta} T + C_* C_{b, \Sigma} C_{PL} \Big(\frac{1}{\beta} + \frac{1}{\beta^2} \Big) T,
		\end{aligned}
	\end{equation}
	where $C_* > 0$ depends on $ C_\sigma, C_z, C_1, C_2$, and $C_3$ only.
\end{theorem}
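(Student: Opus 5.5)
We track the suboptimality gap $G_t := F(\rho_t, Z_t) - F(\mu^*_t, Z_t)$ and derive a differential inequality for $\E[G_t]$ of the form $\frac{d}{dt}\E[G_t] \le -\frac{1}{2C_{PL}}\E[G_t] + (\text{drift terms})$. Integrating over $[0,T]$ then bounds $\int_0^T \E[G_t]\,dt$ by $2C_{PL}\big(\E[G_0] - \E[G_T]\big)$ plus $2C_{PL}$ times the integrated drift. Since $G_T \ge 0$ by Lemma \ref{lem:F_decom}, this yields the first term of \eqref{eq:PL-regret}.

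\textbf{Step 1: It\^o expansion of $F(\rho_t, Z_t)$.} By Theorem \ref{thm:Lip-grad}(3), $(\rho_t)$ is $AC^2_{loc}$ with tangent $v_t = -\nabla \frac{\delta F}{\delta\rho}(\rho_t, Z_t)$. Combining the Wasserstein chain rule (Ambrosio–Gigli–Savar\'e, Prop.~10.3.18) in the measure argument with the classical It\^o formula in $z$ should give
\[
dF(\rho_t,Z_t) = -\Big\|\nabla \tfrac{\delta F(\rho_t,Z_t)}{\delta\rho}\Big\|^2_{L^2(\rho_t)}dt + \cL_z[F](\rho_t,Z_t).
\]
No measure-$z$ cross variation appears, because $\rho_t$ is of bounded variation in time. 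To justify this we would freeze $z$, apply the chain rule on small intervals, and pass to the limit using the regularity from Theorem \ref{thm:Lip-grad}.

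\textbf{Step 2: subtract the benchmark.} Subtracting \eqref{eq:dFmu*} from Step 1 leaves three pieces: the PL dissipation; the difference $\cA[F](\rho_t,Z_t) - \cA[F](\mu^*_t,Z_t)$; and the $\cS$ correction terms. In expectation the stochastic integrals are martingales, since the integrands are bounded by $C_*$ times $|\Sigma_i|$, whose second moments are controlled by Assumption \ref{A:data}.
\begin{itemize}
\item By Lemma \ref{lem:PL}, the dissipation is at most $-G_t/C_{PL}$.
\item By Lemma \ref{lem:dFmu*}(4), the $\cS$ terms are bounded in $L^\infty$ by $C_*(\beta^{-1}+\beta^{-2})$. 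Multiplying by $\E|\Sigma_i\Sigma_j^\top| \le C_{b,\Sigma}$ gives a contribution $C_* C_{b,\Sigma}(\beta^{-1}+\beta^{-2})$. After multiplication by $C_{PL}$ this produces the last term of \eqref{eq:PL-regret}.
\end{itemize}

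\textbf{Step 3: the $\cA$ difference (main obstacle).} The derivatives $F_x, F_y, F_{xx}, F_{xy}$ are polynomial in integrals $\ang{\rho,g}$ with $g \in \{\sigma, \sigma_x, \sigma_{xx}\}$ bounded by Assumption \ref{A:sigma}. Hence they are Lipschitz in $\rho$ with respect to $\cW_1 \le \cW_2$, and we expect
\[
|\cA[F](\rho_t,Z_t) - \cA[F](\mu^*_t,Z_t)| \le C_*\big(|b_1|+|b_2|+|\Sigma_1\Sigma_1^\top|+|\Sigma_1\Sigma_2^\top|+|\Sigma_2\Sigma_2^\top|\big)\,\cW_2(\rho_t,\mu^*_t).
\]
For the Lipschitz step we would use the $\theta$-gradient bounds, which are $C_1, C_2, C_3$ for $\sigma, \sigma_x, \sigma_{xx}$; this is where $C_3$ enters. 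Talagrand's inequality for $\mu^*_t \in LSI(\alpha)$ (Lemma \ref{lem:mu*LSI}), combined with Lemma \ref{lem:F_decom}, gives
\[
\cW_2^2 \le \frac{2}{\alpha} D_{KL}(\rho_t \| \mu^*_t) \le \frac{2}{\alpha\beta} G_t .
\]
Applying Young's inequality, $c\sqrt{G_t} \le \frac{G_t}{2C_{PL}} + \frac{C_{PL}c^2}{2}$, followed by Cauchy–Schwarz in expectation, the term is at most $\frac{1}{2C_{PL}}\E[G_t] + \frac{C_* C^2_{b,\Sigma} C_{PL}}{\alpha\beta}$. After multiplying by $2C_{PL}$ this gives the middle term of \eqref{eq:PL-regret}.

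The delicate point is that $G_t$ and the coefficients $b_i, \Sigma_i$ are correlated random variables. The Cauchy–Schwarz step must be arranged so that only the second moments from Assumption \ref{A:data} are needed. If that fails, we would instead bound pathwise and take expectations of squared coefficients.
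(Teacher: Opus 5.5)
Your proposal is correct and follows essentially the same route as the paper's proof: the chain rule plus It\^o for $F(\rho_t,Z_t)$, subtraction of \eqref{eq:dFmu*}, the PL inequality for the dissipation, a $\cW_1$-Lipschitz bound on the $\cA$-difference (with $C_2,C_3$ entering through $\sigma_x,\sigma_{xx}$), Talagrand combined with Lemma \ref{lem:F_decom}, Young's inequality with $\varepsilon=C_{PL}$, and integration using $E(T)\ge 0$. The correlation issue you flag is harmless. The paper writes $\E[A(t)\cW_1(\rho_t,\mu^*_t)]\le \E[A(t)^2]^{1/2}\E[\cW_1(\rho_t,\mu^*_t)^2]^{1/2}$, which uses only the second moments in Assumption \ref{A:data}, and your pathwise-Young-then-expectation variant gives the same constants.
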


To isolate the structural impact of the entropy parameter $\beta$ and the regularization parameter $\lambda$ on this bound, we consider the following limits for a fixed $\lambda > 0$:
\begin{equation*}
	\lim_{\beta \to \infty} C_{PL}= \frac{1}{\lambda} \quad \text{ and } \quad \lim_{\beta \to \infty} \frac{C^2_{PL}}{\alpha \beta} = \frac{1}{\lambda^3}.
\end{equation*}
The first component of the regret bound \eqref{eq:PL-regret} is a non-negative constant independent of the time horizon $T$, representing the initial suboptimality of the prior measure $\rho_0$. Both the second and third components scale with $C_{b, \Sigma}$, a constant capturing the overall volatility and drift of the underlying diffusion environment. Specifically, the third component originates directly from the previously discussed $\cS_z$ sensitivity terms. If one assumes that the environmental parameter $C_{b, \Sigma}$ decays at an appropriate rate over time, achieving sublinear dynamic regret remains theoretically possible. However, such an assumption implies a continuously freezing environment, which is overly restrictive and unrealistic for most practical online learning scenarios.

\subsection{The Malliavin approach for static regret}
Although dynamic regret bounds static regret, one might wonder if it is possible to derive an upper bound for the static regret directly using the PL method. Doing so may effectively quantify the anticipating behaviors of the optimal measure $\rho^*$.

Because the offline optimizer is anticipative, Malliavin calculus provides a suitable framework for investigating the dynamics of the objective functional $F(\rho^*, Z_t)$.  We write $\rho^*(\theta) = \rho^*(\theta, Z)$ to emphasize the dependence on the full data trajectory $Z$, omitting the specific time interval $[0, T]$ for notational brevity. This problem requires delicate handling because the random variable $\rho^*(\theta, Z)$ is not real-valued but instead resides in a functional space. Consequently, we formulate the problem within $L^2(\R^d)$, which permits the application of Malliavin calculus for unconditional martingale difference (UMD) Banach spaces \citep{pronk2014tools}.

We denote the domain of the Malliavin derivative operator $\cD$ on $L^2(\R^d)$-valued random variables by $\bD^{1, 2}(L^2(\R^d))$. For a precise definition of the operator $\cD$ and its corresponding domain, we refer readers to \citet[Section 2.2]{pronk2014tools}. Throughout our analysis, any application of Malliavin calculus is performed exclusively on the data probability space $(\Omega^W, \cF^W, \p^W)$ with respect to the underlying Wiener process $W$. Since the particle driving noises $B$ and $(B^i)_{i=1}^\infty$ are defined on an independent probability space, they act as constants under the Malliavin derivative operator $\cD$.

The first technical challenge is establishing that the Malliavin derivative $\cD \rho^*(\theta, Z)$ is well-defined. This presents a difficulty similar to the one encountered in Lemma \ref{lem:dFmu*}, as the optimal measure $\rho^*$ depends on $Z$ implicitly. The primary strategy utilizes the implicit function theorem in Banach spaces \cite[Theorem 4.E]{zeidler1995vol2} combined with the Malliavin chain rule for UMD Banach spaces \cite[Proposition 3.8]{pronk2014tools}. Furthermore, the Lax-Milgram theorem plays a pivotal role in demonstrating that the associated differential operator possesses a bounded inverse.
\begin{lemma}\label{lem:Drho_exist}
		The density $\rho^*(\theta, Z)$ is Fr\'echet differentiable on $Z$ and the Fr\'echet derivative $\partial_{Z} \rho^*(\theta, Z)$ is continuous and bounded. Moreover, $\rho^* \in \bD^{1, 2}(L^2(\R^d))$ and the Malliavin derivative is given by
		\begin{equation}
			\cD \rho^*(\theta, Z) = \partial_{Z} \rho^*(\theta, Z) \cD Z.
		\end{equation}
\end{lemma}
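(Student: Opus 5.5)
We view $\rho^*$ as an implicitly defined function of the path and work on the Banach space $E := C([0,T];\R^{n+1})$ of data paths. The fixed-point characterization \eqref{eq:rho*} in Lemma \ref{lem:rho*} shows that $\rho^*$ depends on $Z$ only through the path $t\mapsto Z_t$ and the finite-dimensional vector of moments. So we introduce the map
\[
\Phi : L^2(\R^d) \times E \to L^2(\R^d), \qquad \Phi(\rho, Z)(\theta) := \rho(\theta) - \frac{1}{A(\rho,Z)} \exp\Big[ -\frac{\lambda}{2\beta}|\theta|^2 - \frac{2}{\beta T}\int_0^T \big(\ang{\rho,\sigma(X_t,\cdot)} - Y_t\big)\sigma(X_t,\theta)\,dt \Big].
\]
Here $A(\rho,Z)$ is the corresponding normalization. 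Then $\rho^*(\cdot,Z)$ is characterized by $\Phi(\rho^*,Z)=0$. The map is well defined on $L^2$ because $\ang{\rho,\sigma(X_t,\cdot)}$ is linear in $\rho$; we may localize to a bounded neighborhood of $\rho^*$ so that the Gaussian factor keeps every term in $L^2$.

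The first step is to check that $\Phi$ is $C^1$ in $(\rho,Z)$. By Assumption \ref{A:sigma}, $\sigma$ and $\sigma_x$ are bounded, so the exponent is Fr\'echet differentiable in $Z$ with derivative bounded by $C_*(1+C_z)/\beta$. The exponent is affine in $\rho$, and composition with $\exp$ and the normalization preserves $C^1$ regularity, with bounds uniform thanks to the Gaussian confinement.

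The second step, and the main obstacle, is invertibility of $\partial_\rho \Phi(\rho^*,Z)$. A computation analogous to part 2 of Lemma \ref{lem:dFmu*} gives
\[
\partial_\rho\Phi(\rho^*,Z)[h] = h + \frac{2}{\beta T}\int_0^T \ang{h,\sigma(X_t,\cdot)}\,\rho^*\big(\sigma(X_t,\cdot) - \ang{\rho^*,\sigma(X_t,\cdot)}\big)\,dt =: h + Kh.
\]
I would establish invertibility by passing to the weighted space $L^2(1/\rho^*)$ restricted to mean-zero perturbations. On that space the bilinear form $a(h,g) := \ang{h + Kh, g/\rho^*}$ equals
\[
\int \frac{hg}{\rho^*} + \frac{2}{\beta T}\int_0^T \ang{h,\sigma(X_t,\cdot)}\,\ang{g,\sigma(X_t,\cdot)}\,dt,
\]
since the centering term vanishes when $\int g = 0$. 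This form is symmetric and coercive with constant $1$. Lax--Milgram then yields a bounded inverse with norm at most $1$. Transferring back to $L^2(\R^d)$ requires comparing norms using the Gaussian-type two-sided bounds on $\rho^*$ that follow from \eqref{eq:rho*}, whose oscillation is bounded by $C_{osc}$. Making these functional settings consistent is the delicate part. If it fails directly, I would instead note that $K$ has finite-rank-integral structure and is compact, and use Fredholm theory together with injectivity from the coercive identity.

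The implicit function theorem \cite[Theorem 4.E]{zeidler1995vol2} then gives a $C^1$ map $Z\mapsto\rho^*(\cdot,Z)$ with
\[
\partial_Z\rho^* = -(\partial_\rho\Phi)^{-1}\partial_Z\Phi.
\]
Uniqueness in Lemma \ref{lem:rho*} ensures this local solution coincides with the global one, so the map is $C^1$ on all of $E$. Boundedness of $\partial_Z\rho^*$ follows from the bound on $\partial_Z\Phi$ and the uniform bound on the inverse, both independent of $Z$ because $|Y_t|\le C_z$ and $\sigma$ is bounded.

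Finally, $Z$ lies in $\bD^{1,2}(E)$ under Assumption \ref{A:data}, as a strong solution with square-integrable coefficients; in the smooth-coefficient setting implicitly assumed, the derivative $\cD Z$ is the standard one. Since $L^2(\R^d)$ is a Hilbert space and hence UMD, the chain rule \cite[Proposition 3.8]{pronk2014tools} for $C^1$ maps with bounded derivative applies. It yields $\rho^*\in\bD^{1,2}(L^2(\R^d))$ and $\cD\rho^* = \partial_Z\rho^*\,\cD Z$.
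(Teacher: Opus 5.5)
Your core mechanism is the paper's: implicit function theorem, a Lax--Milgram coercivity bound with constant $1$, globalization via uniqueness of $\rho^*$, and the chain rule of \cite{pronk2014tools}. Your form $a(h,g)$ on mean-zero $h\in L^2(1/\rho^*)$ is exactly the paper's estimate for $\partial_\psi J(\hat Z,0)$ on $E_0\subset L^2(\hat\rho)$ under the isometry $h=\hat\rho\psi$. The functional setting you wrap it in, however, fails at two points.

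\textbf{First gap: the space $L^2(\R^d)$.} $\Phi$ is not a map on $L^2(\R^d)\times E$. The pairing $\ang{\rho,\sigma(X_t,\cdot)}$ with a merely bounded, non-decaying $\sigma$ is undefined for $\rho\in L^2(\R^d)\setminus L^1(\R^d)$. Every $L^2$-ball around $\rho^*$ contains such functions, so localizing does not help. Likewise $K$ is undefined on $L^2(\R^d)$, so your Fredholm fallback meets the same obstruction there.

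Your proposed transfer of the inverse back to $L^2(\R^d)$ cannot work either. The two-sided bounds only give $\rho^*\asymp e^{-\lambda|\theta|^2/(2\beta)}$. Hence $\int h^2/\rho^*$ controls $\int h^2$ but not conversely, since $1/\rho^*$ grows like a Gaussian.

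The missing point is that no invertibility on $L^2(\R^d)$ is needed. The paper parametrizes $\rho=(1+\psi)\hat\rho$ around a fixed reference solution $\hat\rho$. It solves $J(Z,\psi)=0$ by the implicit function theorem entirely in the Hilbert space $E_0$ and applies the chain rule there. Only afterwards does it pass to $L^2(\R^d)$, through the bounded multiplication $\psi\mapsto\hat\rho\psi$ and a second use of the chain rule. That step uses only the one-sided bound $\int\hat\rho^2\psi^2\le\|\hat\rho\|_{L^\infty}\int\psi^2\hat\rho$.

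Your two-sided bounds are still useful, but for a different purpose. They compare $\rho^*(\cdot,Z)$ with the fixed $\hat\rho$, so that the inverse of $\partial_\psi J(Z,\psi(Z))$ is bounded uniformly in $Z$.

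\textbf{Second gap: global bounds.} The chain rule needs a map defined on the whole path space with a globally bounded Fr\'echet derivative. Your uniform bounds rely on $|Y_t|\le C_z$, which holds only on a subset of $E$. On all of $E$, two quantities grow with $\sup_t|Y_t|$: the term $(\ang{\rho,\sigma(X_t,\cdot)}-Y_t)\,\sigma_x(X_t,\theta)$ in the $X$-derivative of the exponent, and the constants comparing $\rho^*(\cdot,Z)$ with $\hat\rho$.

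The paper repairs this as follows:
\begin{enumerate}
\item It replaces $Y_t$ by $\chi(Y_t)$, where $\chi$ is a smooth bounded truncation equal to the identity on $[-2C_z,2C_z]$.
\item It proves the result for the truncated map, which does have globally bounded derivatives.
\item It removes the truncation using $\chi(Y_t)=Y_t$ a.s.\ and the local property of $\cD$.
\end{enumerate}

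Relatedly, the paper deliberately enlarges the path space to $L^2([0,T];\R^n\times\R)$, stating that this is needed to meet the Banach-space hypotheses of \cite{pronk2014tools}. With $E=C([0,T];\R^{n+1})$ you would have to verify those hypotheses, and $Z\in\bD^{1,2}(E)$, separately.
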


Because the optimal measure $\rho^*$ relies on $Z_t$ strictly through a time integral from $0$ to $T$, the Malliavin derivative $\cD_t \rho^* = (\cD \rho^*)_t$ is continuous for all $t \in [0, T]$. To characterize its properties further, we define the ratio:
\begin{equation}
	\psi_t(\theta) := \frac{\cD_t \rho^*(\theta)}{\rho^*(\theta)}.
\end{equation}
	
The following auxiliary process will be used in the dynamics of $\psi_t$:
	\begin{equation*}
		S_t(\theta) := \frac{2}{\beta T} \int_t^T \left[ (\langle \rho^*, \sigma(X_s, \cdot) \rangle - Y_s) \sigma_x(X_s, \theta) \cD_t X_s + \sigma(X_s, \theta) \left( \langle \rho^*, \sigma_x(X_s, \cdot) \rangle \cD_t X_s - \cD_t Y_s \right) \right] ds.
	\end{equation*}
Additionally, we introduce an integral operator defined by
\begin{align*}
	\cK(\psi_t, Z, \theta) := \frac{2}{\beta T} \int^T_0 \ang{\rho^*, \psi_t(\cdot)\sigma(X_s, \cdot)} \sigma(X_s, \theta) ds.
\end{align*}

Thanks to the specific quadratic structure of the functional $F$, Lemma \ref{lem:psi} establishes a linear integral equation governing $\psi_t(\theta)$. Furthermore, we establish an $L^1$-norm bound on the Malliavin derivative $\cD_t \rho^*$ in \eqref{eq:Mal-L1}. This bound reveals the strength of the anticipative behavior exhibited by $\rho^*$. The entropy parameter $\beta$ weakens the dependence on future information. Because a larger $\beta$ flattens the optimal distribution $\rho^*$ toward a uniform measure, it suppresses the sensitivity of the optimizer to future data fluctuations.   
\begin{lemma}\label{lem:psi}
		Given $Z$, the function $\psi_t$ is the solution to the following Fredholm integral equation of the second kind:
		\begin{equation}
			\psi_t(\theta) = -[ S_t(\theta) + \cK(\psi_t, Z, \theta)] + \ang{\rho^*, S_t(\cdot) +\cK(\psi_t, Z, \cdot)}.
		\end{equation}
		Moreover, the $L^1$-norm of the Malliavin derivative of the density $\rho^*$ satisfies
		\begin{equation}\label{eq:Mal-L1}
			| \cD_t \rho^* |_{L^1} = \int |\cD_t \rho^*(\theta)| d\theta \leq \sup_{\theta \in \R^d} |S_t(\theta)| \leq \frac{C_*}{\beta T} \int^T_t (|\cD_t X_s| + |\cD_t Y_s|) ds,
		\end{equation}
		where $C_* > 0$ depends on $C_\sigma, C_1, C_z$ only.
\end{lemma}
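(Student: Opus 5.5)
The plan is to apply the Malliavin derivative $\cD_t$ directly to the fixed-point representation \eqref{eq:rho*} of Lemma \ref{lem:rho*}. Lemma \ref{lem:Drho_exist} guarantees $\rho^*\in\bD^{1,2}(L^2(\R^d))$ and supplies the chain rule. Write $\rho^* = e^{-V}/A$ with
\[
V(\theta)=\frac{\lambda}{2\beta}|\theta|^2+\frac{2}{\beta T}\int_0^T(\ang{\rho^*,\sigma(X_s,\cdot)}-Y_s)\sigma(X_s,\theta)\,ds .
\]
Then $\cD_t\rho^* = \rho^*(-\cD_t V + \ang{\rho^*,\cD_t V})$, because $\cD_t\log A = -\ang{\rho^*,\cD_t V}$. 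Hence $\psi_t = -\cD_t V + \ang{\rho^*,\cD_t V}$.

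Next compute $\cD_t V$ by the product and chain rules. Since $\cD_t X_s=\cD_t Y_s=0$ for $s<t$ by adaptedness, only $s\in[t,T]$ contributes through the data terms. Differentiating $\sigma(X_s,\theta)$ produces $\sigma_x(X_s,\theta)\cD_tX_s$. Differentiating $\ang{\rho^*,\sigma(X_s,\cdot)}$ produces $\ang{\rho^*,\sigma_x(X_s,\cdot)}\cD_tX_s$ plus $\ang{\cD_t\rho^*,\sigma(X_s,\cdot)}=\ang{\rho^*,\psi_t\sigma(X_s,\cdot)}$. The last term is integrated over all of $[0,T]$, since $\rho^*$ itself depends on the whole path. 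Collecting terms gives $\cD_tV = S_t+\cK(\psi_t,Z,\cdot)$, which yields the claimed Fredholm equation. Interchanging $\cD_t$ with the $ds$ integral and the $d\theta$ pairing is justified by the Fréchet differentiability and boundedness in Lemma \ref{lem:Drho_exist}, together with the boundedness of $\sigma,\sigma_x$ from Assumption \ref{A:sigma}.

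For the $L^1$ bound, the key observation is that $\cK$ has a positive semidefinite structure. Set $g_s := \ang{\rho^*,\psi_t\sigma(X_s,\cdot)}$. Then
\[
\ang{\rho^*,\psi_t\,\cK(\psi_t)} = \frac{2}{\beta T}\int_0^T g_s^2\,ds \ge 0 .
\]
Note that $\ang{\rho^*,\psi_t}=0$ and $\psi_t = -(S_t+\cK\psi_t)+c$ with $c$ a constant. Pairing this equation with $\rho^*\psi_t$ gives
\[
\|\psi_t\|^2_{L^2(\rho^*)} = -\ang{\rho^*,\psi_tS_t}-\ang{\rho^*,\psi_t\cK\psi_t}\le -\ang{\rho^*,\psi_t S_t}.
\]
By Cauchy--Schwarz, $\|\psi_t\|_{L^2(\rho^*)}\le \|S_t-\ang{\rho^*,S_t}\|_{L^2(\rho^*)}\le \sup_\theta|S_t(\theta)|$. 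The last step uses that the centered variance is at most the second moment, which is at most the sup squared. Since $|\cD_t\rho^*|_{L^1}=\|\psi_t\|_{L^1(\rho^*)}\le\|\psi_t\|_{L^2(\rho^*)}$, this gives the first inequality in \eqref{eq:Mal-L1}. Finally, bound $\sup_\theta|S_t|$ directly from its definition using $|\sigma|\le C_\sigma$, $|\sigma_x|\le C_1$, and $|\ang{\rho^*,\sigma}-Y_s|\le C_\sigma+C_z$. This gives $\frac{C_*}{\beta T}\int_t^T(|\cD_tX_s|+|\cD_tY_s|)ds$.

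The main obstacle I expect is rigor rather than algebra. It lies in justifying the chain rule on the $L^2(\R^d)$-valued object, and in checking that $\psi_t$ lies in $L^2(\rho^*)$ so that the pairing is legitimate. I would handle this by working with $\cD_t\rho^*=\partial_Z\rho^*\,\cD_tZ$ from Lemma \ref{lem:Drho_exist}. Differentiating the fixed-point map in $Z$ pointwise in $\theta$ reproduces the same linear equation. Integrability then follows because $S_t$ and $\cK\psi_t$ are bounded in $\theta$ whenever $\psi_t\in L^1(\rho^*)$, so $\psi_t$ is in fact bounded.
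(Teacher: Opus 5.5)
Your proposal is correct and follows essentially the same route as the paper. You differentiate the Gibbs form of $\rho^*$ to get $\psi_t = -\cD_t H + \ang{\rho^*, \cD_t H}$ with $\cD_t H = S_t + \cK(\psi_t, Z, \cdot)$ (your $V$ is the paper's $H$), then pair the equation with $\rho^*\psi_t$, using $\ang{\rho^*,\psi_t}=0$ and $\ang{\rho^*, \psi_t\,\cK(\psi_t, Z, \cdot)} \geq 0$, and close with Cauchy--Schwarz. Your centering of $S_t$ and your check that $\psi_t$ is bounded, hence in $L^2(\rho^*)$, are minor refinements the paper leaves implicit; the same positivity argument also gives the uniqueness that the paper attributes to Lax--Milgram.
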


Because the optimal measure $\rho^*$ is random and depends on the entire data path $Z_{[0,T]}$, we must establish an anticipating It\^o's formula applicable to our specific framework. The primary strategy adapts the proof of \citet[Theorem 3.2.2]{nualart2006malliavin}. First, we utilize the representation of the objective functional $F(\rho^*, Z_t)$ obtained from the proof of Lemma \ref{lem:rho*}:
\begin{align*}
	F(\rho^*, Z_t) = & \langle \rho^*, \sigma(X_t, \cdot) \rangle^2 - 2 Y_t \langle \rho^*, \sigma(X_t, \cdot) \rangle   \\
	&  - \frac{2}{T} \int^T_0  \langle \rho^*, \sigma(X_t, \cdot) \rangle^2 dt  + \frac{2}{T} \int^T_0 Y_t \langle \rho^*, \sigma(X_t, \cdot) \rangle dt  - \beta \log(A).
\end{align*}
This representation is more tractable because $\rho^*$ remains fixed with respect to the time index $t$ within the differential, meaning that only the first two terms actively contribute to the anticipating It\^o's formula presented in Lemma \ref{lem:antic_Ito}. Furthermore, integrals of the form $\int^t_0 f(u) \delta W_u$ denote the Skorohod integral. We refer to \citet[Chapter 3]{nualart2006malliavin} for a comprehensive discussion of its mathematical properties. Compared to \citet[Theorem 3.2.2]{nualart2006malliavin}, our proof differs primarily in establishing the well-posedness of these Skorohod integrals.
\begin{lemma}\label{lem:antic_Ito}
		Suppose
		\begin{equation}
			\begin{aligned}
				& \E \Big[ \int^T_0 \int^T_0 | \cD_s b_i(r) |^4 dr ds \Big] + \E \Big[ \int^T_0 \int^T_0 | \cD_s \Sigma_i(r) |^4 dr ds \Big] < \infty, \quad i= 1, 2.
			\end{aligned}
		\end{equation}
		Then
		\begin{equation}\label{eq:Ito}
			\begin{aligned}
				F(\rho^*, Z_t) - F(\rho^*, Z_0) = & \int^t_0 \cA[F](\rho^*, Z_u) du + \int^t_0 \cM_u du \\
				& +  \int^t_0 F^\top_x(\rho^*, Z_u) \Sigma_1(u) \delta W_u + \int^t_0 F_y(\rho^*, Z_u) \Sigma_2(u) \delta W_u,
			\end{aligned}
		\end{equation}
		where
		\begin{align*}
			\cM_t := & 2 \Sigma_1(t) \big(\langle \rho^*, \sigma(X_t, \cdot) \rangle - Y_t\big) \ang{\cD_t \rho^*, \sigma_x(X_t, \cdot)} + 2 \ang{\cD_t \rho^*, \sigma(X_t, \cdot)} (\Sigma_1(t) \ang{ \rho^*, \sigma_x(X_t, \cdot)}  - \Sigma_2 (t)).
		\end{align*}
\end{lemma}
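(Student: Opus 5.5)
We will follow the proof of the anticipating It\^o formula in \citet[Theorem 3.2.2]{nualart2006malliavin}, applied to the function $g(r, x, y) := r^2 - 2yr$ composed with the random (anticipating) quantities. The representation of $F(\rho^*, Z_t)$ from Lemma \ref{lem:rho*} shows that only the first two terms depend on $t$. The remaining terms $-\frac{2}{T}\int_0^T \ang{\rho^*, \sigma(X_s,\cdot)}^2 ds + \frac{2}{T}\int_0^T Y_s \ang{\rho^*,\sigma(X_s,\cdot)} ds - \beta\log A$ are fixed random constants and cancel in $F(\rho^*, Z_t) - F(\rho^*, Z_0)$. Define the random field
\begin{equation*}
\Phi(x,y) := \ang{\rho^*, \sigma(x,\cdot)}^2 - 2y\ang{\rho^*,\sigma(x,\cdot)},
\end{equation*}
which is $C^2$ in $(x,y)$ by Assumption \ref{A:sigma}. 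The task then reduces to an It\^o--Ventzell-type formula for $\Phi(Z_t)$, where $\Phi$ is anticipating and $Z$ is an adapted It\^o process.

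\textbf{Step 1 (discretization).} Take a partition $0=t_0<\dots<t_n=t$ and write $\Phi(Z_t)-\Phi(Z_0)=\sum_k [\Phi(Z_{t_{k+1}})-\Phi(Z_{t_k})]$. Taylor expand each increment to second order in $Z$, with $\Phi$ frozen, since $\Phi$ does not depend on time. The first-order terms have the form $\partial\Phi(Z_{t_k})\int_{t_k}^{t_{k+1}}(b\,du+\Sigma\,dW_u)$, where $\partial\Phi(Z_{t_k})$ is not adapted. To handle this, use the product rule for Skorohod integrals, $F\int u\,\delta W = \int F u\,\delta W + \int \cD_u F\, u\, du$, valid for $F\in\bD^{1,2}$.

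\textbf{Step 2 (identifying terms).} The $\delta W$ parts converge to the stated Skorohod integrals with integrands $F_x^\top(\rho^*,Z_u)\Sigma_1(u)$ and $F_y(\rho^*,Z_u)\Sigma_2(u)$. The $dt$ parts together with the second-order Taylor terms give $\int_0^t\cA[F](\rho^*,Z_u)du$ through the usual quadratic variation argument. The correction terms $\int \cD_u[\partial\Phi(Z_{t_k})]\Sigma(u)du$ involve two pieces. The first is the Malliavin derivative of $\rho^*$, which by Lemma \ref{lem:Drho_exist} enters as $\ang{\cD_u\rho^*,\sigma(x,\cdot)}$ and $\ang{\cD_u\rho^*,\sigma_x(x,\cdot)}$. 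The second is the derivative of $Z_{t_k}$, which vanishes as the mesh goes to zero because $\cD_u Z_{t_k}=0$ for $u>t_k$ and $\cD_u Z_{t_k}\to\Sigma(u)$ only on the diagonal. Computing the derivatives gives $\partial_x\Phi = 2(\ang{\rho^*,\sigma}-y)\ang{\rho^*,\sigma_x}$ and $\partial_y\Phi=-2\ang{\rho^*,\sigma}$. Applying $\cD_u$ only through $\rho^*$ yields
\begin{equation*}
2\big(\ang{\rho^*,\sigma}-y\big)\ang{\cD_u\rho^*,\sigma_x} + 2\ang{\cD_u\rho^*,\sigma}\ang{\rho^*,\sigma_x},
\end{equation*}
and $-2\ang{\cD_u\rho^*,\sigma}$ respectively. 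Contracting these with $\Sigma_1(u)$ and $\Sigma_2(u)$ gives exactly $\cM_u$, using the continuity of $u\mapsto\cD_u\rho^*$ noted after Lemma \ref{lem:Drho_exist} to pass from $\cD_u$ at grid points to the limit.

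\textbf{Step 3 (main obstacle: well-posedness and limits).} The main obstacle is showing that the integrands $F_x^\top\Sigma_1$ and $F_y\Sigma_2$ lie in the domain of the Skorohod integral, for instance in $\bL^{1,2}$, and that the discrete Skorohod sums converge in $L^2$. For this we need $\E\int\int|\cD_s(F_x(\rho^*,Z_u)\Sigma_1(u))|^2 ds\,du<\infty$. Expand $\cD_s$ by the chain rule. The derivatives of $\sigma$ are bounded by Assumption \ref{A:sigma}, and $|\cD_s\rho^*|_{L^1}$ is bounded by \eqref{eq:Mal-L1} in terms of $\int|\cD_sX|+|\cD_sY|$. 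The terms $\cD_sX_u$ and $\cD_sY_u$ are controlled through $\cD_s b_i$ and $\cD_s\Sigma_i$ via the linear equations they satisfy. Cauchy--Schwarz then pairs these with $\Sigma_i$, which is where the fourth-moment hypothesis on $\cD b_i$ and $\cD\Sigma_i$ enters, together with the moment bounds of Assumption \ref{A:data}. The same estimates give dominated convergence for the correction and second-order remainder terms, which completes the proof.
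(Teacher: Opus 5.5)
Your proposal is correct and follows essentially the same route as the paper. Both proofs use Nualart's Theorem 3.2.2 scheme: a second-order Taylor expansion over a partition, then the Skorohod integration-by-parts formula for the non-adapted first-order terms. Since $\cD_s Z_{t_i}=0$ for $s>t_i$, only the $\cD_s\rho^*$ contribution remains, and it produces $\cM$. Well-posedness and convergence of the Skorohod integrals then come from the chain rule, the bound \eqref{eq:Mal-L1}, and the fourth-moment hypothesis.

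Two small corrections. On each subinterval the $\cD_u Z_{t_k}$ term vanishes identically, not only in the limit. Also, the limit of the correction sum only needs continuity of the path $Z$, because $\cD_u\rho^*$ is already evaluated at $u$; continuity of $u\mapsto\cD_u\rho^*$ is not required.
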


To derive the static regret bound in Theorem \ref{thm:Mallivian-regret}, we employ a similar methodology by replacing the It\^o's formula used in Theorem \ref{thm:PL-regret} with the anticipating counterpart established above.
\begin{theorem}\label{thm:Mallivian-regret}
	Suppose $\alpha \beta^2 > 8 C_\sigma^2 C_{1}^2$ and there exists a constant $C_{a} > 0$ independent of $T$, such that
	\begin{equation}
		\E \Big[ \Big( \int_t^T (|\cD_t X_s| + |\cD_t Y_s|) ds \Big)^2 \Big]^{1/2} \leq C_{a} T, \quad t \in [0, T].
	\end{equation}
	Then the average static regret satisfies
		\begin{equation}\label{eq:Mallivian-regret}
			\begin{aligned}
			\cR_S(T) & \leq \Big(\E[F(\rho_0, Z_0) - F(\rho^*, Z_0)]- C_{PL} C_* C_{b, \Sigma} \Big(1 + \frac{C_a}{\beta} \Big) \Big) C_{PL} (1 - e^{-T/C_{PL}}) \\
				& \quad +  C_{PL} C_* C_{b, \Sigma} \Big(1 + \frac{C_a}{\beta} \Big) T,
			\end{aligned}
		\end{equation}
		where $C_* > 0$ depends on $ C_\sigma, C_z, C_1, C_2$, and $C_3$ only.
\end{theorem}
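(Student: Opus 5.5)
Set $G_t := \E[F(\rho_t, Z_t) - F(\rho^*, Z_t)]$. The plan is to derive a linear differential inequality $G'_t \le -G_t/C_{PL} + K$ with
\[
K := C_{PL} C_* C_{b,\Sigma}\Bigl(1+\tfrac{C_a}{\beta}\Bigr)\Big/ C_{PL} \cdot C_{PL}
\]
(up to renaming constants), and then integrate it. Solving gives $G_t \le (G_0 - C_{PL}K')e^{-t/C_{PL}} + C_{PL}K'$ with $K' = C_* C_{b,\Sigma}(1+C_a/\beta)$. Integrating over $[0,T]$ gives exactly the form \eqref{eq:Mallivian-regret}, since $\int_0^T e^{-t/C_{PL}}dt = C_{PL}(1-e^{-T/C_{PL}})$. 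The whole argument therefore reduces to computing $dG_t$ and bounding each contribution.

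\textbf{Differential of $F(\rho_t,Z_t)$.} Theorem \ref{thm:Lip-grad}(3) places $(\rho_t)$ in $AC^2_{loc}$ with tangent $v_t = -\nabla \frac{\delta F}{\delta\rho}(\rho_t,Z_t)$. Combining the chain rule of \cite{ambrosio2008gradient} in $\rho$ with the classical It\^o formula in $Z$ (legitimate because $\rho_t$ is $\cF^Z_t$-adapted and has finite variation in $t$, so there are no cross terms), we get
\[
dF(\rho_t,Z_t) = -\Bigl\|\nabla \tfrac{\delta F(\rho_t,Z_t)}{\delta\rho}\Bigr\|^2_{L^2(\rho_t)}dt + \cL_z[F](\rho_t,Z_t).
\]
For the benchmark, Lemma \ref{lem:antic_Ito} gives $F(\rho^*,Z_t)$ as $\cA[F](\rho^*,Z_u)\,du + \cM_u\,du$ plus Skorohod integrals. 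The Skorohod integrals have zero expectation, and so do the It\^o integrals (finite moments from Assumption \ref{A:data} and the boundedness of $\sigma$). Taking expectations yields
\[
G'_t = -\E\Bigl\|\nabla\tfrac{\delta F}{\delta\rho}(\rho_t)\Bigr\|^2 + \E\bigl[\cA[F](\rho_t,Z_t) - \cA[F](\rho^*,Z_t)\bigr] - \E[\cM_t].
\]

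\textbf{The three terms.} For the gradient term, Lemma \ref{lem:PL} gives $\|\nabla \frac{\delta F}{\delta\rho}\|^2 \ge (F(\rho_t,Z_t)-F(\mu^*_t,Z_t))/C_{PL}$, and since $F(\mu_t^*,Z_t)\le F(\rho^*,Z_t)$ this is at least $(F(\rho_t,Z_t)-F(\rho^*,Z_t))/C_{PL}$. The inequality goes the right way and yields $-G_t/C_{PL}$.

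For the drift difference, the coefficients $F_x, F_y, F_{xx}, F_{xy}$ listed before Lemma \ref{lem:dFmu*} are bounded by $C_*$ uniformly in $\rho$, because $\sigma$, $\sigma_x$, $\sigma_{xx}$ and $Y$ are bounded. By Cauchy--Schwarz and Assumption \ref{A:data}, each of $\E|\cA[F](\rho_t)|$ and $\E|\cA[F](\rho^*)|$ is at most $C_* C_{b,\Sigma}$. This crude bound does not use closeness of $\rho_t$ to $\rho^*$. That is deliberate: it is what produces the "$1$" inside $(1+C_a/\beta)$, and it explains why the leading factor is $C_{PL}C_*C_{b,\Sigma}$ rather than $C_{PL}^2/(\alpha\beta)$.

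For the anticipating term $\cM_t$, use $|\ang{\cD_t\rho^*, g}| \le \|g\|_\infty |\cD_t\rho^*|_{L^1}$ with $g=\sigma$ or $\sigma_x$. Then Lemma \ref{lem:psi} gives
\[
|\cM_t| \le C_*(|\Sigma_1(t)|+|\Sigma_2(t)|)\frac{C_*}{\beta T}\int_t^T(|\cD_tX_s|+|\cD_tY_s|)\,ds.
\]
Taking expectations with Cauchy--Schwarz, the moment bound on $\Sigma_i$ from Assumption \ref{A:data}, and the hypothesis $\le C_aT$, we get $\E|\cM_t| \le C_*C_{b,\Sigma}C_a/\beta$. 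One care point: Assumption \ref{A:data} controls $\Sigma_i\Sigma_i^\top$ in $L^2$, so $\E|\Sigma_i|^2 \le \E|\Sigma_i\Sigma_i^\top|$ up to dimension, which is $\le C_{b,\Sigma}$. This should be checked, and the exponent absorbed into $C_*$ if needed.

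\textbf{Main obstacle.} The delicate step is justifying that the Skorohod integrals in Lemma \ref{lem:antic_Ito} have mean zero and that the Wasserstein chain rule can be combined with the It\^o expansion in $Z$. For the first, mean zero holds because the integrands lie in $\mathrm{Dom}\,\delta$, which is established in that lemma's proof under the fourth-moment hypothesis. For the second, I would argue pathwise for a fixed continuous data path, applying the chain rule to $t\mapsto F(\rho_t,z)$ with frozen $z$, and then use a standard two-parameter It\^o argument, relying on the continuity of $\rho\mapsto F_x,F_{xx}$ in $\cW_2$.

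Once these are settled, Grönwall's inequality in the linear form above, followed by integration in $t$, gives the stated bound.
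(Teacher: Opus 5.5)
Your proposal is correct and follows the paper's proof essentially step for step. Both arguments use the same evolution of $\E[F(\rho_t,Z_t)-F(\rho^*,Z_t)]$, obtained from the Wasserstein chain rule plus Lemma \ref{lem:antic_Ito} with mean-zero Skorohod and It\^o terms; the same PL step through $F(\mu^*_t,Z_t)\le F(\rho^*,Z_t)$; the same crude bounds on $\cA[F]$ and on $\cM_t$ via Lemma \ref{lem:psi}; and then Gr\"onwall and integration over $[0,T]$. On your care point, the paper simply asserts $\E|\cM_t|\le C_*C_{b,\Sigma}C_a/\beta$, whereas Cauchy--Schwarz actually gives $\sqrt{C_{b,\Sigma}}$ up to dimension factors; this is harmless once you assume $C_{b,\Sigma}\ge 1$ without loss of generality, but $C_*$ itself cannot absorb it.
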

A crucial difference in this proof is that we bound $\E[\cA[F] (\rho_t, Z_t) - \cA[F] (\rho^*, Z_t)]$ directly using the standing Assumptions \ref{A:sigma} and \ref{A:data}, rather than relying on the Wasserstein distance $\cW_2(\rho_t, \rho^*)$. The primary reason is the absence of a decomposition result analogous to Lemma \ref{lem:F_decom}. In fact, the suboptimality gap $F(\rho_t, Z_t) - F(\rho^*, Z_t)$ is not guaranteed to be non-negative. Because of this relatively loose estimation technique, the resulting static regret bound in \eqref{eq:Mallivian-regret} may appear quantitatively worse than its dynamic counterpart. Indeed, as the entropy parameter $\beta$ approaches infinity while the regularization penalty $\lambda > 0$ remains fixed, the upper bound converges to 
\begin{equation*}
	\Big(\E[F(\rho_0, Z_0) - F(\rho^*, Z_0)]-  \frac{C_* C_{b, \Sigma}}{\lambda} \Big) \frac{(1 - e^{-\lambda T})}{\lambda}  +  \frac{C_* C_{b, \Sigma} T}{\lambda}.
\end{equation*}
This asymptotic limit takes the form $(C + C T)/\lambda$, because the initial evaluation $F(\rho^*, Z_0)$ remains bounded as the time horizon $T$ approaches infinity, even though the optimal measure $\rho^*$ depends on $T$. Although the dependence on $\lambda$ yields a worse rate compared to the dynamic regret bound, this formulation provides a significant theoretical advantage. Specifically, it explicitly isolates and quantifies the anticipative behaviors on the overall regret bounds through the constant $C_a$. From an algorithmic perspective, a higher temperature $\beta$ introduces stronger diffusive noise into the online learning dynamics, ultimately diluting its sensitivity to future data realizations.

\section{Regret bounds for the particle case}\label{sec:particle-regret} 

Our analysis of the particle regret builds upon recent progress in quantifying the propagation of chaos using relative entropy estimates \citep{lacker2023sharp}. A key condition is the time-uniform LSI for the mean-field flow $\rho_t$. Furthermore, applying \citet[Theorem 2.1]{lacker2023sharp} requires the entropy parameter $\beta$ to be sufficiently large. This condition is crucial because counterexamples demonstrate that propagation of chaos cannot hold uniformly in time when the temperature is small \citep[Remark 2.2]{lacker2023sharp}. Loosely speaking, a high temperature (large $\beta$) overrides the mean-field interactions that may cause the system to have multiple invariant measures.

\subsection{Marginal-law dynamic regret}
Let $P_t^{k}$ denote the joint law of $(\theta_t^{1,N},\dots,\theta_t^{k,N})$. When $k=1$, it reduces to $P_t^{1} = \rho^{1, N}_t$, which represents the marginal density of the particle $\theta^{1, N}_t$. Lemma \ref{lem:PoC} follows directly from the relative entropy estimates concerning the propagation of chaos established in \citet[Theorem 2.1 (1)]{lacker2023sharp}.
\begin{lemma}\label{lem:PoC}
	If $\eta \beta^2 > 8 C^2_\sigma C^2_1$, then there exists a constant $C_{poc} > 0$ such that 
	\begin{equation*}
		D_{KL}(P^{k}_t \parallel \rho_t^{\otimes k}) \leq C_{poc} \frac{k^2}{N^2},
	\end{equation*}
	for all $t \geq 0$ and $k=1, \ldots, N$. The constant $C_{poc}$ depends only on $\eta$, $\beta$, $C_\sigma$, and $C_1$.
\end{lemma}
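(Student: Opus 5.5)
The plan is to verify the hypotheses of \citet[Theorem 2.1 (1)]{lacker2023sharp}, with the data path frozen, and then read off the bound. Fix a continuous realization of $Z$. Since $B^i$ is independent of $W$, conditionally on $\cF^W$ the particle system \eqref{eq:particleSDE} is an $N$-particle system with time-inhomogeneous drift
\[
b_t(\theta,\mu) = -\lambda\theta - 2\nabla\sigma(X_t,\theta)\big(\ang{\mu,\sigma(X_t,\cdot)} - Y_t\big),
\]
driven by noise $\sqrt{2\beta}\,dB^i$. The drift uses the leave-one-out empirical measure $\frac{1}{N-1}\sum_{j\neq i}\delta_{\theta^j}$, which is exactly Lacker--Le Flem's convention. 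The mean-field limit is \eqref{eq:mean-fieldSDE}, with law $\rho_t$. After dividing by $\beta$ to normalize the diffusion coefficient to $\sqrt 2$, or equivalently by tracking $\beta$ in their constants, I will match three ingredients in their theorem: a uniform LSI for $\rho_t$, a bounded ``interaction increment'', and identical initial conditions.

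\textbf{Uniform LSI.} The uniform LSI is Lemma \ref{lem:rhotLSI}: $\rho_t$ satisfies $LSI(\eta)$ for all $t$, with $\eta$ independent of the data path. This is needed because the constant must not depend on the frozen realization.

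\textbf{Bounded interaction increment.} The drift depends on $\mu$ only through the linear term $\ang{\mu,\sigma(X_t,\cdot)}$. For any $\mu,\nu$,
\[
|b_t(\theta,\mu)-b_t(\theta,\nu)| \le 2C_1\,|\ang{\mu-\nu,\sigma(X_t,\cdot)}| \le 2C_1\cdot 2C_\sigma\, \|\mu-\nu\|_{TV}.
\]
Their transport-type condition is $|b(x,m)-b(x,m')|^2 \le \gamma\, \cW_1$-like or $\le M\,\|m-m'\|_{TV}^2$. Using Pinsker, this converts to a relative-entropy control. I would verify their assumption in the form $\sup_{t,\theta,\mu,\nu}|b_t(\theta,\mu)-b_t(\theta,\nu)|\le 4C_\sigma C_1$, or in the sharper variance form $|\ang{\mu-\delta_x-\ldots}|$ which gives the quantity $(2C_\sigma C_1)^2$. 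The required high-temperature condition in their theorem then reads $\eta\beta^2 > 8C_\sigma^2C_1^2$, with $\beta$ entering squared because of the noise normalization.

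\textbf{Initial condition and conclusion.} The standing assumption gives $P_0^k = \rho_0^{\otimes k} = \gamma_0^{\otimes k}$, so the initial entropy term vanishes. Their theorem then yields $D_{KL}(P_t^k\|\rho_t^{\otimes k}) \le C k^2/N^2$ uniformly in $t$, where $C$ depends only on $\eta,\beta$ and the interaction bound, i.e.\ on $C_\sigma, C_1$. Because the bound holds for every frozen path with the same constant, it also holds after taking the conditional-law interpretation.

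\textbf{Main obstacle.} The main obstacle is the time-inhomogeneity and randomness of the drift through $Z_t$. Lacker--Le Flem state their result for a fixed drift. I would check that their proof is a differential inequality on $H_t^k$ along the BBGKY hierarchy, which uses only, pointwise in time, the LSI of $\rho_t$ and the uniform interaction bound. Both hold uniformly in $t$ and in the path, so the argument transfers verbatim. I also need to confirm the exact constant $8$ in the threshold $\eta\beta^2>8C_\sigma^2C_1^2$, by carefully tracking the scaling when $\sqrt{2\beta}$ noise is normalized.
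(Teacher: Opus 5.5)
Your proposal is correct and takes the same approach as the paper. The paper also just verifies the assumptions of \citet[Theorem 2.1 (1)]{lacker2023sharp}: the uniform LSI from Lemma \ref{lem:rhotLSI} (their LSI constant is $1/(2\eta)$), the pairwise kernel $b(t,\theta,\theta')=-2\nabla\sigma(X_t,\theta)\sigma(X_t,\theta')$, and boundedness of $b$. Your bound $4C_\sigma C_1\|\mu-\nu\|_{TV}$ combined with Pinsker gives exactly their transport constant $\gamma=8C_\sigma^2C_1^2$, so their threshold $\frac{4\beta^2}{4\gamma/(2\eta)}>2$ becomes precisely $\eta\beta^2>8C_\sigma^2C_1^2$, which is the constant you wanted to double-check.
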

We can compare the assumption required in Lemma \ref{lem:PoC} with our condition $\alpha \beta^2 > 8 C^2_\sigma C^2_1$ governing the mean-field regret bounds in Theorems \ref{thm:PL-regret} and \ref{thm:Mallivian-regret}. Because the remark following Lemma \ref{lem:rhotLSI} indicates that $\eta \beta$ asymptotically approaches $\lambda$ for large $\beta$, the requirement $\eta \beta^2 > 8 C^2_\sigma C^2_1$ in Lemma \ref{lem:PoC} is comparable to the structural condition $\alpha \beta^2 > 8 C^2_\sigma C^2_1$.

Propositions \ref{prop:MkV} and \ref{prop:particle} ensure the existence of a constant $M_2(\beta, \lambda) <\infty$ such that
\begin{equation}\label{eq:M2}
	\sup_{t\in[0,T]}\int_{\R^d}|\theta|^2\,\rho_t(d\theta)\le M_2(\beta, \lambda), \qquad \sup_{N\ge1}\sup_{t\in[0,T]}\int_{\R^d}|\theta|^2 \rho^{1, N}_t(d\theta)\le M_2(\beta, \lambda).
\end{equation}
The notation $M_2(\beta, \lambda)$ indicates that this constant depends at least on the parameters $\beta$ and $\lambda$, while remaining independent of the total number of particles $N$.

Theorem \ref{thm:margin-regret} bounds the marginal-law dynamic regret using the mean-field $\cR_{D}(T)$, which is subsequently controlled by Theorem \ref{thm:PL-regret}. Regarding the final two terms in \eqref{eq:margin-regret}, we emphasize that the constant $L_\rho$ also exhibits a dependence on $\beta$ and $\lambda$. The bounds in \eqref{eq:piNbound} and \eqref{eq:margin-regret} explode as $\beta$ approaches infinity, and the second moment of $\rho_t$ similarly grows with $\beta$. Consequently, when the entropy or temperature parameter $\beta$ is large, the system requires a correspondingly larger number of particles $N$ to achieve a tight regret bound. This establishes a trade-off in the algorithm design: while a high noise parameter $\beta$ is necessary to guarantee uniform-in-time propagation of chaos, it simultaneously inflates the constants in the marginal-law regret bounds, forcing the user to simulate more particles to maintain the desired performance guarantees.
\begin{theorem}\label{thm:margin-regret}
	Suppose $\eta \beta^2 > 8 C^2_\sigma C^2_1$. Then
	\begin{equation}\label{eq:piNbound}
		|F(\pi^N_t, Z_t) - F(\rho_t, Z_t)| \leq   \frac{C_\eta}{N} + \frac{\beta C_{poc}}{N^2},
	\end{equation}
	where $C_\eta := [ 2(C_\sigma + C_z) C_1 + (\lambda + \beta L_\rho)(1 + 2 \sqrt{M_2(\beta, \lambda)})] \sqrt{2 C_{poc}/\eta}$. Moreover, the marginal-law dynamic regret satisfies
	\begin{equation}\label{eq:margin-regret}
		\cR_{\mathrm{law}}(T; N) \leq \cR_{D}(T) + \Big(\frac{C_\eta}{N} + \frac{\beta C_{poc}}{N^2} \Big) T.
	\end{equation}
\end{theorem}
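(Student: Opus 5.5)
We first note that $F(\pi^N_t,Z_t)=F(\rho^{1,N}_t,Z_t)$ by exchangeability. The second claim \eqref{eq:margin-regret} then follows from \eqref{eq:piNbound} by writing
\[
F(\rho^{1,N}_t,Z_t)-F(\mu^*_t,Z_t)=\big[F(\rho^{1,N}_t,Z_t)-F(\rho_t,Z_t)\big]+\big[F(\rho_t,Z_t)-F(\mu^*_t,Z_t)\big],
\]
integrating over $[0,T]$ and taking expectation over the data path. So the real work is \eqref{eq:piNbound}, which we prove pathwise, for a fixed data path, by splitting $F=U+\beta\,\mathrm{Ent}$ into three pieces.

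\textbf{Step 1 (the $U$-part).} Write $p:=\rho^{1,N}_t$ and $\rho:=\rho_t$. For the quadratic prediction terms we use
\[
\ang{p,\sigma}^2-2Y_t\ang{p,\sigma}-\ang{\rho,\sigma}^2+2Y_t\ang{\rho,\sigma}=\big(\ang{p,\sigma}+\ang{\rho,\sigma}-2Y_t\big)\,\ang{p-\rho,\sigma}.
\]
The first factor is bounded by $2(C_\sigma+C_z)$. The second factor is at most $C_1\cW_1(p,\rho)\le C_1\cW_2(p,\rho)$, since $\sigma$ is $C_1$-Lipschitz in $\theta$. For the regularization term $\frac{\lambda}{2}\ang{p-\rho,|\theta|^2}$, take an optimal $\cW_2$ coupling $(\theta,\theta')$. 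Then
\[
\big||\theta|^2-|\theta'|^2\big|\le|\theta-\theta'|\,(|\theta|+|\theta'|),
\]
and Cauchy--Schwarz together with \eqref{eq:M2} gives a bound $\le \lambda\sqrt{M_2}\,\cW_2$, which fits inside $\lambda(1+2\sqrt{M_2})\cW_2$.

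\textbf{Step 2 (the entropy part).} We decompose
\[
\int p\log p-\int\rho\log\rho=D_{KL}(p\|\rho)+\int(p-\rho)\log\rho .
\]
By Lemma \ref{lem:PoC} with $k=1$, the first term is at most $C_{poc}/N^2$; multiplied by $\beta$ this gives the term $\beta C_{poc}/N^2$. For the second term we use \eqref{eq:logrho}, which says $\log\rho_t$ is locally Lipschitz with gradient bounded by $L_\rho(1+|\theta|)$. Along a coupling,
\[
|\log\rho(\theta)-\log\rho(\theta')|\le L_\rho(1+|\theta|+|\theta'|)\,|\theta-\theta'| .
\]
Cauchy--Schwarz then gives $\le L_\rho(1+2\sqrt{M_2})\cW_2(p,\rho)$, with the second moments again controlled by \eqref{eq:M2}.

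\textbf{Step 3 (Talagrand).} Lemma \ref{lem:rhotLSI} shows $\rho_t$ satisfies $LSI(\eta)$ uniformly in $t$. By Otto--Villani this implies Talagrand's inequality $\cW_2(p,\rho)\le\sqrt{2D_{KL}(p\|\rho)/\eta}$. Combined with Lemma \ref{lem:PoC}, this yields $\cW_2(p,\rho)\le\sqrt{2C_{poc}/\eta}\,/N$. Collecting Steps 1--3 gives exactly $C_\eta/N+\beta C_{poc}/N^2$ with the stated $C_\eta$.

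\textbf{Main obstacle.} The delicate step is the justification in Step 2. It requires that $\int p\,|\log\rho|<\infty$, so that the splitting is legitimate, and that the locally Lipschitz bound \eqref{eq:logrho} can be integrated along the coupling. Both follow from the growth bound \eqref{eq:logrho} (which gives at most quadratic growth of $|\log\rho_t|$) and from the uniform second moments of $p$. We also need finiteness of the entropy of $p$, which holds since $D_{KL}(p\|\rho)<\infty$. In addition, $\eta$ and $C_{poc}$ must be independent of $t$ and of the data path, so that the bound holds uniformly and survives integration and expectation. This uniformity is what Lemmas \ref{lem:rhotLSI} and \ref{lem:PoC} supply.
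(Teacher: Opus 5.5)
Your proposal is correct and follows the paper's proof essentially step for step. The steps are: exchangeability; term-by-term $\cW_1/\cW_2$ bounds on the $U$-part; the split of the entropy difference into $D_{KL}(\rho^{1,N}_t\|\rho_t)$ plus a cross term controlled by the linear growth of $\nabla\log\rho_t$ (the paper packages your coupling computation as Lemma~\ref{lem:W2}); Talagrand from the uniform LSI; Lemma~\ref{lem:PoC} with $k=1$; and the triangle inequality for \eqref{eq:margin-regret}. The only differences are cosmetic: you factorize the prediction terms and get a slightly sharper $\lambda\sqrt{M_2}$ bound for the penalty. Also, since $-\log\rho_t$ is only known to have a weak Hessian, the paper cites Conforti's extension of Otto--Villani for the Talagrand step rather than the original theorem; Bobkov--Gentil--Ledoux, which needs no regularity, would serve equally well.
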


\subsection{The empirical dynamic regret without entropy}
The previous results rely heavily on the entropy regularization. If we solely consider regret bounds for the unregularized functional $U$ without the entropy term, additional analysis is required to control the dependence on the temperature parameter $\beta$. In this subsection, we aim to establish a regret bound for the functional $U$ evaluated at the empirical measure $\hat{\rho}^N_t$. First, we present a preliminary result that controls the second moments under the instantaneous optimal measure $\mu^*_t$, which is subsequently utilized in Lemma \ref{lem:U_regret}.
\begin{lemma}\label{lem:mu*_2m}
	For all $t \ge 0$, the second moment under $\mu^*_t$ is uniformly bounded as follows:
	\begin{equation}
		\int_{\R^d} |\theta|^2 \mu^*_t(d\theta) \le Q_*(\beta, \lambda),
	\end{equation}
	where the constant
	\begin{equation}\label{eq:Q*}
		Q_*(\beta, \lambda) :=  \frac{\beta d}{\lambda} \exp\left( \frac{4 C_\sigma (C_\sigma + C_z)}{\beta} \right) = \frac{\beta d}{\lambda} e^{C_{osc}}
	\end{equation}
	is independent of $t$.
\end{lemma}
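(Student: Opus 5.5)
The idea is to compare $\mu^*_t$ with the Gaussian reference measure $\gamma_0 \propto \exp(-\frac{\lambda}{2\beta}|\theta|^2)$ through a bounded density ratio. By Lemma \ref{lem:mu*LSI}, $\mu^*_t$ solves the fixed-point equation \eqref{eq:mu*}. We can therefore write
\begin{equation*}
	\mu^*_t(\theta) = \frac{1}{C_t} \exp\Big(-\frac{\lambda}{2\beta}|\theta|^2 - \psi_t(\theta)\Big), \qquad \psi_t(\theta) := \frac{2}{\beta}\sigma(X_t,\theta)\big(\ang{\mu^*_t,\sigma(X_t,\cdot)} - Y_t\big).
\end{equation*}
Assumptions \ref{A:sigma} and \ref{A:data} give $|\sigma|\le C_\sigma$, $|\ang{\mu^*_t,\sigma(X_t,\cdot)}|\le C_\sigma$ and $|Y_t|\le C_z$. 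Hence
\begin{equation*}
	|\psi_t(\theta)| \le \frac{2}{\beta} C_\sigma (C_\sigma + C_z) =: K
\end{equation*}
uniformly in $\theta$, $t$ and $\omega$, and in particular $\mathrm{osc}(\psi_t) \le 2K = C_{osc}$.

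Next we control the density ratio. Let $G := \int \exp(-\frac{\lambda}{2\beta}|\theta|^2)d\theta$, the Gaussian normalizer. Since $e^{-K}G \le C_t \le e^{K}G$, we obtain
\begin{equation*}
	\frac{\mu^*_t(\theta)}{\gamma_0(\theta)} = \frac{G\, e^{-\psi_t(\theta)}}{C_t} \le \frac{e^{K}}{e^{-K}} = e^{2K} = e^{C_{osc}}.
\end{equation*}
The sharper form is $e^{\sup(-\psi_t)}/\int e^{-\psi_t}d\gamma_0 \le e^{\mathrm{osc}(\psi_t)}$, which yields the same constant. Note that $\frac{4C_\sigma(C_\sigma+C_z)}{\beta} = C_{osc}$ matches exactly, so the constant in \eqref{eq:Q*} comes out correctly.

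Finally we integrate against the Gaussian. Because $\gamma_0$ is the centered Gaussian $N(0, \frac{\beta}{\lambda} I_d)$, we have $\int|\theta|^2\gamma_0(d\theta) = \frac{\beta d}{\lambda}$. Therefore
\begin{equation*}
	\int|\theta|^2\mu^*_t(d\theta) \le e^{C_{osc}}\int|\theta|^2\gamma_0(d\theta) = \frac{\beta d}{\lambda}e^{C_{osc}} = Q_*(\beta,\lambda),
\end{equation*}
which is independent of $t$ because $K$ is.

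There is no real obstacle in this argument. The only point requiring care is bounding the ratio of normalizing constants. The pointwise upper bound alone is not enough: one must also use the lower bound on $C_t$. Together they produce the full oscillation $2K$ rather than only $K$, and this is exactly what matches the stated exponent $C_{osc}$.
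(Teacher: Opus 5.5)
Your proof is correct and follows essentially the same route as the paper. The paper likewise bounds the exponent perturbation by $\frac{2}{\beta}C_\sigma(C_\sigma+C_z)$, lower-bounds the normalizer $C_t$ by $e^{-M}$ times the Gaussian normalizer, and uses this to get the pointwise bound $\mu^*_t \le e^{C_{osc}}\gamma_0$. It then integrates $|\theta|^2$ against $N(0,\frac{\beta}{\lambda}I_d)$, exactly as you do.
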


We establish the following counterpart to Theorem \ref{thm:PL-regret} concerning the dynamic regret evaluated with mean-field densities, explicitly excluding the entropy term. We recall that the expectation $\E[\cdot]$ is under $\p^W$. The proof of Lemma \ref{lem:U_regret} modifies the arguments used for Theorem \ref{thm:PL-regret}.
\begin{lemma}\label{lem:U_regret}
	Suppose $\alpha \beta^2 > 8 C_\sigma^2 C_{1}^2$. Then
	\begin{align}
		\mathbb E\Big[\int_0^T\big(U(\rho_t,Z_t)-U(\mu_t^*,Z_t)\big)\,dt\Big]
		&\le
		\frac{2C_1^2 + \lambda}{\alpha\beta} \cR_{D}(T)
		\\
		&\quad+
		\left(2C_1(C_\sigma+C_z)+\lambda\sqrt{Q_*(\beta, \lambda)}\right)\sqrt{\frac{2}{\alpha\beta}}\sqrt{T\cR_D(T)}. 	\nonumber
	\end{align}
\end{lemma}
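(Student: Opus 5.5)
Fix $t$ and write $U(\rho_t,Z_t)-U(\mu_t^*,Z_t)$ as the sum of the prediction part and the $L^2$-penalty part. With $a=\ang{\rho_t,\sigma(X_t,\cdot)}$ and $b=\ang{\mu_t^*,\sigma(X_t,\cdot)}$, the prediction part is $a^2-b^2-2Y_t(a-b)=(a-b)^2+2(b-Y_t)(a-b)$. The penalty part is $\frac{\lambda}{2}(\ang{\rho_t,|\theta|^2}-\ang{\mu_t^*,|\theta|^2})$. For any coupling $\gamma$ of $(\rho_t,\mu_t^*)$, $|\theta|^2-|\theta'|^2=|\theta-\theta'|^2+2\ang{\theta',\theta-\theta'}$, so taking the optimal coupling and applying Cauchy--Schwarz gives
\[
\frac{\lambda}{2}\big|\ang{\rho_t,|\theta|^2}-\ang{\mu_t^*,|\theta|^2}\big|\le \frac{\lambda}{2}\cW_2^2(\rho_t,\mu_t^*)+\lambda\sqrt{Q_*(\beta,\lambda)}\,\cW_2(\rho_t,\mu_t^*),
\]
using Lemma \ref{lem:mu*_2m} for the second moment of $\mu_t^*$. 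Similarly, since $|\nabla\sigma|\le C_1$, we have $|a-b|\le C_1\cW_1\le C_1\cW_2(\rho_t,\mu_t^*)$. Hence $(a-b)^2\le C_1^2\cW_2^2$, and $|2(b-Y_t)(a-b)|\le 2(C_\sigma+C_z)C_1\cW_2$. Altogether,
\[
U(\rho_t,Z_t)-U(\mu^*_t,Z_t)\le \Big(C_1^2+\frac{\lambda}{2}\Big)\cW_2^2(\rho_t,\mu_t^*)+\Big(2C_1(C_\sigma+C_z)+\lambda\sqrt{Q_*}\Big)\cW_2(\rho_t,\mu_t^*).
\]

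Next convert $\cW_2$ into the free-energy gap. Lemma \ref{lem:mu*LSI} gives that $\mu_t^*$ satisfies $LSI(\alpha)$, so Talagrand's inequality \citep{otto2000generalization} yields $\cW_2^2(\rho_t,\mu_t^*)\le \frac{2}{\alpha}D_{KL}(\rho_t\|\mu_t^*)$. By Lemma \ref{lem:F_decom}, $\beta D_{KL}(\rho_t\|\mu_t^*)\le F(\rho_t,Z_t)-F(\mu_t^*,Z_t)$. Therefore
\[
\cW_2^2(\rho_t,\mu_t^*)\le \frac{2}{\alpha\beta}\big(F(\rho_t,Z_t)-F(\mu_t^*,Z_t)\big).
\]
This requires $\rho_t$ to have finite entropy, which holds by Theorem \ref{thm:Lip-grad}. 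Substituting, the quadratic term becomes $\frac{2C_1^2+\lambda}{\alpha\beta}$ times the instantaneous gap. Integrating in $t$ and taking $\E$ produces the first term $\frac{2C_1^2+\lambda}{\alpha\beta}\cR_D(T)$.

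For the linear term, bound $\E\int_0^T\cW_2\,dt\le \sqrt{T}\,\big(\E\int_0^T\cW_2^2\,dt\big)^{1/2}$ by Cauchy--Schwarz on $[0,T]\times\Omega^W$. Then apply the previous display inside the integral, which gives $\sqrt{\frac{2}{\alpha\beta}}\sqrt{T\,\cR_D(T)}$ multiplied by $2C_1(C_\sigma+C_z)+\lambda\sqrt{Q_*}$. This is exactly the stated bound.

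The main point needing care is measurability and integrability, so that the expectation of the integrated $\cW_2$ quantities is justified. This follows from the uniform moment bounds (Proposition \ref{prop:MkV} and Lemma \ref{lem:mu*_2m}) together with the nonnegativity of the gap from Lemma \ref{lem:F_decom}. The hypothesis $\alpha\beta^2>8C_\sigma^2C_1^2$ is needed only so that $\cR_D(T)$ is finite and controlled via Theorem \ref{thm:PL-regret}; the inequality itself follows from LSI and Talagrand alone.
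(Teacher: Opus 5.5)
Your proposal is correct and follows essentially the same route as the paper. Both use the same three-term decomposition with the same $\cW_1\le\cW_2$ and second-moment bounds, the conversion $\cW_2^2\le\frac{2}{\alpha\beta}\big(F(\rho_t,Z_t)-F(\mu_t^*,Z_t)\big)$ via Talagrand and Lemma \ref{lem:F_decom}, and Cauchy--Schwarz for the linear term. The only difference is cosmetic: you apply Cauchy--Schwarz once on $[0,T]\times\Omega^W$, whereas the paper first uses $\E\sqrt{\Delta_F}\le\sqrt{E(t)}$ and then Cauchy--Schwarz in $t$.
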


The empirical measure $\hat{\rho}_t^N$ is a random quantity arising from both the data stream $Z$ and the independent Brownian motions $B$ driving the interacting particles.  In contrast, the mean-field density $\rho_t$ inherits randomness solely from the data stream. Lemma \ref{lem:UdiffmfN} quantifies the discrepancy between the functional $U$ evaluated at $\hat{\rho}_t^N$ and $\rho_t$, conditioned on a fixed realization of the data trajectory $Z$ driven by $W_t$. We recall that $\E_{B}[\cdot]$ denotes the expectation taken strictly under the particle noise measure $\p^B$. The proof of Lemma \ref{lem:UdiffmfN} relies on the propagation of chaos established in Lemma \ref{lem:PoC} for $k=2$ and heavily utilizes the underlying quadratic structure of the functional $U$.
\begin{lemma}\label{lem:UdiffmfN}
	Suppose $\eta \beta^2 > 8 C^2_\sigma C^2_1$. Given the data $Z$, we have 
	\begin{equation}
		\E_{B}\left[\int_0^T\big(U(\hat{\rho}_t^N, Z_t)-U(\rho_t,Z_t)\big)\,dt\right] \leq \frac{C_U(\beta, \lambda, \eta)}{N} T,
	\end{equation}
	where
	\begin{equation}\label{eq:CU}
		C_U(\beta, \lambda, \eta) := 2C_\sigma^2 + \sqrt{\frac{2 C_{poc}}{\eta}} \Big(2\sqrt2\,C_\sigma C_1 + 2C_z C_1 + \lambda\,\sqrt{M_2(\beta, \lambda)}\Big)
	\end{equation}
	and $M_2(\beta, \lambda)$ is defined in \eqref{eq:M2}.
\end{lemma}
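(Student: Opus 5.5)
We fix the data path $Z$ and work pointwise in $t$; integrating over $[0,T]$ then gives the factor $T$. Write $a_t := \ang{\rho_t,\sigma(X_t,\cdot)}$ and $\hat a_t := \ang{\hat\rho^N_t,\sigma(X_t,\cdot)} = \frac1N\sum_i \sigma(X_t,\theta^{i,N}_t)$. The quadratic structure of \eqref{eq:U-hatN} gives
\begin{equation*}
U(\hat\rho^N_t,Z_t) - U(\rho_t,Z_t) = \hat a_t^2 - a_t^2 - 2Y_t(\hat a_t - a_t) + \frac{\lambda}{2}\Big(\frac1N\sum_i |\theta^{i,N}_t|^2 - \int|\theta|^2\rho_t(d\theta)\Big).
\end{equation*}
We take $\E_B$ of each term separately. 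By exchangeability, $\E_B[\hat a_t] = \ang{P^1_t,\sigma(X_t,\cdot)}$ and $\E_B\frac1N\sum_i|\theta^{i,N}_t|^2 = \int|\theta|^2 P^1_t(d\theta)$. Also
\begin{equation*}
\E_B[\hat a_t^2] = \frac{1}{N}\E_B[\sigma(X_t,\theta^{1,N}_t)^2] + \frac{N-1}{N}\E_B[\sigma(X_t,\theta^{1,N}_t)\sigma(X_t,\theta^{2,N}_t)].
\end{equation*}
The diagonal term is bounded by $C_\sigma^2/N$, which is the source of the $2C_\sigma^2$ part of $C_U$. The off-diagonal term equals $\int \sigma\otimes\sigma \, dP^2_t$, which we compare with $a_t^2 = \int \sigma\otimes\sigma\, d\rho_t^{\otimes 2}$.

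To control the differences between $P^k_t$ and $\rho_t^{\otimes k}$, we would use a transport distance rather than total variation, since this is what produces the $\sqrt{2C_{poc}/\eta}$ factor. Lemma \ref{lem:rhotLSI} states that $\rho_t$ satisfies $LSI(\eta)$, and tensorization then gives $LSI(\eta)$ for $\rho_t^{\otimes k}$. By Otto--Villani, Talagrand's inequality follows:
\begin{equation*}
\cW_2(P^k_t,\rho_t^{\otimes k}) \le \sqrt{\tfrac{2}{\eta}D_{KL}(P^k_t\|\rho_t^{\otimes k})} \le \sqrt{\tfrac{2C_{poc}}{\eta}}\,\tfrac{k}{N}
\end{equation*}
by Lemma \ref{lem:PoC}. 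The test functions are then handled as follows.
\begin{itemize}
\item For $k=1$, $\sigma(X_t,\cdot)$ is $C_1$-Lipschitz, so $|\E_B\hat a_t - a_t| \le C_1\cW_1 \le C_1\cW_2$. The linear term therefore contributes $2C_zC_1\sqrt{2C_{poc}/\eta}/N$.
\item For $k=2$, $(\theta,\theta')\mapsto\sigma(X_t,\theta)\sigma(X_t,\theta')$ is $\sqrt2\,C_\sigma C_1$-Lipschitz on $\R^{2d}$. This gives the bound $\sqrt2 C_\sigma C_1\cdot 2\sqrt{2C_{poc}/\eta}/N$, which matches the $2\sqrt2\,C_\sigma C_1$ coefficient. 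The leftover $-\frac1N\int\sigma\otimes\sigma\,dP^2_t$ is absorbed into the $C_\sigma^2/N$ budget.
\item For the quadratic moment, take an optimal coupling $\gamma$ of $(P^1_t,\rho_t)$ and use $\big||\theta|^2-|\theta'|^2\big| \le |\theta-\theta'|(|\theta|+|\theta'|)$. Cauchy--Schwarz then gives a bound of $\cW_2\cdot(\sqrt{M_2}+\sqrt{M_2}) = 2\sqrt{M_2}\,\cW_2$, using \eqref{eq:M2}. Multiplied by $\lambda/2$, this yields $\lambda\sqrt{M_2}\sqrt{2C_{poc}/\eta}/N$.
\end{itemize}
Adding the three contributions gives exactly $C_U/N$, uniformly in $t$, and integration in time finishes the proof.

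The main obstacle is the interchange of $\E_B$ with the time integral and the measurability of all quantities given the frozen path $Z$. This should be routine via Fubini and the uniform moment bounds in Proposition \ref{prop:particle}. A second point to check is that Lemma \ref{lem:PoC} and the uniform LSI apply conditionally on the data path. They do, because both were derived for a fixed continuous path $Z$ with constants independent of that path.
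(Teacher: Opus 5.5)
Your proposal is correct and follows essentially the same route as the paper: Talagrand's inequality combined with Lemma \ref{lem:PoC} for $k=1,2$, exchangeability to split $\E_B[\ang{\hat\rho^N_t,\sigma(X_t,\cdot)}^2]$, Lipschitz duality for the $\sigma$ terms, and an optimal coupling with Cauchy--Schwarz for the second moment, giving exactly $C_U/N$. The only difference is cosmetic. You group the $1/N$ corrections as the diagonal term plus the leftover $-\frac1N\int\sigma\otimes\sigma\,dP^2_t$, whereas the paper bounds $\frac1N\big(\int\sigma^2\,dP^1_t-\int\sigma\otimes\sigma\,d\rho_t^{\otimes2}\big)$ and uses $1-\frac1N\le 1$; both yield the same constant.
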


Because the bound derived in Lemma \ref{lem:UdiffmfN} does not depend on the data $Z$, we can directly combine it with Lemma \ref{lem:U_regret} to obtain the empirical dynamic regret without entropy.
\begin{theorem}\label{thm:empircal_regret}
	Suppose $\alpha \beta^2 > 8 C_\sigma^2 C_{1}^2$ and $\eta \beta^2 > 8 C^2_\sigma C^2_1$. Then the empirical dynamic regret without entropy satisfies 
	\begin{equation}\label{eq:empirical_regret}
		\begin{aligned}
			\cR_{\mathrm{em}}(T; N) \leq & \frac{2C_1^2 + \lambda}{\alpha\beta} \cR_{D}(T)  +
			\left(2C_1(C_\sigma+C_z)+\lambda\sqrt{Q_*(\beta, \lambda)}\right)\sqrt{\frac{2}{\alpha\beta}}\sqrt{T\cR_D(T)} \\
			& + \frac{C_U(\beta, \lambda, \eta)}{N} T,
		\end{aligned}
	\end{equation}
	where $Q_*(\beta, \lambda)$ and $C_U(\beta, \lambda, \eta)$ are defined in \eqref{eq:Q*} and \eqref{eq:CU}, respectively.
\end{theorem}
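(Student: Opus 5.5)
The bound follows by telescoping through the mean-field density $\rho_t$. For every data path and every $t$, we write
\begin{equation*}
U(\hat\rho^N_t, Z_t) - U(\mu^*_t, Z_t) = \big(U(\hat\rho^N_t, Z_t) - U(\rho_t, Z_t)\big) + \big(U(\rho_t, Z_t) - U(\mu^*_t, Z_t)\big).
\end{equation*}
The second bracket is $\cF^Z$-measurable. It depends only on the data noise $W$, because $\rho_t$ and $\mu^*_t$ are both adapted to the data filtration. Hence $\E_B$ acts trivially on it.

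After integrating over $[0,T]$ and taking $\E_B$, the first bracket is bounded pathwise, for a fixed realization of $Z$, by Lemma \ref{lem:UdiffmfN}. That lemma requires $\eta\beta^2 > 8C_\sigma^2 C_1^2$ and gives the bound $C_U(\beta,\lambda,\eta)T/N$.

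The step that needs justification is the exchange of integrals and expectations. By Fubini on the product space $\Omega^W\times\Omega^B$,
\begin{equation*}
\E\Big[\E_B\Big[\int_0^T \big(U(\hat\rho^N_t,Z_t)-U(\rho_t,Z_t)\big)dt\Big]\Big]
\end{equation*}
is well defined and is at most $C_U T/N$. The constant $C_U$ is deterministic: it is independent of $Z$, since it involves only $C_\sigma, C_1, C_z, \lambda$, $M_2$, $C_{poc}$ and $\eta$. Integrability holds for three reasons:
\begin{itemize}
\item $|\sigma|\le C_\sigma$;
\item $|Y_t|\le C_z$;
\item the second moments $\E_B[\langle\hat\rho^N_t,|\theta|^2\rangle]$ and $\langle\rho_t,|\theta|^2\rangle$ are bounded uniformly in time by Propositions \ref{prop:MkV} and \ref{prop:particle}, and these bounds hold uniformly over data paths because $Y$ is bounded and $\sigma$, $\nabla\sigma$ are bounded.
\end{itemize}
Thus every term of $U$ is integrable, and Fubini applies.

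Next, taking $\E$ of the second bracket and invoking Lemma \ref{lem:U_regret} under $\alpha\beta^2 > 8C_\sigma^2C_1^2$ gives the first two terms of \eqref{eq:empirical_regret}, the ones involving $\cR_D(T)$ and $\sqrt{T\cR_D(T)}$.

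Summing the two estimates yields \eqref{eq:empirical_regret}. The only delicate point is that the bound of Lemma \ref{lem:UdiffmfN} must be uniform in $Z$ so that it survives the outer expectation. This is checked by observing that $C_{poc}$ (Lemma \ref{lem:PoC}), $\eta$ (Lemma \ref{lem:rhotLSI}) and $M_2$ depend only on the structural constants and not on the path.
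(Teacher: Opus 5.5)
Your proposal is correct and follows essentially the same route as the paper. Both split $U(\hat\rho^N_t,Z_t)-U(\mu^*_t,Z_t)$ through the mean-field density $\rho_t$. The particle part is bounded by Lemma \ref{lem:UdiffmfN}, whose constant $C_U(\beta,\lambda,\eta)$ does not depend on the data path and so survives the outer expectation $\E$, and the mean-field part by Lemma \ref{lem:U_regret}. Your Fubini and integrability justification is a detail the paper leaves implicit; it does not change the argument.
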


We now compare the bounds established in Theorems \ref{thm:margin-regret} and \ref{thm:empircal_regret}. If we take the mean-field limit $N \to \infty$ first, Theorem \ref{thm:margin-regret} leaves only the term $\cR_D(T)$, which remains finite when $\beta \to \infty$. However, the corresponding bound in Theorem \ref{thm:empircal_regret} explodes in this regime. Consequently, to achieve a meaningful bound, the $L^2$ penalty parameter $\lambda$ must be chosen to be sufficiently large in Theorem \ref{thm:empircal_regret} to compensate for the explosion caused by $\beta$.

As an additional remark, even if we assume that the penalty $\lambda$ is sufficiently large, it remains non-trivial to utilize the displacement convexity method to directly investigate the empirical dynamic regret. A primary difficulty arises because the benchmark measure $\mu^*_t$ is time-varying. Consequently, the proof techniques supporting Theorem \ref{thm:dc-regret} no longer apply, as they inherently rely on a static target measure. We leave this problem for future research.

\section{Simulation studies}\label{sec:empirical}

This section presents a simulation study to complement the theoretical results. The empirical analysis consists of two main parts. First, to motivate the online learning approach, we investigate the out-of-sample accuracy of both offline and online methods when the underlying data-generating process exhibits periodic trends or nonlinear time-varying dynamics. Second, we estimate the empirical regrets and evaluate the impact of the time horizon $T$, the number of particles $N$ (representing the network width), and the regularization parameters $\lambda$ and $\beta$. We do not strictly constrain $\lambda$ and $\beta$ to be sufficiently large to satisfy the theoretical conditions $\alpha \beta^2 > 8 C_\sigma^2 C_{1}^2$ and $\eta \beta^2 > 8 C^2_\sigma C^2_1$. Consequently, the simulation study operates primarily within the highly non-convex regime. We present several empirical phenomena that align with our regret bounds, alongside other observations that highlight the need for tighter theoretical results in future work.

\subsection{Experimental setting}
\label{subsec:sim_models}
We introduce two models to generate data for the estimation problem.

{\bf Periodic regression model}. In practical scenarios, such as airline passenger demand forecasting, data often exhibit highly seasonal or periodic patterns. In the first model, the one-dimensional covariate follows an Ornstein-Uhlenbeck (OU) process, given by 
\begin{equation*}
    dX_t = - 0.5 X_t\,dt + dW_t^X, 
\end{equation*}
and the response variable is generated according to
\begin{equation*}
    Y_t = \sin(h t) X_t + \xi_t, \qquad h = 0.3. 
\end{equation*}
Here, the noise $\xi_t$ is generated by an auxiliary OU process with an independent Brownian motion:
\begin{equation*}
    d \xi_t = -1.5 \xi_t\,dt+ 0.25 \,dW_t^\xi. 
\end{equation*}
To rigorously satisfy Assumption \ref{A:data}, the response $Y_t$ must be bounded. One can ensure this condition by applying a smooth truncation function, such as $C_z\tanh(Y_t/C_z)$. We omit this truncation in the present simulations for numerical simplicity.

{\bf Time-varying nonlinear model}. In the second setting, we assume that the response $Y_t$ depends on the covariate $X_t$ through a highly nonlinear relationship. The ground-truth data-generating process is defined as
\begin{equation*}
    \begin{aligned}
       dX_t & = - 0.7 X_t\,dt + 0.7 dW_t^X, \\ 
       Y_t &= f_t(X_t) + \xi_t,
    \end{aligned}
\end{equation*}
where $W_t^X$ is a three-dimensional Brownian motion. The noise $\xi_t$ is generated by
\begin{equation*}
    d \xi_t = -5 \xi_t\,dt+ 0.2 \,dW_t^\xi. 
\end{equation*} 
The nonlinear function $f_t(x)$ takes the form of a neural network:
\begin{equation*}
f_t(x) = \frac{2.5}{M} \sum_{m=1}^M \sigma(x, \varphi_t^m),
\qquad M=100,
\end{equation*}
and each component represents a network neuron with a hyperbolic tangent activation function $\phi = \tanh$:
\begin{equation}\label{eq:neuron}
    \sigma(x,\theta)= a\,\phi(w^\top x+b), \qquad   \theta=(a,w,b).
\end{equation}
The true parameter vector $\varphi_t^m$, $m = 1, \ldots, M$, in each neuron $\sigma(x, \varphi_t^m)$ follows an OU process given by 
\begin{equation*}
	d\varphi_t^m = - 0.6 \bigl(\varphi_t^m - \bar{\varphi}^m \bigr) dt + 0.9 dW_t^m.
\end{equation*}
For each instance, we sample entries of the initial values $\varphi_0^m$ and the mean level $\bar{\varphi}^m$ independently from $\mathcal{N}(0,0.8^2)$.

{\bf Neural network architecture}. For both the online and offline optimization tasks, we employ a two-layer neural network model:
\begin{equation*}
x \mapsto \frac{1}{N} \sum^N_{i = 1}\sigma(x, \theta^{i,N}). 
\end{equation*}
We utilize the exact neuron structure $\sigma$ defined in \eqref{eq:neuron}, but with trainable parameters $\theta$.  The default network width is set to $N = 80$. This architecture can be equivalently represented using the empirical measure $\hat\rho_t^N$. The previously discussed smooth truncation technique can be applied to satisfy the standing Assumption \ref{A:sigma}, although it is omitted here for numerical simplicity.

The online learner is trained using the Euler-Maruyama discretization of the interacting particle system \eqref{eq:particleSDE}. To maintain consistency with standard machine learning practices, we include the $i$-th particle's self-interaction term in the empirical sum: 
\begin{equation*}
	\theta_{t_{k+1}}^{i,N}
	= \theta_{t_k}^{i,N}+
	\left[
	-\lambda \theta_{t_k}^{i,N}
	-2\left( \frac{1}{N} \sum^N_{j = 1} \sigma(X_{t_k},\theta^{j, N}_{t_k}) - Y_{t_k}\right)\nabla \sigma(X_{t_k},\theta_{t_k}^{i,N})
	\right]\Delta t
	+\sqrt{2\beta \Delta t}\,\xi_k^{i,N},
\end{equation*}
where the independent noise increments satisfy $\xi_k^{i,N} \sim \mathcal{N}(0,1)$ for $i=1,\dots,N$ and $k = 1,\dots, 1000$. In contrast, the offline learner fits the neural network by minimizing the aggregated loss over the entire training trajectory on the interval $[0,T]$.

\subsection{Out-of-sample performance}
We evaluate the out-of-sample performance across $30$ independent trials. For both online and offline methods, we use $\lambda = 0.1$ and $\beta = 0.02$ in this subsection. For each trial, we generate a training dataset and a test dataset over a time horizon $T=20$ with a step size $\Delta t=0.02$. Let $K=T/\Delta t = 1000$ and $t_k = k\Delta t$ for $k = 1,\dots, K$. In the periodic regression model, the training and test datasets are generated from independent covariate paths sharing the identical deterministic time-varying regression coefficient $\sin(h t)$. In the dynamic neural response model, we first simulate a single trajectory of $\{\varphi_t^m\}_{t\in[0,T],\,m=1,\dots,M}$ representing the ground-truth parameters. Subsequently, we generate two distinct observation streams from this shared parameter path: a training stream and a test stream, each possessing independent input trajectories and observation noise. Consequently, both the online and offline learners are evaluated within the exact same drifting environment, while their generalization capacity is strictly measured on an independent out-of-sample stream.

\begin{table}[ht]
	\centering
	\setlength{\tabcolsep}{4pt}

	\begin{tabular}{llrrrr}
	\toprule
	\multicolumn{6}{c}{\textbf{Panel A: Summary statistics}}\\
	\midrule
	Scenario & Method & Mean & SD & 95\% CI lower & 95\% CI upper \\
	\midrule
	periodic  & offline & 0.53384 & 0.25795 & 0.43752 & 0.63016 \\
	periodic  & online  & 0.24949 & 0.10533 & 0.21016 & 0.28882 \\
	nonlinear & offline & 0.04063 & 0.01145 & 0.03636 & 0.04491 \\
	nonlinear & online  & 0.03447 & 0.00894 & 0.03113 & 0.03781 \\
	\bottomrule
\end{tabular}

	\vspace{0.8em}

	\begin{tabular}{lrrr}
		\toprule
		\multicolumn{4}{c}{\textbf{Panel B: Paired comparison}}\\
		\midrule
		Scenario & Mean difference  & $t$-test $p$-value &  Wilcoxon $p$-value \\
		\midrule
		periodic  & -0.28435  & $1.93\times 10^{-9}$ &  $9.31\times 10^{-9}$ \\
		nonlinear & -0.00617  & 0.00024        &  $3.90\times 10^{-5}$ \\
		\bottomrule
	\end{tabular}

	\caption{Out-of-sample MSE comparison between offline and online training under periodic and nonlinear dynamics. In Panel B, the mean difference is computed as online minus offline, so negative values indicate that the online learner achieves a lower average MSE.}
	\label{tab:offline_online_comparison}
\end{table}

Table \ref{tab:offline_online_comparison} reports the sample means, standard deviations (SDs), $95\%$ confidence intervals (CIs), and paired hypothesis tests for the average mean squared error (MSE) evaluated on the test set:
\begin{equation}\label{eq:MSE}
\frac{1}{K} \sum^K_{k = 1}
\left(
\int \sigma(X_{t_k}^{\mathrm{test}},\theta)\,\hat\rho_{t_k}^N(d\theta)-Y_{t_k}^{\mathrm{test}}
\right)^2.
\end{equation}

The results in Table \ref{tab:offline_online_comparison} demonstrate that the online learning approach outperforms the offline method in both environments. This performance gap, in terms of the out-of-sample MSE, highlights a fundamental limitation of static optimization when applied to non-stationary data streams.

In the periodic regression scenario, the true relationship between $X_t$ and $Y_t$ continuously oscillates. Any static predictor fitted over the entire interval $[0,T]$ is structurally misspecified because it attempts to average out the periodic dynamics. The second moment of the response $Y_t$ is approximately 0.52481. But the offline learner achieves a test error of 0.53384, indicating that the global static fit totally missed the periodic pattern. Conversely, the online learner updates sequentially and successfully adapts to the local regression landscape, dropping the test error to 0.24949. When the data-generating process undergoes significant shifts, the continuous tracking capability of the online learner yields a substantial predictive advantage.

A similar phenomenon occurs in the time-varying nonlinear model. In this scenario, the second moment of the response $Y_t$ is approximately 0.04713. The offline learner yields a test error of 0.04063, whereas the online method achieves a statistically significant reduction, reaching a test error of 0.03447. The absolute magnitude of the test error is smaller here than in the periodic case for two primary reasons. First, the intrinsic magnitude of the response $Y_t$ is smaller. Second, employing the identical neuron architecture $\sigma$ for both the learner and the ground-truth data-generating process reduces structural misspecification errors. Nevertheless, the paired statistical tests in Panel B confirm the superior tracking capability of the online learner.

\subsection{Empirical regrets and parameter effects}
In this subsection, we perform regret analysis on the time-varying nonlinear model only. To evaluate the empirical dynamic regret without entropy, we approximate the continuous-time integral over the interval $[0, T]$ using a discrete grid. Specifically, although the training data trajectory consists of $K=T/\Delta t = 1000$ discrete time steps, we evaluate the integrand at a coarser uniform subgrid with a spacing of $100$ steps to reduce computational overhead.

A computational challenge is that the instantaneous optimal measure $\mu_t^*$ lacks an explicit closed-form expression. We approximate it at each evaluation time $t_{k_l}$ using importance sampling and root finding. With the data observation $(x, y) := (X_{t_{k_l}}, Y_{t_{k_l}})$, let $m$ approximate the integral $\int \sigma(x, \vartheta) \mu^*_{t_{k_l}}(d\vartheta)$. By the self-consistent equation \eqref{eq:mu*}, the exact value $m^*$ is the unique fixed point $m^* = \Phi(m^*)$, where
\begin{equation*}
   \Phi(m) := \frac{\int \sigma(x, \theta) \exp \left( - \frac{\lambda}{2 \beta} |\theta|^2 -\frac{2}{\beta} \sigma(x, \theta) (m - y) \right) d\theta}{\int \exp \left( - \frac{\lambda}{2 \beta} |\theta|^2 -\frac{2}{\beta} \sigma(x, \theta) (m - y) \right) d\theta}.
\end{equation*}
To approximate $\Phi(m)$, we draw $N_{\mathrm{IS}} = 20000$ independent samples $\theta^{(1)},\dots,\theta^{(N_{\mathrm{IS}})}$ from the prior distribution $\mathcal{N}(0, \frac{\beta}{\lambda}I_d)$. For any trial value $m$, we compute the normalized importance weights:
\begin{equation*}
    w_i(m) = \frac{\exp\!\left(-\frac{2}{\beta}(m-y)\sigma(x,\theta^{(i)})\right)}{\sum_{j=1}^{N_{\mathrm{IS}}} \exp\!\left(-\frac{2}{\beta}(m-y)\sigma(x,\theta^{(j)})\right)}, \qquad i=1,\dots,N_{\mathrm{IS}}.
\end{equation*}
The map $\Phi(m)$ is then approximated by the weighted sum:
\begin{equation*}
    \widehat{\Phi}(m) := \sum_{i=1}^{N_{\mathrm{IS}}} w_i(m)\,\sigma(x,\theta^{(i)}).
\end{equation*}
We find the approximate fixed point $\hat{m}$ by solving the root-finding problem $\widehat{\Phi}(m) - m = 0$. This yields the empirical reference measure:
\begin{equation*}
    \hat{\mu}_{t_{k_l}} := \sum_{i=1}^{N_{\mathrm{IS}}} w_i(\hat{m}) \delta_{\theta^{(i)}}.
\end{equation*}
At each evaluation time $t_{k_l}$, the {\it approximate instantaneous regret} is
\begin{equation*}
    R_{t_{k_l}} = U(\hat{\rho}^N_{t_{k_l}}, Z_{t_{k_l}})-U(\hat{\mu}_{t_{k_l}}, Z_{t_{k_l}}).
\end{equation*}
We obtain the {\it approximate cumulative regret} up to time $t_{k_m}$ by aggregating these instantaneous regrets using the trapezoidal rule:
\begin{equation*}
 \widehat{\cR}_{t_{k_m}} = \sum_{\ell=1}^{m-1} \frac{R_{t_{k_\ell}}+R_{t_{k_{\ell+1}}}}{2}\, \bigl(t_{k_{\ell+1}}-t_{k_\ell}\bigr).
\end{equation*}
Two primary sources of numerical error affect $\widehat{\cR}_t$. First, because computing the reference measure $\hat{\mu}_{t}$ is expensive, we evaluate the integrand on a sparse temporal subgrid instead of every simulated time step. Second, the finite importance sampling size $N_{\mathrm{IS}}$ leads to approximation errors relative to the true continuous limit.

We investigate the impact of the network width $N$, the entropy parameter $\beta$, and the $L^2$ regularization penalty $\lambda$ on the learning dynamics. Figure \ref{fig:regret-four-panel} illustrates the approximate instantaneous regret and the approximate cumulative regret evaluated using the functional $U$ in \eqref{eq:U-def}, which includes the $L^2$ penalty. To isolate the prediction accuracy from the regularization penalty, Figure \ref{fig:nonreg-regret-four-panel} plots the unregularized regret, which excludes the $L^2$ penalty in $U$.

\begin{figure}[H]
\centering
\includegraphics[width=0.9\textwidth]{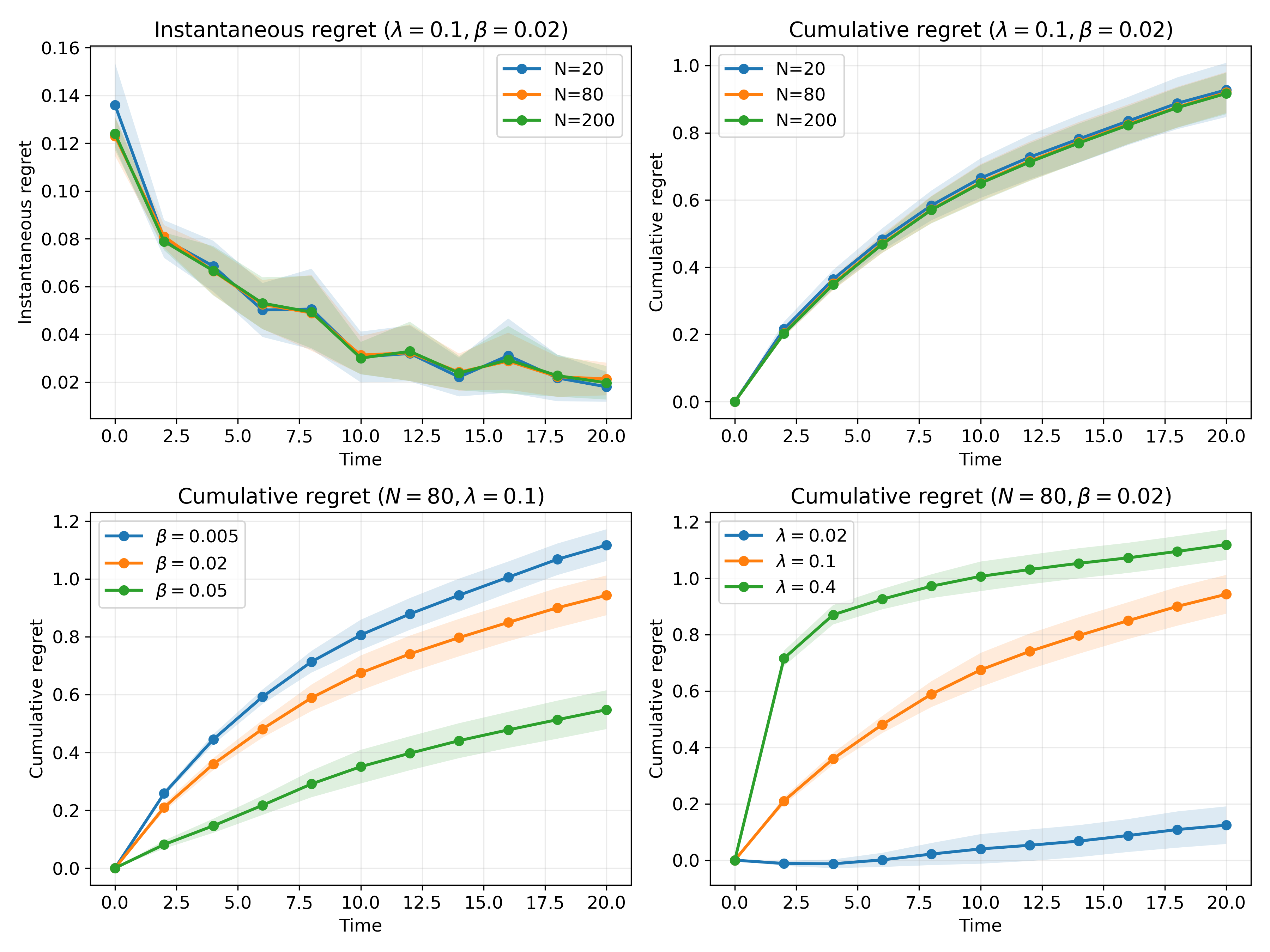}
\caption{Regularized regret dynamics. Top panels: approximate instantaneous (left) and cumulative (right) regret across varying network widths $N$. Bottom panels: approximate cumulative regret across varying exploration noise $\beta$ (left) and regularization penalty $\lambda$ (right). Unmentioned parameters are held constant.}\label{fig:regret-four-panel}
\end{figure}

\begin{figure}[H]
\centering
\includegraphics[width=0.9\textwidth]{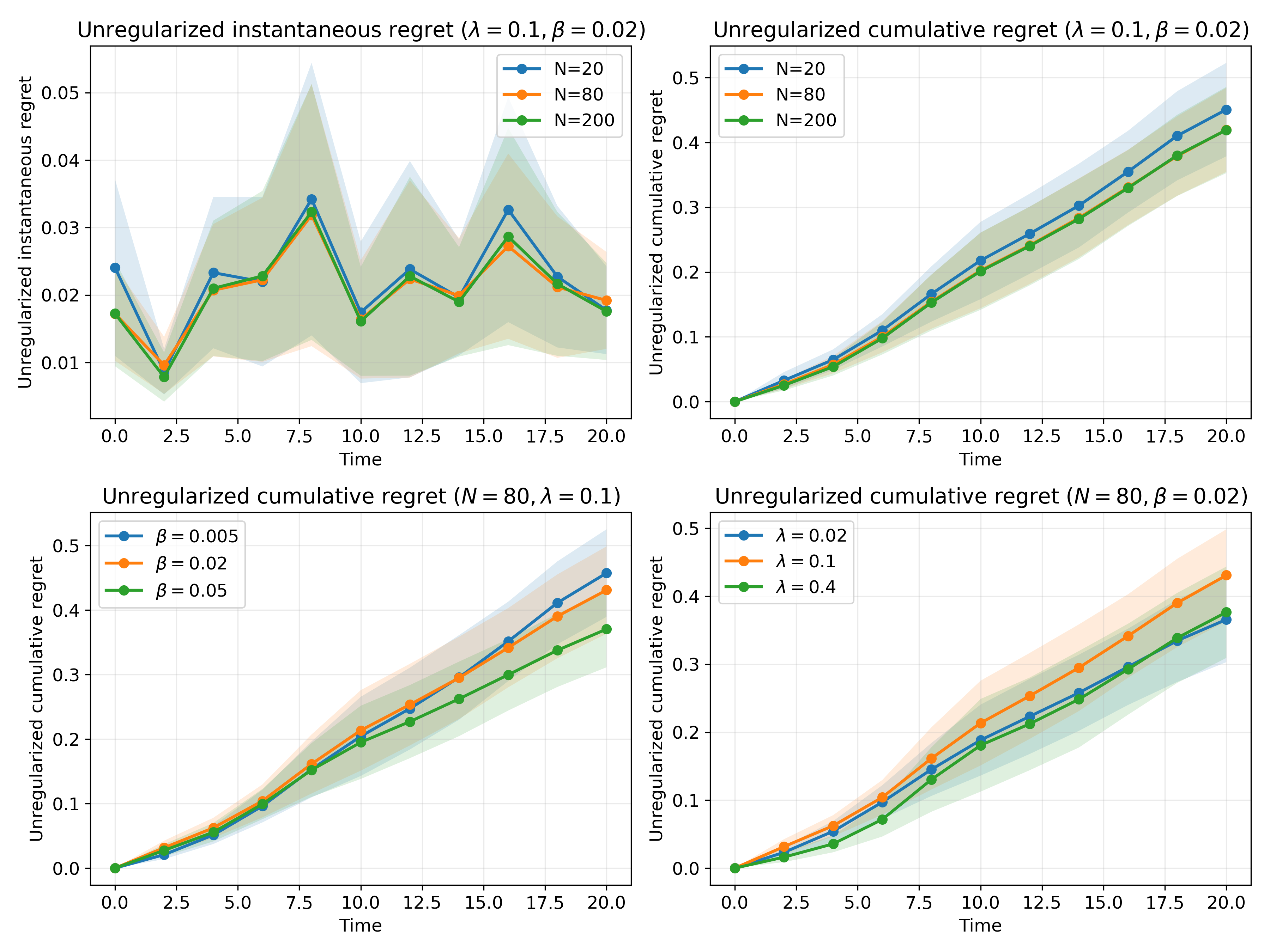}
\caption{Unregularized regret dynamics. The panel layout and parameter variations are identical to Figure \ref{fig:regret-four-panel}.}\label{fig:nonreg-regret-four-panel}
\end{figure}

{\bf Impact of network width ($N$)}. The top-left panels of Figures \ref{fig:regret-four-panel} and \ref{fig:nonreg-regret-four-panel} depict instantaneous regret trajectories. The regularized instantaneous regret initially decreases then stabilizes, while unregularized instantaneous regret remains relatively flat. Crucially, due to variations in the data process, both curves eventually stabilize at non-zero positive levels. This causes the cumulative regrets in the top-right panels to grow linearly, as predicted by Theorem \ref{thm:empircal_regret}. Increasing network width slightly reduces unregularized regret. The minimal gap between $N=80$ and $N=200$ shows diminishing returns. This implies that for sufficiently wide networks, the mean-field tracking error dominates the finite-particle approximation error. Furthermore, the top panels of Figure \ref{fig:regret-four-panel} indicate that regularized regret is insensitive to variations in $N$.

{\bf Impact of exploration noise ($\beta$)}. The bottom-left panels show that increasing the entropy parameter reduces cumulative regret in both settings, indicating that exploration benefits dominate in the low-noise regime. Although the bound in \eqref{eq:empirical_regret} may not be applicable here, it shows a trade-off when choosing $\beta$. Second-moment terms like $M_2(\beta, \lambda)$ and $Q_*(\beta, \lambda)$ grow with the injected noise, reflecting increased variance. In contrast, the coefficients $1/(\alpha\beta)$ and $C_{PL}$ shrink, capturing the advantage of exploration. A sufficient thermal noise helps particles escape local traps, but the theoretical bound also diverges at high temperatures due to variance. These findings demonstrate that an intermediate noise level may optimally balance this trade-off.

{\bf Impact of the $L^2$ penalty ($\lambda$)}. The bottom-right panels reveal distinct trends between the regularized and unregularized regrets. Figure \ref{fig:nonreg-regret-four-panel} demonstrates that $\lambda$ has a non-monotone effect on the unregularized regret. In contrast, Figure \ref{fig:regret-four-panel} shows that a larger $\lambda$ increases the regularized cumulative regret. This increase is primarily driven by the $L^2$ penalty term. When comparing the cases of $\lambda=0.1$ and $\lambda=0.4$, the parameter $\lambda$ increases by a factor of four, yet the resulting regret increases by a factor of approximately $1.2$. It implies that the second moment of the particles is decreasing, as the stronger regularization shrinks the parameters closer to the origin. However, $\lambda$ increases at a faster rate than the second moment decreases within this local regime.

In contrast, the theoretical bound in \eqref{eq:empirical_regret} decreases as $\lambda$ grows, provided both $\beta$ and $\lambda$ are sufficiently large. A primary reason is that while confinement restricts network expressivity, the boundedness of $\sigma$ diminishes the impact of the MSE. This highlights a gap between theoretical guarantees and empirical practice that remains challenging to resolve in future work.

\begin{figure}[ht]
\centering
\includegraphics[width=0.9\textwidth]{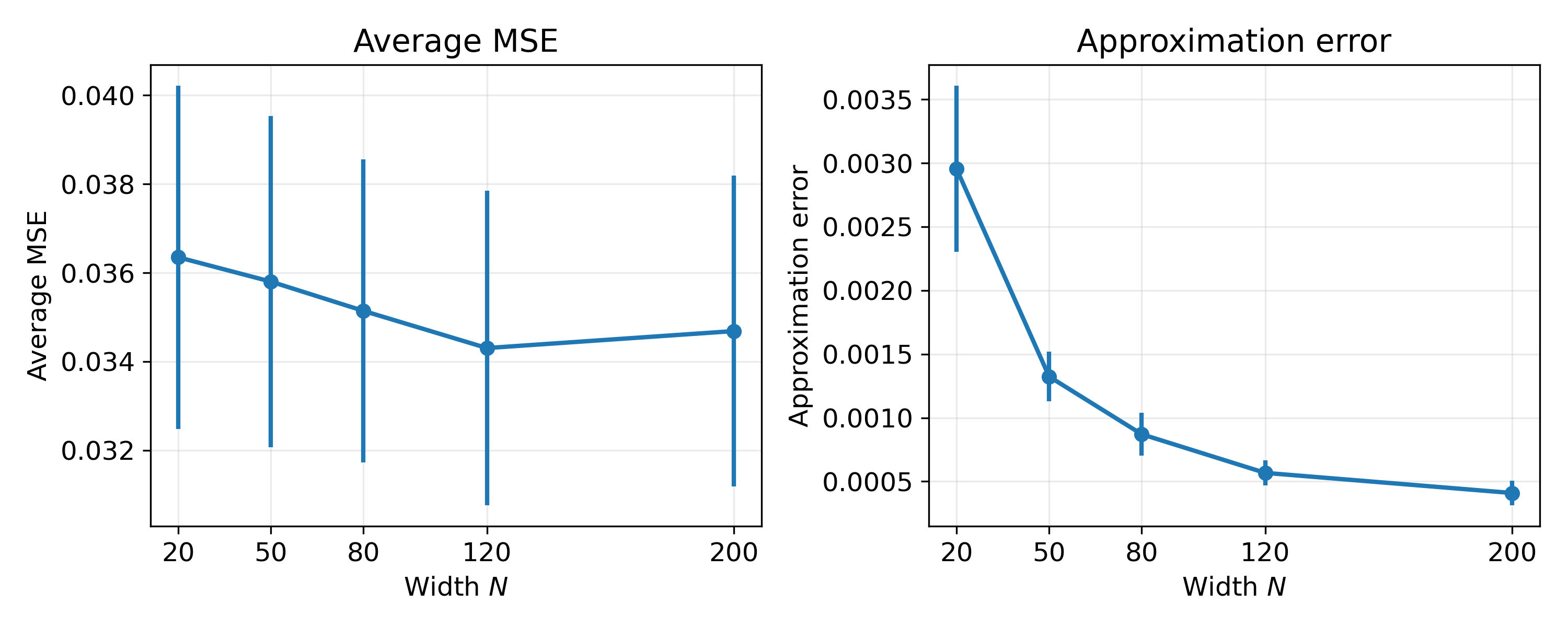}
\caption{Impact of network width $N$ at fixed $(\lambda,\beta)=(0.1, 0.02)$. Average out-of-sample MSE (left) and approximation error relative to the large-width reference (right).}
\label{fig:width-panel}
\end{figure}

\begin{figure}[ht]
\centering
\includegraphics[width=0.9\textwidth]{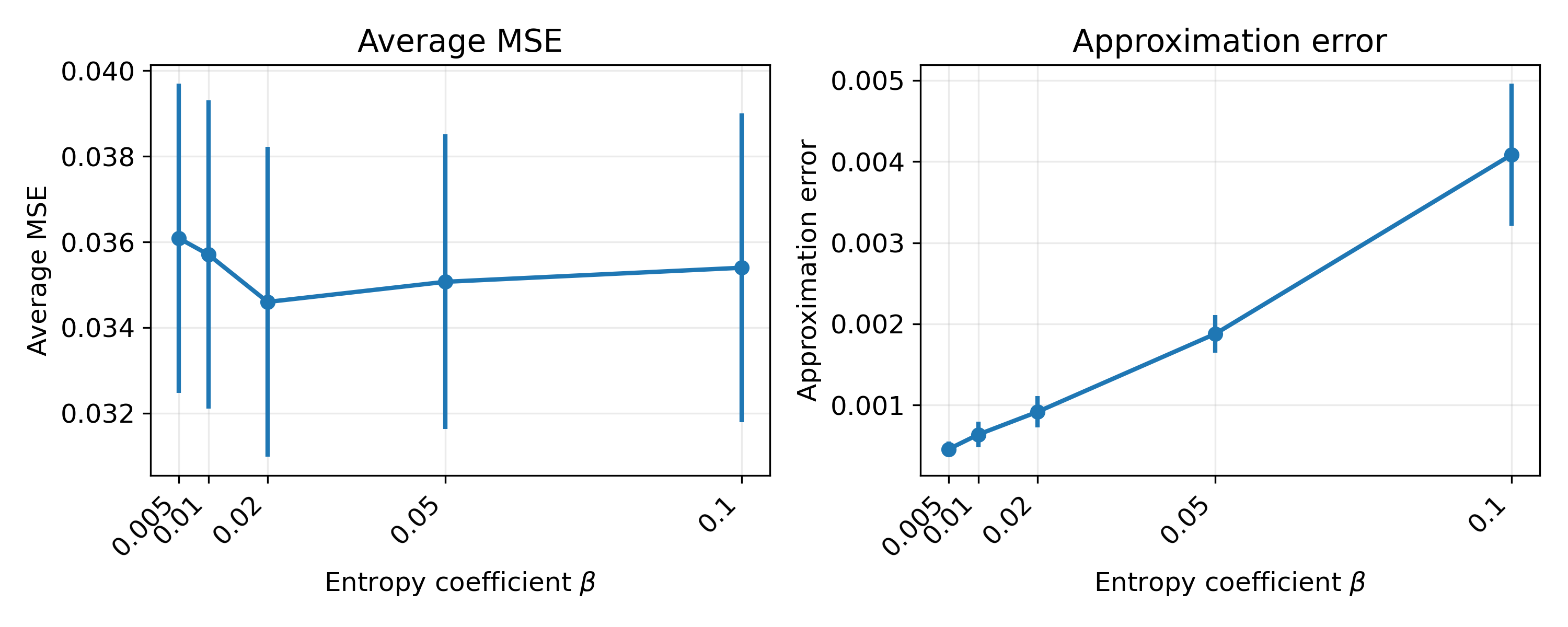}
\caption{Impact of entropy coefficient $\beta$ at fixed $(N, \lambda) = (80, 0.1)$. Average out-of-sample MSE (left) and approximation error relative to the large-width reference (right).}
\label{fig:beta-panel}
\end{figure}

\begin{figure}[ht]
\centering
\includegraphics[width=0.9\textwidth]{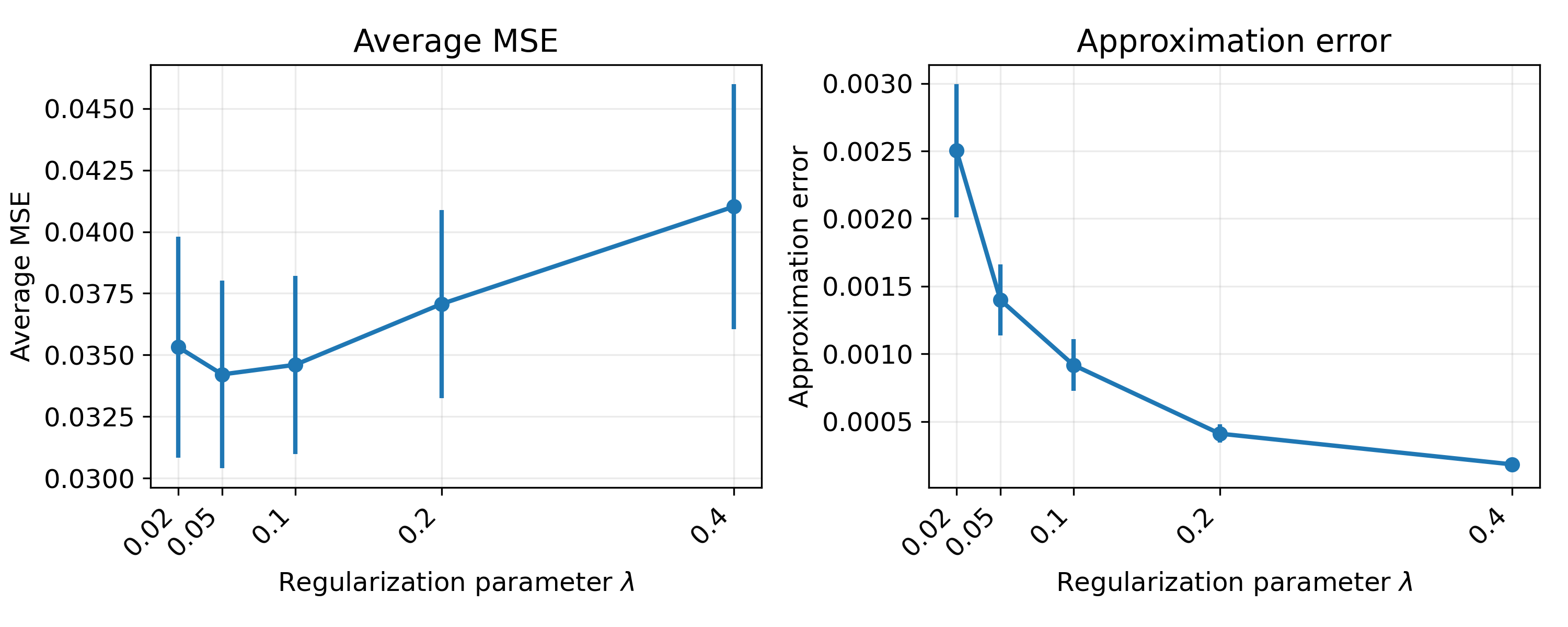}
\caption{Impact of $L^2$ regularization parameter $\lambda$ at fixed $(N, \beta) = ( 80, 0.02)$. Average out-of-sample MSE (left) and approximation error relative to the large-width reference (right).}
\label{fig:lambda-panel}
\end{figure}

Figures \ref{fig:width-panel}, \ref{fig:beta-panel}, and \ref{fig:lambda-panel} evaluate out-of-sample performance across various parameters. All panels display sample means and 95\% confidence intervals computed across repeated trials. The left panels report the average out-of-sample MSE defined in \eqref{eq:MSE}. The right panels present the out-of-sample approximation error relative to the estimated instantaneous optimal measure $\hat{\mu}_{t_k}$, given by
\begin{equation*}
\frac{1}{K}\sum_{k=1}^{K} \Bigl( \int \sigma(X_{t_k}^{\mathrm{test}},\theta)\,\hat\rho_{t_k}^N(d\theta) - \int \sigma(X_{t_k}^{\mathrm{test}},\theta)\,\hat{\mu}_{t_k}(d\theta) \Bigr)^2,
\end{equation*}
where $\hat{\mu}_{t_k}$ can also be obtained by training a large network on data fixed at time $t_k$. A lower approximation error does not necessarily imply better model performance, since the reference measure itself may yield predictions that differ significantly from the observed data in out-of-sample tests. Instead, it quantifies the discrepancy between the two predictions. In contrast, a lower out-of-sample MSE directly indicates better predictive performance.

Figure \ref{fig:width-panel} indicates that both the average out-of-sample MSE and approximation error decrease with $N$. However, the performance improvement is minimal once the network reaches a moderate width. In Figure \ref{fig:beta-panel}, the left panel reveals a U-shaped pattern where the average MSE reaches a minimum near $\beta = 0.02$, confirming that the trade-off in choosing $\beta$ persists out-of-sample. The right panel demonstrates that the approximation error strictly increases with $\beta$. This rise is primarily driven by the higher variance associated with larger entropy coefficients and estimation errors within the reference measure. Finally, the left panel of Figure \ref{fig:lambda-panel} reveals a subtle U-shaped pattern, with the average MSE reaching a minimum near $\lambda = 0.05$. Beyond this optimal point, the error increases prominently because excessive parameter confinement degrades the model's expressivity. Conversely, the right panel demonstrates that the approximation error strictly decreases as $\lambda$ grows. This occurs because stronger regularization tightly constrains the parameters of both models near the origin, inherently reducing the discrepancy between them.

\section{Future directions}\label{sec:future}
This work highlights several directions for future research. First, bridging the continuous-time framework with discrete-time settings requires new technical tools and remains an open problem. Second, relaxing the crucial conditions $\eta \beta^2 > 8 C^2_\sigma C^2_1$ and $\alpha \beta^2 > 8 C^2_\sigma C^2_1$ is an important direction. Exploring alternative concepts of convexity in the Wasserstein space could yield the theoretical insights needed to address this. Third, establishing non-trivial regret lower bounds can further quantify the performance gap. Finally, extending our analysis to broader neural network architectures warrants further investigation.

\bibliographystyle{apalike}
\bibliography{ref.bib}

\appendix

\section{Proofs of Section \ref{sec:properties}}
\subsection{Well-posedness of the McKean-Vlasov SDE and particle system}
\begin{proof}[{\bf Proof of Proposition \ref{prop:MkV}}]
	We prove the three claims separately.
	
	\medskip
	\noindent\textbf{1. Existence, uniqueness, and the uniform \(L^\infty\) bound.}
	We use the fixed-point argument in Proposition 4.3 of \cite{monmarche2024time}.
	Fix \(T \in (0,\infty)\), and define
	\[
	\mathcal X := C\bigl([0,T];L^1(\R^d)\cap \mathcal P(\R^d)\bigr)
	\]
	equipped with the uniform metric
	\[
	d(\mu,\nu):=\sup_{t\in[0,T]}\|\mu_t-\nu_t\|_{L^1},
	\]
	for $\mu,\nu\in \mathcal X$. If \(\mu\in\mathcal X\), then \(\mu_t\) is a probability measure and has a density function, still denoted by \(\mu_t\).
	Clearly, \(\mathcal X\) is a complete metric space.
	
	Given \(\mu\in\mathcal X\), define \(\rho=\mathcal T[\mu]\) to be the unique probability solution to
	\[
	\partial_t \rho_t
	=
	\beta \Delta \rho_t+\nabla\cdot\bigl[\rho_t\, b(\mu_t,Z_t,\theta)\bigr],
	\]
	with
	\[
	b(\mu_t,Z_t,\theta)
	=
	\lambda\theta
	+
	2\bigl(\langle \mu_t,\sigma(X_t,\cdot)\rangle-Y_t\bigr)\nabla\sigma(X_t,\theta).
	\]
	The $L^\infty$ norms below are all over $\theta$. We observe that
	\[
	\begin{aligned}
		\|b(\mu_t,Z_t, \theta) - b(\nu_t, Z_t, \theta)\|_{L^\infty}
		&=
		2\bigl\|
		\langle \mu_t,\sigma(X_t, \cdot)\rangle \nabla\sigma(X_t, \theta)
		-
		\langle \nu_t, \sigma(X_t, \cdot)\rangle \nabla\sigma(X_t, \theta)
		\bigr\|_{L^\infty} \\
		&\le
		2C_\sigma C_1 \|\mu_t-\nu_t\|_{L^1}.
	\end{aligned}
	\]
	Under our assumptions on \(\sigma\), Proposition A.1 of \cite{monmarche2024time} yields
	\[
	\|\rho_t-\rho_t'\|_{L^1}
	\le
	e^{C't}\|\rho_0-\rho_0'\|_{L^1}
	+
	C'\sqrt{t}\sup_{s\in[0,t]}
	\|b(\mu_s,Z_s,\theta)-b(\nu_s,Z_s,\theta)\|_{L^\infty},
	\]
	where \(\rho=\mathcal T[\mu]\), \(\rho'=\mathcal T[\nu]\), and $C'$ is a constant defined in the Proposition A.1 of \cite{monmarche2024time}. Hence,
	\[
	\|\rho_t-\rho_t'\|_{L^1}
	\le
	e^{C't}\|\rho_0-\rho_0'\|_{L^1}
	+
	2C' C_\sigma C_1 \sqrt{t}\sup_{s\in[0,t]}\|\mu_s-\nu_s\|_{L^1}.
	\]
	If we set \(\rho_0=\rho_0'\) and \(t=T\), then
	\[
	\sup_{t\in[0,T]}\|\rho_t-\rho_t'\|_{L^1}
	\le
	2C_1 C_\sigma C' \sqrt{T}\sup_{s\in[0,T]}\|\mu_s-\nu_s\|_{L^1}.
	\]
	Choosing \(T_0>0\) small enough such that \(2C_1 C_\sigma C' \sqrt{T_0} < 1\), we see that \(\mathcal T\) is a contraction on \(\mathcal X\). This proves the existence and uniqueness of a solution \(\rho=\mathcal T[\rho]\) on \([0,T_0]\). Repeating the same argument on \([T_0,2T_0]\), \([2T_0,3T_0]\), and so on, we obtain existence and uniqueness on \([0,\infty)\), with
	\[
	\rho\in C\bigl([0,\infty);L^1(\R^d)\cap\mathcal P(\R^d)\bigr).
	\]
	
	To show the boundedness, we apply \citet[Proposition A.1]{monmarche2024time} again to get
	\[
	\|\rho_t\|_{L^\infty}
	\le
	C_\infty\bigl(\|\rho_0\|_{L^\infty}+1\bigr),
	\]
	for some \(C_\infty\) independent of \(t\) and \(T\). Therefore,
	\[
	\rho\in L^\infty\bigl([0,\infty);L^\infty(\R^d)\bigr).
	\]
	With a similar argument using the stability estimates on each small time interval, we also obtain continuous dependence on the initial value.
	
	\medskip
	\noindent\textbf{2. Uniform moment bound.}
	We can calculate the expectation of the Lyapunov function
	\[
	V_k(\theta):=(1+|\theta|^2)^{k/2}.
	\]
	This gives a differential inequality for
	\[
	\int_{\R^d}V_k(\theta)\,\rho_t(d\theta),
	\]
	from which we obtain the uniform bound on the \(k\)-th moment. The details are omitted here for simplicity.
	
	\medskip
	\noindent\textbf{3. Stability with respect to the approximation.}
	Let $\sigma^\varepsilon=\sigma*\eta^\varepsilon$ be the mollified function. Define
	\[
	b^\varepsilon(\mu_t,Z_t,\theta)
	=
	\lambda\theta
	+
	2\bigl(\langle \mu_t,\sigma^\varepsilon(X_t,\cdot)\rangle-Y_t\bigr)\nabla\sigma^\varepsilon(X_t,\theta).
	\]
	Then \(b^\varepsilon\) satisfies the same conditions as \(b\) in \citet[Proposition A.1]{monmarche2024time}. It follows that
	\[
	\|\rho_t-\rho_t^\varepsilon\|_{L^1}
	\le
	e^{C't}\|\rho_0-\rho_0^\varepsilon\|_{L^1}
	+
	C'\sqrt{t}\sup_{s\in[0,t]}
	\|b(\rho_s,Z_s,\theta)-b^\varepsilon(\rho_s^\varepsilon,Z_s,\theta)\|_{L^\infty},
	\]
	where \(\rho^\varepsilon\) is the unique solution to
	\[
	\partial_t \rho_t^\varepsilon
	=
	\beta \Delta \rho_t^\varepsilon
	+
	\nabla\cdot\bigl[\rho_t^\varepsilon\, b^\varepsilon(\rho_t^\varepsilon,Z_t,\theta)\bigr].
	\]
	By previous arguments, we can show that
	\[
	\rho^\varepsilon\in
	C\bigl([0,\infty);L^1(\R^d)\cap\mathcal P(\R^d)\bigr)
	\cap
	L^\infty\bigl([0,\infty);L^\infty(\R^d)\bigr).
	\]
	
	To control the difference between $\rho_t$ and $\rho^\varepsilon_t$, we prove a bound on the drift difference. By the triangle inequality,
	\[
	\begin{aligned}
		\|b(\rho_t,Z_t, \theta)-b^\varepsilon(\rho_t^\varepsilon,Z_t, \theta)\|_{L^\infty}
		&\le
		\|b(\rho_t,Z_t,\theta)-b(\rho_t^\varepsilon,Z_t,\theta)\|_{L^\infty}
		+\|b(\rho_t^\varepsilon,Z_t,\theta)-b^\varepsilon(\rho_t^\varepsilon,Z_t,\theta)\|_{L^\infty}.
	\end{aligned}
	\]
	The first term is bounded by
	\[
	\|b(\rho_t,Z_t,\theta) - b(\rho_t^\varepsilon, Z_t, \theta)\|_{L^\infty}
	\le
	2 C_1 C_\sigma
	\|\rho_t-\rho_t^\varepsilon\|_{L^1}.
	\]
	For the second term, we recall that the mollified functions satisfy
	\begin{align*}
		& \|\nabla\sigma(X_t, \theta) - \nabla\sigma^\varepsilon(X_t, \theta) \|_{L^\infty} \leq \| D^2_{\theta \theta} \sigma(X_t, \theta) \|_{L^\infty}  \varepsilon \leq C_2 \varepsilon, \\
		& \|\sigma-\sigma^\varepsilon\|_{L^\infty}\leq \| \nabla \sigma(X_t, \theta) \|_{L^\infty} \varepsilon \leq C_1 \varepsilon.
	\end{align*}
	Then
	\begin{equation*}
		\|b(\rho_t^\varepsilon,Z_t,\cdot)-b^\varepsilon(\rho_t^\varepsilon,Z_t,\cdot)\|_{L^\infty}
		\leq (2 C_z C_2 + 2 C_\sigma C_2 + 2 C^2_1) \varepsilon = C_* \varepsilon.
	\end{equation*}
	Therefore, we apply this bound in \citet[Proposition A.1]{monmarche2024time} to obtain 
	\begin{equation}\label{eq:rhovardiff}
		\|\rho_t-\rho_t^\varepsilon\|_{L^1}
		\le
		e^{C't}\|\rho_0-\rho_0^\varepsilon\|_{L^1}
		+
		C'\sqrt{t}\sup_{s\in[0,t]} \Bigl(2 C_1 C_\sigma \|\rho_s-\rho_s^\varepsilon\|_{L^1} + C_* \varepsilon \Bigr).
	\end{equation} 
	Similarly, let \(T_0>0\) be small enough, such that $\delta_{T_0} := 2 C_1 C_\sigma C' \sqrt{T_0} < 1$. The inequality \eqref{eq:rhovardiff} ensures that, for \(t\in[0,T_0]\),
	\begin{equation}\label{eq:rhovardiff2}
		\|\rho_t-\rho_t^\varepsilon\|_{L^1} - \delta_{T_0} \sup_{s\in[0, T_0]} \|\rho_s-\rho_s^\varepsilon\|_{L^1}
		\le
		e^{C' T_0}\|\rho_0-\rho_0^\varepsilon\|_{L^1}
		+ C'\sqrt{T_0} C_* \varepsilon.
	\end{equation}
	We take supremum over \(t\in[0,T_0]\) and note that the right hand side and the second term on the left of \eqref{eq:rhovardiff2} are independent of $t$,
	\begin{equation*}
		(1-\delta_{T_0})\sup_{s\in[0, T_0]}\|\rho_s-\rho_s^\varepsilon\|_{L^1}
		\le
		e^{C' T_0}\|\rho_0 - \rho_0^\varepsilon\|_{L^1}
		+ C'\sqrt{T_0} C_* \varepsilon.
	\end{equation*}
	Since \(\delta_{T_0}<1\), we obtain \(\rho_t^\varepsilon\to \rho_t\) in \(L^1\) as \(\varepsilon\to0\), for \(t\in[0,T_0]\). Repeat this argument on \([T_0,2T_0]\), \([2T_0,3T_0]\), and so on. We obtain \(\rho_t^\varepsilon\to\rho_t\) in \(L^1\) for all \(t\ge0\).
	
	Finally, we prove the \(L^\infty\) norm and the \(k\)-th moment bounds for \(\rho^\varepsilon\). The drift
	\[
	b^\varepsilon(\rho_t^\varepsilon,Z_t,\theta)
	=
	\lambda\theta
	+
	2\bigl(\langle \rho_t^\varepsilon,\sigma^\varepsilon(X_t,\cdot)\rangle-Y_t\bigr)\nabla\sigma^\varepsilon(X_t,\theta)
	\]
	is a sum of a linear function and a bounded function. Moreover,
	\[
	\|\nabla\cdot(-b^\varepsilon)\|_{L^\infty}
	=
	\Bigl\|
	\lambda d
	+
	2\bigl(\langle \rho_t^\varepsilon,\sigma^\varepsilon(X_t,\cdot)\rangle-Y_t\bigr)
	\Delta \sigma^\varepsilon(X_t,\theta)
	\Bigr\|_{L^\infty}
	\le \lambda d + 2(C_\sigma + C_z) \sqrt{d} C_2,
	\]
	which is independent of \(\varepsilon\). Applying \citet[Proposition A.1]{monmarche2024time} again gives the \(L^\infty\) bound, while the \(k\)-th moment bound follows from the same Lyapunov argument as above.
\end{proof}

We denote $C^n(\R^d; \R^{d'})$ the space of continuous functions $f: \R^d \to \R^{d'}$ that are $n$ times differentiable and whose partial derivatives of order $n$ are continuous. For $T > 0$, $C^{m, n}([0, T] \times \R^d; \R^{d'})$ is the space of continuous functions $f: [0, T] \times \R^d \to \R^{d'}$ that are $m$ times differentiable in the time and $n$ times differentiable in the space, with continuous partial derivatives of order $m$ in time and of order $n$ in space. Denote $C^n_{Lip}(\R^d; \R^{d'})$ the space of functions $f \in C^n(\R^d; \R^{d'})$ and $f$ is Lipschitz continuous. Similarly, $C^n_{p}(\R^d; \R^{d'})$ denotes the space of functions $f \in C^n(\R^d; \R^{d'})$ and $f$ has polynomial growth. 

Lemma \ref{lem:HJB-lip-hess} extends \citet[Theorem 4.5]{monmarche2024time} to the case where $\varphi_t$ is the sum of a bounded and a Lipschitz part.
\begin{lemma}
	\label{lem:HJB-lip-hess}
	Let $T > 0$ and $\beta > 0$. Suppose $u \in C^{1,2}_p([0,T] \times \R^d; \R)$ is a classical solution to the HJB equation
	\begin{equation}
		\partial_t u_t = \beta \Delta u_t - \beta | \nabla u_t |^2
		+ \tilde{b}_t \cdot \nabla u_t + \varphi_t,
	\end{equation} 
	for some $\tilde{b} \in C^{0,2}_p ([0,T] \times \R^d; \R^d)$ and $\varphi \in C^{0,2}_p ([0,T] \times \R^d; \R)$. Assume the initial condition $u_0 \in C^3_{Lip} (\R^d; \R)$. Suppose the drift $\tilde b$ satisfies the weak convexity condition
	\begin{equation}\label{eq:weak_convex}
		(\tilde b_t(x) - \tilde b_t(y)) \cdot (x - y)
		\leq \kappa_{\tilde b}(|x - y|) |x - y|^2
	\end{equation} 
	for some $C^1$-continuous
	$\kappa_{\tilde b} : (0,\infty) \to \mathbb R$
	such that $\int_0^1 r \bigl( \kappa_{\tilde b}(r) \vee 0 \bigr) d r < \infty$
	and $\liminf_{r \to \infty} \kappa_{\tilde b}(r) < 0$.
	Suppose
	$\sup_{t \in [0,T]}\lVert \nabla \tilde{b}_t \rVert_{L^\infty} < \infty$.
	Then, we have the following results:
	\begin{enumerate}
		\item Suppose $\varphi_t =  \varphi_{1, t} + \varphi_{2, t}$ with bounded $\varphi_{1, t}$ and $L_2$-Lipschitz $\varphi_{2, t}$ for all $t \in [0,T]$, then
		\begin{equation}
			\label{eq:Lip-grad}
			\| \nabla u_t \|_{L^\infty}
			\leq C e^{-c t} \| \nabla u_0 \|_{L^\infty}
			+ C \int_0^t e^{-cs} \Big( \frac{ \| \varphi_{1, t - s} \|_{L^\infty}}{\sqrt{s \wedge 1}} + L_2 \Big)
			ds, \quad t \in [0, T],
		\end{equation}
		where $C$, $c > 0$ and depend only on $\kappa_{\tilde b}$ and $\beta$.
		\item
		If additionally, $\nabla \varphi_t \in L^\infty$ for all $t \in [0,T]$,
		then 
		\begin{equation}\label{eq:HJB-global-hess}
			\begin{aligned}
				\| D^2 u_t \|_{L^\infty}
				\leq & \frac{C'e^{-c't}}{\sqrt{ t \wedge 1}}
				\| \nabla u_0 \|_{L^\infty} + \int_0^t \frac{C' e^{-c'v}}{\sqrt{v \wedge 1}}
				\big( \| \nabla \varphi_{t-v} \|_{L^\infty}
				+ \| \nabla \tilde b_{t-v} \cdot \nabla u_{t-v} \|_{L^\infty} \big)
				dv,
			\end{aligned}
		\end{equation}
		for all $t \in [0,T]$, where $C'$, $c' > 0$ and depend only on $\kappa_{\tilde b}$, $\beta$, $\| \nabla u_0 \|_{L^\infty}$, and $\sup_{t \in [0,T]} \| \varphi_t \|_{L^\infty}$.
	\end{enumerate}
\end{lemma}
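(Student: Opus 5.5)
We would follow the stochastic-control / coupling route of \citet[Theorem 4.5]{monmarche2024time} and \cite{conforti2023coupling}. The only change is that the source $\varphi_t$ is now split into a bounded part and a Lipschitz part.

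\textbf{Step 1: control representation.} Since $u$ is a classical $C^{1,2}_p$ solution, the HJB equation is the value function of
\[
u_t(x)=\inf_{\alpha}\E\Big[u_0(X^\alpha_t)+\int_0^t\Big(\tfrac{1}{4\beta}|\alpha_s|^2+\varphi_{t-s}(X^\alpha_s)\Big)ds\Big],
\qquad dX^\alpha_s=\big(\tilde b_{t-s}(X^\alpha_s)-\alpha_s\big)ds+\sqrt{2\beta}\,dB_s,\quad X_0=x .
\]
We justify this by a verification argument with polynomial growth. The optimal feedback is $\alpha=2\beta\nabla u$.

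\textbf{Step 2: gradient bound.} For two starting points $x,y$ we take the optimal control $\alpha^*$ from $x$ and reuse the same control from $y$. We couple the two diffusions by Eberle's reflection coupling \citep{eberle2016reflection}, driving $X_s-Y_s$ with the uncontrolled drift difference. The weak convexity condition \eqref{eq:weak_convex}, the integrability of $r(\kappa_{\tilde b}(r)\vee0)$ near $0$, and $\liminf_{r\to\infty}\kappa_{\tilde b}<0$ give a concave distance function $f$ comparable to $|\cdot|$ with $\E f(|X_s-Y_s|)\le e^{-cs}f(|x-y|)$. This yields the following bounds:
\begin{itemize}
\item the terminal term contributes $\|\nabla u_0\|_\infty\,\E|X_t-Y_t|\le Ce^{-ct}|x-y|$;
\item the Lipschitz part contributes $L_2\int_0^t\E|X_s-Y_s|\,ds\le CL_2|x-y|\int_0^te^{-cs}ds$;
\item the bounded part $\varphi_{1,t-s}$ is handled by total variation: $|\E\varphi_1(X_s)-\E\varphi_1(Y_s)|\le 2\|\varphi_1\|_\infty\p(\text{not coupled by }s)$.
\end{itemize}
For reflection coupling, $\p(\text{not coupled by } s)\le C|x-y|e^{-cs}/\sqrt{s\wedge1}$. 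This is the Gaussian short-time estimate together with the exponential contraction for large times, after restarting at time $s-1$. Summing the three contributions and symmetrising in $x,y$ gives \eqref{eq:Lip-grad}.

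\textbf{Step 3: Hessian bound.} Differentiating the equation shows that $w=\nabla u$ solves a linear system,
\[
\partial_t w=\beta\Delta w-2\beta D^2u\,w+(\nabla\tilde b)^\top w+\tilde b\cdot\nabla w+\nabla\varphi .
\]
Equivalently, along the optimally controlled process with drift $\tilde b-2\beta\nabla u$, the components of $w$ satisfy a Duhamel formula with source $\nabla\varphi+\nabla\tilde b\cdot\nabla u$. The controlled drift is again weakly convex up to a bounded perturbation, by Step 2 and $\|\nabla u\|_\infty<\infty$, so reflection coupling still contracts. We then apply the Bismut–Elworthy–Li type estimate $\|\nabla P_{s,t}g\|_\infty\le C e^{-c(t-s)}\|g\|_\infty/\sqrt{(t-s)\wedge1}$ for the associated semigroup, which is the same coupling bound as in Step 2. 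Applying it to the Duhamel formula gives \eqref{eq:HJB-global-hess}. The constants depend on $\|\nabla u_0\|_\infty$ and $\sup\|\varphi\|_\infty$ through the bound on $\nabla u$ that enters the controlled drift.

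\textbf{Main obstacle.} We expect the hardest part to be the $1/\sqrt{s\wedge1}$ coupling-probability estimate, applied uniformly with the controlled, non-globally-contractive drift. Our plan is to build the concave $f$ \`a la Eberle with an added bounded constant so that it dominates $\mathbf 1_{r>0}$ as well as $r$. This gives simultaneously a $W_1$ and a TV contraction, with the short-time singularity coming from the Brownian reflection part alone.
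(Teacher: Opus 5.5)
Your proposal is correct and is essentially the paper's proof: the same control representation, the same trick of reusing one starting point's optimal control so that it cancels in the difference, Eberle's reflection coupling, and the same three-way split into terminal, Lipschitz ($L_2\,\E|X_s-Y_s|$) and bounded ($2\|\varphi_1\|_{L^\infty}\,\p(\tau>s)$) contributions. For the Hessian the paper only cites \citet[Theorem 4.5]{monmarche2024time} and \cite{conforti2023coupling}, and your Step 3 is exactly that argument: Duhamel for $\nabla u$ along the drift $\tilde b-2\beta\nabla u$, which is weakly convex up to a bounded perturbation, combined with the coupling gradient bound for the semigroup. One caution on your closing plan: a TV-type contraction from a distance function dominating $\mathbf 1_{r>0}$ only gives $\p(\tau>s)\lesssim e^{-cs}(1+|x-y|)$, so the factor $|x-y|$ has to come, as in your Step 2, from the short-time hitting estimate combined with restarting at time $s-1$ and the $W_1$ contraction.
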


\begin{proof}[{\bf Proof of Lemma \ref{lem:HJB-lip-hess}}]
	A direct calculation shows that \(u_t\) satisfies
	\[
	\partial_t u_t
	=
	\beta \Delta u_t - \beta|\nabla u_t|^2+\tilde b_t\cdot \nabla u_t+\varphi_t
	=
	\beta \Delta u_t+\inf_{\alpha}\bigl\{ \beta |\alpha|^2 - 2 \beta \alpha\cdot \nabla u_t\bigr\}
	+\tilde b_t\cdot \nabla u_t+\varphi_t.
	\]
	It can be understood as a stochastic optimal control problem with the objective given by
	\[
	u(t,x)
	=
	\inf_{\alpha}\,
	\mathbb E\Bigl[
	u_0(X_t^{\alpha,x})
	+
	\int_0^t
	\bigl(
	\varphi_{t-s}(X_s^{\alpha,x})+ \beta|\alpha_s|^2
	\bigr)\,ds
	\Bigr],
	\]
	where the state process solves
	\[
	\left\{
	\begin{aligned}
		dX_s^{\alpha,x}
		&=
		\bigl(\tilde b_{t-s}(X_s^{\alpha,x})-2\beta\alpha_s\bigr)\,ds
		+
		\sqrt{2\beta} dB_s,
		\qquad s\in[0,t],\\
		X_0^{\alpha,x}&=x.
	\end{aligned}
	\right.
	\]
	
	Let \(\alpha^\ast\) be an optimal control for the starting point \(y\), which exists thanks to the quadratic structure. Using the same control \(\alpha^\ast\) for the process starting from \(x\), we obtain
	\[
	u(t,x)-u(t,y)
	\le
	\mathbb E\Bigl[
	u_0(X_t^{\alpha^\ast,x})-u_0(X_t^{\alpha^\ast,y})
	+
	\int_0^t
	\bigl(
	\varphi_{t-s}(X_s^{\alpha^\ast,x})
	-
	\varphi_{t-s}(X_s^{\alpha^\ast,y})
	\bigr)\,ds
	\Bigr].
	\]
	By swapping \(x\) and \(y\), we get
	\[
	|u(t,x)-u(t,y)|
	\le
	\sup_{\alpha}
	\mathbb E\Bigl[
	|u_0(X_t^{\alpha,x})-u_0(X_t^{\alpha,y})|
	+
	\int_0^t
	|\varphi_{t-s}(X_s^{\alpha,x})-\varphi_{t-s}(X_s^{\alpha,y})|\,ds
	\Bigr].
	\]
	
	We now follow the reflection coupling technique. Two Brownian motions \(B^x\) and \(B^y\) are coupled up to the coupling time $\tau:=\inf\{s\ge 0:\ X_s^{\alpha,x}=X_s^{\alpha,y}\}$,
	namely,
	\[
	dB_s^y
	=
	\left(
	I
	-
	2\frac{(X_s^{\alpha,y}-X_s^{\alpha,x})(X_s^{\alpha,y}-X_s^{\alpha,x})^\top}
	{|X_s^{\alpha,y}-X_s^{\alpha,x}|^2}
	\right)dB_s^x,
	\qquad s\le \tau,
	\]
	and \(dB_s^x=dB_s^y\) for \(s>\tau\). The controlled state processes follow
	\[
	\left\{
	\begin{aligned}
		dX_s^{\alpha,x}
		&=
		\bigl(\tilde b_{t-s}(X_s^{\alpha,x})-2\alpha_s\bigr)\,ds
		+
		\sqrt{2\beta} dB_s^x, \quad s \in[0, t], \\
		X_0^{\alpha,x}&=x,
	\end{aligned}
	\right.
	\]
	and
	\[
	\left\{
	\begin{aligned}
		dX_s^{\alpha,y}
		&=
		\bigl(\tilde b_{t-s}(X_s^{\alpha,y})-2\alpha_s\bigr)\,ds
		+
		\sqrt{2\beta} dB_s^y,  \quad s \in[0, t],  \\
		X_0^{\alpha,y}&=y.
	\end{aligned}
	\right.
	\]
	Then the difference process satisfies
	\[
	d\bigl(X_s^{\alpha,x}-X_s^{\alpha,y}\bigr)
	=
	\bigl(
	\tilde b_{t-s}(X_s^{\alpha,x})-\tilde b_{t-s}(X_s^{\alpha,y})
	\bigr)\,ds
	+
	2\sqrt{2 \beta}d\tilde{W}_s,
	\]
	where \(\tilde{W}\) is a Brownian motion by L\'evy's characterization. By the reflection-coupling argument of \cite{eberle2016reflection} and \citet[Theorem 4.5]{monmarche2024time}, the value of $|X_s^{\alpha,x} - X_s^{\alpha,y}|$ is stochastically dominated by $(r_s)_{s \geq 0}$:
	\begin{equation*}
		d r_s = r_s \kappa_{\tilde b}(r_s) ds + 2 \sqrt{2\beta} d \tilde{W}_s.
	\end{equation*}
	Moreover, \cite{eberle2016reflection} and \citet[Theorem 4.5]{monmarche2024time} obtained that
	\begin{equation}\label{eq:Eberle}
		\p(r_s > 0) = \mathbb P(\tau>s)
		\le
		\frac{C|x-y|e^{-cs}}{\sqrt{s\wedge 1}},
		\qquad
		\mathbb E\bigl[|X_s^{\alpha,x}-X_s^{\alpha,y}|\bigr]
		\le
		C|x-y|e^{-cs}.
	\end{equation}
	The constants $C$ and $c$ depend on $\kappa_{\tilde b}$ and $\beta$, as derived in \cite{eberle2016reflection}. 
	
	Next, we bound the difference of running costs. It can be decomposed as
	\[
	|\varphi_{t-s}(X_s^{\alpha,x})-\varphi_{t-s}(X_s^{\alpha,y})|
	\le
	|\varphi_{1,t-s}(X_s^{\alpha,x})-\varphi_{1,t-s}(X_s^{\alpha,y})|
	+
	|\varphi_{2,t-s}(X_s^{\alpha,x})-\varphi_{2,t-s}(X_s^{\alpha,y})|.
	\]
	\begin{itemize}
		\item[(1)] Since \(\varphi_1\) is bounded,
		\[
		|\varphi_{1,t-s}(X_s^{\alpha,x})-\varphi_{1,t-s}(X_s^{\alpha,y})|
		\le
		2M_1\,\mathbf 1_{\{s<\tau\}},
		\qquad
		M_1:=\sup_{r\in[0,t]}\|\varphi_{1,r}\|_{L^\infty}.
		\]
		Then
		\[
		\mathbb E\Bigl[
		|\varphi_{1,t-s}(X_s^{\alpha,x})-\varphi_{1,t-s}(X_s^{\alpha,y})|
		\Bigr]
		\le
		2M_1\,\mathbb P(\tau>s).
		\]
		\item[(2)] Since \(\varphi_2\) is Lipschitz, we obtain
		\[
		|\varphi_{2,t-s}(X_s^{\alpha,x})-\varphi_{2,t-s}(X_s^{\alpha,y})|
		\le
		L_2\,|X_s^{\alpha,x}-X_s^{\alpha,y}|,
		\]
		which leads to 
		\[
		\mathbb E\Bigl[
		|\varphi_{2,t-s}(X_s^{\alpha,x})-\varphi_{2,t-s}(X_s^{\alpha,y})|
		\Bigr]
		\le
		L_2\,\mathbb E\bigl[|X_s^{\alpha,x}-X_s^{\alpha,y}|\bigr].
		\]
		We can bound it with \eqref{eq:Eberle}.
	\end{itemize}
	For the initial terms, we use the Lipschitz continuity of \(u_0\) and \eqref{eq:Eberle} to prove
	\[
	\mathbb E\Bigl[
	|u_0(X_t^{\alpha,x})-u_0(X_t^{\alpha,y})|
	\Bigr]
	\le
	\|\nabla u_0\|_{L^\infty}\,
	\mathbb E\bigl[|X_t^{\alpha,x}-X_t^{\alpha,y}|\bigr]
	\le
	C\|\nabla u_0\|_{L^\infty}|x-y|e^{-ct}.
	\]
	
	Combining the above estimates yields
	\[
	\begin{aligned}
		|u(t,x)-u(t,y)|
		&\le
		C\|\nabla u_0\|_{L^\infty}|x-y|e^{-ct}
		+
		\int_0^t
		\left(
		2M_1\frac{C|x-y|e^{-cs}}{\sqrt{s\wedge 1}}
		+
		L_2\,C|x-y|e^{-cs}
		\right)\,ds \\
		&=
		|x-y|\,C\left(
		\|\nabla u_0\|_{L^\infty}e^{-ct}
		+
		\int_0^t
		e^{-cs}
		\left(
		\frac{2M_1}{\sqrt{s\wedge 1}}+L_2
		\right)\,ds
		\right).
	\end{aligned}
	\]
	Hence
	\[
	\|\nabla u_t\|_{L^\infty}
	\le
	C e^{-ct}\|\nabla u_0\|_{L^\infty}
	+
	C\int_0^t
	e^{-cs}
	\left(
	\frac{\|\varphi_{1,t-s}\|_{L^\infty}}{\sqrt{s\wedge 1}}+L_2
	\right)\,ds,
	\]
	which is exactly \eqref{eq:Lip-grad}.
	
	For the Hessian estimate, the argument is the same as in \citet[Theorem 4.5]{monmarche2024time} and \cite{conforti2023coupling}. 
    
\end{proof}

\begin{proof}[{\bf Proof of Theorem \ref{thm:Lip-grad}}]
	Define
	\[
	\sigma^\varepsilon(X_t,\theta):=(\sigma(X_t,\cdot)*\eta^\varepsilon)(\theta),
	\qquad
	K_t^\varepsilon(\mu):=-2\bigl(\langle \mu,\sigma^\varepsilon(X_t,\cdot)\rangle-Y_t\bigr).
	\]
	We smooth the initial condition by setting
	\[
	\rho_0^\varepsilon(\theta)\propto \exp\bigl(-(u_0*\eta^\varepsilon)(\theta)\bigr)\,\gamma_0(\theta),
	\qquad \text{where} \quad 
	u_0=-\log\frac{\rho_0}{\gamma_0}.
	\]
	The approximated Fokker--Planck equation is
	\[
	\partial_t \rho_t^\varepsilon
	=
	\beta \Delta \rho_t^\varepsilon
	+
	\nabla\cdot\Bigl(
	\rho_t^\varepsilon\bigl[
	\lambda\theta
	+
	2\bigl(\langle \rho_t^\varepsilon,\sigma^\varepsilon(X_t,\cdot)\rangle-Y_t\bigr)
	\nabla \sigma^\varepsilon(X_t,\theta)
	\bigr]
	\Bigr),
	\]
	that is,
	\[
	\partial_t \rho_t^\varepsilon
	=
	\beta \Delta \rho_t^\varepsilon
	-
	\nabla\cdot\Bigl(
	\rho_t^\varepsilon\bigl[
	K_t^\varepsilon(\rho_t^\varepsilon)\nabla \sigma^\varepsilon(X_t,\theta)
	-\lambda\theta
	\bigr]
	\Bigr).
	\]
	
	By Proposition \ref{prop:MkV} (3), together with the fact that \(u_0\) has linear growth and \(\gamma_0\) has quadratic decay, we have
	\[
	\rho_0^\varepsilon \in L^1(\R^d)\cap L^\infty(\R^d)\cap \mathcal P(\R^d).
	\]
	Hence, the approximated Fokker--Planck equation has a unique solution
	\[
	\rho^\varepsilon \in C\bigl([0,\infty);L^1(\R^d)\cap \mathcal P(\R^d)\bigr)
	\cap L^\infty\bigl([0, \infty);L^\infty(\R^d)\bigr).
	\]
	Moreover, Scheff\'e's lemma shows that \(\rho_0^\varepsilon\to \rho_0\) in \(L^1\), and a simliar growth rate argument as above shows that \(\sup_\varepsilon \|\rho_0^\varepsilon\|_{L^\infty}<\infty\). Then Proposition \ref{prop:MkV} (3) implies that
	\[
	\rho_t^\varepsilon \to \rho_t \quad\text{ in }L^1(\R^d), \qquad \forall \; t\ge 0.
	\]
	
	Next, we consider $u_t^\varepsilon:=-\log(\rho_t^\varepsilon/\gamma_0)$. It satisfies the HJB equation
	\begin{equation*}
        \partial_t u_t^\varepsilon = \beta \Delta u_t^\varepsilon - \beta |\nabla u_t^\varepsilon|^2 + \tilde b_t^\varepsilon \cdot \nabla u_t^\varepsilon + \varphi_t^\varepsilon,
    \end{equation*}
	where
	\begin{align*}
		\tilde b_t^\varepsilon(\theta) & = -\lambda\theta + 2\bigl(\langle \rho_t^\varepsilon,\sigma^\varepsilon(X_t,\cdot)\rangle-Y_t\bigr)
		\nabla \sigma^\varepsilon(X_t,\theta), \\
		\varphi_t^\varepsilon(\theta)
		& =  K_t^\varepsilon(\rho_t^\varepsilon)\Delta \sigma^\varepsilon(X_t,\theta)
		- \frac{\lambda}{\beta} \theta\cdot \nabla \sigma^\varepsilon(X_t,\theta)\,
		K_t^\varepsilon(\rho_t^\varepsilon).
	\end{align*}
	
	We use \citet[Proposition 3.1]{conforti2023coupling} to claim that \(u_t^\varepsilon\) is a classical solution to the approximated HJB equation. Indeed, comparing our problem with the notation there,
	the initial value is \(u_0^\varepsilon\), the running cost is \(\varphi_{t-s}^\varepsilon + \beta|\alpha_s|^2\), and the drift is \(\tilde b_{t-s}^\varepsilon - 2\beta\alpha_s\). Moreover,
	\begin{itemize}
		\item[(1)] Thanks to the properties of mollification, \(u_0^\varepsilon \in C^3(\R^d)\) and \(u_0^\varepsilon\) is Lipschitz.
		\item[(2)] \citet[Equation (5)]{conforti2023coupling} and our weak convexity condition \eqref{eq:weak_convex} differ by a negative sign and whether $\kappa_{\tilde b^\varepsilon}$ contains $\beta$. With our notation, we can use 
		\begin{equation}\label{eq:kappa}
			\kappa_{\tilde b^\varepsilon}(r) = - \lambda + \frac{C_*}{r}, \quad r > 0,
		\end{equation}
		where $C_* > 0$ only depends on $C_1$, $C_\sigma$, and $C_z$.
		\item[(3)] Since \(\sigma\) is bounded, the derivatives \(D^k \sigma^\varepsilon\) are bounded by constants depending on \(\varepsilon\). Our Assumption \ref{A:sigma} ensures that $D^3_{\theta \theta \theta} \sigma^\varepsilon \cdot \theta$ and $D^2_{\theta\theta} \sigma^\varepsilon \cdot \theta$ are bounded. Then Assumptions 1.1 and 1.2 in \cite{conforti2023coupling} are satisfied.
	\end{itemize}
	Therefore, \citet[Proposition 3.1]{conforti2023coupling} showed that \(u^\varepsilon\) is a classical solution with polynomial growth, namely $u^\varepsilon \in C_p^{1,2}\bigl([0,T]\times \R^d\bigr)$. Besides, $\tilde b^\varepsilon \in C_p^{0,2}$, $\varphi^\varepsilon \in C_p^{0,2}$, and $ \sup_{t\in[0,T]}\|\nabla \tilde b_t^\varepsilon\|_{L^\infty}$ is bounded. We can apply Lemma \ref{lem:HJB-lip-hess} to obtain
	\begin{equation}\label{eq:grad_est}
		\|\nabla u_t^\varepsilon\|_{L^\infty}
		\le
		Ce^{-ct}\|\nabla u_0^\varepsilon\|_{L^\infty}
		+
		C\int_0^t e^{-cs}
		\left(
		\frac{\|\varphi_{1,t-s}^\varepsilon\|_{L^\infty}}{\sqrt{s\wedge 1}}
		+
		L_2^\varepsilon
		\right)\,ds,
	\end{equation}
	where $\varphi_{1,t}^\varepsilon= K_t^\varepsilon(\rho_t^\varepsilon)\Delta \sigma^\varepsilon(X_t,\theta)$ and \(L_2^\varepsilon\) is the Lipschitz constant of $\varphi_{2,t}^\varepsilon = -\frac{\lambda}{\beta} \theta\cdot \nabla \sigma^\varepsilon(X_t,\theta)\, K_t^\varepsilon(\rho_t^\varepsilon)$. It remains to show the right-hand side can use constants independent of $\varepsilon$:
	\begin{itemize}
		\item[(1)]  \(\tilde b^\varepsilon\) satisfies the weak convexity condition with the function $\kappa_{\tilde b^\varepsilon}$ in \eqref{eq:kappa} that does not depend on $\varepsilon$. Then $C$ and $c$ in \eqref{eq:grad_est} do not depend on $\varepsilon$. However, a side remark is that they rely at least on $\beta$ and $\lambda$. 
		\item[(2)] Thanks to the properties of mollification, $u_0^\varepsilon$ is Lipschitz with the same constant of $u_0$.
		\item[(3)] The function $\varphi_{1,t}^\varepsilon= K_t^\varepsilon(\rho_t^\varepsilon)\Delta \sigma^\varepsilon(X_t,\theta)$ is bounded uniformly in \(\varepsilon\). To check the Lipschitz constant $L_2^\varepsilon$, we find
		\[
		\nabla\bigl(\theta\cdot \nabla \sigma^\varepsilon(X_t,\theta)\bigr)
		=
		\nabla \sigma^\varepsilon(X_t,\theta)
		+
		D_{\theta\theta}^2 \sigma^\varepsilon(X_t,\theta)\cdot \theta
		\]
		is bounded uniformly in \(\varepsilon\).
	\end{itemize}
	Hence the right-hand side of \eqref{eq:grad_est} is independent of \(\varepsilon\). Similarly, since \(\nabla \varphi_t^\varepsilon \in L^\infty\), Lemma \ref{lem:HJB-lip-hess} yields
	\begin{equation}\label{eq:Hessian_est}
		\|D^2 u_t^\varepsilon\|_{L^\infty}
		\le
		\frac{C'e^{-c't}}{\sqrt{t\wedge 1}}
		\|\nabla u_0^\varepsilon\|_{L^\infty}
		+
		\int_0^t
		\frac{C'e^{-c'v}}{\sqrt{v\wedge 1}}
		\left(
		\|\nabla \varphi_{t-v}^\varepsilon\|_{L^\infty}
		+
		\|\nabla \tilde b_{t-v}^\varepsilon \cdot \nabla u_{t-v}^\varepsilon\|_{L^\infty}
		\right)\,dv.
	\end{equation}
	Again, the right-hand side is bounded by a constant independent of \(\varepsilon\). 
	
	We now pass to the limit \(\varepsilon\to 0\). Since \(\nabla u_t^\varepsilon\)  and \(D^2 u_t^\varepsilon\) are uniformly bounded on every compact set of $\theta$ for each fixed \(t>0\), the Arzel\`a--Ascoli theorem and a diagonal argument provide a subsequence \(\varepsilon_k\to 0\) such that, for every \(t>0\),
	\[
	u_t^{\varepsilon_k}\to \bar u_t
	\quad\text{and}\quad
	\nabla u_t^{\varepsilon_k}\to \nabla \bar u_t
	\]
	uniformly on compact subsets of \(\R^d\). Since \(\rho_t^{\varepsilon_k}\to \rho_t\) in \(L^1\), up to a further subsequence we also have almost everywhere convergence. Then $\bar u_t = -\log\frac{\rho_t}{\gamma_0} = u_t$. The same gradient estimate holds for $u_t$ with a generic constant $C$:
	\[
	\|\nabla u_t\|_{L^\infty}\le C,
	\qquad t>0.
	\]
	Moreover, we emphasize that $\nabla u_t$ is in the classical sense here.
	
	For the Hessian, Banach--Alaoglu's theorem yields a subsequence such that $D^2 u_t^{\varepsilon_k}\rightharpoonup^\ast H_t$ in $L^\infty$. The limit \(H_t\) actually equals to the weak Hessian of \(u_t\). By the weak-\(^\ast\) lower semicontinuity of the \(L^\infty\)-norm and the estimate \eqref{eq:Hessian_est}, we have
	\[
	\|D^2 u_t\|_{L^\infty}
	\le
	\liminf_{k\to\infty}\|D^2 u_t^{\varepsilon_k}\|_{L^\infty}
	\le
	\frac{C}{\sqrt{t\wedge 1}},
	\qquad t>0.
	\]
	
	For the linear growth rate \eqref{eq:logrho}, since $\nabla u_t = -\nabla \log \rho_t + \nabla \log \gamma_0 = - \nabla \log \rho_t -\lambda \theta/\beta$, we obtain
	\begin{equation*}
		|\nabla \log \rho_t(\theta)| \le \|\nabla u_t\|_{L^\infty} + \frac{\lambda}{\beta}|\theta| \le L_\rho(1+|\theta|),
	\end{equation*}
	for some constant \(L_\rho>0\). This proves the first claim.
	
	For last two claims, our problem satisfies the conditions (2.2) and (2.4) of \cite{chen2025uniform}. Indeed, they follow from the fact that \(\sigma(X_t,\cdot)\) is Lipschitz and \(D^2_{\theta\theta} \sigma\) is bounded. The proof is the same as in \citet[Proposition 4.6 (2) and (3)]{chen2025uniform}. We omit it here.
\end{proof}

\subsection{Time-uniform LSI}
\begin{proof}[{\bf Proof of Lemma \ref{lem:rhotLSI}}]
	We verify that our case satisfies the assumptions in \citet[Theorem 1.4]{monmarche2024time}. They set their diffusion coefficient $\sigma = 1$. Hence, we consider the transformation $\tilde{\theta}_t = \frac{\theta_t}{\sqrt{\beta}}$. Then \eqref{eq:mean-fieldSDE} reduces to 
	\begin{equation}
		d\tilde{\theta}_t = \left[ - \lambda \tilde{\theta}_t - 2 \frac{\nabla \sigma(X_t, \sqrt{\beta} \tilde{\theta}_t)}{\sqrt{\beta}} \Big( \int \sigma(X_t, \sqrt{\beta} \vartheta) \tilde{\rho}_t(d\vartheta) - Y_t \Big) \right] dt + \sqrt{2} d B_t,
	\end{equation}
	where $\tilde{\rho}_t$ is the distribution of $\tilde{\theta}_t$ and $\nabla \sigma(X_t, \sqrt{\beta} \tilde{\theta}_t)$ denotes the derivative w.r.t $\theta$ and substitutes in $\sqrt{\beta} \tilde{\theta}_t$, instead of the derivative on $\tilde{\theta}$. In the notations of \cite{monmarche2024time}, the drift is written as
	\begin{align*}
		b_t (\tilde{\theta}) & = a_0 (\tilde{\theta}) +	g_t(\tilde{\theta}), \\
		a_0 (\tilde{\theta}) & = - \lambda \tilde{\theta}, \quad g_t(\tilde{\theta})  =  - 2 \frac{\nabla \sigma(X_t, \sqrt{\beta} \tilde{\theta})}{\sqrt{\beta}} \Big( \int \sigma(X_t, \sqrt{\beta} \vartheta) \tilde{\rho}_t(d\vartheta) - Y_t \Big).
	\end{align*}
	For simplicity, we define $K_t = - 2(\int \sigma(X_t, \sqrt{\beta} \vartheta) \tilde{\rho}_t(d\vartheta) - Y_t)$, which is bounded by $2 (C_\sigma + C_z)$. The functions $a_0, g_t \in C^1(\R^d, \R^d)$ have bounded derivatives on $\tilde{\theta}$. Indeed, $D_{\tilde{\theta}} g_t(\tilde{\theta}) = K_t D^2_{\theta \theta} \sigma(X_t, \sqrt{\beta} \tilde{\theta})$ is bounded by $2 (C_\sigma + C_z) C_2$.
	
	The generator $\cL_0 = a_0 \cdot \nabla + \Delta$ admits a unique $C^2$ invariant probability density $\tilde{\gamma}_0 \propto \exp(-\frac{\lambda}{2} |\tilde{\theta}|^2)$. Clearly, it satisfies the $LSI(\lambda)$. Moreover, we find
	\begin{align*}
		\tilde{b}_t & := 2D_{\tilde{\theta}} \log \tilde{\gamma}_0 - b_t = -2\lambda\tilde{\theta} - (-\lambda \tilde{\theta} + g_t) = - \lambda \tilde{\theta} - g_t(\tilde{\theta}), \\
		\varphi_t & := - D_{\tilde{\theta}} \cdot g_t - g_t \cdot D_{\tilde{\theta}} \log \tilde{\gamma}_0  = -K_t \Delta \sigma (X_t, \sqrt{\beta} \tilde{\theta}) + \lambda K_t \frac{\nabla \sigma(X_t, \sqrt{\beta} \tilde{\theta})}{\sqrt{\beta}} \cdot \tilde{\theta} =: \varphi_{1, t} + \varphi_{2, t}.
	\end{align*}
	Here, we emphasize $\Delta \sigma$ and $\nabla \sigma$ are derivatives on $\theta$. The function $\varphi_t$ follows the convention in \citet[Theorem 1.4]{monmarche2024time} and differs Lemma \ref{lem:HJB-lip-hess} by a negative sign.  By Assumption \ref{A:sigma}, $\varphi_{1, t}$ is uniformly bounded by $2(C_\sigma + C_z) C_2 \sqrt{d}$ and $\varphi_{2, t}$ is Lipschitz. Since 
	\begin{equation*}
		D_{\tilde{\theta}} \varphi_{2, t}(\tilde{\theta}) = \lambda K_t D^2_{\theta \theta} \sigma(X_t, \sqrt{\beta} \tilde{\theta}) \cdot \tilde{\theta} + \lambda K_t \frac{\nabla \sigma(X_t, \sqrt{\beta} \tilde{\theta})}{\sqrt{\beta}},
	\end{equation*}
	the Lipschitz constant of $\varphi_{2, t}$ can be chosen as $\frac{2\lambda}{\sqrt{\beta}}(C_\sigma + C_z)(C_2 + C_1)$.
	
	For the new drift $\tilde{b}_t$, we need to show that, there exists a constant $c > 0$,
	\begin{equation*}
		(\tilde{b}_t(\hat{x}) - \tilde{b}_t(\hat{y})) \cdot (\hat{x} - \hat{y}) \le -c |\hat{x} - \hat{y}|^2,
	\end{equation*}
	for $\hat{x}, \hat{y} \in \R^d$ satisfying $|\hat{x} - \hat{y}| \ge R$.
	
	In our case, $|g_t|$ is bounded by $C_g := 2(C_\sigma + C_z) C_1/\sqrt{\beta}$. If we set $R = 4 C_g/\lambda$ and $c = \lambda/2$, then
	\begin{align*}
		(\tilde{b}_t(\hat{x}) - \tilde{b}_t(\hat{y})) \cdot (\hat{x} - \hat{y}) & = -\lambda |\hat{x} - \hat{y}|^2 - (\hat{g}_t(\hat{x}) - \hat{g}_t(\hat{y})) \cdot (\hat{x} - \hat{y}) \leq  -\lambda |\hat{x} - \hat{y}|^2 + 2 C_{g} |\hat{x} - \hat{y}| \\
		& \leq (2 C_g/R - \lambda) |\hat{x} - \hat{y}|^2 = - \frac{\lambda}{2} |\hat{x} - \hat{y}|^2,
	\end{align*}  
	when $|\hat{x} - \hat{y}| \ge R$. Moreover, noting that $g_t$ is $L_g$-Lipschitz with $L_g :=2(C_\sigma + C_z)C_2$, we have
	\begin{align*}
		(\tilde{b}_t(\hat{x}) - \tilde{b}_t(\hat{y})) \cdot (\hat{x} - \hat{y}) & \leq  -\lambda |\hat{x} - \hat{y}|^2 + L_g |\hat{x} - \hat{y}|^2  \leq L |\hat{x} - \hat{y}|^2
	\end{align*}  
	for some $L \geq 0$, when $|\hat{x} - \hat{y}| < R$.
	
	Since we simply choose the initial distribution $\rho_0$ as the invariant measure of $\cL_0$, the first part of \citet[Theorem 1.4]{monmarche2024time} shows that there exists $\tilde{\eta} > 0$ such that $\tilde{\rho}_t$ satisfies the $LSI(\tilde{\eta})$. The constant $\tilde{\eta}$ depends on $\beta$, $\lambda$, $C_\sigma$, $C_z$, $C_1$, and $C_2$. 
	
	Finally, we transfer the LSI back to the original density $\rho_t$. For any test function $h: \R^d \to \R$, define $f(\tilde{\theta}) := h(\sqrt{\beta} \tilde{\theta})$. We have
	\begin{equation*}
		\int f^2(\tilde{\theta}) \log(f^2(\tilde{\theta})) d \tilde{\rho}_t - \int f^2(\tilde{\theta}) d \tilde{\rho}_t \log\left( \int f^2(\tilde{\theta}) d \tilde{\rho}_t \right) \leq \frac{2}{\eta_x} \int | D_{\tilde{\theta}} f(\tilde{\theta}) |^2 d \tilde{\rho}_t.
	\end{equation*}
	Since $\theta = \sqrt{\beta} \tilde{\theta}$, the left-hand side is the exact term for $h$ under $\rho_t$. For the right-hand side, $D_{\tilde{\theta}} f(\tilde{\theta}) = \sqrt{\beta} D_\theta h(\sqrt{\beta} \tilde{\theta}) = \sqrt{\beta} D_\theta h(\theta)$. We obtain
	\begin{equation*}
		\int h^2(\theta) \log(h^2(\theta)) d\rho_t - \int h^2(\theta) d\rho_t \log\left( \int h^2(\theta) d\rho_t \right) \leq \frac{2\beta}{\tilde{\eta}} \int | D_\theta h(\theta) |^2 d\rho_t.
	\end{equation*}
	Thus, $\rho_t$ satisfies the $LSI(\eta)$ for all $t \ge 0$, with $\eta = \tilde{\eta}/\beta$.
	
	Clearly, if $\beta$ is lower bounded, the Lipschitz constants and other constants can be chosen independent of $\beta$. Then $\tilde{\eta}$ is also independent of $\beta$. 
\end{proof}

\section{Proofs of Section \ref{sec:mf-regret}}

\subsection{The displacement convexity method}

\begin{proof}[{\bf Proof of Lemma \ref{lem:rho*}}]
	To show the existence of a solution to \eqref{eq:rho*}, we consider the space of continuous functions on $[0, T]$ bounded by $C_\sigma$:
	\begin{equation*}
		\cU := \Big\{ u \in C([0, T]) \, \Big| \sup_{t \in [0, T]} |u(t)| \leq C_\sigma \Big\}.
	\end{equation*}
	Define a functional $f$ that maps $u_{[0, T]} \in \cU$ to a probability density as
	\begin{equation*}
		f(\theta; u_{[0, T]}) := \frac{1}{A(u_{[0, T]})}  \exp \left[ - \frac{\lambda}{2 \beta} |\theta|^2 -\frac{2}{\beta T} \int^T_0  \left( u_t -  Y_t \right) \sigma(X_t, \theta) dt \right],
	\end{equation*} 
	where $A(u_{[0, T]}) := \int \exp \left[ - \frac{\lambda}{2 \beta} |\theta|^2 -\frac{2}{\beta T} \int^T_0  \left( u_t -  Y_t \right) \sigma(X_t, \theta) dt \right] d\theta$.
	
	Consider
	\begin{equation}
		\Phi(t; u_{[0, T]}) := \int \sigma(X_t, \theta) f(\theta; u_{[0, T]}) d \theta. 
	\end{equation}
	Since $\sigma(X_t, \theta)$ is uniformly bounded by $C_\sigma$, the functional $\Phi$ is a map from $\cU$ to itself. The continuity of $\Phi$ in time $t$ follows from the continuity of $X_t$. 
	
	Moreover, $\Phi$ is a compact operator \cite[Section 1.11.2, Definition 11]{zeidler1995applied}. Indeed, $\Phi$ is continuous as an operator between normed spaces. By Arzel\`a-Ascoli theorem, for any bounded sequence $\{u_n\}_{n \geq 1}$, there exists a subsequence $\{u_{n'}\}$ of $\{u_n\}$ such that the sequence $\Phi(\cdot; u_{n'})$ is convergent in $C([0, T])$. Since $\cU$ is a bounded, closed, convex, nonempty subset of a Banach space $C([0, T])$, the Schauder fixed-point theorem \cite[Section 1.15, Theorem 1.C]{zeidler1995applied} shows that $\Phi$ has at least one fixed point $u^*$, such that $\Phi(t; u^*_{[0, T]}) = u^*(t)$. With $\rho^*(\theta) = f(\theta; u^*_{[0, T]})$, we have proved the existence.
	
	To show the uniqueness, we argue by contradiction. Assume there are two different solutions $\rho^*_1$ and $\rho^*_2$. Note that
	\begin{align*}
		H_i(\theta) := \frac{\lambda}{2 \beta} |\theta|^2 + \frac{2}{\beta T} \int^T_0  \left( \ang{\rho^*_i, \sigma(X_t, \cdot)} -  Y_t \right) \sigma(X_t, \theta) dt = - \log \rho^*_i(\theta) - \log A_i, \; i = 1, 2,
	\end{align*}
	where $A_i$ is the normalization constant in $\rho^*_i$.
	
	Denote $u_i(t) := \ang{\rho^*_i, \sigma(X_t, \cdot)}$, $i=1, 2$. Then the first form of $H_i$ implies that
	\begin{equation*}
		H_1(\theta) - H_2(\theta) = \frac{2}{\beta T} \int^T_0 (u_1(t) - u_2(t)) \sigma(X_t, \theta) dt.
	\end{equation*}
	Multiplying both sides by $\rho^*_1 - \rho^*_2$ and integrating over $\theta$,
	\begin{align*}
		\int (\rho^*_1(\theta) - \rho^*_2(\theta)) (H_1(\theta) - H_2(\theta)) d\theta & = \frac{2}{\beta T} \int (\rho^*_1(\theta) - \rho^*_2(\theta)) \int^T_0 (u_1(t) - u_2(t)) \sigma(X_t, \theta) dt d\theta \\
		& = \frac{2}{\beta T} \int^T_0 (u_1(t) - u_2(t))^2 dt \geq 0.
	\end{align*}
	On the other hand, if we use the second form of $H_i$,
	\begin{align*}
		& \int (\rho^*_1(\theta) - \rho^*_2(\theta)) (H_1(\theta) - H_2(\theta)) d\theta \\
		& \quad = \int (\rho^*_1(\theta) - \rho^*_2(\theta)) (- \log \rho^*_1(\theta) - \log A_1 + \log \rho^*_2(\theta) + \log A_2) d\theta \\
		& \quad = - D_{KL}(\rho^*_1 \| \rho^*_2)  - D_{KL}(\rho^*_2 \| \rho^*_1) \leq 0.
	\end{align*}
	Comparing these two inequalities, we must have $\rho^*_1 = \rho^*_2$.
	
	Finally, we prove the optimality of $\rho^*$. For any density $\rho$, we have
	\begin{align*}
		\beta D_{KL}(\rho \| \rho^*) = & \beta \int \rho(\theta) \log \rho(\theta) d\theta  + \beta \log(A) + \frac{\lambda}{2} \int |\theta|^2 \rho(\theta) d \theta  \\
		& + \frac{2}{T} \int^T_0 \ang{\rho, \sigma(X_t, \cdot)} (\ang{\rho^*, \sigma(X_t, \cdot)} - Y_t) dt.
	\end{align*}
	Hence,
	\begin{align*}
		\int^T_0 F(\rho, Z_t) dt = & \int^T_0 \Big(\ang{\rho, \sigma(X_t, \cdot)}^2 - 2 \ang{\rho^*, \sigma(X_t, \cdot)} \ang{\rho, \sigma(X_t, \cdot)}  \Big) dt + \beta T D_{KL}(\rho \| \rho^*) - \beta T \log(A).
	\end{align*}
	It also implies that
	\begin{align*}
		\int^T_0 F(\rho^*, Z_t) dt = & - \int^T_0 \ang{\rho^*, \sigma(X_t, \cdot)}^2  dt - \beta T \log(A).
	\end{align*}
	Therefore,
	\begin{align*}
		\int^T_0 F(\rho, Z_t) dt - \int^T_0 F(\rho^*, Z_t) dt = \int^T_0 ( \ang{\rho, \sigma(X_t, \cdot)} - \ang{\rho^*, \sigma(X_t, \cdot)})^2  dt + \beta T D_{KL}(\rho \| \rho^*) \geq 0.
	\end{align*}
	It ensures the optimality of $\rho^*$.
\end{proof}

\begin{proof}[{\bf Proof of Theorem \ref{thm:dc-regret}}]
	Fix $\rho \in \cP^r_2(\R^d)$. Consider the functional $\phi(\rho_t) := \frac{1}{2}\cW^2_2(\rho, \rho_t)$. \citet[Corollary 10.2.7]{ambrosio2008gradient} proved the differentiability of the Wasserstein distance. The Fr\'echet subdifferential of this specific functional $\phi$ evaluated at $\rho_t$ is the mapping $\theta \mapsto \theta - T^{\rho}_{\rho_t}(\theta)$. Moreover, Property (3) in Theorem \ref{thm:Lip-grad} ensures that we can apply the chain rule in \citet[Proposition 10.3.18]{ambrosio2008gradient} to $\phi(\rho_t)$. Define a vector field 
	\begin{equation*}
		\xi_t(\theta) :=  \beta \nabla \log \rho_t + \nabla\left(\frac{\delta U(\rho_t, Z_t)}{\delta \rho_t} \right).
	\end{equation*}
	Theorem \ref{thm:Lip-grad} proved that $\rho_t$ is $C^1$ in space. Similar to \citet[Lemma 10.4.1]{ambrosio2008gradient}, we can show that $\xi_t(\theta)$ belongs to the subdifferential $\partial F(\rho_t, Z_t)$. The tangent velocity vector $v_t(\theta)$ of $\rho_t$ takes the negative direction of $\xi_t(\theta)$, given by $v_t(\theta) = - \xi_t(\theta)$.	Hence, 
	\begin{equation}
		\frac{d \cW^2_2(\rho, \rho_t)}{dt} = 2 \int \ang{v_t(\theta), \theta - T^\rho_{\rho_t}(\theta)}  \rho_t(d\theta) = 2 \int \ang{\xi_t(\theta), T^\rho_{\rho_t}(\theta) - \theta}  \rho_t(d\theta).
	\end{equation}
	Since we assumed $F(\cdot, Z_t)$ is $L$-geodesically convex,
	\begin{equation}
		\int \ang{\xi_t(\theta), T^\rho_{\rho_t}(\theta) - \theta} \rho_t(d\theta) \leq F(\rho, Z_t) - F(\rho_t, Z_t) - \frac{L}{2} \cW^2_2(\rho_t, \rho).
	\end{equation}
	Then
	\begin{equation*}
		\cW^2_2(\rho, \rho_T) - \cW^2_2(\rho, \rho_0) = \int^T_0 d \cW^2_2(\rho, \rho_t) \leq 2 \int^T_0 \Big(F(\rho, Z_t) - F(\rho_t, Z_t) - \frac{L}{2} \cW^2_2(\rho_t, \rho) \Big) dt.
	\end{equation*}
	Therefore,
	\begin{equation*}
		\int^T_0 F(\rho_t, Z_t) dt - \int^T_0 F(\rho, Z_t) dt \leq \frac{\cW^2_2(\rho, \rho_0) - \cW^2_2(\rho, \rho_T)}{2}  - \frac{L}{2} \int^T_0 \cW^2_2(\rho_t, \rho) dt.
	\end{equation*}
	We set $\rho = \rho^*$ to obtain the desired result.
\end{proof}

\begin{proof}[{\bf Proof of Lemma \ref{lem:dc-suff-cond}}]
	Let $(\rho_s)_{s \in [0,1]}$ be a geodesic connecting two absolutely continuous probability measures $\rho_0$ and $\rho_1$. There is a unique optimal map $\Phi$ from $\rho_0$ to $\rho_1$: $\rho_s = (\Phi_s)_{\#}\rho_0$, where $\Phi_s(\theta) = \theta + s(\Phi(\theta) - \theta)$. We denote $u(\theta) = \Phi(\theta) - \theta$. The condition for displacement convexity is $\frac{d^2}{ds^2} F(\rho_s, z) \geq 0$. 
	
	A classic result by \cite{mccann1997convexity} shows that, the entropy functional is displacement convex, but it is not strictly displacement convex on $\R^d$. Then the entropy term does not provide a positive constant to the lower bound. It only ensures non-negativity. The remaining terms of $F$ satisfy
	\begin{align*}
		\frac{d^2}{ds^2} U(\rho_s, z) = & 2 \Big[  \int \nabla \sigma(x, \Phi_s(\theta)) \cdot u(\theta) \rho_0(\theta) d \theta \Big]^2 \\
		& + 2 \Big(\int \sigma(x, \Phi_s(\theta)) \rho_0(\theta) d \theta - y \Big) \int u(\theta)^\top \nabla^2 \sigma(x, \Phi_s(\theta)) u(\theta) \rho_0(\theta) d \theta \\
		& + \lambda \int | u(\theta) |^2 \rho_0(\theta) d \theta.
	\end{align*}
	Therefore, a sufficient condition for $\frac{d^2}{ds^2} F(\rho_s, z) \geq 0$ is $\lambda \geq 2 \left( C_\sigma + C_z \right) \| \sigma_{\theta \theta}\|_{op, \infty}$.
\end{proof}

\subsection{The PL method for dynamic regret}
\begin{proof}[{\bf Proof of Lemma \ref{lem:mu*LSI}}] 
	The equilibrium $\mu^*_t$ for \eqref{eq:equilibrium} should satisfy 
	\begin{equation*}
		\nabla \cdot \left( \mu^*_t \nabla \left(\frac{\delta F(\mu^*_t, Z_t)}{ \delta \rho}\right) \right) = 0.
	\end{equation*}
	With integration by parts, it implies 
	\begin{equation*}
		\nabla \left(\frac{\delta F(\mu^*_t, Z_t)}{ \delta \rho}\right) = 0 \quad \text{$\mu^*_t$-a.s.}
	\end{equation*}
	Then $\frac{\delta F(\mu^*_t, Z_t)}{ \delta \rho}$ is a constant $\mu^*_t$-a.s. We obtain the self-consistent equation \eqref{eq:mu*}.
	
	To prove the existence and uniqueness of the solution $\mu^*_t$ to \eqref{eq:mu*}, we introduce a probability density:
	\begin{equation*}
		f_t (\theta, u) := \frac{1}{C_t(u)}  \exp \left[ - \frac{\lambda}{2 \beta} |\theta|^2 -\frac{2}{\beta} \sigma(X_t, \theta) \left(u -  Y_t \right)  \right],
	\end{equation*}
	where the function $C_t(u) = \int  \exp \left[ - \frac{\lambda}{2 \beta} |\theta|^2 -\frac{2}{\beta} \sigma(X_t, \theta) \left(u -  Y_t \right)  \right] d \theta$. Note that $\sigma(X_t, \theta)$ is assumed to be bounded. The path $(X_t, Y_t)$ is continuous almost surely and thus bounded. For each given $u \in \R$, the quadratic term $-\frac{\lambda}{2\beta} |\theta|^2$ dominates at infinity. Hence, the partition function $C_t(u)$ is finite and positive for each $u \in \R$. Moreover, $C_t(u)$ is continuous and differentiable in $u$ by applying dominated convergence theorem on compact intervals of $u$.

	Define a function 
	\begin{equation*}
		\Phi_t(u) := \int \sigma(X_t, \theta) f_t(\theta, u) d \theta.
	\end{equation*}
	Since $\sigma$ is bounded by $C_\sigma$, the expectation $\Phi_t(u)$ must also be bounded by $C_\sigma$ for all $ u \in \R$. The function $\Phi$ maps the real line into the compact interval $[-C_\sigma, C_\sigma]$. Moreover, $\Phi_t(u)$ is continuous in $u$ by the dominated convergence theorem. Restricting the domain to $[-C_\sigma, C_\sigma]$, we have a continuous map $\Phi_t$ from $[-C_\sigma, C_\sigma]$ to itself. By Brouwer's fixed point theorem, there exists at least one $u^* \in [-C_\sigma, C_\sigma]$ such that $u^* = \Phi_t(u^*)$. Then $\mu^*_t = f_t (\theta, u^*)$, which proves the existence.

	To show the uniqueness, by the dominated convergence theorem, we can differentiate under the integral sign to obtain
	\begin{equation}
		\begin{aligned} 
			\partial_u \Phi_t(u) & = - \frac{2}{\beta} \left[  \int \sigma(X_t, \theta)^2 f_t(\theta, u) d \theta - \left(  \int \sigma(X_t, \theta) f_t(\theta, u) d \theta \right)^2 \right] \\
			& = - \frac{2}{\beta} \text{Var}_{\theta \sim f_t} [\sigma(X_t, \theta)] \leq 0.
		\end{aligned}
	\end{equation}
	Hence, the fixed point $u^*$ must be unique.
	
	The Bakry-\'Emery criterion shows that Gaussian distribution $ \propto e^{- \frac{\lambda}{2\beta} | \theta |^2}$ satisfies the $LSI(\lambda/\beta)$. Since we have assumed $|\sigma| \leq C_\sigma$ and $|Y_t| \leq C_z$, the perturbation part in $\mu^*_t$ satisfies
	\begin{equation*}
		\left|\frac{2}{\beta}  \sigma(X_t, \theta) \left(  \int \sigma(X_t, \vartheta)\mu^*_t(d\vartheta) -  Y_t \right) \right| \leq \frac{2}{\beta} (C^2_\sigma + C_z C_\sigma).
	\end{equation*}
	The oscillation constant is at most $C_{osc} = \frac{4}{\beta} (C^2_\sigma + C_z C_\sigma)$. By Holley-Stroock perturbation Lemma \ref{lem:HolleyStroock}, $\mu^*_t$ satisfies the $LSI(\lambda e^{-C_{osc}}/\beta)$.
\end{proof}

\begin{proof}[{\bf Proof of Lemma \ref{lem:F_decom}}] 
	First, we rewrite $\mu^*_t$ in \eqref{eq:mu*} as
	\begin{equation}
		\mu^*_t(\theta) = \frac{1}{C_t} \exp \left[  - \frac{V_t(\theta)}{\beta} - \frac{a_t(\theta)}{\beta} \ang{\mu^*_t, \sigma(X_t, \cdot)} \right],
	\end{equation}
	with $V_t(\theta) = \frac{\lambda}{2} |\theta|^2 - 2  \sigma(X_t, \theta) Y_t$ and $ a_t(\theta) = 2 \sigma(X_t, \theta)$.  It simplifies the KL divergence as
	\begin{align*}
		D_{KL} (\rho \parallel \mu^*_t) & = \int \rho \log\rho d\theta - \int \rho \log \mu^*_t d\theta \\
		& = \int \rho \log\rho d\theta + \int \rho \Big(  \frac{V_t(\theta)}{\beta} + \frac{a_t(\theta)}{\beta} \ang{\mu^*_t, \sigma(X_t, \cdot)} + \log C_t \Big)  d\theta.
	\end{align*}
	Therefore,
	\begin{align*}
		\beta \int \rho \log \rho d\theta = \beta 	D_{KL} (\rho \parallel \mu^*_t) - \ang{\rho, V_t} - \ang{\rho, a_t} \ang{\mu^*_t, \sigma(X_t, \cdot)} - \beta \log C_t.
	\end{align*}
	Then,
	\begin{align*}
		F(\rho, Z_t) = (\ang{\rho, \sigma(X_t, \cdot)})^2- 2 \ang{\mu^*_t, \sigma(X_t, \cdot)} \ang{\rho, \sigma(X_t, \cdot)} + \beta 	D_{KL} (\rho \parallel \mu^*_t) - \beta \log C_t.
	\end{align*}
	Similarly, we have
	\begin{align*}
		F(\mu^*_t, Z_t) = - (\ang{\mu^*_t, \sigma(X_t, \cdot)})^2 - \beta \log C_t.
	\end{align*}
	We obtain the desired result by calculating their difference.
\end{proof}

\begin{proof}[{\bf Proof of Lemma \ref{lem:PL}}]
	{\bf Step 1}. For notational simplicity, define $J : =\ang{\rho, \sigma(X_t, \cdot)} - \ang{\mu^*_t, \sigma(X_t, \cdot)}$. Since $\sigma(X_t, \theta)$ is bounded, we have
	\begin{align*}
		| J | = \left| \int \sigma(X_t, \theta) \rho(d\theta) - \sigma(X_t, \theta) \mu^*_t(d\theta) \right|  \leq 2 C_\sigma \| \rho - \mu^*_t \|_{TV}.
	\end{align*}
	By Pinsker's inequality,
	\begin{equation*}
		\| \rho - \mu^*_t \|_{TV} \leq \sqrt{\frac{1}{2} D_{KL}(\rho \parallel \mu^*_t)}.
	\end{equation*}
	Hence,
	\begin{equation*}
		J^2 \leq 4 C^2_\sigma \| \rho - \mu^*_t \|^2_{TV} \leq 2 C^2_\sigma D_{KL}(\rho \parallel \mu^*_t).
	\end{equation*}
	Then Lemma \ref{lem:F_decom} leads to
	\begin{equation}\label{eq:F_KL1}
		F(\rho, Z_t) - F(\mu^*_t, Z_t) \leq ( 2 C^2_\sigma + \beta) D_{KL}(\rho \parallel \mu^*_t).
	\end{equation}

	{\bf Step 2}. To deal with the right-hand side, we note that
	\begin{align*}
		\nabla \left(\frac{\delta F(\rho, Z_t)}{ \delta \rho}\right) & = \beta \nabla \log \rho + \lambda \theta + 2 ( \ang{\rho, \sigma(X_t, \cdot)} - Y_t) \nabla \sigma(X_t, \theta), \\
		\nabla \log \Big( \frac{\rho}{\mu^*_t} \Big) & = \nabla \log \rho + \frac{\lambda}{\beta} \theta + \frac{2}{\beta} (\ang{\mu^*_t, \sigma(X_t, \cdot)} - Y_t) \nabla \sigma(X_t, \theta) .
	\end{align*}
	Then
	\begin{equation}\label{eq:diff_grad1}
		\nabla \left(\frac{\delta F(\rho, Z_t)}{ \delta \rho}\right) = \beta \nabla \log \Big( \frac{\rho}{\mu^*_t} \Big) - 2 \nabla \sigma(X_t, \theta) \ang{\mu^*_t, \sigma(X_t, \cdot)} + 2 \nabla \sigma(X_t, \theta) \ang{\rho, \sigma(X_t, \cdot)}. 
	\end{equation}
	
	For clarity, we define the following shorthand: $v := \nabla \log \left( \frac{\rho}{\mu^*_t} \right)$ and $G := \nabla \left(\frac{\delta F(\rho, Z_t)}{ \delta \rho}\right)$. Then  \eqref{eq:diff_grad1} becomes
	\begin{equation}
		G = \beta v + 2 J \nabla \sigma(X_t, \theta).
	\end{equation}
	
	To obtain a lower bound of the gradient norm $\|G\|_{L^2(\rho)}^2$, we apply the vector inequality $|A+B|^2 \ge (1 - \varepsilon)|A|^2 - \frac{1 - \varepsilon}{\varepsilon}|B|^2$ for any $\varepsilon \in (0,1)$.  It yields
	\begin{align*}
		\|G\|_{L^2(\rho)}^2 &= \| \beta v + 2 J \nabla \sigma(X_t, \theta) \|_{L^2(\rho)}^2 \ge (1 - \varepsilon) \beta^2 \|v\|_{L^2(\rho)}^2 - \frac{1 - \varepsilon}{\varepsilon} \| 2 J \nabla \sigma(X_t, \theta) \|_{L^2(\rho)}^2.
	\end{align*}
	
	Note that $\| v \|_{L^2(\rho)}^2 = I(\rho | \mu^*_t)$ is the relative Fisher information. Since Lemma \ref{lem:mu*LSI} shows that $\mu^*_t$ satisfies the $LSI(\alpha)$, we obtain
	\begin{equation*}
		I(\rho | \mu^*_t) \ge 2 \alpha D_{KL}(\rho \parallel \mu^*_t).
	\end{equation*}
	
	The second term satisfies
	\begin{align*}
		\| 2 J \nabla \sigma(X_t, \theta) \|_{L^2(\rho)}^2 = 4 J^2 \int |\nabla \sigma(X_t, \theta) |^2 \rho(\theta) d\theta \le 4 J^2 C_{1}^2 \le 8 C_{1}^2 C_\sigma^2 D_{KL}(\rho \| \mu^*_t),
	\end{align*}
	where we use the Pinsker's inequality to bound $J$ in the last inequality.
	
	Substitute these back, we obtain
	\begin{equation}\label{eq:Gbnd}
		\|G\|_{L^2(\rho)}^2 \geq (1 - \varepsilon) \left[ 2 \alpha \beta^2 - \frac{8 C_\sigma^2 C_{1}^2}{\varepsilon} \right] D_{KL}(\rho \| \mu^*_t).
	\end{equation}
	
	For the lower bound \eqref{eq:Gbnd} to be positive, we require 
	\begin{equation*}
		2 \alpha \beta^2 > \frac{8 C_\sigma^2 C_{1}^2}{\varepsilon}.
	\end{equation*}

	Combining \eqref{eq:F_KL1} with \eqref{eq:Gbnd}, we can choose the Polyak-Lojasiewicz constant as
	\begin{equation}
		C_{PL} = \frac{2 C_\sigma^2 + \beta}{(1 - \varepsilon) \left( 2 \alpha \beta^2 - \frac{8 C_\sigma^2 C_{1}^2}{\varepsilon} \right)}.
	\end{equation}
	
	One can choose $\varepsilon$ such that the denominator is maximized. But we simply set $\varepsilon = 1/2$ to obtain a simple constant:
	\begin{equation}
		C_{PL} = \frac{2 C_\sigma^2 + \beta}{\alpha \beta^2 - 8 C_\sigma^2 C_{1}^2}.
	\end{equation}
	This is valid if $\alpha \beta^2 > 8 C_\sigma^2 C_{1}^2$.
	
\end{proof}

\begin{proof}[{\bf Proof of Lemma \ref{lem:dFmu*}}]
	{\bf Step 1: Differentiability of $\mu^*(\theta, \cdot)$}. We first prove that $m(z)$ is differentiable. Consider the function $H$ parameterized by constant $m$:
	\begin{equation}
		H(\theta, z; m) := \frac{\lambda}{2}|\theta|^{2} + 2 \sigma(x,\theta) (m - y).
	\end{equation}
	Denote the parameterized probability measure by
	\begin{equation}
		\mu(\theta, z; m) := \frac{e^{-\frac{H(\theta, z; m)}{\beta}}}{\int e^{-\frac{H(\vartheta, z; m)}{\beta}} d\vartheta}.
	\end{equation}
	The mean $m(z)$ is the root of the following function
	\begin{equation}
		\Phi(z, m) := m - \int \sigma(x, \theta) \mu(\theta, z; m) d\theta = 0.
	\end{equation}
	The proof of Lemma \ref{lem:mu*LSI} ensures the existence and uniqueness of $m(z)$. To prove that $m(z)$ is differentiable, it is sufficient to use implicit function theorem to show the local differentiability. The function $\Phi$ is continuously differentiable on $(z, m)$. Moreover, 
	\begin{equation*}
		\frac{\partial \Phi}{\partial m} = 1 + \frac{2}{\beta} \text{Var}_{\theta \sim \mu} \left( \sigma(x, \theta) \right)
	\end{equation*}
	is non-zero. Therefore, $m(z)$ is differentiable. 
	
	We go back to the unparameterized case. The normalization constant in $\mu^*_t$ can be written as $C_t = C(z) := \int e^{-\frac{1}{\beta}H(\theta, z)}d\theta$. Then
	\begin{equation*}
		\mu^{*}(\theta, z) = \frac{1}{C(z)} e^{-\frac{1}{\beta}H(\theta, z)}.
	\end{equation*}
	Recall that $H(\theta, z)$ depends on $m(z)$. The unparameterized density $\mu^*(\theta, z)$ is also differentiable on $z$.
	
	{\bf Step 2: Gradients of $m(z)$}. Since  
	\begin{align*}
		D_z \mu^*(\theta, z) &= \mu^*(\theta, z) D_z \log \mu^*(\theta, z) \quad \text{ and } \quad \log \mu^*(\theta, z) = -\frac{1}{\beta} H(\theta, z) - \log C(z),
	\end{align*} 
	we obtain
	\begin{equation}\label{eq:Dzmu*}
		D_z \mu^*(\theta, z) =  \left( -\frac{1}{\beta}D_z H(\theta, z) + \frac{1}{\beta} \int D_z H(\theta, z) \mu^*(\theta, z)d\theta\right) \mu^*(\theta, z).
	\end{equation}
	The definition of $H$ indicates that
	\begin{equation}\label{eq:DzH}
		D_z H(\theta, z) = 2 D_z \sigma(x, \theta)  (m(z) - y) + 2 \sigma(x, \theta) (D_z  m(z) - D_z  y).
	\end{equation}
	Then
	\begin{align}
		D_{z}m(z) = & \int D_z \sigma(x, \theta) \mu^*(\theta, z) d \theta + \int \sigma(x, \theta) D_z \mu^*(\theta, z) d \theta \label{def:Dzm} \\
		= & \int D_z \sigma(x, \theta) \mu^*(\theta, z) d \theta - \frac{2}{\beta} e(z) \text{Cov}_{\theta \sim \mu^*} \big( \sigma(x, \theta),  D_z \sigma (x, \theta) \big) \nonumber \\
		& - \frac{2}{\beta} (D_z m(z) - D_z y) \text{Var}_{\theta \sim \mu^*}[\sigma(x, \theta)]. \nonumber
	\end{align}
	Rearranging the terms, we find
	\begin{align*}
		D_z m(z) = \frac{1}{1+\frac{2}{\beta} \text{Var}_{\theta \sim \mu^*}[ \sigma(x, \theta)]} \Big[ & \int D_z \sigma(x, \theta) \mu^*(\theta, z) d \theta \\ 
		& - \frac{2e(z)}{\beta} \text{Cov}_{\theta \sim \mu^*} \big( \sigma(x, \theta),  D_z \sigma (x, \theta) \big) \\
		& + \frac{2}{\beta} D_z y \text{Var}_{\theta \sim \mu^*} [\sigma(x, \theta)] \Big].
	\end{align*}
	Splitting it for $x$ and $y$, we obtain the claim as desired.
	
	{\bf Step 3: Formula of $\cS_z$}.
	By definition, for scalar-valued $f$,
	\begin{align*}
		\cS_z(z; f) =& \int f(z, \theta) D_{z}\mu^{*}(\theta, z)d\theta \\
		=& -\frac{1}{\beta} \int f(z, \theta) D_z H(\theta, z)\mu^*(\theta, z) d\theta \\
		& +\frac{1}{\beta}\int f(z, \theta) \mu^{*}(\theta, z)d\theta \int D_z H(\theta, z) \mu^*(\theta, z) d\theta \\
		=& - \frac{1}{\beta} \text{Cov}_{\theta \sim \mu^*} \big( f(z, \theta),  D_z H(\theta, z) \big).
	\end{align*}
	
	{\bf Step 4: Gradients in It\^o's lemma}. 
	To handle the dependence of $\mu^*_t$ on $Z_t$, we introduce 
	\begin{equation}
		G(Z_t) := F(\mu^*(\cdot, Z_t), Z_t) = F(\mu^*_t, Z_t).
	\end{equation}
	Recalling the proof of Lemma \ref{lem:F_decom}, we found that $F(\mu^*_t, Z_t) = - m(Z_t)^2 - \beta \log C(Z_t)$. Then
	\begin{equation*}
		D_z G(Z_t) = -2 m(Z_t) D_z m(Z_t) - \beta\frac{D_z C(Z_t)}{C(Z_t)}, 
	\end{equation*}
	with
	\begin{equation*}
		\frac{D_z C(Z_t)}{C(Z_t)} = -\frac{1}{\beta} \int D_z H(\theta, Z_t) \mu^*(\theta, Z_t) d\theta.
	\end{equation*}
	We can simplify
	\begin{align*}
		- \beta\frac{D_z C(Z_t)}{C(Z_t)}  &= 2 e(Z_t) \int D_z \sigma(X_t, \theta) \mu^*(\theta, Z_t) d \theta + 2 (D_z  m(Z_t) - D_z  Y_t) \int \sigma(X_t, \theta) \mu^*(\theta, Z_t) d \theta. 
	\end{align*}
	Combining terms for $D_z G$, we obtain
	\begin{align}
		D_z G(Z_t) &= -2m(Z_t) D_z m(Z_t) + 2 e(Z_t) \int D_z \sigma(X_t, \theta) \mu^*(\theta, Z_t) d \theta + 2 (D_z  m(Z_t) - D_z  Y_t) m(Z_t)  \nonumber \\
		&=  2 e(Z_t) \ang{ \mu^*(\cdot, Z_t),  D_z \sigma(X_t, \cdot)} - 2 D_z  Y_t m(Z_t). \label{eq:DzG}
	\end{align}
	If we split it for $x$ and $y$,
	\begin{align*}
		D_x G(Z_t) &= 2 (m(Z_t) - Y_t) \ang{ \mu^*(\cdot, Z_t),  D_x \sigma(X_t, \cdot)}, \\
		D_y G(Z_t)&= - 2 m(Z_t).
	\end{align*}
	They are equal to $F_x(\mu^*_t, Z_t)$ and $F_y(\mu^*_t, Z_t)$, respectively. It means the dependence of $\mu^*_t$ and $Z_t$ does not change the gradients.
	
	{\bf Step 5: Hessian in It\^o's lemma}.  Since $D_y \sigma = 0$, it follows from the definition of $\cS_z(\cdot; \sigma)$ that
	\begin{align*}
		D^2_{yy} G(Z_t) &= - 2 D_y m(Z_t) = - 2 \cS_y(Z_t; \sigma) = F_{yy}(\mu^*_t, Z_t) - 2 \cS_y(Z_t; \sigma), \\
		D^2_{xy}G(Z_t) & = - 2 D_x m(Z_t) = - 2 \ang{\mu^*_t, D_x \sigma(X_t, \cdot)} - 2 \cS_x(Z_t; \sigma) \\
		& = F_{xy}(\mu^*_t, Z_t) - 2 \cS_x(Z_t; \sigma),
	\end{align*}
	and
	\begin{align*}
		D^2_{xx} G(Z_t) = & 2 D_x m(Z_t) \ang{ \mu^*(\cdot, Z_t),  D_x \sigma(X_t, \cdot)}^\top + 2 (m(Z_t) - Y_t) \ang{ \mu^*(\cdot, Z_t),  D^2_{xx} \sigma(X_t, \cdot)} \\
		& +  2 (m(Z_t) - Y_t) \int D_x \sigma(X_t, \theta) [D_x \mu^*(\theta, Z_t)]^\top d \theta \\
		= & 2(\ang{\mu^*_t, D_x \sigma(X_t, \cdot)} + \cS_x(Z_t;\sigma)) \ang{ \mu^*(\cdot, Z_t),  D_x \sigma(X_t, \cdot)}^\top \\
		& + 2 (m(Z_t) - Y_t) \ang{ \mu^*(\cdot, Z_t),  D^2_{xx} \sigma(X_t, \cdot)} + +  2 (m(Z_t) - Y_t) \cS_x(Z_t; D_x\sigma) \\
		= & F_{xx}(\mu^*_t, Z_t) + 2 \cS_x(Z_t; \sigma) \ang{ \mu^*(\cdot, Z_t),  D_x \sigma(X_t, \cdot)}^\top + 2 (m(Z_t) - Y_t) \cS_x (Z_t; D_x\sigma).
	\end{align*}
	
	{\bf Step 6: Bounds on terms related to $\cS_z(Z_t ; f)$}. Since
	\begin{align*}
		& \| D_x m(Z_t) \|_{L^\infty} \\
		& = \Big\| \frac{\beta}{\beta + 2\text{Var}_{\theta \sim \mu^*_t} [ \sigma(X_t, \theta)]} \Big[ \int \sigma_x(X_t, \theta) \mu^*(\theta, Z_t) d \theta - \frac{2e(Z_t)}{\beta} \text{Cov}_{\theta \sim \mu^*_t} \big( \sigma(X_t, \theta), \sigma_x(X_t, \theta) \big) \Big] \Big\|_{L^\infty} \\
		& \leq  C_1 + \frac{2}{\beta}(C_\sigma + C_z) C_1 C_\sigma
	\end{align*}
	and
	\begin{align*}
		\| D_y m(Z_t) \|_{L^\infty} & = \Big\| \frac{2  \text{Var}_{\theta \sim \mu^*_t} [\sigma(X_t, \theta)]}{\beta + 2 \text{Var}_{\theta \sim \mu^*_t} [\sigma(X_t, \theta)]} \Big\|_{L^\infty}  \leq \frac{2 C^2_\sigma}{\beta},
	\end{align*}
	we obtain
	\begin{align*}
		\| D_x H(\theta, Z_t) \|_{L^\infty} = & \big\| 2 D_x \sigma(X_t, \theta)  (m(Z_t) - Y_t) + 2 \sigma(X_t, \theta) D_x  m(Z_t) \big\|_{L^\infty} \\
		\leq & 2 C_1 (C_\sigma + C_z) + 2 C_\sigma \Big( C_1 + \frac{2}{\beta}(C_\sigma + C_z) C_1 C_\sigma \Big)
	\end{align*}
	and
	\begin{align*}
		\| D_y H(\theta, Z_t) \|_{L^\infty} = & \big\| 2 \sigma(X_t, \theta) (D_y  m(Z_t) - 1) \big\|_{L^\infty} \\
		\leq & 2 C_\sigma \Big( \frac{2 C^2_\sigma}{\beta} + 1\Big).
	\end{align*}
	
	Therefore,
	\begin{align*}
		\| \cS_x(Z_t; \sigma) \|_{L^\infty} =& \frac{1}{\beta} \Big\| \text{Cov}_{\theta \sim \mu^*_t} \big( \sigma(X_t, \theta),  D_x H(\theta, Z_t) \big) \Big\|_{L^\infty}  \\
		\leq & \frac{C_\sigma}{\beta} \Big[ 2 C_1 (C_\sigma + C_z) + 2 C_\sigma \Big( C_1 + \frac{2}{\beta}(C_\sigma + C_z) C_1 C_\sigma \Big) \Big],
	\end{align*}
	\begin{align*}
		\| \cS_y(Z_t; \sigma) \|_{L^\infty} =& \frac{1}{\beta} \Big\| \text{Cov}_{\theta \sim \mu^*_t} \big( \sigma(X_t, \theta),  D_y H(\theta, Z_t) \big) \Big\|_{L^\infty} \leq  \frac{2 C^2_\sigma}{\beta}  \Big( \frac{2 C^2_\sigma}{\beta} + 1\Big),
	\end{align*}
	\begin{align*}
		\| \cS_x(Z_t; \sigma_x) \|_{L^\infty} =& \frac{1}{\beta} \Big\| \text{Cov}_{\theta \sim \mu^*_t} \big( D_x \sigma(X_t, \theta),  D_x H(\theta, Z_t) \big) \Big\|_{L^\infty}  \\
		\leq & \frac{C_1}{\beta} \Big[ 2 C_1 (C_\sigma + C_z) + 2 C_\sigma \Big( C_1 + \frac{2}{\beta}(C_\sigma + C_z) C_1 C_\sigma \Big) \Big].
	\end{align*}
\end{proof}

\begin{proof}[{\bf Proof of Theorem \ref{thm:PL-regret}}]
	The third property in Theorem \ref{thm:Lip-grad} ensures that we can apply the chain rule in \citet[Proposition 10.3.18]{ambrosio2008gradient} and the It\^o's formula to obtain
	\begin{equation}
		\begin{aligned}
			d F(\rho_t, Z_t) = \int \frac{\delta F(\rho_t, Z_t)}{\delta \rho} \partial_t \rho_t d\theta dt + \cL_z[F] (\rho_t, Z_t).
		\end{aligned}
	\end{equation} 
	With integration by parts, the first term is
	\begin{equation}
		\begin{aligned}
			\int \frac{\delta F(\rho_t, Z_t)}{\delta \rho} \partial_t \rho_t d\theta & = \int  \frac{\delta F(\rho_t, Z_t)}{\delta \rho} \nabla \cdot \left( \rho_t \nabla \left(\frac{\delta F(\rho_t, Z_t)}{ \delta \rho}\right) \right) d\theta  \\
			& = - \int \Big| \nabla \Big(\frac{\delta F(\rho_t, Z_t)}{ \delta \rho}\Big)  \Big|^2 \rho_t d\theta = - \Big\|\nabla \Big( \frac{\delta F(\rho_t, Z_t)}{\delta \rho} \Big) \Big\|_{L^2(\rho_t)}^2.
		\end{aligned}
	\end{equation}
	Lemma \ref{lem:dFmu*} obtained $dF(\mu^*_t, Z_t)$ in \eqref{eq:dFmu*}. 
	
	We have that $F^\top_x(\mu^*_t, Z_t) \Sigma_1 d W_t$, $F_y(\mu^*_t, Z_t) \Sigma_2 dW_t$, $F^\top_x(\rho_t, Z_t) \Sigma_1 d W_t$, and $F_y(\rho_t, Z_t) \Sigma_2 dW_t$ are true martingales. Then the dynamic of
	\begin{equation}
		E(t) := \E[F(\rho_t, Z_t) - F(\mu^*_t, Z_t)]
	\end{equation}
	is given by 
	\begin{equation}\label{eq:dE}
		\begin{aligned}
			d E(t) = & - \E \Big[ \Big\| \nabla \Big( \frac{\delta F(\rho_t, Z_t)}{\delta \rho} \Big) \Big\|_{L^2(\rho_t)}^2 \Big] dt + \E[\cA[F] (\rho_t, Z_t) - \cA[F] (\mu^*_t, Z_t)] dt \\
			& - \E \Big[\tr \big[\cS_x(Z_t; \sigma) \ang{\mu^*_t, \sigma_x(X_t, \cdot)}^\top \Sigma_1 \Sigma^\top_1 \big] \Big] dt \\
			& - \E \Big[ \tr \big[(\ang{\mu^*_t, \sigma(X_t, \cdot)}  - Y_t) \cS_x(Z_t; \sigma_x)  \Sigma_1 \Sigma^\top_1 \big] \Big]dt \\
			& + 2\E[\cS_x(Z_t; \sigma)^\top \Sigma_1 \Sigma^\top_2] dt + \E[\cS_y(Z_t; \sigma) \Sigma_2 \Sigma^\top_2] dt.
		\end{aligned}
	\end{equation}
	
	We calculate an upper bound of $\E[\cA[F] (\rho_t, Z_t) - \cA[F] (\mu^*_t, Z_t)]$ as follows:
	\begin{enumerate}
		\item The $b_1$ term in $\cA[F] (\rho_t, Z_t) - \cA[F] (\mu^*_t, Z_t)$.
		\begin{align*}
			& | (F_x(\rho_t, Z_t) - F_x(\mu^*_t, Z_t))^\top b_1 | \\
			& \leq  2 | b_1 |  | \ang{\rho_t, \sigma(X_t, \cdot)} \ang{\rho_t, \sigma_x (X_t, \cdot)}  - \ang{\mu^*_t, \sigma(X_t, \cdot)} \ang{\mu^*_t, \sigma_x (X_t, \cdot)} | \\
			& \quad + 2 C_z | b_1 | | \ang{\rho_t, \sigma_x(X_t, \cdot)} -  \ang{\mu^*_t, \sigma_x(X_t, \cdot)} |.
		\end{align*}
		Thanks to Assumption \ref{A:sigma} and the strong duality of Wasserstein distance of order 1, we have
		\begin{align*}
			& | \ang{\rho_t, \sigma(X_t, \cdot)} \ang{\rho_t, \sigma_x (X_t, \cdot)}  - \ang{\mu^*_t, \sigma(X_t, \cdot)} \ang{\mu^*_t, \sigma_x (X_t, \cdot)} | \\
			& \leq | \ang{\mu^*_t, \sigma(X_t, \cdot)} | | \ang{\rho_t, \sigma_x (X_t, \cdot)}  -  \ang{\mu^*_t, \sigma_x (X_t, \cdot)} | \\
			& \quad + | \ang{\rho_t, \sigma_x (X_t, \cdot)} | | \ang{\rho_t, \sigma(X_t, \cdot)} - \ang{\mu^*_t, \sigma(X_t, \cdot)} | \\
			& \leq C_\sigma C_2 \cW_1 (\rho_t, \mu^*_t) + C^2_1 \cW_1 (\rho_t, \mu^*_t)
		\end{align*}
		and
		\begin{align*}
			| \ang{\rho_t, \sigma_x(X_t, \cdot)} -  \ang{\mu^*_t, \sigma_x(X_t, \cdot)} | & \leq C_2 \cW_1 (\rho_t, \mu^*_t).
		\end{align*}
		
		\item The $b_2$ term in $\cA[F] (\rho_t, Z_t) - \cA[F] (\mu^*_t, Z_t)$. 
		
		We use $|\sigma_\theta| \leq C_1$ to obtain
		\begin{align*}
			| (F_y(\rho_t, Z_t) - F_y(\mu^*_t, Z_t)) b_2 | & \leq 2 | b_2 | \left| \ang{\rho_t, \sigma(X_t, \cdot)} - \ang{\mu^*_t, \sigma(X_t, \cdot)} \right| \\
			& \leq 2 | b_2 | C_1 \cW_1 (\rho_t, \mu^*_t).
		\end{align*}
		
		\item The $\Sigma_1 \Sigma^\top_1$ term in $\cA[F] (\rho_t, Z_t) - \cA[F] (\mu^*_t, Z_t)$.
		
		We need to bound $\frac{1}{2} \tr [ (F_{xx}(\rho_t, Z_t) - F_{xx}(\mu^*_t, Z_t)) \Sigma_1 \Sigma^\top_1 ]$. It suffices to bound the norm of the difference of the Hessian matrices.
		\begin{align*}
			& | F_{xx}(\rho_t, Z_t) - F_{xx}(\mu^*_t, Z_t) | \\
			& \leq 2 | \ang{\rho_t, \sigma} \ang{\rho_t, \sigma_{xx}} - \ang{\mu^*_t, \sigma} \ang{\mu^*_t, \sigma_{xx}} | \\
			& \quad + 2 | \ang{\rho_t, \sigma_x} \ang{\rho_t, \sigma_x}^\top - \ang{\mu^*_t, \sigma_x} \ang{\mu^*_t, \sigma_x}^\top | \\
			& \quad + 2 |Y_t| | \ang{\rho_t, \sigma_{xx}} - \ang{\mu^*_t, \sigma_{xx}} |.
		\end{align*}
		We handle these three parts separately. For the first part,
		\begin{align*}
			& | \ang{\rho_t, \sigma} \ang{\rho_t, \sigma_{xx}} - \ang{\mu^*_t, \sigma} \ang{\mu^*_t, \sigma_{xx}} | \\
			& \leq | \ang{\rho_t, \sigma} | | \ang{\rho_t, \sigma_{xx}} - \ang{\mu^*_t, \sigma_{xx}} | + | \ang{\mu^*_t, \sigma_{xx}} | | \ang{\rho_t, \sigma} - \ang{\mu^*_t, \sigma} | \\
			& \leq C_\sigma C_3 \cW_1 (\rho_t, \mu^*_t) +  C_1 C_2 \cW_1 (\rho_t, \mu^*_t).
		\end{align*}

		For the second part:
		\begin{align*}
			& | \ang{\rho_t, \sigma_x} \ang{\rho_t, \sigma_x}^\top - \ang{\mu^*_t, \sigma_x} \ang{\mu^*_t, \sigma_x}^\top | \\
			& \leq | \ang{\rho_t, \sigma_x} | | (\ang{\rho_t, \sigma_x} - \ang{\mu^*_t, \sigma_x})^\top | + | \ang{\rho_t, \sigma_x} - \ang{\mu^*_t, \sigma_x} | | \ang{\mu^*_t, \sigma_x}^\top | \\
			& \leq  2 C_1 C_2 \cW_1 (\rho_t, \mu^*_t).
		\end{align*}
		
		For the third part:
		\begin{align*}
			2 |Y_t| | \ang{\rho_t, \sigma_{xx}} - \ang{\mu^*_t, \sigma_{xx}} | \leq 2 C_z C_3 \cW_1 (\rho_t, \mu^*_t).
		\end{align*}
		
		Combining these, the trace term is bounded by
		\begin{align*}
			| \Sigma_1 \Sigma^\top_1 | (C_\sigma C_3 + 3 C_1 C_2 + C_z C_3)  \cW_1 (\rho_t, \mu^*_t).
		\end{align*}
		
		\item The $\Sigma_1 \Sigma^\top_2$ term in $\cA[F] (\rho_t, Z_t) - \cA[F] (\mu^*_t, Z_t)$.
		
		We have
		\begin{align*}
			& | \tr [ (F_{yx}(\rho_t, Z_t) - F_{yx}(\mu^*_t, Z_t)) \Sigma_1 \Sigma^\top_2 ] | \\
			& \leq | \Sigma_1 \Sigma^\top_2 | | F_{yx}(\rho_t, Z_t) - F_{yx}(\mu^*_t, Z_t) | \\
			& = 2 | \Sigma_1 \Sigma^\top_2 | | \ang{\rho_t, \sigma_x} - \ang{\mu^*_t, \sigma_x} | \\
			& \leq 2 | \Sigma_1 \Sigma^\top_2 | C_2 \cW_1 (\rho_t, \mu^*_t) \leq C_2 (| \Sigma_1 \Sigma^\top_1 | + | \Sigma_2 \Sigma^\top_2 |) \cW_1 (\rho_t, \mu^*_t).
		\end{align*}
		In the last inequality, we used $| \Sigma_1 \Sigma^\top_2 | \leq \frac{1}{2}(| \Sigma_1 \Sigma^\top_1 | + | \Sigma_2 \Sigma^\top_2 |)$.
	\end{enumerate}
	
	For notational simplicity, we define
	\begin{align*}
		A(t) & := P_1 |b_1| + 2 C_1 |b_2| + P_2 | \Sigma_1 \Sigma^\top_1 | + C_2 | \Sigma_2 \Sigma^\top_2 |, \\
		P_1 & := 2 (C_2  C_\sigma + C^2_1 + C_2  C_z), \\
		P_2 & := C_\sigma C_3 + 3 C_1 C_2 + C_z C_3 + C_2.
	\end{align*}
	Summing all terms up, we obtain
	\begin{align*}
		& \E[\cA[F] (\rho_t, Z_t) - \cA[F] (\mu^*_t, Z_t)] \\
		& \leq \E[ A(t) \cW_1 (\rho_t, \mu^*_t) ] \\
		& \leq \E[A(t)^2]^{1/2} \E[\cW_1 (\rho_t, \mu^*_t)^2]^{1/2} \\
		& \leq \big(P_1 \E[ | b_1 |^2 ]^{1/2}  + 2 C_1 \E[ | b_2 |^2 ]^{1/2} + P_2 \E[| \Sigma_1 \Sigma^\top_1 |^2]^{1/2} + C_2  \E[| \Sigma_2 \Sigma^\top_2 |^2]^{1/2} \big)  \E[\cW_1 (\rho_t, \mu^*_t)^2]^{1/2} \\
		& \leq (P_1 + 2 C_1 + P_2 + C_2) C_{b, \Sigma} \E[\cW_1 (\rho_t, \mu^*_t)^2]^{1/2} \\
		& = C_* C_{b, \Sigma} \E[\cW_1 (\rho_t, \mu^*_t)^2]^{1/2}.
	\end{align*}
	
	For the terms related to $\cS_z$, we have
	\begin{align*}
		& \Big| - \E \Big[\tr \big[\cS_x(Z_t; \sigma) \ang{\mu^*_t, \sigma_x(X_t, \cdot)}^\top \Sigma_1 \Sigma^\top_1 \big] \Big] - \E \Big[ \tr \big[(\ang{\mu^*_t, \sigma(X_t, \cdot)}  - Y_t) \cS_x(Z_t; \sigma_x)  \Sigma_1 \Sigma^\top_1 \big] \Big] \\
		& \quad + 2 \E[\cS_x(Z_t; \sigma)^\top \Sigma_1 \Sigma^\top_2] + \E[\cS_y(Z_t; \sigma) \Sigma_2 \Sigma^\top_2] \Big| \\
		& \leq  C_* C_{b, \Sigma} \Big(\frac{1}{\beta} + \frac{1}{\beta^2} \Big),
	\end{align*}
	where $C_*$ may be different from the previous one.
	
	Both $\rho_t$ and $\mu^*_t$ have finite moments of order $2$. By Lemma \ref{lem:mu*LSI} and \citet[Corollary 2.2]{otto2000generalization}, $\mu^*_t$ satisfies Talagrand inequality with the same constant $\alpha$. It leads to
	\begin{equation}
		\cW_2(\rho_t, \mu^*_t) \leq \sqrt{\frac{2 D_{KL}(\rho_t \parallel \mu^*_t)}{\alpha}}.
	\end{equation}
	
	Lemma \ref{lem:F_decom} ensures that
	\begin{equation*}
		F(\rho_t, Z_t) - F(\mu^*_t, Z_t) \geq \beta D_{KL}(\rho_t \parallel \mu^*_t).
	\end{equation*}
	
	Hence, 
	\begin{equation}\label{eq:Wass_F}
		\cW_1 (\rho_t, \mu^*_t)^2 \leq \cW_2 (\rho_t, \mu^*_t)^2 \leq \frac{2}{ \alpha \beta} [	F(\rho_t, Z_t) - F(\mu^*_t, Z_t)].
	\end{equation}
	It yields
	\begin{equation}
		\E[\cW_1 (\rho_t, \mu^*_t)^2] \leq \frac{2}{\alpha \beta} \E[ F(\rho_t, Z_t) - F(\mu^*_t, Z_t)] = \frac{2}{\alpha \beta} E(t).
	\end{equation}
	
	Moreover, Lemma \ref{lem:PL} shows that
	\begin{equation}
		\frac{F(\mu^*_t, Z_t) - F(\rho_t, Z_t)}{C_{PL}} \geq - \left\| \nabla \left( \frac{\delta F(\rho_t, Z_t)}{\delta \rho} \right) \right\|_{L^2(\rho_t)}^2.
	\end{equation}
	
	Combining with \eqref{eq:dE}, we obtain
	\begin{align*}
		d E(t) & \leq - \frac{E(t)}{C_{PL}} dt + C_* C_{b, \Sigma} \sqrt{\frac{2}{\alpha \beta} E(t)} dt + C_* C_{b, \Sigma} \Big(\frac{1}{\beta} + \frac{1}{\beta^2} \Big) dt.
	\end{align*}
	We apply Young's inequality $ab \leq \frac{a^2}{2 \varepsilon} + \frac{b^2 \varepsilon}{2}$ with $\varepsilon = C_{PL}$, $a = \sqrt{E(t)}$ and $b = C_* C_{b, \Sigma} \sqrt{2/(\alpha \beta)}$. Then
	\begin{align*}
		d E(t) & \leq - \frac{E(t)}{2C_{PL}} dt + \frac{C^2_* C^2_{b, \Sigma} C_{PL}  }{\alpha \beta} dt + C_* C_{b, \Sigma} \Big(\frac{1}{\beta} + \frac{1}{\beta^2} \Big) dt.
	\end{align*}
	Integrating on both sides, since $E(T) \geq 0$, we have
	\begin{align*}
		\int^T_0 E(t) dt & \leq 2 C_{PL} (E(0) - E(T)) + \frac{2 C^2_* C^2_{b, \Sigma} C^2_{PL}}{\alpha \beta} T + 2 C_* C_{b, \Sigma} C_{PL} \Big(\frac{1}{\beta} + \frac{1}{\beta^2} \Big) T \\
		& \leq 2 C_{PL} E(0) + \frac{C_* C^2_{b, \Sigma} C^2_{PL}}{\alpha \beta} T + C_* C_{b, \Sigma} C_{PL} \Big(\frac{1}{\beta} + \frac{1}{\beta^2} \Big) T,
	\end{align*}
	where we also replace $2C^2_*$ and $2C_*$ by a generic $C_*$. Gr\"onwall's inequality can yield a slightly better bound, while we omit here for simplicity.
\end{proof}

\subsection{The Malliavin approach for static regret}
\begin{proof}[{\bf Proof of Lemma \ref{lem:Drho_exist}}]
	The main idea is to apply implicit function theorem in Banach spaces \cite[Theorem 4.E]{zeidler1995vol2} and Malliavin chain rule in UMD Banach spaces \cite[Proposition 3.8]{pronk2014tools}.
	
	To satisfy the Banach space assumption in \cite{pronk2014tools}, we enlarge the space of $Z$ to $L^2([0, T]; \R^n \times \R)$. Assumption \ref{A:data} ensures that $\int^T_0 |X_t|^2 dt < \infty$ a.s. We do not assume $|Y_t| \leq C_z$ and only impose $ Y \in L^2([0, T]; \R)$ at the beginning. Consider a smooth and bounded truncation function $\chi(\cdot)$ such that $\chi(y) = y$ if $|y| \leq 2C_z$ and $\chi(y) = 2C_z$ when $|y| \geq 3 C_z$.
	
	For any path $Z \in L^2([0, T]; \R^n \times \R)$ and probability density $\rho$, we define a functional $\Phi$ that maps $(Z, \rho)$ to a probability density given by
	\begin{equation*}
		\Phi(Z, \rho)(\theta) := \frac{1}{A(Z, \rho)}  \exp \left[ - H(Z, \rho, \theta) \right],
	\end{equation*} 
	where 
	\begin{align*}
		H(Z, \rho, \theta) & := \frac{\lambda}{2 \beta} |\theta|^2 + \frac{2}{\beta T} \int^T_0  \left( \ang{\rho, \sigma(X_t, \cdot)} -  \chi(Y_t) \right) \sigma(X_t, \theta) dt, \\
		A(Z, \rho) & := \int \exp \left[ - H(Z, \rho, \theta) \right] d\theta.
	\end{align*}
	By an approximation argument, for each $Z \in L^2([0, T]; \R^n \times \R)$, we can extend the existence result in Lemma \ref{lem:rho*} and prove that there exists a solution $\rho^*$ to the self-consistent equation:
	\begin{equation*}
		\rho^* - \Phi(Z, \rho^*) = 0.
	\end{equation*}
	The uniqueness of $\rho^*$ follows in the same way like Lemma \ref{lem:rho*}.
	
	Denote $(\hat{Z}, \hat{\rho})$ as a reference point such that the self-consistent equation holds:
	\begin{equation*}
		0 = G(\hat{Z}, \hat{\rho}) := \hat{\rho} - \Phi(\hat{Z}, \hat{\rho}).
	\end{equation*}
	
	We introduce several functional spaces as follows:
	\begin{align*}
		\cB & := L^2([0, T]; \R^n \times \R), \quad E_0 := \{  \psi \in L^2(\R^d, \hat{\rho}): \int \psi(\theta) \hat{\rho}(\theta) d\theta  = 0\},
	\end{align*}
	where $L^2(\R^d, \hat{\rho}) := \{ \psi: \int \psi^2(\theta) \hat{\rho} (\theta) d\theta < \infty\}$.
	
	Consider the functional 
	\begin{equation}
		J(Z, \psi)(\theta) := \frac{G(Z, (1 + \psi)\hat{\rho})(\theta)}{\hat{\rho}(\theta)}.
	\end{equation}
	Since $\int J(Z, \psi)(\theta) \hat{\rho}(\theta) d\theta = 0$ and $\int J^2(Z, \psi)(\theta) \hat{\rho}(\theta) d\theta < \infty$, the functional $J$ is a mapping from $\cB \times E_0$ to $E_0$.
	
	Next, we calculate the Fr\'echet derivative of $J$ at $(\hat{Z}, 0)$, acting on $\psi \in E_0$. Note that
	\begin{align*}
		\partial_{\rho} H(\hat{Z}, \hat{\rho}, \theta)[h] = \frac{2}{\beta T} \int^T_0 \ang{h, \sigma(\hat{X}_s, \cdot)} \sigma(\hat{X}_s, \theta) ds =: \cK(h, \theta).
	\end{align*}
	Moreover, 
	\begin{align*}
		\partial_{\rho} A(\hat{Z}, \hat{\rho})[h] & = - \int A(\hat{Z}, \hat{\rho}) \frac{\exp[- H(\hat{Z}, \hat{\rho}, \theta)]}{A(\hat{Z}, \hat{\rho})} \cK(h, \theta) d\theta = - A(\hat{Z}, \hat{\rho}) \ang{\hat{\rho}, \cK(h, \cdot)} \\
		\partial_{\rho} \Phi(\hat{Z}, \hat{\rho})[h](\theta) & = - \hat{\rho}(\theta) \cK(h, \theta) + \hat{\rho}(\theta) \ang{\hat{\rho}, \cK(h, \cdot)}.
	\end{align*}
	Hence,
	\begin{equation}
		\partial_{\psi} J(\hat{Z}, 0)[\xi](\theta) = \xi(\theta) + \cK(\hat{\rho}\xi, \theta) - \ang{\hat{\rho}, \cK(\hat{\rho} \xi, \cdot)}.
	\end{equation}
	To show that it has a bounded inverse, we calculate the inner product under the Hilbert space $E_0$ with $\xi \in E_0$:
	\begin{align*}
		& \int \xi(\theta) \partial_{\psi} J(\hat{Z}, 0)[\xi](\theta) \hat{\rho}(\theta) d \theta \\
		& \quad = \int \xi^2(\theta) \hat{\rho}(\theta) d\theta + \int \xi(\theta) \cK(\hat{\rho} \xi, \theta) \hat{\rho}(\theta) d\theta - \int \xi(\theta) \ang{\hat{\rho}, \cK(\hat{\rho} \xi, \cdot)} \hat{\rho} (\theta) d\theta \\
		& \quad = \int \xi^2(\theta) \hat{\rho}(\theta) d\theta + \frac{2}{\beta T} \int^T_0 \ang{\hat{\rho} \xi, \sigma(\hat{X}_s, \cdot)}^2 ds \geq \int  \xi^2(\theta) \hat{\rho}(\theta) d\theta.
	\end{align*}
	Here, the third term in the second line is zero because $\xi \in E_0$. Besides, the associated bilinear form $a(\xi, \eta) := \int \eta(\theta) \partial_\psi J(\hat{Z}, 0)[\xi](\theta) \hat{\rho}(\theta) d\theta$ is continuous.
	
	It shows that the operator $\partial_{\psi} J(\hat{Z}, 0): E_0 \to E_0$ is strictly coercive \cite[Definition 13.28]{kress2014}. By Lax-Milgram theorem \cite[Theorem 13.29]{kress2014}, the operator $\partial_{\psi} J(\hat{Z}, 0)$ has a bounded inverse. 
	
	Moreover, it is direct to show that the functional $J$ is continuously Fr\'echet differentiable on a neighborhood of $(\hat{Z}, 0)$. Then \citet[Theorem 4.E]{zeidler1995vol2} ensures that, on a neighborhood of $(\hat{Z}, 0)$, there is exactly one $\psi(Z) \in E_0$ such that $J(Z, \psi(Z)) = 0$, which means
	\begin{equation*}
		\hat{\rho} + \hat{\rho} \psi(Z) = \Phi(Z, \hat{\rho} + \hat{\rho} \psi(Z)).
	\end{equation*}
	By the uniqueness of $\rho^*$, we must have
	\begin{equation}
		\rho^*(\theta, Z) = \hat{\rho}(\theta) + \hat{\rho}(\theta) \psi(Z)(\theta).
	\end{equation}
	In fact, the existence and uniqueness of $\rho^*$ ensure the global existence of $\psi(Z)$.

	\citet[Theorem 4.E (ii)]{zeidler1995vol2} shows that $\psi(Z)$ is also continuously Fr\'echet differentiable on a neighborhood of $\hat{Z}$. In particular, 
	\begin{equation}
		\partial_Z \psi(Z) = - [\partial_{\psi} J(Z, \psi(Z))]^{-1} \partial_Z J(Z, \psi(Z)).
	\end{equation}
	By a similar argument with Lax-Milgram theorem, we can show that $\partial_{\psi} J(Z, \psi(Z))$ has a bounded inverse, with a constant uniform in $Z$. It ensures the global differentiability of $\psi(Z)$. Moreover, the operator $\partial_Z \psi$ is bounded.
	
	Note that $E_0$ is a Hilbert space and hence a UMD Banach space. We can use \citet[Proposition 3.8]{pronk2014tools} to obtain the Malliavin derivative
	\begin{equation}
		\cD \psi(Z) = \partial_Z \psi(Z) \cD Z.
	\end{equation}

	Because $\hat{\rho}(\theta)$ is bounded, the multiplication operator $M_{\hat{\rho}}(\psi) := \hat{\rho}\psi$ is a bounded linear operator from $E_0$ to $L^2(\R^d)$. As a linear operator, its Fr\'echet derivative is simply itself, which is trivially bounded and continuous. Therefore, we can apply the chain rule from \citet[Proposition 3.8]{pronk2014tools} a second time to the affine transformation $\rho^*(\cdot, Z) = \hat{\rho}(\cdot) + M_{\hat{\rho}}(\psi(Z))$. Because the reference density $\hat{\rho}(\theta)$ is fixed and does not change with respect the chance parameter $\omega^W \in \Omega^W$, this guarantees that $\rho^* \in \bD^{1, 2}(L^2(\R^d))$ and
	\begin{equation*}
		\cD \rho^*(\theta) = M_{\hat{\rho}}(\cD \psi(Z)) = \hat{\rho}(\theta) \partial_{Z} \psi(Z) \cD Z = \partial_{Z} \rho^*(\theta, Z) \cD Z.
	\end{equation*}
	
	To recover the Malliavin derivative of the untruncated system, we recall that our data $|Y_t| \leq C_z$ by assumption. The truncation $\chi(Y_t) = Y_t$ a.s. By the local property of the Malliavin derivative, the derivative of the original system coincides with the derivative of the truncated system a.s., obtaining the existence of the Malliavin derivative.
\end{proof}

\begin{proof}[{\bf Proof of Lemma \ref{lem:psi}}]
	With $$H(Z, \rho^*, \theta) = \frac{\lambda}{2 \beta} |\theta|^2 + \frac{2}{\beta T} \int^T_0  \left( \ang{\rho^*, \sigma(X_t, \cdot)} -  Y_t \right) \sigma(X_t, \theta) dt, $$
	we have
	\begin{equation}
		\psi_t(\theta) = - \cD_t H + \ang{\rho^*, \cD_t H}.
	\end{equation}
	A further calculation obtains
	\begin{align*}
		\cD_t H = \cK(\psi_t, Z, \theta) + S_t(\theta),
	\end{align*}
	which leads to the Fredholm integral equation. The existence and uniqueness of a solution $\psi_t$ is also guaranteed by the Lax-Milgram theorem.
	
	Note that $\ang{\rho^*, \psi_t(\cdot)} = \cD_t \int \rho^*(\theta)d\theta = 0$ and 
	\begin{align*}
		\ang{\rho^*, \psi_t(\cdot) \cK(\psi_t, Z, \cdot)} & = \frac{2}{\beta T} \int^T_0 \ang{\rho^*, \psi_t(\cdot) \sigma(X_s, \cdot)}^2 ds \geq 0.
	\end{align*}
	We obtain 
	\begin{align*}
		\ang{\rho^*, \psi^2_t} & = -\ang{\rho^*, \psi_t(\cdot) \cK(\psi_t, Z, \cdot)} - \ang{\rho^*,  \psi_t(\cdot) S_t(\cdot)} + \ang{\rho^*, \psi_t(\cdot)} \ang{\rho^*, \cK(\psi_t, Z, \cdot) + S_t(\cdot)} \\
		& = -\ang{\rho^*, \psi_t(\cdot) \cK(\psi_t, Z, \cdot)} - \ang{\rho^*,  \psi_t(\cdot) S_t(\cdot)} \\
		& \leq - \ang{\rho^*,  \psi_t(\cdot) S_t(\cdot)}.
	\end{align*}
	By Cauchy-Schwarz inequality, the right-hand side can be replaced as
	\begin{align*}
		\int \psi^2_t(\theta) \rho^*(\theta) d\theta \leq \Big( \int S^2_t(\theta) \rho^*(\theta) d\theta \Big)^{1/2} \Big( \int \psi^2_t(\theta) \rho^*(\theta) d\theta \Big)^{1/2}.
	\end{align*}
	Therefore, 
	\begin{align*}
		\int | \cD_t \rho^*(\theta)| d\theta = \int |\psi_t(\theta)| \rho^*(\theta) d \theta \leq \Big( \int \psi^2_t(\theta) \rho^*(\theta) d\theta \Big)^{1/2} \leq  \Big( \int S^2_t(\theta) \rho^*(\theta) d\theta \Big)^{1/2}.
	\end{align*}
	With our assumptions on $\sigma$ and $Z_t$, it is direct to obtain
	\begin{align*}
		\sup_{\theta \in \R^d} |S_t(\theta)| \leq \frac{C_*}{\beta T} \int^T_t (|\cD_t X_s| + |\cD_t Y_s|) ds.
	\end{align*}
	Since $( \int S^2_t(\theta) \rho^*(\theta) d\theta)^{1/2} \leq \sup_{\theta \in \R^d} |S_t(\theta)|$, we have proved the desired result.
\end{proof}

\begin{proof}[{\bf Proof of Lemma \ref{lem:antic_Ito}}]	
	Let $t_i^n = \frac{i t}{2^n}$ for $i = 0, \dots, 2^n$ be a uniform partition of the interval $[0, t]$. By assumption, $Y_t$ is bounded, and $\sigma$ along with its derivatives are bounded. Consequently, $F$ and its spatial derivatives $F_x, F_y, F_{xx}, F_{xy}, F_{yy}$ are bounded random variables. By our previous derivations, $\rho^* \in \bD^{1,2}(L^2(\R^d))$. We apply Taylor's theorem up to the second order:
	\begin{equation}
		\begin{aligned}
			& F(\rho^*, Z_t) - F(\rho^*, Z_0) \\
			& \quad =  \sum_{i=0}^{2^n-1} \left( F(\rho^*, Z_{t_{i+1}^n}) - F(\rho^*, Z_{t_i^n}) \right) \\
			&\quad = \sum_{i=0}^{2^n-1} \left[ F_x(t_i^n) \Delta X_i^n + F_y(t_i^n) \Delta Y_i^n \right] \\
			& \qquad + \frac{1}{2} \sum_{i=0}^{2^n-1} \left[ F_{xx}(\overline{Z}_i^n) (\Delta X_i^n)^2 + 2 F_{xy}(\overline{Z}_i^n) \Delta X_i^n \Delta Y_i^n + F_{yy}(\overline{Z}_i^n) (\Delta Y_i^n)^2 \right],
		\end{aligned}
	\end{equation}
	where $\overline{Z}_i^n$ are random intermediate points, and $F_x(t_i^n)$ is shorthand for $F_x(\rho^*, Z_{t_i^n})$. 
	
	{\bf Step 1: The second-order terms}. As $n \to \infty$, the sum of the second-order terms converges in $L^1(\Omega)$ to the standard quadratic variation integrals. Because $Z_t$ is adapted, the presence of the fixed anticipative variable $\rho^*$ does not alter the classical cross-variation limits. Similar to Step 1 in \citet[Theorem 3.2.2]{nualart2006malliavin}, we have
	\begin{equation*}
		\frac{1}{2} \sum_{i=0}^{2^n-1} F_{xx}(\overline{Z}_i^n) (\Delta X_i^n)^2 \xrightarrow{L^1} \frac{1}{2} \int_0^t F_{xx}(\rho^*, Z_s) \Sigma_1(s) \Sigma_1^\top(s) ds.
	\end{equation*}
	Summing the analogous limits for the $xy$ and $yy$ terms yields exactly the second-order components of the operator $\mathcal{A}[F](\rho^*, Z_s) ds$.
	
	{\bf Step 2: The first-order drift terms}. The first-order increments driven by the drift terms $b_1 dt$ and $b_2 dt$ converge straightforwardly to their respective Lebesgue integrals:
	\begin{equation*}
		\sum_{i=0}^{2^n-1} \left[ F_x(t_i^n) \int_{t_i^n}^{t_{i+1}^n} b_1(s) ds + F_y(t_i^n) \int_{t_i^n}^{t_{i+1}^n} b_2(s) ds \right] \xrightarrow{L^1} \int_0^t \left( F_x(\rho^*, Z_s) b_1(s) + F_y(\rho^*, Z_s) b_2(s) \right) ds.
	\end{equation*}
	This provides the remaining first-order components of $\mathcal{A}[F]$.
	
	{\bf Step 3: The first-order diffusion terms}. Consider the $X$-diffusion term:
	\begin{equation*}
		\sum_{i=0}^{2^n-1} F^\top_x(\rho^*, Z_{t_i^n}) \int_{t_i^n}^{t_{i+1}^n} \Sigma_1(s) dW_s.
	\end{equation*}
	We invoke the integration-by-parts formula in \citet[Proposition 1.3.5]{nualart2006malliavin} or \citet[Lemma 4.9]{pronk2014tools} to obtain
	\begin{equation*}
		F^\top_x(t_i^n) \int_{t_i^n}^{t_{i+1}^n} \Sigma_1(s) dW_s = \int_{t_i^n}^{t_{i+1}^n} F^\top_x(t_i^n) \Sigma_1(s) \delta W_s + \int_{t_i^n}^{t_{i+1}^n} \tr\Big[ \cD_s [F^\top_x(\rho^*, Z_{t_i^n})] \Sigma_1(s) \Big] ds.
	\end{equation*}
	We evaluate $\cD_s [F_x(\rho^*, Z_{t_i^n})]$ for $s \in (t_i^n, t_{i+1}^n]$. Because $Z_t$ is adapted, its Malliavin derivative $\cD_s Z_{t_i^n} = 0$ for $s > t_i^n$. Therefore,
	\begin{equation*}
		\cD_s [F_x(\rho^*, Z_{t_i^n})] = \partial_{\rho} F_x(\rho^*, Z_{t_i^n}) [\cD_s \rho^*].
	\end{equation*}
	Note that 
	\begin{align*}
		& \partial_\rho F_x(\rho^*, Z_s)[\cD_s \rho^*] \\
		& = 2 \ang{\cD_s \rho^*, \sigma(X_s, \cdot)} \ang{\rho^*, \sigma_x(X_s, \cdot)} + 2 \ang{\rho^*, \sigma(X_s, \cdot)} \ang{\cD_s \rho^*, \sigma_x(X_s, \cdot)} - 2 Y_s \ang{\cD_s \rho^*, \sigma_x(X_s, \cdot)}.
	\end{align*}

	Because $\cD_s \rho^* = (\cD^- \rho^*)_s$, we can prove the following limit similarly like \citet[Equation (3.32)]{nualart2006malliavin}. As $n$ tends to infinity,
	\begin{align*}
		\sum_{i=0}^{2^n-1} \int_{t_i^n}^{t_{i+1}^n} \cD_s [F_x(\rho^*, Z_{t_i^n})] \Sigma_1(s) ds \xrightarrow{L^1} \int_0^t \partial_{\rho} F_x(\rho^*, Z_s) [\cD_s \rho^*] \Sigma_1(s) ds.
	\end{align*} 
	Summing the analogous limits for the $\Sigma_2$ term obtains the components of $\cM_t$.
	
	{\bf Step 4: Well-posedness of Skorohod integrals}. Like Step 4 in the proof of \citet[Theorem 3.2.2]{nualart2006malliavin}, we prove the integrand $F^\top_x(\rho^*, Z_t) \Sigma_1(t)$ belongs to the domain of the divergence operator, such that the Skorohod integral $\int^t_0 F^\top_x \Sigma_1(u) \delta W_u$ is well-defined. We need to take the Malliavin derivative of $F^\top_x(\rho^*, Z_t) \Sigma_1(t)$ and show it is square integrable.
	
	By the chain rule, we have
	\begin{align*}
		\cD_s[F^\top_x(\rho^*, Z_t) \Sigma_1(t)] = & \Big( F_{xx}(\rho^*, Z_t) \cD_s X_t + F_{xy}(\rho^*, Z_t) \cD_s Y_t + 2 \ang{\cD_s \rho^*, \sigma(X_t, \cdot)} \ang{\rho^*, \sigma_x(X_t, \cdot)} \\
		& \quad + 2 \ang{\rho^*, \sigma(X_t, \cdot)} \ang{\cD_s \rho^*, \sigma_x(X_t, \cdot)} - 2 Y_t \ang{\cD_s \rho^*, \sigma_x(X_t, \cdot)} \Big)^\top \Sigma_1(t) \\
		& + F^\top_x(\rho^*, Z_t) \cD_s \Sigma_1(t). 
	\end{align*}
	Thanks to the boundedness of $\sigma$ and its derivatives and the fact that $Y_t$ is bounded, we have
	\begin{align*}
		& \E \Big[\int^T_0 \int^T_0 |\cD_s[F^\top_x(\rho^*, Z_t) \Sigma_1(t)]|^2 dsdt \Big] \\
		& \leq C \E \Big[\int^T_0 \int^T_0 |\cD_s X_t |^4 dsdt \Big] + C \E \Big[\int^T_0 \int^T_0 |\cD_s Y_t |^4 dsdt \Big] + C \E \Big[\int^T_0 \int^T_0 |\Sigma_1(t) |^4 dsdt \Big] \\
		& \quad + C \E \Big[\int^T_0 \int^T_0 |\cD_s \Sigma_1(t) |^4 dsdt \Big] + C \E \Big[\int^T_0 \int^T_0 |\cD_s \rho^* |^4_{L^1} dsdt \Big].
	\end{align*}
	By Lemma \ref{lem:psi}, the last term here satisfies
	\begin{align*}
		\E \Big[\int^T_0 \int^T_0 |\cD_s \rho^* |^4_{L^1} dsdt \Big] \leq C \E \Big[ \int^T_0 |\cD_s \rho^* |^4_{L^1} ds \Big] \leq \frac{C}{\beta^4} \E \Big[ \int^T_0 \int^T_0 (|\cD_s X_t |^4 + |\cD_s Y_t|^4) ds dt \Big]. 
	\end{align*}
	We further bound these terms with conditions on $b_i$ and $\Sigma_i$, $i=1, 2$. Since
	\begin{equation*}
		\cD_s X_t = \cD_s X_0 + \Sigma_1(s) \mathbf{1}_{[0, t]}(s) + \int^t_0 \cD_s b_1(r) dr + \int^t_0 \cD_s \Sigma_1(r) d W_r,
	\end{equation*}
	we apply the Burkholder-Davis-Gundy inequality to obtain
	\begin{align*}
		\E \Big[\int^T_0 \int^T_0 |\cD_s X_t |^4 dsdt \Big]  & \leq C \E \Big[\int^T_0 \int^T_0 |\cD_s b_1(r) |^4 drds \Big]  + C \E \Big[\int^T_0 \int^T_0 |\cD_s \Sigma_1(r) |^4 drds \Big]  \\
		& \quad + C \E \Big[\int^T_0 |\Sigma_1(s) |^4 ds \Big].
	\end{align*}
	The $\cD_sY_t$ term and $\cD_s[F^\top_y(\rho^*, Z_t) \Sigma_2(t)]$ counterpart can be bounded similarly. 
	
	{\bf Step 5: The limit of Skorohod integrals}. Because the Skorohod integral is not continuous with respect to $L^2$ or $L^1$ norms, we cannot claim that the limit of a Riemann sum of Skorohod integrals equals the Skorohod integral of the limit directly. To bridge this gap, we use the same duality argument in Step 5 of \citet[Theorem 3.2.2]{nualart2006malliavin}. Then a direct application of \citet[Proposition 1.3.6]{nualart2006malliavin} shows that
	\begin{equation*}
		\sum_{i=0}^{2^n-1} \int_{t_i^n}^{t_{i+1}^n} F^\top_x(\rho^*, Z_{t_i^n}) \Sigma_1(s) \delta W_s \xrightarrow{L^1} \int_0^t F^\top_x(\rho^*, Z_s) \Sigma_1(s) \delta W_s, \quad n \to \infty.
	\end{equation*}
	The analogous derivation for the $Y$-diffusion term yields $\int_0^t F_y(\rho^*, Z_s) \Sigma_2(s) \delta W_s$.
\end{proof}

\begin{proof}[{\bf Proof of Theorem \ref{thm:Mallivian-regret}}]
	Denote
	\begin{equation}
		P(t) := \E[F(\rho_t, Z_t) - F(\rho^*, Z_t)].
	\end{equation}
	Crucially, the Skorohod integral satisfies 
	\begin{equation*}
		\E\Big[ \int^t_0 F^\top_x(\rho^*, Z_u) \Sigma_1(u) \delta W_u \Big] = \E \Big[ \int^t_0 F_y(\rho^*, Z_u) \Sigma_2(u) \delta W_u \Big] = 0.
	\end{equation*}
	Then
	\begin{equation}\label{eq:dP}
		\begin{aligned}
			d P(t) = & - \E \Big[ \Big\| \nabla \Big( \frac{\delta F(\rho_t, Z_t)}{\delta \rho} \Big) \Big\|_{L^2(\rho_t)}^2 \Big] dt + \E[\cA[F] (\rho_t, Z_t) - \cA[F] (\rho^*, Z_t)] dt - \E[\cM_t] dt.
		\end{aligned}
	\end{equation}
	For the first term, we rely on the optimizer $\mu^*_t$ at time $t$ and Lemma \ref{lem:PL} to obtain
	\begin{align*}
		- \E \Big[ \Big\|\nabla \left( \frac{\delta F(\rho_t, Z_t)}{\delta \rho} \right) \Big\|_{L^2(\rho_t)}^2 \Big] \leq & \frac{\E[F(\mu^*_t, Z_t) - F(\rho_t, Z_t)]}{C_{PL}} \leq \frac{\E[F(\rho^*, Z_t) - F(\rho_t, Z_t)]}{C_{PL}} = - \frac{P(t)}{C_{PL}}.
	\end{align*}
	We bound the second and third terms in a direct way. Thanks to Assumption \ref{A:sigma} and \ref{A:data}, 
	\begin{equation*}
		\E[|\cA[F] (\rho_t, Z_t)|] \leq C_* C_{b, \Sigma}, \quad \E[|\cA[F] (\rho^*, Z_t)|] \leq C_* C_{b, \Sigma}.
	\end{equation*}
	
	By Lemma \ref{lem:psi}, we obtain the following bound:
	\begin{equation}
		|\cM_t| \leq  \left( |\Sigma_1(t)| + |\Sigma_2(t)| \right) \left( \frac{C_*}{\beta T} \int_t^T (|\cD_t X_s| + |\cD_t Y_s|) ds \right).
	\end{equation}
	By Cauchy-Schwarz inequality, 
	\begin{align*}
		\E[|\cM_t|] &\le \frac{C_* C_{b, \Sigma}}{\beta T} \E \left[ \left( \int_t^T (|\cD_t X_s| + |\cD_t Y_s|) ds \right)^2 \right]^{1/2} \leq \frac{C_* C_{b, \Sigma} C_{a}}{\beta}.
	\end{align*}
	
	Combining together, we obtain
	\begin{align*}
		d P(t) & \leq - \frac{P(t)}{C_{PL}} dt + C_* C_{b, \Sigma} \Big(1 + \frac{C_a}{\beta} \Big)dt.
	\end{align*}
	For notational simplicity, we introduce $\tilde{C} = C_* C_{b, \Sigma} (1 + C_a/\beta )$. By Gr\"onwall's inequality, 
	\begin{align*}
		P(s) \leq (P(0) - \tilde{C} C_{PL}) e^{-s/C_{PL}} + \tilde{C} C_{PL}.
	\end{align*}
	Therefore,
	\begin{equation}
		\int^T_0 P(s) ds \leq (P(0) - \tilde{C} C_{PL}) C_{PL} (1 - e^{-T/C_{PL}}) + \tilde{C} C_{PL} T.
	\end{equation}
\end{proof}

\section{Proofs of Section \ref{sec:particle-regret}}

\subsection{Marginal-law dynamic regret}
\begin{proof}[{\bf Proof of Lemma \ref{lem:PoC}}]
	We verify \citet[Theorem 2.1 (1)]{lacker2023sharp} can be used here. For Assumption A.1 in \cite{lacker2023sharp}, our constant $1/(2\eta)$ is their constant $\eta$ in the LSI. For Assumption A.2, we note $b(t, \theta, \cdot) = -2 \nabla \sigma(X_t, \theta) \sigma(X_t, \cdot)$.  By Pinsker's inequality,
	\begin{align*}
		|\ang{\nu - \rho_t, -2 \nabla \sigma(X_t, \theta) \sigma(X_t, \cdot)}|  \leq 4 C_\sigma C_1 \| \nu - \rho_t \|_{TV} \leq 4 C_\sigma C_1 \sqrt{\frac{1}{2} D_{KL}(\nu \| \rho_t)}.
	\end{align*}
	We have $\gamma = 8 C^2_\sigma C^2_1$ in Assumption A.2 of \cite{lacker2023sharp}. Assumption A.3 is satisfied since our $b(t, \theta, \cdot)$ is bounded by a constant depending on $C_\sigma$ and $C_1$. Hence, to ensure $r_c > 1$ in \cite{lacker2023sharp}, we only need $\frac{4 \beta^2}{4 \gamma/(2\eta)} > 2$. It leads to our assumption $\eta \beta^2 > 8 C^2_\sigma C^2_1$.
\end{proof}

\begin{lemma}\label{lem:W2}
	Consider probability measures $\mu$ and $\nu$ with finite second moments. Assume $f$ is continuously differentiable and its gradient grows linearly: $|\nabla f(\theta)| \le L(1 + |\theta|)$. Then 
	\begin{equation*}
		\left| \int f d\mu - \int f d\nu \right| \leq L \cW_2(\mu, \nu) \left( 1 + \| \theta \|_{L^2(\mu)} + \| \theta \|_{L^2(\nu)} \right).
	\end{equation*}
\end{lemma}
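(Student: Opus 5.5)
We will prove Lemma \ref{lem:W2} with an optimal coupling and the fundamental theorem of calculus along straight segments. Let $\gamma \in \Pi(\mu,\nu)$ be an optimal coupling for $\cW_2(\mu,\nu)$; it exists because both measures have finite second moments. Since $\gamma$ has marginals $\mu$ and $\nu$,
\begin{equation*}
	\int f\,d\mu - \int f\,d\nu = \int \big(f(\theta) - f(\theta')\big)\,\gamma(d\theta,d\theta').
\end{equation*}
This identity holds because $f$ is integrable under both measures: the linear growth of $\nabla f$ gives $|f(\theta)|\le |f(0)| + L(|\theta| + |\theta|^2/2)$, which is finite in expectation under second moments.

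Next we bound the integrand. For fixed $(\theta,\theta')$, set $\theta_s = \theta' + s(\theta-\theta')$ for $s \in [0,1]$. Because $f \in C^1$,
\begin{equation*}
	f(\theta)-f(\theta') = \int_0^1 \nabla f(\theta_s)\cdot(\theta-\theta')\,ds .
\end{equation*}
Combining $|\nabla f(\theta_s)| \le L(1+|\theta_s|)$ with $|\theta_s| \le (1-s)|\theta'| + s|\theta| \le |\theta|+|\theta'|$ yields the pointwise estimate
\begin{equation*}
	|f(\theta)-f(\theta')| \le L\,|\theta-\theta'|\,(1+|\theta|+|\theta'|).
\end{equation*}
The sharper average $1 + \tfrac12(|\theta|+|\theta'|)$ is also available but is not needed.

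We then integrate against $\gamma$ and apply Cauchy--Schwarz:
\begin{equation*}
	\Big|\int f\,d\mu - \int f\,d\nu\Big| \le L \Big(\int |\theta-\theta'|^2 d\gamma\Big)^{1/2} \Big(\int (1+|\theta|+|\theta'|)^2 d\gamma\Big)^{1/2}.
\end{equation*}
The first factor equals $\cW_2(\mu,\nu)$ by optimality of $\gamma$. For the second factor we use Minkowski's inequality in $L^2(\gamma)$: $\|1+|\theta|+|\theta'|\|_{L^2(\gamma)} \le 1 + \|\theta\|_{L^2(\mu)} + \|\theta'\|_{L^2(\nu)}$, because each coordinate's law under $\gamma$ is the corresponding marginal. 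These two bounds together give the claimed inequality.

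The argument has no genuine obstacle. The only points needing care are checking integrability to justify splitting the difference of integrals, and using Minkowski rather than expanding the square, so that the constant comes out exactly as $1+\|\theta\|_{L^2(\mu)}+\|\theta\|_{L^2(\nu)}$ with no extra factor.
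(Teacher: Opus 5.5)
Your proposal is correct and follows the paper's proof: you take an optimal $\cW_2$ coupling, bound $|f(\theta)-f(\theta')|$ pointwise by $L|\theta-\theta'|(1+|\theta|+|\theta'|)$, apply Cauchy--Schwarz, and bound the second factor by Minkowski. The only differences are cosmetic: you integrate along the segment where the paper uses the mean value theorem, and you spell out the integrability of $f$ and the Minkowski step, which the paper leaves implicit.
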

\begin{proof}
	Let $\pi$ be the optimal coupling between $\mu$ and $\nu$ for the $\cW_2$ distance, such that $(X, Y) \sim \pi$. We write the difference as an expectation over the coupling:
	\begin{equation*}
		\int f d\mu - \int f d\nu = \E_{\pi}[f(X) - f(Y)].
	\end{equation*}
	By the mean value theorem, there exists a point $\xi_{X,Y}$ on the segment between $X$ and $Y$ such that
	\begin{equation*}
		f(X) - f(Y) = \nabla f(\xi_{X,Y}) \cdot (X - Y).
	\end{equation*}
	The growth condition on the gradient ensures that
	\begin{equation*}
		|f(X) - f(Y)| \le L(1 + |\xi_{X,Y}|) |X - Y|.
	\end{equation*}
	Since $\xi_{X,Y}$ is between $X$ and $Y$, $|\xi_{X,Y}| \le |X| + |Y|$. We apply Cauchy-Schwarz inequality to obtain
	\begin{align*}
		\E_\pi[|f(X) - f(Y)|] & \le L \E_\pi[(1 + |X| + |Y|)  |X - Y|] \\
		& \le L \sqrt{\E_\pi[(1 + |X| + |Y|)^2]} \sqrt{\E_\pi[|X - Y|^2]} \\
		& \leq L \cW_2(\mu, \nu) \left( 1 + \| \theta \|_{L^2(\mu)} + \| \theta \|_{L^2(\nu)} \right).
	\end{align*}
	In the last inequality, we use the optimality of $\pi$ to obtain $\sqrt{\E_\pi[|X - Y|^2]} = \cW_2(\mu, \nu)$.
\end{proof}

\begin{proof}[{\bf Proof of Theorem \ref{thm:margin-regret}}]
	The well-posedness results in Proposition \ref{prop:MkV} and \ref{prop:particle}  have obtained the finite second moment condition on $\rho_t$ and $\rho^{1, N}_t$, respectively. The system of $N$ particles is exchangeable. Therefore, the marginal distributions of all particles are identical and we simply use $\rho^{1,N}_t$. The difference $|F(\rho^{1, N}_t, Z_t) - F(\rho_t, Z_t)|$ satisfies
	\begin{align*}
		|F(\rho^{1, N}_t, Z_t) - F(\rho_t, Z_t)| \leq &  | \ang{\rho_t, \sigma(X_t, \cdot)}^2 - \ang{\rho^{1, N}_t, \sigma(X_t, \cdot)}^2 | \\
		& + 2 |Y_t| | \ang{\rho_t, \sigma(X_t, \cdot)} - \ang{\rho^{1, N}_t, \sigma(X_t, \cdot)} | \\
		& + \frac{\lambda}{2} \big| \ang{\rho_t, |\theta|^2} -\ang{\rho^{1, N}_t, |\theta|^2} \big| \\
		& + \beta \Big| \int \rho_t(\theta) \log \rho_t(\theta) d\theta - \int \rho^{1, N}_t(\theta) \log \rho^{1, N}_t(\theta) d\theta  \Big|.
	\end{align*}
	We bound each term separately. First, since $\sigma$ is bounded and Lipschitz in $\theta$,
	\begin{align*}
		| \ang{\rho_t, \sigma(X_t, \cdot)}^2 - \ang{\rho^{1, N}_t, \sigma(X_t, \cdot)}^2 | \leq 2 C_\sigma C_1 \cW_1(\rho^{1, N}_t, \rho_t).
	\end{align*}
	Similarly,
	\begin{align*}
		2 |Y_t| | \ang{\rho_t, \sigma(X_t, \cdot)} - \ang{\rho^{1, N}_t, \sigma(X_t, \cdot)} |  \leq 2 C_z C_1 \cW_1(\rho^{1, N}_t, \rho_t).
	\end{align*}
	Since the gradient of $|\theta|^2$ has a linear growth, Lemma \ref{lem:W2} shows that
	\begin{align*}
		\frac{\lambda}{2} \big| \ang{\rho_t, |\theta|^2} -\ang{\rho^{1, N}_t, |\theta|^2} \big|  & \leq \lambda \cW_2(\rho^{1, N}_t, \rho_t) \left( 1 + \| \theta \|_{L^2(\rho^{1, N}_t)} + \| \theta \|_{L^2(\rho_t)} \right) \\
		& \leq \lambda (1 + 2\sqrt{M_2(\beta, \lambda)})\cW_2(\rho^{1, N}_t, \rho_t).
	\end{align*}
	For the difference between entropy functional, 
	\begin{align*}
		& \beta \Big| \int \rho_t(\theta) \log \rho_t(\theta) d\theta - \int \rho^{1, N}_t(\theta) \log \rho^{1, N}_t(\theta) d\theta  \Big| \\
		& = \beta \Big| \int \rho^{1, N}_t(\theta) \log \rho^{1, N}_t(\theta) d\theta - \int \rho^{1, N}_t(\theta) \log \rho_t(\theta) d\theta + \int \rho^{1, N}_t(\theta) \log \rho_t(\theta) d\theta - \int \rho_t(\theta) \log \rho_t(\theta) d\theta\Big| \\
		& \leq \beta D_{KL}(\rho^{1, N}_t \parallel \rho_t) +    \beta \Big| \int \rho^{1, N}_t(\theta) \log \rho_t(\theta) d\theta - \int \rho_t(\theta) \log \rho_t(\theta) d\theta\Big|. 
	\end{align*}
	Theorem \ref{thm:Lip-grad} (1) ensures that we can apply Lemma \ref{lem:W2} to $\log(\rho_t)$:
	\begin{align*}
		\Big| \int \rho^{1, N}_t(\theta) \log \rho_t(\theta) d\theta - \int \rho_t(\theta) \log \rho_t(\theta) d\theta\Big| & \leq L_\rho \cW_2(\rho^{1, N}_t, \rho_t) \left( 1 + \| \theta \|_{L^2(\rho^{1, N}_t)} + \| \theta \|_{L^2(\rho_t)} \right) \\
		& \leq L_{\rho} (1 + 2 \sqrt{M_2(\beta, \lambda)}) \cW_2(\rho^{1, N}_t, \rho_t).
	\end{align*}
	By Lemma \ref{lem:rhotLSI}, $\rho_t$ satisfies the $LSI(\eta)$. \citet[Corollary 2.1]{conforti2020around} extends \citet[Theorem 1]{otto2000generalization} to the case when $- \log \rho_t$ does not necessarily have classical second derivatives. Our case satisfies their Assumption (ii). Hence, $\rho_t$ satisfies Talagrand inequality with constant $\eta$, that is,
	\begin{equation}
		\cW_2(\rho^{1, N}_t, \rho_t) \leq \sqrt{ \frac{2 D_{KL}(\rho^{1, N}_t \parallel \rho_t) }{\eta} }. 
	\end{equation}
	Since $\cW_1 \leq \cW_2$, summing all terms up, we obtain
	\begin{align*}
		|F(\rho^{1, N}_t, Z_t) - F(\rho_t, Z_t)| \leq & \big[ 2(C_\sigma + C_z) C_1 + (\lambda + \beta L_\rho)(1 + 2 \sqrt{M_2(\beta, \lambda)}) \big] \sqrt{ \frac{2 D_{KL}(\rho^{1, N}_t \parallel \rho_t) }{\eta} } \\
		& + \beta D_{KL}(\rho^{1, N}_t \parallel \rho_t).
	\end{align*}
	Using Lemma \ref{lem:PoC}, we conclude the proof. The last claim is from the triangle inequality.
\end{proof}

\subsection{Empirical dynamic regret without entropy}

\begin{proof}[{\bf Proof of Lemma \ref{lem:mu*_2m}}]
	We define the nonlinear interaction term in the exponent of $\mu^*_t$ by
	\begin{equation*}
		g_t(\theta) := - \frac{2}{\beta} \sigma(X_t, \theta) \left( \int_{\R^d} \sigma(X_t, \vartheta)\mu^*_t(d\vartheta) - Y_t \right).
	\end{equation*}
	Clearly, $g_t(\theta)$ is uniformly bounded by
	\begin{equation}\label{eq:gt_bound}
		|g_t(\theta)| \le \frac{2}{\beta} C_\sigma (C_\sigma + C_z) =: M.
	\end{equation}
	Let $\mu_0$ denote the unperturbed Gaussian probability measure:
	\begin{equation*}
		\mu_0(\theta) := \frac{1}{Z_0} \exp\left( - \frac{\lambda}{2 \beta} |\theta|^2 \right), \quad \text{where} \quad Z_0 := \left( \frac{2 \pi \beta}{\lambda} \right)^{d/2}.
	\end{equation*}
	We can lower-bound the normalization constant $C_t$ of the equilibrium measure as follows:
	\begin{align*}
		C_t &= \int_{\R^d} \exp\left( - \frac{\lambda}{2 \beta} |\theta|^2 + g_t(\theta) \right) d\theta \ge \int_{\R^d} \exp\left( - \frac{\lambda}{2 \beta} |\theta|^2 - M \right) d\theta = e^{-M} Z_0.
	\end{align*}
	Then a pointwise upper bound on $\mu^*_t(\theta)$ can be obtained as
	\begin{align*}
		\mu^*_t(\theta) &= \frac{1}{C_t} \exp\left( - \frac{\lambda}{2 \beta} |\theta|^2 + g_t(\theta) \right) \le \frac{1}{e^{-M} Z_0} \exp\left( - \frac{\lambda}{2 \beta} |\theta|^2 + M \right) = e^{2M} \mu_0(\theta).
	\end{align*}
	Therefore,
	\begin{equation*}
		\int_{\R^d} |\theta|^2 \mu^*_t(d\theta) \le \int_{\R^d} |\theta|^2 \left( e^{2M} \mu_0(\theta) \right) d\theta = e^{2M} \int_{\R^d} |\theta|^2 \mu_0(d\theta) = \frac{\beta d }{\lambda} e^{2M},
	\end{equation*}
	which completes the proof.
\end{proof}

Next, we introduce another bound of the difference between second moments.
\begin{lemma}\label{lem:W2_diff}
	Suppose both $\rho$ and $\nu$ have finite second moment. Then
	\begin{equation}
		\Big|\langle{\rho,|\theta|^2}\rangle-\langle{\nu, |\theta|^2}\rangle\Big|
		\le 
		\cW_2(\rho,\nu)^2
		+2\sqrt{\langle{\nu, |\theta|^2}\rangle} \cW_2(\rho,\nu).
	\end{equation}
\end{lemma}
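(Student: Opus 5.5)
We will argue through an optimal coupling, in the same way as the proof of Lemma \ref{lem:W2}, but exploiting the exact quadratic structure instead of the mean value theorem. Let $\pi$ be an optimal coupling of $\rho$ and $\nu$ for $\cW_2$; it exists since both measures have finite second moments. Take $(X,Y)\sim\pi$, so that $X\sim\rho$, $Y\sim\nu$ and $\E_\pi[|X-Y|^2]=\cW_2(\rho,\nu)^2$. Then
\begin{equation*}
\langle\rho,|\theta|^2\rangle-\langle\nu,|\theta|^2\rangle=\E_\pi\big[|X|^2-|Y|^2\big].
\end{equation*}

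The key step is the algebraic identity $|X|^2-|Y|^2=|X-Y|^2+2\,Y\cdot(X-Y)$, which follows from expanding $|Y+(X-Y)|^2$. Taking expectations and applying the triangle inequality gives
\begin{equation*}
\Big|\E_\pi\big[|X|^2-|Y|^2\big]\Big|\le \E_\pi|X-Y|^2+2\,\E_\pi\big[|Y|\,|X-Y|\big].
\end{equation*}
The first term equals $\cW_2(\rho,\nu)^2$ by optimality of $\pi$. For the second term, Cauchy--Schwarz under $\pi$ gives
\begin{equation*}
\E_\pi\big[|Y|\,|X-Y|\big]\le\sqrt{\E_\pi|Y|^2}\,\sqrt{\E_\pi|X-Y|^2}=\sqrt{\langle\nu,|\theta|^2\rangle}\,\cW_2(\rho,\nu).
\end{equation*}
Combining the two bounds yields the claim.

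The argument has no real obstacle. The only point to keep in mind is choosing the identity so that the cross term involves only $Y\sim\nu$; this is why the right-hand side contains the second moment of $\nu$ alone and not that of $\rho$. All expectations are finite because both second moments are finite, so every step is justified.
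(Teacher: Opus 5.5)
Your proof is correct and matches the paper's argument: take an optimal $\cW_2$ coupling, use the identity $|\theta|^2-|\theta'|^2=|\theta-\theta'|^2+2\theta'\cdot(\theta-\theta')$, and apply Cauchy--Schwarz to the cross term. Your remark about writing the cross term in terms of the $\nu$-distributed variable is exactly why the bound contains only $\langle\nu,|\theta|^2\rangle$.
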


\begin{proof}
	Let $\pi$ be an optimal coupling for the Wasserstein distance $W_2(\rho, \nu)$, defined on the product space with marginals $\theta \sim \rho$ and $\theta' \sim \nu$. We have
	\begin{align*}
		\langle{\rho, |\theta|^2}\rangle - \langle{\nu, |\theta|^2}\rangle 
		&= \int \big( |\theta |^2 - |\theta'|^2 \big) d\pi(\theta, \theta') \\
		&= \int \big( |\theta - \theta'|^2 + 2\theta' \cdot (\theta - \theta') \big) d\pi(\theta, \theta').
	\end{align*}
	We bound the cross-term by the Cauchy--Schwarz inequality. Then
	\begin{align*}
		\Big|\langle{\rho, |\theta|^2}\rangle - \langle{\nu, |\theta|^2}\rangle\Big| 
		&\le \int |\theta - \theta'|^2 d\pi + 2 \left( \int |\theta'|^2 d\pi \right)^{1/2} \left( \int |\theta - \theta'|^2 d\pi \right)^{1/2}.
	\end{align*}
	We have $\int |\theta'|^2 d\pi = \langle \nu, |\theta|^2 \rangle$. By the optimality of $\pi$, we have $\int |\theta - \theta'|^2 d\pi = \cW_2(\rho, \nu)^2$. Then the claim follows.
\end{proof}

\begin{proof}[{\bf Proof of Lemma \ref{lem:U_regret}}]
	Denote $\Delta_F(\rho_t, t) := F(\rho_t, Z_t) - F(\mu_t^*,Z_t)$. We first prove that
	\begin{equation}\label{eq:U_gap}
		\begin{aligned}
			& U(\rho_t, Z_t) - U(\mu_t^*, Z_t) \\
			& \quad \leq
			\frac{2C_1^2 + \lambda}{\alpha\beta}\Delta_F(\rho_t, t)
			+ \left( 2C_1(C_\sigma+C_z)+\lambda\sqrt{Q_*(\beta, \lambda)} \right) \sqrt{\frac{2}{\alpha\beta}}\sqrt{\Delta_F(\rho_t, t)}.
		\end{aligned}
	\end{equation}
	
	Indeed, it is direct to show that
	\begin{align*}
		U(\rho_t, Z_t) - U(\mu^*_t, Z_t) = &  \Big( \ang{\rho_t, \sigma(X_t, \cdot)} - \ang{\mu^*_t, \sigma(X_t, \cdot)} \Big)^2 \\
		& + 2 \Big( \ang{\mu^*_t, \sigma(X_t, \cdot)} - Y_t \Big) \Big( \ang{\rho_t, \sigma(X_t, \cdot)} - \ang{\mu^*_t, \sigma(X_t, \cdot)} \Big) \\
		& + \frac{\lambda}{2} \big( \ang{\rho_t, |\theta|^2} -\ang{\mu^*_t, |\theta|^2} \big).
	\end{align*}
	By the Kantorovich duality, the first term satisfies
	\begin{align*}
		\Big( \ang{\rho_t, \sigma(X_t, \cdot)} - \ang{\mu^*_t, \sigma(X_t, \cdot)} \Big)^2  \leq C^2_1 \cW_1(\rho_t, \mu^*_t)^2 \leq  C^2_1 \cW_2(\rho_t, \mu^*_t)^2.
	\end{align*}
	The second term is bounded as follows:
	\begin{align*}
		2 \Big| \ang{\mu^*_t, \sigma(X_t, \cdot)} - Y_t \Big| \Big| \ang{\rho_t, \sigma(X_t, \cdot)} - \ang{\mu^*_t, \sigma(X_t, \cdot)} \Big| \leq 2 (C_\sigma + C_z) C_1 \cW_2(\rho_t, \mu^*_t).
	\end{align*}
	For the third term, Lemma \ref{lem:mu*_2m} and \ref{lem:W2_diff} yield
	\begin{equation*}
		\frac{\lambda}{2} \big| \ang{\rho_t, |\theta|^2} -\ang{\mu^*_t, |\theta|^2} \big|
		\le \frac{\lambda}{2} \cW_2(\rho_t, \mu_t^*)^2 + \lambda \sqrt{Q_*(\beta, \lambda)} \cW_2(\rho_t, \mu_t^*).
	\end{equation*}
	Combining them together,
	\begin{equation*}
		U(\rho_t, Z_t) - U(\mu_t^*, Z_t) \le \left( C_1^2 + \frac{\lambda}{2} \right) \cW_2(\rho_t, \mu_t^*)^2 + \left( 2C_1(C_\sigma + C_z) + \lambda \sqrt{Q_*(\beta, \lambda)} \right) \cW_2(\rho_t, \mu_t^*).
	\end{equation*}
	Since \eqref{eq:Wass_F} showed that $$\cW_2(\rho_t, \mu_t^*) \le \sqrt{\frac{2}{\alpha\beta}} \sqrt{\Delta_F(\rho_t, t)},$$ 
	we have proved \eqref{eq:U_gap}.
	
	Next, we define $$E(t) := \E\big[\Delta_F(\rho_t,Z_t)\big] = \E\big[F(\rho_t, Z_t) - F(\mu^*_t, Z_t)\big] \geq 0.$$ 
	By the Cauchy-Schwarz inequality,
	\begin{align*}
		\E\Big[\sqrt{\Delta_F(\rho_t,Z_t)} \Big] & \le \sqrt{\E[\Delta_F(\rho_t,Z_t)]}=\sqrt{E(t)}, \\
		\E\Big[\sqrt{\ang{\mu_t^*, |\theta|^2}} \sqrt{\Delta_F(\rho_t,Z_t)}\Big]
		& \le \sqrt{\E[\ang{\mu_t^*, |\theta|^2}]}\sqrt{\E \Delta_F(\rho_t, Z_t)} \le \sqrt{Q_*(\beta, \lambda)} \sqrt{E(t)}.
	\end{align*}
	Taking the expectation in \eqref{eq:U_gap} and integrating over $[0, T]$ yield
 	\begin{align}
		\mathbb E\Big[\int_0^T\big(U(\rho_t,Z_t)-U(\mu_t^*,Z_t)\big)\,dt\Big]
		&\le \frac{2C_1^2 + \lambda}{\alpha\beta} \int_0^T E(t)\,dt \nonumber\\
		&\quad+ \left( 2C_1(C_\sigma+C_z)+\lambda\sqrt{Q_*(\beta, \lambda)} \right) \sqrt{\frac{2}{\alpha\beta}} \int_0^T \sqrt{E(t)}\,dt.
	\end{align}
	By the Cauchy-Schwarz inequality, $\int_0^T \sqrt{E(t)}\,dt \le  \sqrt{T \int_0^T E(t)\,dt}$. We use the definition of the dynamic regret $\cR_D(T)$ to obtain the claim.
\end{proof}

\begin{proof}[{\bf Proof of Lemma \ref{lem:UdiffmfN}}]
	By Lemma \ref{lem:rhotLSI}, the mean-field law $\rho_t$ satisfies the $LSI(\eta)$. By Talagrand's inequality and Lemma \ref{lem:PoC} on the propagation of chaos, the Wasserstein distance for the $k$-particle marginals satisfies
	$$ \cW_2(P^k_t, \rho_t^{\otimes k}) \le \sqrt{\frac{2}{\eta} D_{KL}(P^k_t \parallel \rho_t^{\otimes k})} \le \sqrt{\frac{2 C_{poc}}{\eta}} \frac{k}{N}. $$
	
	We decompose the conditional expected cost difference into three terms:
	\begin{equation}\label{eq:U3}
	\begin{aligned}
		\E_{B}\big[U(\hat{\rho}^N_t) - U(\rho_t)\big] &= \Big( \E_{B}[\ang{\hat{\rho}^N_t, \sigma(X_t, \cdot)}^2] - \ang{\rho_t, \sigma(X_t, \cdot)}^2 \Big) \\
		&\quad - 2Y_t \Big( \E_{B}[\ang{ \hat{\rho}^N_t, \sigma(X_t, \cdot)}] - \ang{\rho_t, \sigma(X_t, \cdot)} \Big) \\
		&\quad + \frac{\lambda}{2} \Big( \E_{B}[\ang{\hat{\rho}^N_t, |\theta|^2}] - \ang{\rho_t, |\theta|^2} \Big).
	\end{aligned}
	\end{equation}
	Here, we can pull $\ang{\rho_t, \sigma(X_t, \cdot)}$ and $Y_t$ out of the expectations because $Z_t$ is fixed under $\E_{B}$.
	
	Consider the first term in \eqref{eq:U3}. Expanding the square and utilizing the exchangeability yield
	\begin{equation*}
		\E_{B}[\ang{\hat{\rho}^N_t, \sigma(X_t, \cdot)}^2] = \frac{1}{N} \int \sigma^2(X_t, \theta) P_t^1(d\theta) + \Big(1 - \frac{1}{N}\Big) \int \sigma(X_t, \theta_1)\sigma(X_t, \theta_2) P_t^2(d\theta_1, d\theta_2).
	\end{equation*}
	The mean-field term satisfies 
	$$\ang{\rho_t, \sigma(X_t, \cdot)}^2 = \int \sigma(X_t, \theta_1)\sigma(X_t, \theta_2) \rho_t^{\otimes 2}(d\theta_1, d\theta_2) =: \int \sigma \otimes \sigma d\rho_t^{\otimes 2}.$$
	Then
	\begin{align*}
		& \E_{B}[\ang{\hat{\rho}^N_t, \sigma(X_t, \cdot)}^2] - \ang{\rho_t, \sigma(X_t, \cdot)}^2 \\
		& = \frac{1}{N} \Big( \int \sigma^2(X_t, \theta) P_t^1(d\theta) - \int \sigma(X_t, \theta_1)\sigma(X_t, \theta_2) \rho_t^{\otimes 2}(d\theta_1, d\theta_2) \Big) \\
		&\quad + \Big(1 - \frac{1}{N}\Big) \Big( \int \sigma \otimes \sigma dP_t^2 - \int \sigma \otimes \sigma d\rho_t^{\otimes 2} \Big).
	\end{align*}
	 The first element is bounded by $2C_\sigma^2 / N$. For the second element,  $\sigma(X_t, \theta_1) \sigma(X_t, \theta_2)$ has a Lipschitz constant bounded by $\sqrt{2} C_\sigma C_1$ on $\mathbb R^{2d}$ equipped with the $L_2$ norm. Applying the duality gives
	$$ \Big| \int \sigma \otimes \sigma dP_t^2 - \int \sigma \otimes \sigma d\rho_t^{\otimes 2} \Big| \le \sqrt{2} C_\sigma C_1 \cW_2(P_t^2, \rho_t^{\otimes 2}) \le \sqrt{2} C_\sigma C_1 \frac{2}{N} \sqrt{\frac{2 C_{poc}}{\eta}}. $$
	Combining these bounds yields
	$$ \Big| \E_{B}[\ang{\hat{\rho}^N_t, \sigma(X_t, \cdot)}^2] - \ang{\rho_t, \sigma(X_t, \cdot)}^2 \Big| \le \frac{2C_\sigma^2}{N} + \frac{2\sqrt{2} C_\sigma C_1}{N} \sqrt{\frac{2 C_{poc}}{\eta}}. $$
	
	 Consider the second term in \eqref{eq:U3}. By exchangeability of $N$ particles, $$\E_{B}[\ang{\hat{\rho}^N_t, \sigma(X_t, \cdot)}] = \int \sigma(X_t, \theta) P_t^1(d\theta).$$ 
	Then
	$$ 2|Y_t| \Big| \int \sigma(X_t, \theta) P_t^1(d\theta) - \int \sigma(X_t, \theta) \rho_t(d\theta) \Big| \le 2C_z C_1 \cW_1(P_t^1, \rho_t) \le 2C_z C_1 \frac{1}{N} \sqrt{\frac{2 C_{poc}}{\eta}}. $$
	
	For the last term in \eqref{eq:U3}, let $\pi$ be an optimal $\cW_2$-coupling between $P_t^1$ and $\rho_t$. Using the identity $|\theta_1|^2 - |\theta_2|^2 = (\theta_1 - \theta_2) \cdot (\theta_1 + \theta_2)$ and the Cauchy-Schwarz inequality, we have
	$$ \Big| \int |\theta|^2 P_t^1(d\theta) - \int |\theta|^2 \rho_t(d\theta) \Big| \le \|\theta_1 - \theta_2\|_{L^2(\pi)} \|\theta_1 + \theta_2\|_{L^2(\pi)}. $$
	With the uniform bound $M_2(\beta, \lambda)$, we obtain $$\|\theta_1 + \theta_2\|_{L^2(\pi)} \le 2\sqrt{M_2(\beta, \lambda)}.$$ 
	Thus,
	$$ \frac{\lambda}{2} \Big| \int |\theta|^2 P_t^1(d\theta) - \int |\theta|^2 \rho_t(d\theta) \Big| \le \frac{\lambda}{2} \Big( 2\sqrt{M_2(\beta, \lambda)} \cW_2(P_t^1, \rho_t) \Big) \le \lambda \sqrt{M_2(\beta, \lambda)} \frac{1}{N}\sqrt{\frac{2 C_{poc}}{\eta}}. $$
	
	Summing the bounds and integrating over $[0, T]$ yield the constant $C_U(\beta, \lambda, \eta)$ and concludes the proof.
\end{proof}

\end{document}